\def\RL{{\rm RL}}
\def\H{{\rm H}}
\def\L{{\rm L}}
\def\tA{{\tilde A}}
\def\tnu{{\tilde \nu}}
\def\MLE{{\rm MLE}}
\def\confset{{\cC}}
\def\eff{{\mathrm{eff}}}
\DeclareMathAlphabet{\pazocal}{OMS}{zplm}{m}{n}
\def\CI{{\cC}}
\def\DE{{\rm DE}}
\def\E{{\rm E}}
\def\QRE{{\mathtt{QRE}}}
\newif\ifmain
\newif\ifneurips
\title{\huge Actions Speak What You Want:
Provably Sample-Efficient Reinforcement Learning of the Quantal Stackelberg Equilibrium from Strategic Feedbacks}
\author{Siyu Chen\thanks{Department of Statistics and Data Science, Yale University. Email: \texttt{siyu.chen.sc3226@yale.edu}.} \qquad Mengdi Wang\thanks{Department of Electrical and Computer Engineering, Princeton University. Email: \texttt{mengdiw@princeton.edu}.} \qquad Zhuoran Yang\thanks{Department of Statistics and Data Science, Yale University. Email: \texttt{zhuoran.yang@yale.edu}.}}
\date{}
\begin{document}
\maketitle
\begin{abstract}

    We study reinforcement learning (RL) for    learning a Quantal Stackelberg Equilibrium (QSE) in an episodic Markov game with a leader-follower structure. 
In specific, at the outset of the game, the leader announces her policy to the follower and commits to it.
The follower observes the leader's policy and, in turn, adopts a quantal response policy by solving an entropy-regularized policy optimization problem induced by leader's policy. 
The goal of the leader is to find her optimal policy, which  yields the optimal expected total return, by interacting with the follower and learning from data. 
A key challenge of this problem is that the leader cannot observe the follower's reward, and needs to infer the follower's quantal response model from his actions against leader's policies. 
We propose sample-efficient algorithms for both the online and offline settings, in the context of function approximation. 
Our algorithms are based on (i) learning the quantal response model via maximum likelihood estimation and (ii) model-free or model-based RL for solving the leader's decision making problem, and we show that they achieve sublinear regret upper bounds. Moreover, we quantify the uncertainty of these estimators and leverage the uncertainty to implement optimistic and pessimistic algorithms for online and offline settings. 
Besides, when specialized to the linear and myopic setting, our algorithms are also computationally efficient. 
Our theoretical analysis features a novel performance-difference lemma which incorporates the error of quantal response model, which might be of independent interest.  
\end{abstract}

{
  \hypersetup{linkcolor=black}
  \tableofcontents
}
\newpage 

\vspace{-15pt}
\section{Introduction}\label{sec:intro}
\vspace{-5pt}
Multi-agent reinforcement learning (RL) \citep{busoniu2008comprehensive, zhang2021multi} studies sequential decision making problems where  multiple agents interact with each other. 
Such a problem is often modeled as a Markov game,
where at  each timestep, each agent
takes an action at the current state of the environment, receives  an immediate reward, and the environment moves  to a new state according to a Markov transition kernel. 
Here the agents affect each other because the rewards and state transitions depend  on the actions of all players. 
The goal of each agent is to learn her optimal policy that maximizes her expected total return, in the presence of strategic plays of other agents. 

Compared with single-agent RL, one of the most salient challenges of multi-agent RL is \emph{nonstationarity}. 
That is, from the perspective of each agent, she faces a Markov decision process (MDP) induced by the policy of the other agents, which changes abruptly once the other agents adopt new policies. 
To deal with such a challenge, most of the existing literature either 
(i) assumes a \emph{central controller}  that learns the policies for all agents, or 
(ii) designs decentralized methods  based on  adversarial bandits algorithms (see \Cref{sec:related_work} for details). 
The way these works deal with nonstationarity is rather ``passive''.
In particular, nonstationarity is out of the scope given (i), and (ii)   assumes the other agents change arbitrarily and only hopes to compete against the best fixed policy in hindsight.

In contrast, an ``active'' approach of dealing nonstationarity would actively \emph{infer} how other agents adjust their policies from their history actions, and leveraging such information to better design her own policy. 
Such an active approach is particularly promising in human-robot interaction, where we aim to train robots to better assist  human  by learning from human feedback.

In this work, we aim to design provably efficient RL algorithms for Markov games that \emph{actively address the challenge of nonstationarity}. 
As an initial attempt, 
we focus on the setting of a two-player Markov games with a leader-follower structure. 
In specific, in such a game, 
the leader takes on the role of the primary decision-maker who commits to a policy $\pi$ upfront, while the follower, knowing $\pi$, chooses a best-response policy $\nu^{\pi}$ that maximizes his expected  rewards. 
Here we assume the follower's response $\nu^{\pi}$ is unique, which is  obtained by solving an entropy-regularized MDP induced by $\pi$. Here $\nu^{\pi}$ is also called the quantal response to leader's policy $\pi$.
We allow the follower to be either myopic, considering only the immediate reward, or farsighted, planning ahead across all future steps. 
The leader's objective is to 
find her optimal policy $\pi^*$ such that the policy pair  
$(\pi^*, \nu^{\pi^*})$ optimizes her expected total rewards, and  $(\pi^*, \nu^{\pi^*})$ is called a quantal Stackelberg equilibrium (QSE) \citep{bacsar1998dynamic, mckelvey1995quantal}. 
The objectives of the leader and the follower are misaligned as their reward functions $u$ and $r$ are different. 
Our goal is to design online and offline RL algorithms for the leader under the episodic  and function approximation setting, based only on information available to the leader, i.e.,  states, joint actions, and the leader's rewards. In particular, the leader only observes the follower's actions, not rewards.

To efficiently learn $\pi^*$, the leader needs to efficiently optimize her own expected return  by strategically guiding the follower's behavior, which necessitates \emph{actively learning the follower's quantal response mapping} from the data. 
Learning a QSE poses a few unique challenges. {\blue (a)} First, the leader must infer the follower's quantal response mapping from follower's actions, which requires building a response model and inferring follower's reward function by observing how the follower responds  to different leader policies. 
Such a challenge is exacerbated when the follower is farsighted. 
{\blue (b)} Second, QSE is posed as the solution to  a \emph{bilevel optimization} where the leader optimizes her policy subject to the constraint that the follower adopts the quantal response policy. 
Characterizing the performance of the leader's learned policy requires novel analysis that bridges the upper and lower-level problems. 
In specific, we need to determine how the error of the inferred quantal response affects the performance of the learned policy. 
{\blue (c)} Third, we need to understand how to incentivize exploration in the online setting or how to guard against insufficient  data  coverage in the offline setting, in the presence of an inaccurate response model. 
Handling this challenge requires modifying the optimism/pessimism principle for the problem of learning a QSE.

We successfully tackle these challenges and  establish provably sample-efficient RL algorithms for both online and offline settings, in the context of  function approximation, where the follower can be either myopic or farsighted. 
In specific, our algorithms are based on (i) a model-based approach for learning the follower's quantal-response   via maximum likelihood estimation (Challenge (a)), and (ii) a 
single-agent reinforcement learning method for leader's decision making problem. 
In specific, for a myopic follower, 
we adopt a 
 model-free least-squares value iteration (LSVI)  \citep{sutton2018reinforcement} approach to learn the leader's optimal value function. 
 For a farsighted follower, we leverage model-based RL to learn the transition model and the leader's reward function. 
In addition, to promote exploration in the online setting, we construct confidence sets for both the response model and leader's value function (or transition and reward model), and propose to update the leader's policies via optimistic planning \citep{auer2008near}. 
Such an algorithm can be easily modified for the offline setting via pessimistic planning (Challenge (c)). 
Both these algorithms are able to incorporate general function approximation. 
Furthermore,  in the special case with myopic follower and linear function approximation, 
we establish variants of optimistic and pessimistic algorithms that can be efficiently implemented using bonus and penalty functions. 
Furthermore, we prove that all of these methods enjoy  sublinear regret or suboptimality bounds, hence proving statistical efficiency. 
In the case with general function approximation, we introduce novel eluder dimensions that captures the challenge of exploration for learning a QSE. 
Finally, to characterize the suboptimality of leader's policy, we establish a novel performance difference lemma  which relates the suboptimality of the learned policy  to the \emph{Bellman error} of the upper level problem and the  \emph{estimation error of the response model} in the lower level problem (Challenge (b)). Such a result might be of independent interest.

\vspace{5pt}
{\noindent \bf Our Contributions.} In summary, this work proposes provably sample-efficient algorithms for learning the QSE in an episodic Markov game with a leader-follower structure. In such  a game, when the leader announces her policy, the follower adopts the corresponding quantal response, and the leader only observes the follower's actions but not rewards. 
To address the challenge of nonstationarity, the leader needs to actively infer how the follower reacts to her announced policy, and leverage such  information to drive the game towards the QSE. 
We successfully address such a challenge for both  online and offline RL settings with linear and general function approximation. 
We also allow the follower to be both myopic and farsighted. 
Specifically, our contributions are three-fold.
\begin{itemize}
\item [(i)] First, for the case of a  linear Markov game with a   myopic follower, we establish sample-efficient RL algorithms for both  the  offline and online settings. 
These algorithms are based on the principle of pessimism/optimism in the face of uncertainty, where we quantify the estimation uncertainty of leader's value function and follower's quantal response model. 
Thanks to the linear structure, both 
  the offline and online algorithms have versions that are computationally efficient  (Algorithm \ref{alg:PMLE} with {\bf S3} and Algorithm \ref{alg:MLE-OVI} with {\bf S5}).   
 \item [(ii)] Second, for the case  with  general function approximation, we propose provably sample-efficient algorithms for both the online and offline settings, where the follower is allowed to be either  myopic or  farsighted.
 In particular, in the case with a myopic follower, our  algorithms  combine model-free value estimation for the leader and maximum likelihood estimation of the follower's quantal response mapping  (Algorithm \ref{alg:MLE-BCP} and Algorithm \ref{alg:MLE-GOLF}).
For the case with a farsighted follower, we propose  model-based algorithms by combining  pessimism  and optimism with  maximum likelihood estimation for Markov game model (Algorithm \ref{alg:real-PMLE} and Algorithm~\ref{alg:OMLE}). We prove that these algorithm are all sample-efficient by establishing upper bounds on the suboptimality or cumulative regret. 
  
 \item [(iii)] Third, we establish a novel performance difference lemma 
 which relates the suboptimality of the  leader's policy  to leader's Bellman error and the error incurred in estimating the follower's quantal response model, which is measured in terms of the total variation (TV) distance. (See \eqref{eq:performance diff-1} or \Cref{lem:subopt-decomposition} for details.) 
 Moreover, due to the nonlinear nature of the quantal response mapping, we further expand the estimation error of the  quantal response model into a first-order and a second-order term,  which can be further controlled by the maximum likelihood analysis. (See \eqref{eq:QRE-decompose} or \Cref{lem:performance diff} for details.)    
 From the lens of bilevel optimization, our regret analysis connects the Bellman error in the upper level   with the quantal response error in lower level, which might be of independent interest.   \end{itemize} 


\section{Related Works} \label{sec:related_work}

Our work is most related to works that learn Stackelberg equilibria in (Markov) games via online RL \citep{bai2021sample, zhong2021can, kao2022decentralized,  zhao2023online}.
All of these works assume the follower is myopic and perfectly rational. In the following, we discuss these works in detail. Moreover, our work  adds to the large body of literature of of single-agent and multi-agent online and offline RL, as well as the literature of maximum entropy inverse RL. 


\vspace{5pt}
{\noindent \bf Online Single-Agent Reinforcement Learning with Function Approximation.} 
Our works is related to the line of work that designs provably sample-efficient RL algorithms for MDPs in the context of function approximation.
Various works propose  RL algorithms that focus on the linear case, \citep{wang2019optimism, yang2019sample, cai2020provably, jin2020provably, 
zanette2020frequentist, ayoub2020model, modi2020sample, yang2020provably, zhou2021nearly}. 
Among these works, our work is most related to \citet{jin2020provably}. In particular, the linear Markov game model we study for the linear case is a direct extension of the linear MDP model in \citet{jin2020provably}. 
Moreover, the bonus function $\Gamma_h^{(1)}$ 
and the $U$-function (which plays the same role as the Q-function in single-agent RL) constructed in \eqref{eq:linear ridge} are the  same as the UCB bonus function and the optimistic value function in \citet{jin2020provably}. 
Furthermore, more recently, there is a line of research that study  sample-efficient RL in the context of general function approximation. See, e.g., \citet{jiang@2017, sun2019model, wang2020reinforcement, jin2021bellman, du2021bilinear, dann2021provably, zhong2022posterior, foster2021statistical, foster2023tight} and the references therein.
Among these works, our work is particularly related to the work of \citet{jin2021bellman}, which proposes a model-free algorithm named GOLF that is an optimistic variant of least-squares value iteration. 
For the case of online RL with  general function approximation,
our algorithm for updating the leader's value functions is based on a modified version of GOLF, where we additionally take into account the estimation uncertainty of the follower's  quantal response model. 
Moreover, \citet{jin2021bellman} introduce the Bellman eluder dimension that characterizes the exploration 
difficulty of the MDP problem. 
In the regret analysis, we introduce 
similar notions of eluder dimension for learning the leader's optimal policy. 
In particular, we introduce two versions of  eluder dimensions that captures the complexity of leader's Bellman error and the follower's quantal response error.  

\vspace{5pt}
{\noindent \bf Online Multi-Agent Reinforcement Learning.} 
Our work is also related to the literature on online multi-agent RL. 
Most of the existing research focus on learning Markov perfect equilibria in two-agent zero-sum Markov games or correlated or coarse correlated equilibria in general-sum Markov games. 
These works can be divided into two strands depending whether the proposed algorithm is centralized or decentralized. 
When there is a central controller that learns the policies of all agents, 
a few recent works propose extensions of single-agent RL algorithms to Markov games based on the principle of   optimism in the face of uncertainty \citep{bai2020near,bai2020provable,liu2020sharp, jin2021power, huang2021towards, xiong22b, xie2020learning, chen2021almost}. 
The second strand of research develops decentralized online RL algorithm for a single-agent of in a Markov games. 
See, e.g., \citet{jin2021v,liu2022learning, zhan2022decentralized, songcan,tian2021online,  mao2023provably, erez2022regret,  wang2023breaking,cui2023breaking} and the references therein. 
The algorithms proposed in most of these works handle the nonstationary due to other agents by leveraging techniques from adversarial bandit literature. 

Our work is more related to works that learns Stackelberg equilibria in (Markov) games via online RL \citep{bai2021sample, zhong2021can, kao2022decentralized,  zhao2023online}.
All of these works assume the follower is myopic and perfectly rational. 
In specific, \citet{bai2021sample, zhao2023online} focus on the static setting.  
\citet{bai2021sample} consider a centralized setting where central controller can determines the actions taken by both the leader and the follower. \citet{zhao2023online} assume the follower is omniscient in the sense that the follower always plays the best response policy, which is similar to our setting. They show that when the  follower is perfectly rational, the regret of the leader  exhibits different scenarios depending on the relationship between the leader's and follower's rewards. Besides, \citet{kao2022decentralized} assume that the leader and follower are cooperative and design a decentralized algorithm for both the leader and follower, under the tabular setting.  \citet{zhong2021can} study online and offline RL for the leader, assuming the follower's reward function is known, and thus the best response of the follower is known to the leader.

 \vspace{5pt}
{\noindent \bf Offline Reinforcement Learning with Pessimism.} 
Our work is also related to the recent line of research on the efficacy of pessimism in offline RL. See, e.g., \citep{yu2020mopo, kidambi2020morel, kumar2020conservative, buckman2020importance, jin2021pessimism, rashidinejad2021bridging, zanette2021provable, uehara2021pessimistic,  xie2021bellman, lyu2022pessimism, shi2022pessimistic, yan2022efficacy, zhong2022pessimistic, cui2022offline, cui2022provably, yu2022strategic, zhang2023offline} and the references therein for algorithms for MDP or Markov games based on  the pessimism principle. 
Among these works, our work is particularly related to \citet{jin2021pessimism, xie2021bellman, uehara2021pessimistic, yu2022strategic}. 
In particular, in the case of myopic follower and linear function approximation, our offline RL is based on the LSVI-LCB algorithm introduced in \citep{jin2021pessimism}, which uses a penalty function to perform pessimism. 
Furthermore, in the case of myopic follower and   general function approximation, our algorithm is an extension of the value-based pessimistic algorithm proposed in \citet{xie2021bellman} to leader-follower game, and when the follower is farsighted, our algorithm is an extension of the model-based pessimistic algorithm proposed in \citet{uehara2021pessimistic}. 
Finally, \citet{yu2022strategic} studies a different leader-follower game with myopic followers and they propose a model-based and pessimistic algorithm that leverages nonparametric instrumental variable regression. 

 \vspace{5pt}
{\noindent \bf Maximum Entropy Inverse Reinforcement Learning (MaxEnt IRL).} 
Our approach of learning the quantal response mapping via maximum likelihood estimation is related to the existing works on MaxEnt IRL, where the goal is to recover the reward function from expert trajectories based on an energy-based model \citep{ziebart2008maximum, neu2009training, ziebart2010modeling, choi2012nonparametric, gleave2022primer, zhu2023principled}. 
Among these works, our work is more relevant to  \citet{zhu2023principled}, which  establishes the sample complexity of MaxEnt IRL, and learning reward functions from comparison data in both bandits and MDPs. 
The analysis in \citet{zhu2023principled} builds upon the body of literature on learning from comparisons \citep{bradley1952rank, plackett1975analysis, luce2012individual, hajek2014minimax,shah2015estimation, negahban2012iterative}. 
\citet{zhu2023principled} considers maximum likelihood estimation with linear rewards, whereas we also consider general function approximation. More importantly, learning quantal response model from follower's actions is only the lower level problem, and our eventual goal is to learn the optimal policy of the leader. Such a bilevel structure is not considered in  \citet{zhu2023principled}.

\ifneurips\vspace{-10pt}\fi
\section{Preliminaries}
\ifneurips\vspace{-10pt}\fi

\vspace{-5pt}
{\noindent \bf Notations.}
We denote by $\SSS_+^d$ the class of $d$-dimensional symmetric nonnegative definite matrices, $\cF(\cX)$ the class of measurable functions on space $\cX$,  $[N]=\{1, \dots, N\}$, $\Delta(\cX)$ the probability space on $\cX$, $\Cov[v]$ the covariance matrix for random vector $v$, and $\nbr{\cdot}_P$ the vector norm induced by $P\in\SSS_+^d$. We denote by {$\lesssim$} the inequality relationship up to some constants and logarithmic factors. We denote by $|\cB|$ the size of a class $\cB$.

\subsection{Episodic Leader-Follower  Markov Game}\label{sec:markov_game_def}
{\ifneurips
\vspace{-10pt} 
\fi}
We consider an episodic two-player Markov game between a leader and a follower, denoted by 
$
    M  =  \{ \cS , \cA, \cB, H, \rho_0 , P  , u, r  \}. 
$
Here $\cS$ denotes the state space, $\cA$ and $\cB$ are the action spaces of the leader and follower, respectively, $H$ is the horizon length. 
In addition,   $ P = \{  P_h \colon  \cS \times\cA \times\cB\rightarrow\Delta(\cS ) \}_{h \in [H]}$ are the transition kernels, and $u = \{u_h \colon \cS \times\cA \times\cB \rightarrow [0, 1] \}_{h\in [H]}$ and $r = \{r_h \colon \cS \times\cA \times\cB \rightarrow [0, 1] \}_{h\in [H]}$ are the reward functions of the leader and the follower, respectively. 
In such a game,  for any $h \in [H]$, at step $h$, both the leader and follower observe the current state $s_h \in \cS$, take  actions $a_h \in \cA$ and $ b_h \in \cB$, receives rewards $u_h(s_h, a_h, b_h) $ and  $r_h(s_h, a_h, b_h) $ respectively, and the environment moves to a new state $s_{h+1} \sim  P_h (\cdot \given s_h, a_h, b_h)$. Here the initial state $s_1 \sim \rho_0 \in \Delta(\cS)$ and the game  terminates after $s_{H+1}$ is generated. 

\ifneurips
\vspace{-2pt}\fi
\vspace{5pt}
{\noindent \bf Leader-Follower Structure and Policies.} 
We assume that the leader is a more powerful player who is able to coordinate the follower's behaviors. 
In specific, 
at the beginning of the game, the leader announces her policy $\pi = \{ \pi_{h} \}_{h\in [H]}$ for the entire game to the follower,
where $\pi_h \colon \cS\rightarrow \sA $ maps the current state to an element in $\sA$. 
Here $\sA = \{ \cB \rightarrow \Delta(\cA) \} $ denotes the set of functions that maps each action $b$  of the follower to a distribution over $\cA$. 
In other words, each element $\alpha \in \sA$ can be viewed as a prescription \citep{nayyar2014common} that specifies the leader's action contingent on the follower's action.
When the leader announces $\pi$ beforehand, she informs the follower how she will choose her action $a_h$ at each state $s_h$, given the follower's action $b_h$. 
To simplify the notation, in the sequel, we regard $\pi_h$ as a function $\pi_h (\cdot \given s_h, b_h) \in \Delta(\cA)$, which specifies the distribution of~$a_h$.\footnote{Here we assume that the leader is a more powerful player in the sense that her policy $\pi$  takes the follower's action $b_h$ as an input, although she does not observe $b_h$ when announcing $\pi$. 
This can be easily modified for a slightly weaker leader, whose policies $\pi$ does not depend on $b_h$. That is, $\pi_h (\cdot \given s_h)$ maps $s_h$ to a distribution over $\cA$.}


Furthermore, the follower's policy is denoted by $\nu = \{ \nu_h \}_{h\in [H]}$, where $\nu_h \colon \cS \rightarrow \Delta(\cB) $ specifies how the follower takes action $b_h$ at state $s_h$. 
With the leader-follower structure, the actions $\{ a_h, b_h\}$ are generated as  follows. At the beginning, the leader announces her policy $\pi$. The follower observes $\pi$ and chooses a policy $\nu$. For any $h\in [H]$, at the $h$-th time step, the leader commits to the announced policy and samples $\alpha_h =  \pi_h (s_h)$ at state $s_h$, the follower chooses $b_h \sim \nu_h (\cdot \given s_h)$, and then the leader executes $a_h \sim \alpha_h (\cdot \given b_h) = \pi _h ( \cdot \given s_h, b_h) $. 
{\main The structure of this Markov game is depicted in \Cref{fig:Markov game}.}

\begin{figure}[h]
    \centering
    \includegraphics[width=0.7\textwidth]{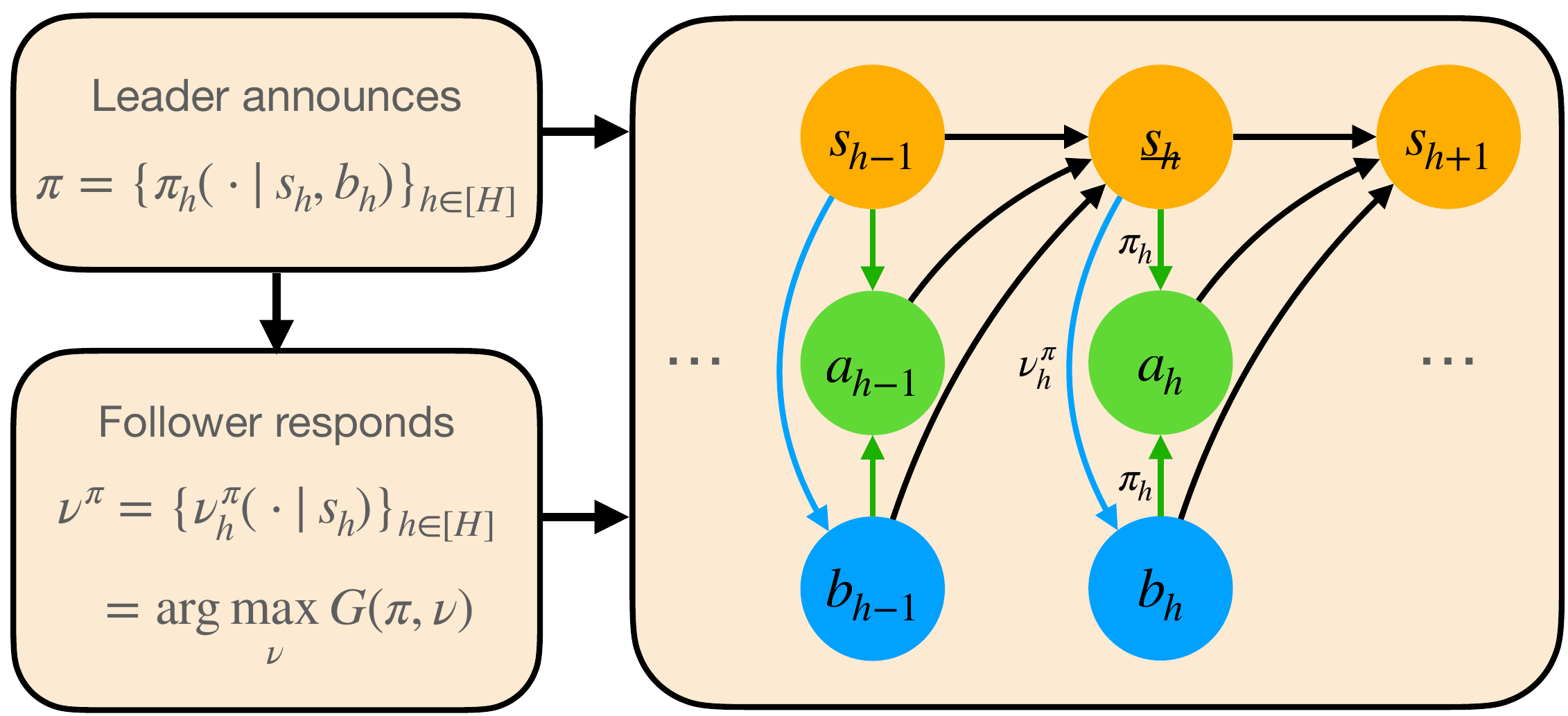}
    \caption{Structure of the Markov game. After the leader announces her $H$-step policies, the follower picks his quantal response according to \eqref{eq:energy}, which can also be computed iteratively via \eqref{eq:quantal_response_policy} and \eqref{eq:qv_pi_qr}. As a result, such a Markov game on the follower's side can also be formulated as a logistic dynamic choice model where the follower assesses his future total rewards and selects an action according to the quantal response model. }\label{fig:Markov game} 
\end{figure}

{\neurips
\vspace{-2pt}\fi  }
{\noindent \bf Follower's Quantal Response.} 
We assume the follower has bounded rationality \citep{simon1955behavioral}  in response to the leader's announced policy $\pi$. 
In particular, let $\eta > 0$ be a parameter and let $\gamma \in [0,1]$ be a discount factor. 
We define  the quantal response policy of the follower with respect to $\pi$, denoted  by $\nu^{\pi}$, as 
the solution to an entropy regularized policy optimization problem:
{\neurips
  \abovebelowskip{0.7}{0.5}
\begin{align}
    \!\!\!\ts\nu^\pi =  \argmax_{\nu}  G(\pi, \nu), G(\pi, \nu) = \EE^{\pi, \nu}\bigsbr{{\ts\sum_{h=1}^H} \gamma^{h-1}\bigrbr{r_h (s_h, a_h, b_h) + \eta^{-1}\cH\rbr{\nu_h(\cdot\given s_h)}}}, \label{eq:energy}
\end{align}\fi}
{\main
\begin{align}
    \nu^\pi =  \argmax_{\nu}  G(\pi, \nu), \quad G(\pi, \nu) = \EE^{\pi, \nu}\sbr{{\sum_{h=1}^H} \gamma^{h-1}\bigrbr{r_h (s_h, a_h, b_h) + \eta^{-1}\cH\rbr{\nu_h(\cdot\given s_h)}}},  \label{eq:energy}
\end{align}
\fi}
\hspace{-6pt} where  we let
$\cH(\cdot)$ denote the Shannon entropy and the expectation  $\EE^{\pi, \nu^\pi} $ is taken over the randomness of the trajectory $\{ (s_h, a_h, b_h )\}_{h\in [H]} $ generated by the policy pair $(\pi, \nu^\pi)$. 
Here $\eta  $ in \eqref{eq:energy} reflects the degree of bounded rationality. 
In particular, when $\eta $ approaches $ + \infty$,  $\nu^\pi$ becomes the optimal  policy  of the MDP induced by $\pi$,  which means the follower is perfectly rational. 
Moreover, $\gamma >0 $ in \eqref{eq:energy} reflects the level of farsightedness of the follower. In particular, a myopic follower with $\gamma = 0$ only maximizes his immediate reward, whereas a farsighted follower   $\gamma = 1$ maximizes the cumulative rewards across the $H$ steps. 

\ifneurips\vspace{-3pt}\fi
Thanks to the entropy regularization in \eqref{eq:energy}, the quantal response policy $\nu^{\pi}$ is unique for any $\pi$. 
Furthermore, by the equivalence between  entropy regularization and soft Q-learning \citep{haarnoja2017reinforcement, geist2019theory}, we can alternatively  characterize $\nu^{\pi}$ using the soft Bellman equation. 
Specifically, 
for any  $h\in [H]$,   $\nu^{\pi}_h \colon \cS \rightarrow \Delta(\cB)$ can be written as the exponential of the advantage (A) function $A_h^\pi:\cS\times\cB\rightarrow \RR$,
\ifneurips
{
    
\abovebelowskip{0.5}{0.5}

\#\label{eq:quantal_response_policy}
\nu_h^\pi(b_h\given s_h) &= \exp\bigl ( \eta \cdot A_h^\pi(s_h, b_h) \bigr), \quad\text{where}\quad A_h^\pi(s_h, b_h) = Q_h^\pi(s_h, b_h) - V_{   h}^\pi(s_h) ,
\#
}
\hspace{-5pt}\fi
\ifmain
\#\label{eq:quantal_response_policy}
\nu_h^\pi(b_h\given s_h) &= \exp\bigl ( \eta \cdot A_h^\pi(s_h, b_h) \bigr), \quad\text{where}\quad A_h^\pi(s_h, b_h) = Q_h^\pi(s_h, b_h) - V_{   h}^\pi(s_h) ,
\#
\fi
and 
  the action-value (Q) function $Q^{\pi}_h \colon \cS \times \cB \rightarrow \RR$ and state-value (V) function $V_h^{\pi} \colon \cS \rightarrow \RR$ are defined respectively as 
\ifneurips
{\abovebelowskip{0.5}{0.5}\fi
\#
    Q_h^\pi (s_h, b_h) & = \bigrbr{r_h^\pi + \gamma  \rbr{ P_h^\pi V_{h+1}^\pi}}(s_h, b_h), 
    {\ifmain \quad \fi}
    V_h^\pi(s_h) = \eta^{-1}  \log \bigrbr{ {\ts\sum_{b\in \cB }}\exp\bigl ( \eta Q_h^\pi(s_h, b_h) \bigr ) }  . \label{eq:qv_pi_qr}
\# 
\ifneurips }
\hspace{-5pt}\fi
Here  we define 
$r_h^\pi(s, b) = \la \pi_h(\cdot\given s, b ), r_h(s, \cdot, b)\ra _{\cA}$ 
and 
$P _h^\pi (s' \given s, b ) = \la \pi_h(\cdot\given s, b ),  P_h (s' \given s, \cdot ,b)\ra _{\cA}$,  which are the reward function and transition kernel of the follower's MDP induced by $\pi$. 
Let  $U_h^\pi:\cS\times\cA\times\cB\rightarrow \RR$ and $W_h^\pi:\cS\rightarrow \cB$ be the leader's action-value (U) function and state-value (W) function under policy $\pi$, which are defined respectively as
\ifneurips
{\abovebelowskip{0.5}{.5}\fi
\begin{gather}
    U_h^\pi(s_h, a_h, b_h)  = u_h(s_h, a_h, b_h) + \rbr{ P_h W_{h+1}^\pi}(s_h, a_h, b_h), \label{eq:U_function} \\ 
    W_h^\pi(s_h)  = \EE^{   \pi_h, \nu_h^\pi} \sbr{U_h^\pi(s_h, a_h, b_h)}  = \la  U_h^{\pi} (s_h, \cdot , \cdot  ) ,  \pi_h \otimes \nu_{h}^{\pi}  ( \cdot, \cdot \given s_h) \cdot  \nu^{\pi}_h  (b  \given s_h), \label{eq:W_function} 
\end{gather}
\ifneurips
}
\hspace{-5pt}\fi
where the expectation in \eqref{eq:W_function} is with respect to $b_h \sim \nu_h^{\pi} (\cdot \given s_h) $ and  $a_h \sim \pi_h(\cdot \given s_h, b_h) $. 
Here we define $\pi_{h}\otimes \nu_{h}^{\pi}(a, b\given s) = \pi_h (a \given s,b) \cdot  \nu^{\pi}_h  (b  \given s) $ for all $h\in [H]$. 
For ease of presentation, we define a quantal Bellman operator $\TT_h^{\pi}:\cF(\cS\times\cA\times\cB) \rightarrow \cF(\cS\times\cA\times\cB)$ for the leader as 
\ifneurips
{\abovebelowskip{0.5}{0.5}\fi
\begin{align}
    (\TT_h^\pi f) (s_h, a_h, b_h) = u_h(s_h, a_h, b_h) + \EE\sbr{\inp{f(s_{h+1}, \cdot,\cdot)}{\pi_{h+1}\otimes \nu_{h+1}^{\pi}(\cdot,\cdot\given s_{h+1})}}, \label{eq:bellman_operator_leader}
\end{align}
\ifneurips }
\hspace{-7pt}\fi
where the expectation is taken over $s_{h+1}\sim P_h(\cdot\given s_h, a_h, b_h)$.
By definition, $U^{\pi} = \{ U_h ^{\pi} \}_{h\in [H]} $  in \eqref{eq:U_function} satisfy the Bellman equation 
$U_h^{\pi} = \TT_h^{\pi} U_{h+1}^{\pi} $. 

\ifneurips\vspace{-5pt}\fi \vspace{5pt}
{\noindent \bf Quantal Stackelberg Equilibrium (QSE).} From the leader's perspective, her goal is to maximize her cumulative rewards under the assumption that 
the follower adopts the quantal response  given by \eqref{eq:energy}. 
Specifically, let $\Pi = \{ \Pi_h \}_{h \in [H]}$ be a class  of leader's policies. 
The leader aims to find $\pi^*$ which maximizes $J(\pi)$ over $\pi \in \Pi$, where 
{\neurips\aboveskip[0.5]\belowskip[0.5]
\begin{align}
    J(\pi) \defeq  F(\pi, \nu^{\pi}), \qquad \textrm{and}~~F(\pi, \nu ) = \EE^{\pi, \nu }\bigsbr{{\ts \sum_{h=1}^H} u_h (s_h, a_h, b_h)}. \label{eq:J}
\end{align}\fi}
{\main
\begin{align}
    J(\pi) \defeq  F(\pi, \nu^{\pi}), \qquad \textrm{and}~~F(\pi, \nu ) = \EE^{\pi, \nu }\sbr{\sum_{h=1}^H u_h (s_h, a_h, b_h)}. \label{eq:J}
\end{align}\fi} 
\hspace{-6pt} Leader's optimal policy $\pi^*$ and its quantal response $\nu^{\pi^*}$, together constitutes a \emph{Quantal Stackelberg Equilibrium} (QSE)  \citep{bacsar1998dynamic, mckelvey1995quantal} of the leader-follower Markov game $M$. Because $J(\pi)$ is finite,   a QSE is guaranteed to exist but might not be unique. 
In the sequel, we  aim to  learn $\pi^*$ within a class $\Pi = \{ \Pi_h \}_{h\in [H]}$ from offline or online  data. 
To characterize the sample complexity of the learning algorithm, we define the suboptimality of any leader's policy $\pi$ as 
\ifneurips
{\aboveskip[0.5]\belowskip[0.5]
\begin{align}\label{eq:subopt}
    \subopt(\pi)\defeq J(\pi^*)-J(\pi),
\end{align}
}\fi
\ifmain
\begin{align}\label{eq:subopt}
    \subopt(\pi)\defeq J(\pi^*)-J(\pi),
\end{align}
\fi
which compares the cumulated rewards received by the leader under a QSE and $(\pi, \nu^{\pi})$. 

\vspace{5pt}
{\main
{\noindent \bf Motivating Examples.} While traditional game theory assumes all players behave perfectly rational, 
real world practice proved that it is necessary to take into account human players' bounded rationality \citep{conlisk1996bounded, camerer1998bounded, karwowski2023sequential,
hernandez2019bounded}.
In the context of consumer decision-making, when faced with a wide array of product choices, consumers often have limited information and cognitive resources to thoroughly evaluate each option. 
Thus, it is more realistic to consider the consumers to choose quantally rather than deterministic among the products.
On the other hand, the choices made by the followers also reveal the followers' goals and interests, which can also be used for steering learning algorithms to align with humans' preferences in the field of robot training, recommender system, and large language models etc \citep{najar2021reinforcement, ouyang2022training, stiennon2020learning, bai2022constitutional, sadigh2017active, christiano2017deep}.
In this work, we target the problem of learning the follower's reward model only from his logistic choices in the context of a Markov game.
\fi}

\ifneurips\vspace{-5pt}\fi
\subsection{Learning QSE from Data: Information Structure and Performance Metrics}
\ifneurips\vspace{-5pt}\fi
We aim to design online and offline RL algorithms that learns $\pi^*$ on behalf of  the leader.
That is, the RL algorithms has access to online or offline data that only contains what the leader is able to observe, when interacting with a boundedly rational   agent. 

\ifneurips\vspace{-2pt}\fi \vspace{5pt}
{\noindent \bf Information Structure.} Let $\cM$ be a class of leader-follower Markov Games specified in Definition \ref{sec:markov_game_def} and let $M^* \in \cM$ denote the true environment. 
In the reinforcement learning setup, the  leader does not know $M^*$ or the quantal response mapping, and need to learn her optimal policy  $\pi^*$. 
We  assume the follower always outputs the quantal response policy $\nu^{\pi}$ 
when the leader commits to a policy $\pi$, where $\nu^{\pi}$ is defined in \eqref{eq:energy}. 
In particular, in this case,  the leader's knowledge is $\{s_h,   a_h, b_h , u_h, \pi_h \}_{h\in [H]} $, a trajectory collected by the policy pair  $(\pi. \nu^{\pi})$, where  $b_h \sim \nu^{\pi}_h (\cdot \given s_h)$,  and $a_h \sim   \pi_h(\cdot \given s_h, b_h)$. An information asymmetry exists in the sense that the leader cannot observe the follower's reward.
 
\ifneurips\vspace{-2pt}\fi \vspace{5pt}
{\noindent \bf Offline RL.} In the offline setting, we aim to learn leader's optimal policy $\pi^*$ from an offline dataset $\mathcal{D} = \{ \tau^t = \{  (s_h^t,   a_h^t, b_h^t, u_h^t, \pi_h^t ) \} _{h\in [H]},   t\in [T]\}$ collected a priori, which contains $T$  trajectories collected on  $M^*$. 
Here each trajectory $\tau^t $ is sampled from a behavior policy $\pi^t$ and its quantal response $\nu^{\pi^t}$. 
Given the dataset $\cD$, we aim to design an  offline RL algorithm that returns a policy $\hat \pi$ for the leader such that the suboptimality $\subopt(\hat \pi)$ defined in  \eqref{eq:subopt} is small.

\ifneurips\vspace{-2pt}\fi \vspace{5pt}
{\noindent \bf Online RL.} In the online setting, the leader learns $\pi^*$ by interacting with the agent for $T$ episodes, without any prior knowledge or data. In specific, for any $t\in [T]$, in the $t$-th episode, leader announces and commits to $\pi^t$, the follower adopts quantal response $\nu^{\pi^t}$. Then the leader observes a new trajectory $ \tau^t = \{  (s_h^t,    a_h^t, b_h^t, u_h^t, \pi_h^t ) \} _{h\in [H]} $. The goal of the online RL algorithm is to design policy sequence $\{\pi^t\}_{t\in [T]}$ such that the regret 
{\neurips\aboveskip[0.5]\belowskip[0.5]
\begin{align}\label{eq:regret_def}
    \Reg (T) = {\ts\sum_{t=1}^T} \subopt(\pi^t) = {\ts\sum_{t=1}^T}  \bigl( J(\pi^*) - J(\pi^t )\bigr)   
\end{align}
\fi}
{\main
\begin{align}\label{eq:regret_def}
    \Reg (T) = \sum_{t=1}^T  \subopt(\pi^t) = \sum_{t=1}^T  \bigl[ J(\pi^*) - J(\pi^t )\bigr]   
\end{align}
\fi}
\hspace{-7pt} is small. 
Besides,   the randomized policy pair that chooses $(\pi^t, \nu^{\pi^t})$ uniformly random constitutes an approximate QSE with error $\Reg(T)/T$. 
In other words, if $\Reg(T) = o(T)$, when $T$ is sufficiently large, the average policy generated by the online RL algorithm constitutes an approximate QSE. 

In the following, we define a linear MDP approximation for this quantal Stackelberg game.
\begin{definition}[{Linear Markov Game}]\label{def:linear MDP}
    We call the episodic leader-follower Markov game linear if there exist mappings $\varpi^*_h:\cS\rightarrow\RR^d$ and feature functions  $\phi_h(\cdot,\cdot):\cS\times\cA\times\cB\rightarrow\RR^d$ for any $h\in[H]$ such that 
the transition probabilities can be expressed as:
$
    P_h(s_{h+1}'\given s_h, a_h, b_h) = \inp[]{\phi_h(s_h, a_h, b_h)}{\varpi^*_h(s_{h+1})},
$  
and
the 
reward functions $r_h$ and $u_h$ are also linear in $\phi_h$. That is, 
$
        u_h(s_h, a_h, b_h) = \inp[]{\phi_h(s_h, a_h, b_h)}{\vartheta^*_h},  $
$
r_h(s_h, a_h, b_h) = \inp[]{\phi_h(s_h, a_h, b_h)}{\theta^*_h}, 
$
where $\vartheta^*_h\in\RR^d$ and $\theta^*_h\in\RR^d$ are parameters.  
\end{definition}
\ifneurips\vspace{-10pt}\fi
\section{Error Decomposition for QSE: Learning Quantal Response}\label{sec:learning QSE}
\ifneurips\vspace{-5pt}\fi
As we have mentioned in \S\ref{sec:intro}, we need to 
(i) learn the follower's  quantal response mapping $\pi\rightarrow \nu^\pi$ and 
(ii) solve the leader's policy optimization problem given that the follower adopts the quantal response. 
In this section, we discuss how to handle these two steps.
For ease of presentation, we focus on the linear Markov Game defined in \Cref{def:linear MDP} with a myopic follower. 
Note that the follower's reward function is sufficient for determining the quantal response mapping in the myopic case.
We let $r_h^\theta(s_h,a_h,b_h)=\la\phi_h(s_h,a_h,b_h), \theta_h\ra$ and use the notation
$ r_h^{\pi, \theta}(s, b) = \odotp{\pi_h(\cdot\given s_h, b_h)}{r_h^\theta(s_h, \cdot, b_h)}_\cB = \inp[]{\phi_h^\pi(s, b)}{\theta_h},$
where $\phi_h^\pi(s, b)=\inp{\phi_h(s, \cdot, b)}{\pi_h(\cdot\given s, b)}_\cA$ is the reward feature mapping under policy $\pi$.
We suppose $\theta_h\in\Theta_h$, which is a bounded subset of $\RR^d$. We let $\theta=\{\theta_h\}_{h\in[H]}$ and  $\Theta=\{\Theta_h\}_{h\in[H]}$.
We will define in the sequel a \emph{\bf quantal response error} (QRE) for myopic follower that characterizes the error  incurred in  learning the quantal response mapping. 

\ifneurips\vspace{-5pt}\fi
\subsection{Performance Difference Lemma for QSE}
\label{sec:subopt decomposition}
\ifneurips\vspace{-5pt}\fi
To quantify how the estimation error of the quantal response affects the suboptimality of the leader's policy (Challenge (b) in \S\ref{sec:intro}),  in the following, we introduce a new version of performance difference lemma that bridges the upper and lower level problems in the QSE.
The idea is to decompose the performance difference into the the leader's Bellman error and the follower's quantal response error.

For any fixed policy $\pi$, recall that the leader's functions under joint policies $(\pi, \nu^{\pi})$ is given by $U^{\pi} $ and $W^{\pi}$ in \eqref{eq:U_function} and  \eqref{eq:W_function}. Suppose we have an estimated parameter $\tilde\theta$ for the follower's reward, and based on the estimated reward $\tilde r = r^{\tilde\theta}$, we have an estimated  quantal response
$\tilde \nu^{\pi} = \nu^{\pi,\tilde\theta}$ under  policy $\pi$. On the leader's side,  we denote by $\tilde U$ and $\tilde W $ the estimates of $U^{\pi} $ and $W^{\pi}$, respectively,
which satisfy 
$
\tilde W_h(s ) = \la \tilde U_h (s, \cdot , \cdot ) , \pi _h \otimes \tilde \nu_h^{\pi} (\cdot , \cdot \given s) \ra_{\cA\times\cB} 
$.
We can hence estimate $J(\pi) $ by $\tilde J(\pi ) \defeq \EE_{s_1 \sim \rho_0} [ \tilde W_1 (s_1)] $. 
We have the following performance difference decomposition, 
\ifneurips{\aboveskip[0.5]\belowskip[0.5]
\begin{align}
\!\!\!\!\!\!\!\!&\tilde J(\pi ) - J(\pi ) \label{eq:performance diff-1}\\
    &~~\le\underbrace{{{\ts\sum_{h=1}^H} \EE^{\pi, \nu^{\pi }}
    \!
    \bigsbr{\bigrbr{\tilde U_h - u_h}(s_h, a_h, b_h)- \tilde W_{h+1}(s_{h+1})}}}_{\ts\small\textrm{Leader's Bellman Error}}+ \underbrace{{\ts\sum_{h=1}^H } H \EE^{\pi, \nu^{\pi }}\! \bigl [ \nbr{\rbr{\tilde \nu_h^{\pi} -\nu_h^{\pi} }(\cdot\given s_h)}_1 \bigr ] }_{\ts\small\textrm{Quantal Response Error}}, \notag
\end{align} 
}
\hspace{-3pt}\fi
\ifmain 
\begin{align}
    &\tilde J(\pi ) - J(\pi ) \label{eq:performance diff-1}\\
        &~~\le\underbrace{{{\sum_{h=1}^H} \EE^{\pi, \nu^{\pi }}
        \!
        \bigsbr{\bigrbr{\tilde U_h - u_h}(s_h, a_h, b_h)- \tilde W_{h+1}(s_{h+1})}}}_{\ts\small\textrm{Leader's Bellman Error}}+ \underbrace{{\sum_{h=1}^H } H \EE^{\pi, \nu^{\pi }}\! \bigl [ \nbr{\rbr{\tilde \nu_h^{\pi} -\nu_h^{\pi} }(\cdot\given s_h)}_1 \bigr ] }_{\ts\small\textrm{Quantal Response Error}}, \notag
    \end{align}
\fi
See \Cref{lem:subopt-decomposition} for a detailed proof.
A unique challenge of our problem is to characterize the quantal response error where we \emph{only have observations of the follower's actions but not  the follower's reward}. The nonlinearity of the quantal response model in \eqref{eq:quantal_response_policy} exacerbates the difficulty. Fortunately, we can linearize the quantal response error by Taylor expansion. With slight abuse of notation, we also refer to the following error term as the quantal response error (QRE),
\ifneurips{
    \abovebelowskip{0.85}{0.85}
\begin{align}
    \QRE(s_h, b_h;\tilde\theta,\pi) =  (\Upsilon_h^{\pi}(\tilde r_h - r_h))\orbr{s_h, b_h}, \label{eq:QRE}
\end{align}
}
\hspace{-5pt}\fi 
\ifmain
\begin{align}
    \QRE(s_h, b_h;\tilde\theta,\pi) =  (\Upsilon_h^{\pi}(\tilde r_h - r_h))\orbr{s_h, b_h}, \label{eq:QRE}
\end{align}\fi
where operator $\Upsilon_h^\pi:\cF(\cS\times\cA\times\cB)\rightarrow \cF(\cS\times\cB)$ is defined as 
\ifneurips{
    \abovebelowskip{0.85}{0.85}
\begin{align}\label{eq:Upsilon}
    \rbr{\Upsilon_h^\pi f}(s_h, b_h) = \dotp{\pi_h(\cdot\given s_h, b_h)}{f(s_h, \cdot, b_h)}_\cA - \dotp{\pi_h\otimes \nu_h^{\pi}(\cdot,\cdot\given s_h)}{f(s_h,\cdot,\cdot)}_{\cA\times\cB}.
\end{align}}
\hspace{-5pt}\fi
\ifmain
\begin{align}\label{eq:Upsilon}
    \rbr{\Upsilon_h^\pi f}(s_h, b_h) = \dotp{\pi_h(\cdot\given s_h, b_h)}{f(s_h, \cdot, b_h)}_\cA - \dotp{\pi_h\otimes \nu_h^{\pi}(\cdot,\cdot\given s_h)}{f(s_h,\cdot,\cdot)}_{\cA\times\cB}.
\end{align}\fi
For myopic follower with a  linear reward, we have for the quantal response error in \eqref{eq:performance diff-1} that 
\ifneurips
{\abovebelowskip{0.6}{0.6}
\begin{align}
    &{\ts\sum_{h=1}^H } H \cdot \EE^{\pi, \nu^{\pi }} \bigl [ \nbr{\rbr{\tilde \nu_h^{\pi} -\nu_h^{\pi} }(\cdot\given s_h)}_1 \bigr ] \label{eq:QRE-decompose}\\
    &\quad \lesssim  2\eta H \cdot 
    {\ts\sum_{h=1}^H
    \Bigrbr{
    \sqrt{\EE^{\pi, \nu^\pi} \bigsbr{\QRE(s_h, b_h;\tilde\theta,\pi)^2}}
    + 
        \EE^{\pi, \nu^\pi} \bigsbr{\QRE(s_h, b_h;\tilde\theta,\pi)^2}
    }}
    , 
    \notag
\end{align}
}
\hspace{-7pt}\fi
\ifmain
\begin{align}
    &{\sum_{h=1}^H } H \cdot \EE^{\pi, \nu^{\pi }} \bigl [ \nbr{\rbr{\tilde \nu_h^{\pi} -\nu_h^{\pi} }(\cdot\given s_h)}_1 \bigr ] \label{eq:QRE-decompose}\\
    &\quad \lesssim  2\eta H \cdot 
    {\sum_{h=1}^H
    \Bigrbr{
    \sqrt{\EE^{\pi, \nu^\pi} \bigsbr{\QRE(s_h, b_h;\tilde\theta,\pi)^2}}
    + 
        \EE^{\pi, \nu^\pi} \bigsbr{\QRE(s_h, b_h;\tilde\theta,\pi)^2}
    }}
    , 
    \notag
\end{align}
\fi
where $\lesssim$ only hides some coefficients in the 2nd-order term on the right-hand side. 
We defer readers to \Cref{lem:response diff-myopic-linear} and its follow-up discussions for more details. 
We remark that   $\mathtt{QRE}$ and the operator $\Upsilon_h^\pi$ capture the comparison nature of the model in the sense that $\EE^{\pi, \nu^\pi}[\QRE(s_h, b_h;\tilde\theta_h,\pi)^2] = 0$ for any admissible $\pi$ if and only if $r_h(s_h, a_h, b_h) = \tilde r_h(s_h, a_h, b_h) + c(s_h)$ for some function $c:\cS\rightarrow\RR$, which matches our intuition that the follower's quantal response should be invariant to any constant shift at a given state. 
With linear function approximation, we define 
$
    \Sigma_{s_h}^{\pi, \theta^*} = \Cov_{s_h}^{\pi, \theta^*}\sbr{\phi_h^{\pi}(s_h, b_h)}
$ with $\Cov_{s_h}^{\pi,\theta}[\cdot] = \Cov^{\pi, \theta}[\cdot\given s_h]$, where the covariance is with respect to $b_h\sim\nu_h^{\pi,\theta}(\cdot\given s_h)$.
We note that    this covariance matrix satisfies $\EE^{\pi,\nu^\pi}[\QRE(s_h,b_h;\tilde\theta,\pi)^2\given s_h] = \onbr{\tilde\theta_h -\theta_h^*}_{\Sigma_{s_h}^{\pi, \theta^*}}$.
The QRE defined in \eqref{eq:QRE} as well as the  covariance matrix $\Sigma_{s_h}^{\pi, \theta^*}$ will be useful in handling the offline distributional shift and 
characterizing the online learning complexity.


{}

{}

{}

\ifneurips\vspace{-10pt}\fi
\subsection{Learning  Quantal Response from Follower's Feedbacks via MLE}\label{sec:MLE for behavior model}
\ifneurips\vspace{-5pt}\fi
In the following, we show that the quantal response mapping defined in \eqref{eq:energy} can be estimated from the follower's history action choices via maximum likelihood estimation (MLE). 
For any $\theta \in \Theta $ and any policy $\pi$ of the leader, we let $\nu^{\pi, \theta}$, $A^{\pi, \theta}$, $Q^{\pi, \theta } $ and $V^{\pi, \theta }$ denote the quantal response of $\pi$, advantage function, and Q- and V-functions under model $r^{\theta}$, which are defined according to  \eqref{eq:quantal_response_policy} and \eqref{eq:qv_pi_qr}.
Thus, given a  (possibly adaptive) dataset $\cD = \{(s_h ^t, a_h ^t, b_h ^t, u_h^t, \pi_h ^t)\}_{t\in[T], h\in [H]}$,
the negative loglikelihood function at step $h$ is given by 
\ifneurips
{\abovebelowskip{0.5}{0.5}
\#\label{eq:loglikelihood}
\cL_h(\theta_h)  & = - {\ts \sum_{i=1}^T }
\log \nu_h^{\pi^i, \theta} (b_h^i \given s_h^i) 
  = - {\ts \sum_{i=1}^T } \eta  \cdot A_h^{\pi^i , \theta}(s_h^i, b_h^i), 
\# 
}
\hspace{-5pt}
\fi
\ifmain
\#\label{eq:loglikelihood}
\cL_h(\theta_h)  & = - { \sum_{i=1}^T }
\log \nu_h^{\pi^i, \theta} (b_h^i \given s_h^i) 
  = - {\sum_{i=1}^T } \eta  \cdot A_h^{\pi^i , \theta}(s_h^i, b_h^i), 
\# 
\fi
where the second equality in \eqref{eq:loglikelihood} is due to \eqref{eq:quantal_response_policy}.  
Note that for the myopic follower case, the right-hand side of \eqref{eq:loglikelihood} only depends on $\theta_h$.
Note that leveraging the  pessimism and optimism principles in offline or online RL necessitates uncertainty quantification. 
Thus, instead of constructing a point estimator for $\theta_h^*$, we  
aim to construct a confidence set that contains $\theta_h^*$ with high probability. 
To this end, we define
\ifneurips
{\abovebelowskip{0.5}{0.5}
\begin{align}
\confset_{h,\Theta}(\beta)=\bigcbr{\theta_h\in\Theta_h: \cL_h(\theta_h)\le {\textstyle \inf_{\theta_h'\in\Theta_h}}\cL_h(\theta_h') + \beta}, 
\label{eq:behavior_model_confset}
\end{align}}
\hspace{-4pt} \fi
\ifmain
\begin{align}
    \confset_{h,\Theta}(\beta)=\bigcbr{\theta_h\in\Theta_h: \cL_h(\theta_h)\le {\textstyle \inf_{\theta_h'\in\Theta_h}}\cL_h(\theta_h') + \beta}, 
    \label{eq:behavior_model_confset}
    \end{align}
\fi
where $\beta > 0$ is a parameter.
Following a standard martingale concentration analysis in \citet{chen2022unified, foster2021statistical} but specialized to the quantal response model in \Cref{lem:MLE-formal}, we are able to establish  a guarantee on the QRE as
$
    {\ts\sum_{t=1}^T} \EE^{\pi^t,\nu^{\pi^t}}\osbr{\QRE(s_h, b_h;\tilde\theta,\pi^t)^2\given s_h^t}\le \cO(\beta)
$
for $\tilde\theta_h\in \CI_{h,\Theta}(\beta)$
with high probability.
Specifically, under the linear approximation, we have 
\ifneurips
{
    \abovebelowskip{0.5}{0.1}
\begin{align}
    {\ts\sum_{h=1}^H} \bignbr{\hat\theta_h-\theta_h^*}_{\Sigma_{h, \cD}^{\theta^*}}^2  \le \cO(\beta T^{-1}),
    \label{eq:bandit-ub-1}
\end{align}
}
\hspace{-5pt}\fi
\ifmain
\begin{align}
    {\sum_{h=1}^H} \bignbr{\hat\theta_h-\theta_h^*}_{\Sigma_{h, \cD}^{\theta^*}}^2  \le \cO(\beta T^{-1}),
    \label{eq:bandit-ub-1}
\end{align}
\fi
where the matrix $\Sigma_{h, \cD}^{\theta}$ depends on the data and is defined as
\ifneurips
{\abovebelowskip{.5}{.5}
    \#\label{eq:cov matrix}
    \Sigma_{h, \cD}^{\theta}= T^{-1} {\ts \sum_{t=1}^T}
    {\Cov_{s_h^t}^{\pi^t, \theta}\osbr{\phi_h^{\pi^t}(s_h, b_h)}}  = T^{-1} {\ts \sum_{t=1}^T} \Sigma_{s_h^t}^{\pi^t, \theta}. 
    \#
}
\hspace{-5pt}\fi
\ifmain
\#\label{eq:cov matrix}
    \Sigma_{h, \cD}^{\theta}= T^{-1} \cdot {\sum_{t=1}^T}
    {\Cov_{s_h^t}^{\pi^t, \theta}\osbr{\phi_h^{\pi^t}(s_h, b_h)}}  = T^{-1} \cdot {\sum_{t=1}^T} \Sigma_{s_h^t}^{\pi^t, \theta}. 
    \#
\fi 
Inequality \eqref{eq:bandit-ub-1} characterizes the accuracy of the confidence set based on the $\Sigma_{h, \cD}^{\theta}$-norm in $\RR^d$ space. 
In the case of linear function approximation, we set $\beta = \tilde \cO(d)$ and thus  we obtain a $\tilde \cO(d T^{-1})$ rate in \eqref{eq:bandit-ub-1} with respect to the weighted norm induced by  $\Sigma_{h, \cD}^{\theta^*}$.
In contrast to Theorem 4 of \citet{shah2015estimation} where they study the MLE estimator of a $K$-wise choice model and only provides an expectation bound, our result characterizes the confidence set and further provides high probability bound.
Moreover, we allow the follower to have infinitely many choices by using a linear class for the follower's reward.
In particular, our results characterize how the leader's past policies affect the learning of the follower's reward parameter through this $\Sigma_{h,\cD}^{\theta^*}$ matrix.
Informally speaking,
the MLE guarantee in \eqref{eq:bandit-ub-1} suggests that the first order quantal response error in \eqref{eq:QRE-decompose} enjoys a $\tilde\cO((d/T)^{1/2})$ rate subject to some concentrability coefficient \citep{munos2008finite, szepesvari2005finite}, e.g., $ \trace^{1/2}\bigrbr{\EE^{\pi, M^*}\Sigma_{s_h}^{\pi,\theta^*} {\Sigma_{h, \cD}^{\theta^*}}^\dagger }$ in the offline setting (Here, we only focus on the first order term since the second order term gives a faster convergence rate in terms of $T$). 
In the online setting, 
we build a self-normalized process with respect to this nonnegative definite matrix $\Sigma_{s_h^i}^{\pi^i,\theta^*}$ and control the online regret by the elliptical potential lemma on nonnegative definite matrices.
{\main
    In the following, we discuss the quantal response learning results in more details.
\fi}

{\main
\paragraph{Covariance Matrix versus  Laplacian Matrix.}
We first point out that unlike the use of the covariance matrix in \eqref{eq:bandit-ub-1}, \citet{shah2015estimation} uses a different Laplacian matrix for the norm, which shall take the form $L_\cD = T^{-1} \sum_{t=1}^T {\Cov_{s_h^t}^{u}[\phi^{\pi^t}(s_h, b_h)]}$ under our settings with $u$ being a uniform distribution over $\cB$. 
We remark that although both $\Sigma_{h, \cD}^{\theta}$ and $L_\cD$ capture the comparison nature of the model (since any constant shifting in the reward function at each state does not influence the myopic follower's decisions), our choice of the covariance matrix $\Sigma_{h, \cD}^\theta$ is indeed more natural in the sense that $\Sigma_{h, \cD}^\theta$ is the Hessian of the negative loglikelihood evaluated at $\theta$.
By using the Laplacian instead of our covariate matrix, the guarantee can be deteriorate by some $\poly(|\cB|\exp(2\eta))$ factor, e.g., Theorem 4 of \citet{shah2015estimation} or Theorem 4.1 of \citet{zhu2023principled}.\footnote{This is due to a change of kernel norm from the covariance matrix to the Laplacian matrix in the bound.} 

\paragraph{Identifiability of the Follower's Reward Function.}
Generally speaking, there is no guarantee that we can learn the absolute value of the follower's utilities when the leader only observes the quantal response. 
Intuitively, any constant shift in the utilities has no effect on the follower's behavior model. Mathematically, consider a tabular case with identity feature mapping $\phi_h=I_{|\cS||\cB||\cA|}$. Recall that by definition, the covariance matrix 
\$
\Sigma_{h, \cD}^{\theta} &= T^{-1} \cdot {\sum_{t=1}^T}
{\Cov_{s_h^t}^{\pi^t, \theta}\osbr{\phi_h^{\pi^t}(s_h, b_h)}} \nend
&= T^{-1} \cdot {\sum_{t=1}^T}
\EE_{s_h^t}^{\pi^t, \theta}\sbr{(\Upsilon_h^{\pi^t}\phi_h)(s_h, b_h) (\Upsilon_h^{\pi^t}\phi_h)(s_h, b_h)^\top}
\$
belongs to $\RR^{|\cS||\cB||\cA|\times |\cS||\cB||\cA|}$ with $v(s_h, a_h, b_h)=\ind(a_h, b_h) c(s_h)$ lying in the null space for any $c\in\cF(\cS)$ (One can easily check that $v$ always lies in the null space of $(\Upsilon_h^{\pi^t}\phi_h)(s_h, b_h)$ by definition of $\Upsilon_h^{\pi}$ in \eqref{eq:Upsilon} and noting that $\phi_h$ is an identity matrix).
The same argument also holds for the Laplacian matrix.
To ensure identifiability, one can impose an additional linear constraint on the utilities such as $\inp{\ind(\cdot)}{r_h(s_h, a_h, \cdot)}_\cB = |\cB|/2$. 
However, such a condition is only needed for the farsighted follower case for technical reasons and  is not without loss of generality.\footnote{Such a linear constraint is without loss of generality for a myopic follower since any constant shifting at a certain state does not affects the follower's behavior. However, the claim does not hold for a farsighted follower since a shifting at state $s_h$ might cause nontrivial fluctuations in the follower's Q function at preceding states.}

\paragraph{Single Offline Policy Fails in Learning the Behavior Model.}
Another observation from \eqref{eq:bandit-ub-1} is that it can fail in learning the follower's behavior model if the data is collected using a single policy, even with guarantee of full coverage of every state and action. 
The intuition is that if the leader commits to the same policy all the time, the follower always faces the same dynamics with the same effective utilities $r^\pi(s_h,b_h)= \inp{\pi(\cdot\given s_h, b_h)}{r(s_h,\cdot,b_h)}_\cA$, which is only a small linear subspace and therefore the follower's action choice  reveals no information about what is going to happen if the leader picks another policy that does not lie within this subspace.
Such an issue suggests that the the offline dataset should contains diverse  leader's policies that are linearly independent to learn follower's behavior model, which in the online setting naturally incurs a trade-off between exploration and exploitation.
\fi}

{}

\ifmain
\section{Offline Learning with Myopic Follower}\label{sec:offline-myopic}
\fi
\ifneurips
\subsection{Offline Learning with Myopic Follower}\label{sec:offline-myopic}
\fi
In this section, we study the problem of offline learning the optimal policy for the leader when the follower is myopic. 
In the offline setting, the offline dataset is collected as $\cD = \{(s_h^i, a_h^i, b_h^i, u_h^i,\pi_h^i)_{h=1}^H\}_{i\in[T]}$.
Here, $\pi^i$ should be thought of as a random variable.
We let $\EE_\cD$ denote the expectation with respect to the data generating distribution (also over the randomness of $\pi^i$). 
We study two function approximation schemes, namely the linear function approximation and the general function approximation.

\ifmain
\subsection{Offline Learning for Linear Markov Game}\label{sec:offline-ML}
\fi
\ifneurips
\subsubsection{Offline Learning for Linear MDP}\label{sec:offline-ML}
\fi
In this subsection, we  develop a computationally efficient and value iteration-based algorithm for the linear Markov game setting which is defined in \Cref{def:linear MDP}.
{}
Recall the guarantee we have for the confidence set based on the negative log-likelihood. A blessing of the myopic follower case is that at each state $s_h$, the follower's quantal response is only a function of the policy $\pi_h$ and the reward $r_h$ with model parameter $\theta_h\in\Theta_h$ at the same step. Therefore, the negative log-likelihood for the follower's behavior at step $h$ is given by
\begin{align}
    \cL_{h}(\theta_h) = - \sum_{i=1}^T \rbr{\eta r_h^{\pi^i, \theta}(s_h^i, b_h^i) - \log \rbr{\sum_{b'\in\cB} \exp\rbr{\eta r_h^{\pi^i, \theta}(s_h^i, b')}}}. \label{eq:myopic-offline-general-MLE loss}
\end{align}
Here, the follower's reward function $r_h^{\pi^i,\theta}$ only depends on $\theta_h$. 
One can 
construct confidence sets 
$$\CI_{h,\Theta}(\beta)=\cbr{\theta_h\in\Theta_h: \cL_h(\theta_h)\le \inf_{\theta'\in\RR^d}\cL_h(\theta')+\beta}, \quad \forall h\in[H]$$
following the same manner as \eqref{eq:behavior_model_confset}.
Here, we can let each parameter class $\Theta_h$ be a bounded subset of $\RR^d$ with $\nbr{\theta_h}_2\le B_\Theta$.  
In the following, we seperately discuss how to deal with the uncertainty in the environment model and the behavior model by adding penalties in the leader's value functions.

\paragraph{Environment Model Uncertainty Quantification.}
The value interation follows a very similar idea as \citet{zhong2021can, jin2020provably}, but the main difference is that we need to handle the uncertainty in the behavior model parameter $\theta_h$. 
We first give the update of the state-action value functions $\hat U_h(\cdot,\cdot,\cdot)$ at each step.
The idea is to exploit the linear structure $U_h(s_h, a_h, b_h) = u_h(s_h, a_h, b_h) + \TT_h W_{h+1}(s_h, a_h, b_h)=\inp{\phi_h(s_h, a_h, b_h)}{\omega_h}$ with $\omega_h = \vartheta_h^* + \sum_{s_{h+1}}\mu_h(s_{h+1})W_{h+1}(s_{h+1})$,
and solve for the state-action value function $\hat U_h(\cdot,\cdot,\cdot)$ by the following ridge regression,
\#
    & \hat\omega_h  =\argmin_{\omega\in\RR^d} \sum_{i=1}^T \rbr{\phi_h(s_h^i, a_h^i, b_h^i)^\top \omega - u_h^i - \hat W_h(s_{h+1}^i)}^2 + \nbr{\omega}_2^2, \label{eq:linear ridge}\\
    & \hat U_h(s_h, a_h, b_h)  = \phi_h(s_h,a_h,b_h)^\top \hat\omega_h - \Gamma^{(1)}_h(s_h, a_h, b_h),\nonumber
\# 
where $\Gamma^{(1)}_h$ is an uncertainty quantifier \citep{jin2021pessimism,zhong2021can} for the uncertainty in the environment model, and this term is included to ensure pessimism. Here, we can choose $\Gamma^{(1)}_h(s_h, a_h, b_h)= \tilde \cO\big(\sqrt{\phi_h(s_h, a_h, b_h)^\top\Lambda_h^{\dagger}\phi_h(s_h, a_h, b_h)}\big)$ where $$\Lambda_h=\sum_{i=1}^T \phi_h(s_h^i, a_h^i, b_h^i) \phi_h(s_h^i, a_h^i, b_h^i)^\top \allowbreak+ I_d$$ is the kernel obtained from the ridge regression problem \eqref{eq:linear ridge}. One should also be aware that the ridge regression problem \eqref{eq:linear ridge} has a closed form solution $\hat \omega_h\leftarrow \Lambda_h^{-1} (\sum_{i=1}^T \phi_h(s_h^i, a_h^i, b_h^i) (u_h^i + \hat W_{h+1}(s_{h+1}^i))) $. Plugging this closed form solution into \eqref{eq:linear ridge}, we get an update for the leader's U function.

\paragraph{Behavior Model Uncertainty Quantification.}
We next show how to deal with the behavior model uncertainty and find a good policy for the leader.
Recall the confidence set $\CI_{h,\Theta}(\beta)$ we construct for the follower's behavior model. Given the fact that the  follower is myopic, the behavior model at step $h$ is fully characterized by $\theta_h$ and the leader can decides on what policy to use simply by looking at $\hat U_h$ given by \eqref{eq:linear ridge} and $\CI_{h,\Theta}(\beta)$, and take the policy that maximizes the one-step value subject to the \emph{pessimistic} estimation, which gives us the first scheme
\begin{align}
    \textbf{S1:}\quad  \hat\pi_h(s_h)  = \argmax_{\pi_h(s_h) \in\sA} \min_{\theta_h\in\confset_{h, \Theta}(\beta)} \inp[\big]{\hat U_h(s_h, \cdot,\cdot)}{\pi_h \otimes\nu_h^{\pi, \theta}(\cdot,\cdot\given s_h)}_{\cA\times\cB},\label{eq:scheme-1}
\end{align}
where $\hat W_h(s_h)$ is just the optimal value to the maximin problem and we remind the readers that $\sA$ is the prescription space.
However, we should note that the problem is highly nonlinear and it is often hard to compute this maximin problem. Note that the inner minimization is included just to ensure pessimism. We should ask ourselves if we can turn the inner minimization problem into a maximization problem by adding penalties for the uncertainty in the behavior model $\theta_h$. By an analysis of the TV distance $D_\TV(\nu_h^{\pi,\theta_h}(\cdot\given s_h), \nu_h^{\pi,\theta_h^*}(\cdot\given s_h))$
specialized to the linear case,\footnote{See \Cref{lem:response diff-myopic-linear} and \Cref{cor:formal-MLE confset-linear myopic} for more details.} we are able to include a penalty term to ensure pessimism and turn the problem into a maximization for $\forall s_h\in\cS$.
\begin{align}
    \textbf{S2:}\quad  \hat\pi_h(s_h) = \argmax_{\pi_h(s_h) \in\sA, \atop \theta_h\in\confset_{h,\Theta}(\beta)} \inp[\big]{\hat U_h(s_h, \cdot,\cdot)}{\pi_h \otimes\nu_h^{\pi, \theta}(\cdot,\cdot\given s_h)}_{\cA\times\cB} - \Gamma^{(2)}_h(s_h;\pi_h , \theta_h),\label{eq:scheme-2}
\end{align}
where $
\Gamma^{(2)}_h(s_h;\pi_h , \theta_h) = 2 B_U(\eta \xi(s_h;\pi_h, \theta_h)  + C^{(3)} \xi(s_h;\pi_h, \theta_h)^2 )
$ with 
$$C^{(3)}=\rbr{2+\exp\rbr{2\eta B_A}\eta B_A} \eta^2 \exp\rbr{2\eta B_A}/{2}, $$ 
and
\begin{equation}
    \xi(s_h;\pi_h, \theta_h) = \sqrt{\trace\Bigrbr{\bigrbr{T\Sigma_{h,\cD}^{\theta_h} + I_d}^\dagger \Sigma_{s_h}^{\pi_h , \theta_h}}} \cdot \sqrt{8 C_{\eta}^2 \beta + 4B_\Theta^2}.\label{eq:Gamma^2}
\end{equation}
Here, $\Sigma_{h,\cD}^{\theta} = T^{-1} \sum_{i=1}^T \Cov_{s_h^i}^{\pi^i, \theta} \allowbreak [\phi_h^{\pi^i}(s_h^i, b_h)]$ is the data-dependent covariance matrix defined in \eqref{eq:cov matrix},  and $\Sigma_{s_h}^{\pi,\theta}=\Cov_{s_h}^{\pi,\theta} \allowbreak [\phi_h^{\pi_h}(s_h, b_h)]$ is the covariance matrix that actually only depends on $\pi_h(s_h)$ and  parameter $\theta_h$. 
Here, we remark that $\Gamma^{(2)}_h$ is a valid uncertainty quantifier which also captures the nonlinear effect of the TV distance $D_\TV(\nu_h^{\pi,\theta_h}(\cdot\given s_h), \nu_h^{\pi,\theta_h^*}(\cdot\given s_h))$ by the second order term and the analysis of $\Gamma^{(2)}_h$ is available in \Cref{lem:response diff-myopic-linear}.

Although Scheme 2 already avoids the maximin optimization in Scheme 1, we are still not satisfied since the  data-dependent covariance matrix $\Sigma_{h,\cD}^{\theta}$ also depends on the optimization variable $\theta_h$, which poses challenges in computation. 
One way to deal with the problem is considering a fixed $\theta_h$. As a matter of fact, we are able to replace the $\theta_h$ in \eqref{eq:scheme-2} with the MLE estimator $\hat\theta_{h,\MLE} = \argmin_{\theta_h\in\Theta} \cL_{h}(\theta_h)$, which gives the following scheme,
\begin{align}
    \textbf{S3:}\quad  \hat\pi_h(s_h) = \argmax_{\pi_h(s_h) \in\sA}  \inp[\big]{\hat U_h(s_h, \cdot,\cdot)}{\pi_h \otimes\nu^{\pi, \hat\theta_{h,\MLE}}(\cdot,\cdot\given s_h)}_{\cA\times\cB} - \Gamma^{(2)}_h(s_h;\pi_h , \hat\theta_{h, \MLE}). \label{eq:scheme-3}
\end{align}  
Here, the uncertainty quantifier $\Gamma_h^{(2)}$ is still needed to ensure pessimism.
To bridge the estimation error of $\hat \theta_{h,\MLE}$ in Scheme 3 to the previous two schemes, one can show that both $\Sigma_{h,\cD}^{\hat\theta_{h,\MLE}}$ and $\Sigma_{s_h}^{\pi,\hat\theta_{h,\MLE}}$ are upper and lower bounded by their correspondences with $\theta^*$ plugged in. 
The analysis is available in \Cref{prop:Hessian-ulb}.
Finally, the \textbf{M}aximal \textbf{L}ikelihood \textbf{E}stimation with \textbf{P}essimistic \textbf{V}alue \textbf{I}teration (MLE-PVI) algorithm is summarized as the following.
\begin{algorithm}[H]
    \begin{algorithmic}[1]
    \Require {$\eta, \cD$}
    \State Initialize $\hat W_{H+1}=0$.
    \For{$h=H, H-1,\dots, 1$}
    \State Obtain kernel $\Lambda_h\leftarrow \sum_{i=1}^T \phi_h(s_h^i, a_h^i, b_h^i) \phi_h(s_h^i, a_h^i, b_h^i)^\top + I$. 
    \State Solve the ridge regression for $\hat \omega_h\leftarrow \Lambda_h^{-1} \rbr{\sum_{i=1}^T \phi_h(s_h^i, a_h^i, b_h^i) \bigrbr{u_h^i + \hat W_{h+1}(s_{h+1}^i)}} $.
    \State Update $\hat U_h(\cdot,\cdot,\cdot)\leftarrow \phi_h(\cdot,\cdot,\cdot)^\top \hat \omega_h - \Gamma_h^{(1)}(\cdot,\cdot,\cdot)$ and truncate to $[0, H-h+1]$.
    \State Compute $(\hat W_h(s_h), \hat\pi_h(s_h) )$ as the optimal value and optimal solution to S1 \eqref{eq:scheme-1}, S2 \eqref{eq:scheme-2}, or S3 \eqref{eq:scheme-3} for each $s_h\in\cS$.
    \EndFor
    \Ensure $\hat\pi=(\hat\pi_h)_{h\in[H]}$.
    \end{algorithmic}
    \caption{Offline MLE-PVI Algorithm for Myopic Follower under Linear Markov Game}
    \label{alg:PMLE}
\end{algorithm}
We have the following theoretical guarantee for \Cref{alg:PMLE}.
\begin{theorem}[{Suboptimality for MLE-PVI}]\label{thm:PMLE-VI-myopic}
    Suppose the data compliance condition \begin{align}\label{eq:data compliance}
        &\PP_\cD(u_h^i=u, s_{h+1}^i=s\given \tau^{i-1}, \{s_{h'}^i, \pi_{h'}^i, a_{h'}^i, b_{h'}^i\}_{h'\in[h]}) \nend
        &\quad =\PP(u_h=u, s_{h+1}=s\given s_h=s_h^i, a_h=a_h^i, b_h=b_h^i), \quad \forall h\in[H], i\in[T], 
    \end{align} 
    holds.
    We choose $\Gamma^{(1)}_h(\cdot,\cdot,\cdot) \ge C_1 d H \allowbreak \sqrt{\log(2d H T/\delta)}\cdot \sqrt{\phi_h(\cdot,\cdot,\cdot)^\top \Lambda_h^{-1}\phi_h(\cdot,\cdot,\cdot)}$ for some universal constant $C_1>0$ and $\beta \ge C_2 d\log(H(1+\eta T^2)\delta^{-1})$ for some universal constant $C_2>0$. For the PMLE-VI algorithm under the above three schemes, we have with probability at least $1-2\delta$ that
    \begin{align*}
        \subopt(\hat\pi) \le \sum_{h=1}^H 2\EE^{\pi^*, \nu^{\pi^*}} \sbr{\Gamma^{(1)}_h(s_h, a_h, b_h) + \Gamma^{(2)}_h(s_h;\pi_h^*, \theta_h')},
    \end{align*}
    where $\theta_h' = \theta_h^*$ for Scheme 1 and Scheme 2, and $\theta_h' = \hat\theta_{h, \MLE}$ for Scheme 3.
    \begin{proof}
        See \Cref{sec:proof-PMLE-VI-myopic} for a detailed proof.
    \end{proof}
\end{theorem}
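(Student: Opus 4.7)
\textbf{Proof plan for \Cref{thm:PMLE-VI-myopic}.} The starting point is the performance difference decomposition \eqref{eq:performance diff-1} applied to the estimated quantities $\tilde U = \hat U$, $\tilde W = \hat W$, and the estimated quantal response $\tilde\nu = \nu^{\pi,\theta'}$ with the relevant $\theta'$ for each scheme. Specifically, the plan is to bound $\subopt(\hat\pi) = J(\pi^*) - J(\hat\pi)$ by the standard pessimism trilemma: write
$$J(\pi^*) - J(\hat\pi) = \bigrbr{J(\pi^*) - \hat J(\pi^*)} + \bigrbr{\hat J(\pi^*) - \hat J(\hat\pi)} + \bigrbr{\hat J(\hat\pi) - J(\hat\pi)},$$
where $\hat J(\pi)$ is the pessimistic value implicit in each of Schemes S1--S3. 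The middle difference is nonpositive by the optimality of $\hat\pi$ in the per-state Bellman recursion, and so it suffices to show (i) $\hat J(\pi^*) \le J(\pi^*)$ (pessimism), and (ii) a pointwise upper bound on $\hat J(\hat\pi) - J(\hat\pi)$ in terms of the two bonuses evaluated along $\hat\pi$. Combining these with pessimism converts the bonuses to be evaluated along $\pi^*$, giving the factor of $2$ in the theorem statement.

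To establish pessimism and the per-step error bound, I would control the two sources of error separately at each step, mirroring the structure of \eqref{eq:performance diff-1}. For the leader's Bellman error, the analysis is the LSVI-LCB analysis of \citet{jin2021pessimism} adapted to the linear Markov game: conditional on the data compliance condition \eqref{eq:data compliance}, the ridge regression residual $\phi_h^\top \hat\omega_h - (\TT_h^{\pi} W)(s_h,a_h,b_h)$ concentrates to within a constant multiple of $\sqrt{\phi_h^\top \Lambda_h^{-1}\phi_h}$, so the choice of $\Gamma^{(1)}_h$ ensures pointwise pessimism $\hat U_h \le \TT_h^{\hat\pi}\hat U_{h+1}$ with probability at least $1-\delta$ via a self-normalized concentration inequality together with a covering argument over the value function class induced by the iteration. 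For the quantal response error, I would apply \eqref{eq:QRE-decompose} to upper bound $H \EE^{\pi,\nu^\pi}[\|(\tilde\nu_h^\pi - \nu_h^\pi)(\cdot\given s_h)\|_1]$ by a first- and second-order moment of $\mathtt{QRE}$, and then identify this moment with $\|\theta_h'-\theta_h^*\|_{\Sigma_{s_h}^{\pi,\theta^*}}$ in the linear setting, yielding the bonus $\Gamma^{(2)}_h$ once the Cauchy--Schwarz inequality against $T\Sigma_{h,\cD}^{\theta_h} + I_d$ is applied.

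The key probabilistic input comes from the MLE analysis summarized around \eqref{eq:bandit-ub-1}: for $\beta \gtrsim d \log(HT/\delta)$, we have $\theta_h^* \in \CI_{h,\Theta}(\beta)$ and every $\theta_h \in \CI_{h,\Theta}(\beta)$ satisfies $\|\theta_h - \theta_h^*\|_{T\Sigma_{h,\cD}^{\theta^*}+I_d}^2 \lesssim \beta$. This immediately handles Schemes S1 and S2, where the optimization explicitly ranges over $\CI_{h,\Theta}(\beta)$: the inner minimum over $\theta_h$ in S1 is attained at some $\theta_h$ whose TV deviation from the truth is controlled, and in S2 the explicit penalty $\Gamma^{(2)}_h$ dominates the deviation for every $\theta_h \in \CI_{h,\Theta}(\beta)$, giving pessimism with $\theta_h' = \theta_h^*$.

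The main obstacle, and the last piece of the plan, is Scheme S3, where we replace the inner optimization by a plug-in of $\hat\theta_{h,\MLE}$. The challenge here is that $\Gamma^{(2)}_h(s_h;\pi_h,\hat\theta_{h,\MLE})$ is evaluated at $\hat\theta_{h,\MLE}$ rather than $\theta_h^*$, so we need a two-sided comparison between the covariance matrices $\Sigma_{s_h}^{\pi,\hat\theta_{h,\MLE}}$ versus $\Sigma_{s_h}^{\pi,\theta_h^*}$, and likewise between $\Sigma_{h,\cD}^{\hat\theta_{h,\MLE}}$ and $\Sigma_{h,\cD}^{\theta_h^*}$. This is exactly the content of \Cref{prop:Hessian-ulb}, which establishes mutual dominance up to a multiplicative constant depending on $\eta$ and $B_A$; using this, the penalty $\Gamma^{(2)}_h(s_h;\pi_h^*,\hat\theta_{h,\MLE})$ can be related to $\Gamma^{(2)}_h(s_h;\pi_h^*,\theta_h^*)$ while preserving pessimism. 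Putting the Bellman-error bound and the quantal-response bound together through \eqref{eq:performance diff-1}, taking a union bound over the two high-probability events (each of probability $\ge 1-\delta$), and invoking pessimism to swap $\hat\pi$ for $\pi^*$ in the expectations yields the claimed suboptimality with probability $\ge 1-2\delta$.
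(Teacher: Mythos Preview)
Your ingredients are right, but the trilemma as you wrote it does not close. With (i) $\hat J(\pi^*)\le J(\pi^*)$, (ii) $\hat J(\hat\pi)-J(\hat\pi)\le$ bonuses along $\hat\pi$, and the middle term $\le 0$, you obtain only $\subopt(\hat\pi)\le \bigl(J(\pi^*)-\hat J(\pi^*)\bigr)+[\text{bonuses at }\hat\pi]$, and (i) gives no upper bound on the first bracket. The paper sidesteps this by using the single scalar $\hat J=\EE_{s_1}[\hat W_1(s_1)]$ (there is no well-defined policy-indexed $\hat J(\cdot)$ here, since $\hat U_h$ is built from the algorithm's $\hat W_{h+1}$) and writing $\subopt=(J(\pi^*)-\hat J)+(\hat J-J(\hat\pi))$. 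The second piece is shown to be $\le 0$ via the three-term form \eqref{eq:perform-diff-linear} of the performance-difference lemma applied to $(\hat\pi,\hat U,\hat W,\hat\nu)$. The point your plan does not make explicit is the \emph{value-mismatch} term $\hat W_h-T_h^{\hat\pi,\hat\nu}\hat U_h$: for S2/S3 it equals $-\Gamma^{(2)}_h$ by construction of $\hat W_h$, and this --- not the $\Gamma^{(1)}$ subtraction alone --- is what cancels the quantal-response error (for S1 the argument is instead that $\theta^*$ lies in the confidence set so the inner minimum already yields pessimism). Your claim that ``$\Gamma^{(1)}$ ensures $\hat U_h\le\TT_h^{\hat\pi}\hat U_{h+1}$'' is therefore incomplete: $\Gamma^{(1)}$ only gives $\hat U_h\le u_h+P_h\hat W_{h+1}$. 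The first piece $J(\pi^*)-\hat J$ is then expanded via \eqref{eq:perform-diff-linear} at $\pi^*$ (with $\tilde\nu=\nu^{\pi^*}$ so the quantal-response term vanishes), the Bellman term contributes $2\Gamma^{(1)}$, and per-state greediness of $\hat\pi$ lets one replace $\hat W_h$ by its $\pi^*$-constrained analogue $\hat W_h^*$; the residual $T_h^{\pi^*,\nu^*}\hat U_h-\hat W_h^*$ is bounded scheme-by-scheme by $\Gamma^{(2)}$ (twice $\Gamma^{(2)}$ for S3).

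On Scheme 3: since the theorem already reports the bound with $\theta_h'=\hat\theta_{h,\MLE}$, there is no need to relate $\Gamma^{(2)}_h(\cdot;\cdot,\hat\theta_{h,\MLE})$ to $\Gamma^{(2)}_h(\cdot;\cdot,\theta_h^*)$. What is required for pessimism is only that the penalty at $\hat\theta_{h,\MLE}$ dominates the true TV-error, and this follows directly from \Cref{lem:response diff-myopic-linear} once the MLE guarantee is written in the $\Sigma_{h,\cD}^{\hat\theta}$-norm (both $\theta^*$- and $\hat\theta$-normalized versions hold, cf.\ the formal statement underlying \eqref{eq:bandit-ub-1}). \Cref{prop:Hessian-ulb} is not invoked at this stage; it appears only in \Cref{rmk:MLE-PVI-dist-shift} to convert the S3 bound back to the $\theta^*$-form.
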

We give the following corollary that characterizes the distribution shift issue. 
\begin{corollary}[Distribution shift]\label{rmk:MLE-PVI-dist-shift}
    Suppose for the leader's side, we have with high probability that 
    $
    \Lambda_h\succeq I + c_1 T \EE^{\pi^*,\nu^{\pi^*}}[\phi_h\phi_h^\top]
    $ for some constant $c_1>0$, 
    and the for the follower's side, we have with high probability 
    \begin{align}
    &I + {\ts\sum_{t=1}^T }\EE^{\pi^i, \nu^{\pi^i}} \bigsbr{(\Upsilon_h^{\pi^i} \phi_h)  (\Upsilon_h^{\pi^i} \phi_h)^\top \given s_h^i}  \succeq I + c_2 T\EE^{\pi^*, \nu^{\pi^*}} \bigsbr{(\Upsilon_h^{\pi^*} \phi_h) (\Upsilon_h^{\pi^*} \phi_h)^\top} , 
    \label{eq:MLE-PVI-coverage}
    \end{align}
    for some constant $c_2>0$,
    where $\Upsilon_h^{\pi}\phi $ is a short hand of $(\Upsilon_h^{\pi}\phi)(s_h, b_h)$. We then have for Scheme 1 and Scheme 2, 
    \begin{align*}
        {\subopt(\hat\pi) 
        \lesssim \frac{d^{3/2}H^2} {\sqrt{c_1 T}} +  \eta C_\eta H^{2}d \cdot \sqrt{\frac{1}{c_2 T}} +  e^{4\eta B_A} (\eta C_\eta)^3 H^2  d^2 \cdot  \frac{1}{c_2 T}, }
    \end{align*}
    and for Scheme 3, 
    \begin{align*}
    { \subopt(\hat\pi) 
    \lesssim \frac{d^{3/2}H^2} {\sqrt{c_1 T}} + e^{2\eta B_A} \eta C_\eta H^{2}d \cdot \sqrt{\frac{1}{c_2 T}} +  e^{8\eta B_A} (\eta C_\eta)^3 H^2  d^2 \cdot  \frac{1}{c_2 T}.}
    \end{align*}
\end{corollary}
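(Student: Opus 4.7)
My plan is to treat this as a direct consequence of Theorem \ref{thm:PMLE-VI-myopic}: the corollary is obtained by substituting the coverage assumptions into the two expectation terms on the right-hand side of
\[
    \subopt(\hat\pi) \le \sum_{h=1}^H 2\EE^{\pi^*, \nu^{\pi^*}} \bigsbr{\Gamma^{(1)}_h(s_h, a_h, b_h) + \Gamma^{(2)}_h(s_h;\pi_h^*, \theta_h')},
\]
and then simplifying. The argument splits naturally into one chain of inequalities for the environment-model bonus $\Gamma^{(1)}_h$ and a separate chain for the behavior-model bonus $\Gamma^{(2)}_h$.

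For the environment term, I would use $\Gamma^{(1)}_h(\cdot,\cdot,\cdot)\lesssim dH\sqrt{\log(dHT/\delta)}\cdot\sqrt{\phi_h^\top\Lambda_h^{-1}\phi_h}$ and apply Jensen to pass the expectation inside the square root, obtaining
$\EE^{\pi^*,\nu^{\pi^*}}[\sqrt{\phi_h^\top\Lambda_h^{-1}\phi_h}] \le \sqrt{\trace(\Lambda_h^{-1}\EE^{\pi^*,\nu^{\pi^*}}[\phi_h\phi_h^\top])}$. Under $\Lambda_h\succeq I + c_1 T\EE^{\pi^*,\nu^{\pi^*}}[\phi_h\phi_h^\top]$, a routine eigenvalue calculation of the form $\trace((I+c_1 T\Sigma)^{-1}\Sigma)\le d/(c_1 T)$ yields $\EE^{\pi^*,\nu^{\pi^*}}[\sqrt{\phi_h^\top\Lambda_h^{-1}\phi_h}]\lesssim \sqrt{d/(c_1 T)}$; summing over the $H$ steps and absorbing the $dH\sqrt{\log}$ factor gives the first term $d^{3/2}H^2/\sqrt{c_1 T}$ in both stated bounds.

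For the behavior-model term, recall $\Gamma^{(2)}_h = 2B_U(\eta\xi + C^{(3)}\xi^2)$ with $\xi(s_h;\pi_h,\theta_h)=\sqrt{\trace((T\Sigma_{h,\cD}^{\theta_h}+I_d)^\dagger\Sigma_{s_h}^{\pi_h,\theta_h})}\cdot\sqrt{8C_\eta^2\beta + 4B_\Theta^2}$. For Schemes 1 and 2, $\theta_h'=\theta_h^*$; I would use \eqref{eq:MLE-PVI-coverage} together with the identity $\EE^{\pi^i,\nu^{\pi^i}}[(\Upsilon_h^{\pi^i}\phi_h)(\Upsilon_h^{\pi^i}\phi_h)^\top\given s_h^i]=\Sigma_{s_h^i}^{\pi^i,\theta^*}$ to show $T\Sigma_{h,\cD}^{\theta^*}+I_d\succeq c_2 T\EE^{\pi^*,\nu^{\pi^*}}[\Sigma_{s_h}^{\pi^*,\theta^*}]+I_d$ (up to reshuffling constants via concentration of the empirical covariance around its mean). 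Pushing $\EE^{\pi^*,\nu^{\pi^*}}$ inside the trace and applying Jensen gives $\EE^{\pi^*,\nu^{\pi^*}}[\xi(s_h;\pi^*,\theta^*)]\lesssim \sqrt{d/(c_2 T)}\cdot C_\eta\sqrt{d}\lesssim \eta C_\eta d /\sqrt{c_2 T}$ after plugging in $\beta\lesssim d$; the squared term is handled via $\EE[\xi^2]\le$ trace bound directly, which yields $\lesssim d^2/(c_2 T)$ and accumulates the $C^{(3)}\lesssim e^{4\eta B_A}(\eta C_\eta)^3$ prefactor. Summing $\eta\xi+C^{(3)}\xi^2$ over $h$ and multiplying by $B_U\le H$ produces the remaining two terms in the Scheme 1/2 bound.

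For Scheme 3 the only extra step is to switch from $\hat\theta_{h,\MLE}$ to $\theta_h^*$ inside both $\Sigma_{h,\cD}^{\cdot}$ and $\Sigma_{s_h}^{\pi^*,\cdot}$. Here I would appeal to Proposition \ref{prop:Hessian-ulb}, which sandwiches these covariance matrices by their counterparts at $\theta_h^*$ up to multiplicative factors $e^{2\eta B_A}$; this pushes the $\xi$ bound up to $e^{2\eta B_A}\eta C_\eta d/\sqrt{c_2 T}$ and the $\xi^2$ bound up to $e^{8\eta B_A}(\eta C_\eta)^3 d^2/(c_2 T)$, giving exactly the stated second inequality. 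The main obstacle I anticipate is not the Scheme 3 swap (which is packaged into Proposition \ref{prop:Hessian-ulb}) but rather being careful that the high-probability event from the MLE confidence set and the two coverage events in the statement hold simultaneously, and that the $\beta\lesssim d$ substitution is justified at the chosen confidence level; a union bound together with the choice of $\beta$ made in Theorem \ref{thm:PMLE-VI-myopic} takes care of this.
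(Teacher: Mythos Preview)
Your proposal is correct and follows essentially the same route as the paper's proof: bound $\Gamma^{(1)}$ via the trace inequality $\trace\bigl((I+c_1 T\Sigma)^{-1}\Sigma\bigr)\le d/(c_1 T)$, bound $\Gamma^{(2)}$ for Schemes~1--2 by reading the follower-side coverage condition as $I+T\Sigma_{h,\cD}^{\theta^*}\succeq I+c_2 T\,\EE^{\pi^*}[\Sigma_{s_h}^{\pi^*,\theta^*}]$ and applying the same trace argument, and handle Scheme~3 by sandwiching $\Sigma_{h,\cD}^{\hat\theta_{h,\MLE}}$ and $\Sigma_{s_h}^{\pi^*,\hat\theta_{h,\MLE}}$ against their $\theta^*$ counterparts via Proposition~\ref{prop:Hessian-ulb}. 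One small remark: your parenthetical about ``reshuffling constants via concentration of the empirical covariance around its mean'' is unnecessary, since the left side of \eqref{eq:MLE-PVI-coverage} is already exactly $I+T\Sigma_{h,\cD}^{\theta^*}$ by the identity you quote, so no concentration step is needed there.
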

\begin{proof}
    See \Cref{sec:proof-MLE-PVI-dist-shift} for a detailed proof.
\end{proof}
We note that \eqref{eq:MLE-PVI-coverage} is similar to the standard sufficient coverage condition in linear MDP but 
customized for linear QRE, where the operator $\Upsilon_h^{\pi^*}$  defined in \eqref{eq:Upsilon} plays a key role in the distribution shift. In particular, \eqref{eq:MLE-PVI-coverage} not only requires coverage over the trajectory induced by $\pi^*$, but also requires richness in the leader's prescription $\pi^i(s_h)$ at those states visited under $\pi^*$.
To understand this point, we note that if the leader announces the same policy $\pi^0$ for all the time, the follower always acts according to the same reward $r_h^{\pi^0} (s_h, b_h)= \la r_h(s_h,\cdot,b_h), \pi^0_h(\cdot\given s_h, b_h)\ra_\cB$, which is only a linear subspace of the reward function and the leader cannot anticipate the follower's quantal response for other policies. 

We next understand the first order terms, i.e., $\cO(T^{-1/2})$ terms in the suboptimality. The first term characterizes the leader's Bellman error, which is standard for RL problems. 
The second term characterizes the follower's first-order quantal response error (QRE). The first-order QRE term suffers from an $\exp(2\eta B_A)$ coefficient only in Scheme 3. We remark that this is because we fix the follower's quantal response using the MLE estimator in \eqref{eq:scheme-3} while Scheme 1 and Scheme 2 allow us to pick a more refined estimator $\hat\theta_h$ in the confidence set at the cost of heavier computation.
\ifneurips
\subsubsection{Offline Learning with General Function Class}\label{sec:Offline-MG}
\fi
\ifmain
\subsection{Offline Learning with General Function Class}\label{sec:Offline-MG}
\fi
In this subsection, we carry out the offline learning scheme with general function approximation.
For the leader's side, we propose to learn the environment model by minimizing the squared loss of the Bellman error over the U function for each policy $\pi$. 
For consistency, we still assume that the follower's reward function at step $h\in[H]$ lies in some general function class parameterized by $\theta_h\in\Theta_h$. We let $\Theta=\{\Theta_h\}_{h\in[H]}$.
Suppose $\cU:\cS\times\cA\times\cB\rightarrow\RR$ is a given function class for the leader's state-action value function. 
Following the idea of Bellman-consistent pessimism \citep{xie2021bellman}, we define a loss function for the environment model error as
\begin{align}
    &\ell_h(U_h', U_{h+1}, \theta_{h+1}, \pi) \nend
    &\quad = \sum_{i=1}^T \rbr{U_h'(s_h^i, a_h^i, b_h^i) - u_h^i -  \inp[\big]{U_{h+1}(s_{h+1}^i, \cdot, \cdot)}{\pi_{h+1}\otimes \nu_{h+1}^{\pi, \theta}(\cdot, \cdot\given s_{h+1}^i)}}^2. \label{eq:myopic-offline-general-Bellman loss}
\end{align}
Intuitively, the loss $\ell_h$ going to zero means no Bellman error for the value functions between step $h$ and step $h+1$ under $\pi$ and $\nu^{\pi,\theta}$.
Note that the unknown parameters are $\theta=\{\theta_h\}_{h\in[H]}\in\Theta$ and $\{U_h\}_{h\in[H]}\in\cU^{\otimes H}$.
Based on the loss functions defined in \eqref{eq:myopic-offline-general-MLE loss} and \eqref{eq:myopic-offline-general-Bellman loss}, we can construct a confidence set for each leader's policy $\pi$ as
\begin{align}
    &\CI_{\cU, \Theta}^\pi(\beta) \nend
    &\quad= \cbr{
    (U,\theta)\in\cU^{\otimes H}\times\Theta:
    \rbr{ \ds
        \cL_h(\theta_h)-\inf_{\theta'\in\Theta_h}\cL_h(\theta') \le \beta 
    \atop \ds
        \ell_h(U_h, U_{h+1}, \theta_{h+1}, \pi) - \inf_{U'\in\cU_h} \ell_h(U', U_{h+1}, \theta_{h+1}, \pi)\le H^2\beta}, 
    \forall h\in[H]}. \label{eq:myopic-offline-general-confset}
\end{align}
The first condition in \eqref{eq:myopic-offline-general-confset} characterizes a valid and accurate confidence set for the follower's behavior model as we have done in \eqref{eq:bandit-ub-1}. For the second condition in \eqref{eq:myopic-offline-general-confset}, if certain realizability and completeness conditions are satisfied, we have guarantee on small leader's Bellman errors \citep{xie2021bellman, lyu2022pessimism}, which characterizes the uncertainty in the environment model.
Combining these two guarantees, we can therefore expect $\CI_{\cU, \Theta}^\pi (\beta)$ to be a valid and accurate confidence set for both the environment and the behavior models.
Following the principle of pessimism, we can output the policy $\hat\pi$ as, 
\begin{align}
    \hat\pi=\argmax_{\pi\in\Pi} \min_{(U, \theta)\in\CI_{\cU,\Theta}^\pi(\beta)} \EE_{s_1\sim\rho_0} \sbr{\inp[\big]{U_1(s_1, \cdot, \cdot)}{\pi_1\otimes\nu_1^{\pi, \theta}(\cdot,\cdot\given s_1)}_{\cA \times \cB}}.\label{eq:offline-MG-pi^hat}
\end{align}
To present our results, we first define an optimistic Bellman operator $\TT_h^{*, \theta}:\cF(\cS\times\cA\times\cB)\rightarrow \cF(\cS\times\cA\times\cB)$ for the leader as
\begin{align}\label{eq:define optimistic Bellman opt}
    &\bigrbr{\TT_h^{*,\theta} f} (s_h, a_h, b_h) \nend
    &\quad = u_h(s_h, a_h, b_h) +  \EE\sbr{\max_{\pi_{h+1}(s_{h+1})\in\sA}\bigdotp{f(s_{h+1}, \cdot, \cdot)}{\pi_{h+1}\otimes\nu_{h+1}^{\pi,\theta}(\cdot,\cdot\given s_{h+1})}_{\cA\times\cB}\Biggiven s_h, a_h, b_h}.
\end{align}
Here, the expectation is taken with respect to $s_{h+1}\sim P_h(\cdot\given s_h, a_h, b_h)$. 
We now summarize our offline policy learning with \textbf{M}aximum \textbf{L}ikelihood with \textbf{B}ellman \textbf{C}onsistent \textbf{P}essimism (MLE-BCP) algorithm together with its theoretical guarantee.
\begin{algorithm}[H]
    \begin{algorithmic}[1]
    \Require {$\eta, \cD$}
    \State Construct confidence set $C_{\cU,\Theta}^\pi(\beta)$ by \eqref{eq:myopic-offline-general-confset}. 
    \State Solve for the policy $\hat\pi$ with Bellman consistent pessimism in \eqref{eq:offline-MG-pi^hat}. 
    \Ensure $\hat\pi=(\hat\pi_h)_{h\in[H]}$.
    \end{algorithmic}
    \caption{Offline MLE-BCP for Myopic Follower under General Function Approximation}
    \label{alg:MLE-BCP}
\end{algorithm}
\begin{theorem}[{Suboptimality for MLE-BCP}]\label{thm:Offline-MG}
    Suppose that each trajectory in the offline dataset is independently collected.
    Suppose that the following conditions hold for model class $\Theta$ and function class $\cU$:
\begin{itemize}
    \item[(i)] (\textit{Realizability}) There exists $\theta^*\in\Theta$ such that $r_h^{\theta^*}=r_h$ for any $h\in[H]$. For any $\pi\in\Pi, \theta\in\Theta$, there exists $U\in\cU$ such that $U_h = \TT_h^{\pi,\theta} U_{h+1}$ for any $h\in[H]$;
    \item[(ii)] (\textit{Completeness}) For any $U\in\cU$ and $\pi\in\Pi, \theta\in\Theta$, there exists $U'\in\cU$ such that $U'=\TT_h^{\pi,\theta} U_{h+1}$ for any $h\in[H]$. 
\end{itemize}
    By choosing $\beta = c\cdot \max\cbr{\log(H \cN_\rho(\cY, T^{-1})\delta^{-1}), \log(H\cN_\rho(\Theta, T^{-1})/\delta)}$ for some universal constant $c>0$, where the covering number for $\Theta$ and $\cY$ are defined in \eqref{eq:cN-Theta-myopic} and \eqref{eq:cN-cY}, respectively, 
    we have for the offline algorithm \eqref{eq:offline-MG-pi^hat} that 
    \begin{align*}
        \subopt(\hat\pi) 
        &\lesssim \max_{U\in\cU,\theta\in\Theta, h\in[H]}
            \sqrt{\frac{{{\bignbr{{ U_h  -  \TT_h^{\pi^*,\theta}  U_{h+1}} }_{2, d^{\pi^*}}^2}}}{{{\bignbr{ U_{h} - \TT_{h}^{\pi^*,\theta}  U_{h + 1}}_{2,\cD}^2}}}} 
        \cdot H^2\sqrt{\beta T^{-1}}  \nend
        &\qquad + \max_{\theta\in\Theta, h\in[H]}\sqrt{ \frac{{\bignbr{\Upsilon_h^{\pi^*} (r_h^\theta - r_h^{\theta^*})}_{2, d^{\pi^*}}^2}}{\bignbr{{\Upsilon_h^{\pi^i} (r_h^\theta - r_h^{\theta^*})}}_{2,\cD}^2}} \cdot  H^2 \eta C_\eta \sqrt{\beta T^{-1}}\nend
        &\qquad  +   \max_{\theta\in\Theta, h\in[H]} {
            \frac{{\bignbr{\Upsilon_h^{\pi^*} (r_h^\theta - r_h^{\theta^*})}_{2, d^{\pi^*}}^2}}{\bignbr{{\Upsilon_h^{\pi^i} (r_h^\theta - r_h^{\theta^*})}}_{2,\cD}^2}
        }\cdot  H^2 \exp(4\eta B_A) (\eta C_\eta)^3 \beta T^{-1},
    \end{align*}
    where $C_\eta = \eta^{-1}+B_A$ and $\lesssim$ only hides universal constants.
    \begin{proof}
        See \Cref{sec:proof-offline-MG} for a detailed proof.
    \end{proof}
\end{theorem}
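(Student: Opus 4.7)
The plan is to combine the pessimism argument with the performance difference lemma \eqref{eq:performance diff-1} and the MLE concentration discussed around \eqref{eq:bandit-ub-1}. First I would set up the pessimistic value functional
\begin{align*}
    V_{\mathrm{pess}}(\pi) = \min_{(U,\theta)\in\CI_{\cU,\Theta}^{\pi}(\beta)} \EE_{s_1\sim\rho_0}\bigsbr{\bigdotp{U_1(s_1,\cdot,\cdot)}{\pi_1\otimes\nu_1^{\pi,\theta}(\cdot,\cdot\given s_1)}_{\cA\times\cB}},
\end{align*}
so that $\hat\pi = \argmax_{\pi\in\Pi} V_{\mathrm{pess}}(\pi)$. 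A standard union-bound plus covering argument using the MLE concentration of \Cref{sec:MLE for behavior model} and the Bellman-squared-loss concentration of \citet{xie2021bellman} (adapted via the completeness and realizability assumptions (i)-(ii)) will establish a high-probability event on which $(U^\pi,\theta^*)\in\CI_{\cU,\Theta}^\pi(\beta)$ for every $\pi\in\Pi$, where $U^\pi$ is the true leader's $U$-function under $\pi$. This immediately gives $V_{\mathrm{pess}}(\pi^*)\le J(\pi^*)$ and by optimality of $\hat\pi$, $J(\hat\pi)\ge V_{\mathrm{pess}}(\hat\pi)\ge V_{\mathrm{pess}}(\pi^*)$, which already reduces the suboptimality to the gap $J(\pi^*)-V_{\mathrm{pess}}(\pi^*)$.

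Next I would bound this gap by letting $(\tilde U,\tilde\theta)$ denote the minimizer in $\CI_{\cU,\Theta}^{\pi^*}(\beta)$ and applying the performance difference lemma \eqref{eq:performance diff-1} with this pair at $\pi=\pi^*$. This produces (a) a leader's Bellman error term of the form $\sum_h \EE^{\pi^*,\nu^{\pi^*}}\bigsbr{(\tilde U_h - \TT_h^{\pi^*,\tilde\theta}\tilde U_{h+1})(s_h,a_h,b_h)}$ and (b) a quantal response TV error term that is further decomposed via \eqref{eq:QRE-decompose} into a first-order $\sqrt{\EE^{\pi^*,\nu^{\pi^*}}[\QRE^2]}$ part and a second-order $\EE^{\pi^*,\nu^{\pi^*}}[\QRE^2]$ part, with $\QRE(s_h,b_h;\tilde\theta,\pi^*)=(\Upsilon_h^{\pi^*}(r_h^{\tilde\theta}-r_h^{\theta^*}))(s_h,b_h)$.

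The critical step is then to transport these two population-level errors at the target distribution $d^{\pi^*}$ back to the empirical quantities that the confidence-set constraints control at the data distribution $\cD$. For the Bellman part, the loss constraint $\ell_h(\tilde U_h,\tilde U_{h+1},\tilde\theta_{h+1},\pi^*)-\inf_{U'}\ell_h(U',\tilde U_{h+1},\tilde\theta_{h+1},\pi^*)\le H^2\beta$ combined with completeness and a uniform squared-loss concentration over $\cU\times\Theta$ yields $\bignbr{\tilde U_h-\TT_h^{\pi^*,\tilde\theta}\tilde U_{h+1}}_{2,\cD}^2\lesssim H^2\beta/T$; multiplying by the worst-case ratio $\bignbr{\cdot}_{2,d^{\pi^*}}^2/\bignbr{\cdot}_{2,\cD}^2$ and taking the square root gives the first summand. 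For the quantal response part, the MLE loss constraint together with the concentration analysis of \Cref{sec:MLE for behavior model} (specialized via the operator $\Upsilon_h^{\pi^i}$) produces $\bignbr{\Upsilon_h^{\pi^i}(r_h^{\tilde\theta}-r_h^{\theta^*})}_{2,\cD}^2\lesssim \beta/T$ up to an $\eta C_\eta$ factor from linearizing the softmax; transporting this to $d^{\pi^*}$ via the corresponding ratio and plugging into the two terms of \eqref{eq:QRE-decompose} gives the remaining first- and second-order QRE summands, the latter carrying the $\exp(4\eta B_A)(\eta C_\eta)^3$ prefactor from the Taylor remainder.

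The main obstacle will be the uniform concentration step that converts the in-sample loss gaps (controlled by $\beta$ in the confidence set) to out-of-sample squared errors under $\cD$; this requires covering the classes $\cU$ and the auxiliary class $\cY$ (that captures the composed quantity $\langle U_{h+1}(\cdot,\cdot,\cdot),\pi_{h+1}\otimes\nu_{h+1}^{\pi,\theta}\rangle$) at scale $1/T$ and carefully controlling the Lipschitz dependence of $\nu^{\pi,\theta}$ on $\theta$, which contributes the $\log(\cN_\rho(\cY,T^{-1}))$ and $\log(\cN_\rho(\Theta,T^{-1}))$ factors inside $\beta$. Once that uniform bound is in place, the rest is algebra combining the three transported error bounds into the stated suboptimality.
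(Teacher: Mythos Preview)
Your proposal is correct and follows essentially the same route as the paper's proof: both reduce $\subopt(\hat\pi)$ to $J(\pi^*)-V_{\mathrm{pess}}(\pi^*)$ via the validity of $\CI_{\cU,\Theta}^\pi(\beta)$ and the optimality of $\hat\pi$, then apply the performance-difference lemma at $(\pi^*,\tilde U,\tilde\theta)$ and transport the resulting Bellman and QRE terms from $\cD$ to $d^{\pi^*}$ using the confidence-set accuracy bounds and the displayed norm ratios. The only minor refinement is that the $C_\eta$ factor in the QRE term arises specifically from converting the Hellinger-distance guarantee of the MLE (Lemma~\ref{lem:MLE-indep-data}) into a variance bound on $\Upsilon_h^{\pi^i}(r_h^{\tilde\theta}-r_h^{\theta^*})$ via \eqref{eq:hellinger-2-Q}, rather than from a direct linearization of the softmax, but this does not affect the structure of your argument.
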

\Cref{thm:Offline-MG} establishes the suboptimality for offline learning the optimal policy using general function approximation.
Similar to the linear case, the first two terms characterizes the leader's Bellman error and the follower's first order quantal response error, respectively. 
The only exponential term appears in the follower's second order quantal response error term, which is roughly of order $\cO(T^{-1})$.
In particular, the concentrability coefficients that address the distribution shift issue are characterized by the ratio in both the Bellman error and the QRE error.

\ifmain
\section{Online Learning with Myopic Follower} \label{sec:myopic-online}
\fi 
\ifneurips
\subsection{Online Learning with Myopic Follower} \label{sec:myopic-online}
\fi
In the previous section, we address the problem of offline learning the QSE with myopic follower under both general function class and linear MDP setting. In this section, we move on to the online scenario with myopic follower. Specifically, the game proceeds as the following. At state $s_h$ in episode $t$, the leader announces her prescription $\alpha_h^t:\cB\rightarrow\Delta(\cA)$,  and the myopic follower picks an action $b_h^t$. The leader then pick an action $a_h^t\sim \alpha_h^t(\cdot\given b_h^t)$ and the state then transits to $s_{h+1}^t$. The game at episode $t$ ends at the $H$-th step and a new episode begins next. We also study the online problem both under the general function class and the linear MDP setting.

\ifmain
\subsection{Online Learning for Linear Markov Game}
\label{sec:online-myopic-linear}
\fi
\ifneurips
\subsubsection{Online Learning for Linear MDP}
\label{sec:online-myopic-linear}
\fi
The case for online learning with linear MDP is not so different from the offline one, except for the fact that we have to incorporate optimism for exploration. This is nothing much but just flipping the sign of the penalties and turn them into bonuses. Following the same spirit of \Cref{sec:offline-ML}, we simply present our algorithm here for the sake of completeness. 
For updating the U function, we add a bonus to the ridge regression result,
\begin{gather}
    \hat \omega_h^t\leftarrow (\Lambda_h^t)^{-1} \rbr{\sum_{i=1}^{t-1} \phi_h(s_h^i, a_h^i, b_h^i) \bigrbr{u_h^i + \hat W_{h+1}(s_{h+1}^i)}}, \label{eq:linear ridge}\\
    \hat U_h^t(s_h, a_h, b_h) = \phi_h(s_h,a_h,b_h)^\top \hat\omega_h^t + \Gamma^{(1,t)}_h(s_h, a_h, b_h),\nonumber
\end{gather}
where we choose $\Gamma^{(1,t)}_h(s_h, a_h, b_h)= \cO(\sqrt{\phi_h(s_h, a_h, b_h)^\top\Lambda_h^{\dagger}\phi_h(s_h, a_h, b_h)})$ where $\Lambda_h=\sum_{i=1}^T  \phi_h(s_h^i, a_h^i, b_h^i)\allowbreak \phi_h(s_h^i, a_h^i, b_h^i)^\top \allowbreak+ I_d$. For updating the W function, 
we still define the negative loglikelihood as the one given in \eqref{eq:myopic-offline-general-MLE loss},
\begin{align}
    \cL_{h}^t(\theta_h) = - \sum_{i=1}^{t-1} \rbr{\eta r_h^{\pi^i, \theta}(s_h^i, b_h^i) - \log \rbr{\sum_{b'\in\cB} \exp\rbr{\eta r_h^{\pi^i, \theta}(s_h^i, b')}}}. \label{eq:online-MG-MLE loss}
\end{align}
We obtain an optimistic policy via the following two schemes,
\begin{align}
    \textbf{S4:}\quad  \hat\pi_h^t(s_h)  &= \argmax_{\pi_h(s_h) \in\sA
    \atop \theta_h\in\confset_{h, \Theta}^t(\beta)} \inp[\big]{\hat U_h^t(s_h, \cdot,\cdot)}{\pi_h \otimes\nu_h^{\pi , \theta}(\cdot,\cdot\given s_h)}_{\cA\times \cB}, \quad \forall s_h\in\cS_h,\label{eq:scheme-4}\\
    \textbf{S5:}\quad  \hat\pi_h^t(s_h)  &= \argmax_{\pi_h(s_h) \in\sA}  \inp[\big]{\hat U_h^t(s_h, \cdot,\cdot)}{\pi_h \otimes\nu^{\pi , \hat\theta_{h,\MLE}^t}(\cdot,\cdot\given s_h)}_{\cA\times \cB} + \Gamma^{(2,t)}_h(s_h;\pi_h , \hat\theta_{h, \MLE}^t). \label{eq:scheme-5}
\end{align}
which follow from \eqref{eq:scheme-1} and \eqref{eq:scheme-3}, respectively. Here, we denote by $\CI_{h,\Theta}^t(\beta) = \{\theta_h\in\Theta_h: \cL_h^t(\theta)\le \min_{\theta_h'}\cL_h^t(\theta_h')+\beta\}$ the confidence set at step $t$ with $\cL_h^t(\theta)$ defined in \eqref{eq:online-MG-MLE loss}. Moreover, $\hat\theta_{h,\MLE}^t$ is the MLE estimator that minimizes the negative log-likelihood $\cL_h^t(\cdot)$, and $\Gamma_h^{(2,t)}(s_h;\pi_h,\theta_h)=2 H(\eta \xi  + C^{(3)} \xi^2 )
$ with $C^{(3)}=\eta^2 \exp(2\eta B_A) (2+\eta B_A \exp(2\eta B_A))/2$ and, 
\begin{equation}
    \xi = \sqrt{\trace\rbr{\bigrbr{\Sigma_{h, t}^{\theta_h} + I_d}^\dagger \Sigma_{s_h}^{\pi_h , \theta_h}}} \cdot \sqrt{8 C_{\eta}^2 \beta + 4B_\Theta^2}.\label{eq:Gamma^2-online}
\end{equation}
where $\Sigma_{h,t}^{\theta_h} = \sum_{i=1}^{t-1} \Cov_{s_h^i}^{\pi_h^i, \theta_h} \allowbreak [\phi_h^{\pi_h^i}(s_h^i, b_h)]$ is the covariate matrix and $\Sigma_{s_h}^{\pi,\theta}=\Cov_{s_h}^{\pi,\theta} \allowbreak [\phi_h^{\pi_h}(s_h, b_h)]$. We summary the \textbf{M}aximal \textbf{L}ikelihood \textbf{E}stimation with \textbf{O}ptimistic \textbf{V}alue \textbf{I}teration (MLE-OVI) algorithm as the following.
\begin{algorithm}[H]
    \begin{algorithmic}[1]
    \Require {$\eta, \cD$}
    \State Initialize $\cD=\emptyset$.
    \For{$t=1,\dots, T$}
    \State Initialize $\hat W_{H+1}^t=0$
    \For{$h=H, H-1,\dots, 1$}.
    \State Obtain kernel $\Lambda_h^t\leftarrow \sum_{i=1}^{t-1} \phi_h(s_h^i, a_h^i, b_h^i) \phi_h(s_h^i, a_h^i, b_h^i)^\top + I$. 
    \State Solve the ridge regression for $\hat \omega_h^t\leftarrow \rbr{\Lambda_h^t}^{-1} \rbr{\sum_{i=1}^{t-1} \phi_h(s_h^i, a_h^i, b_h^i) \bigrbr{u_h^i + \hat W_{h+1}^t (s_{h+1}^i)}} $.
    \State Update $\hat U_h^t(\cdot,\cdot,\cdot)\leftarrow \phi_h(\cdot,\cdot,\cdot)^\top \hat \omega_h^t + \Gamma_h^{(1, t)}(\cdot,\cdot,\cdot)$ and truncate to $[0, H-h+1]$.
    \State Compute $\hat W_h^t(s_h)$ and $\hat\pi_h^t(s_h)$ as the optimal value and optimal solution to S4 \eqref{eq:scheme-4} or S5 \eqref{eq:scheme-5}.
    \EndFor
    \State Announce $\hat\pi^t$ and collect a trajectory $\tau^t = \{(s_h^t, a_h^t, b_h^t, \hat\pi_h^t)\}_{h\in[H]}$. 
    \State $\cD\leftarrow \cD\cup \{\tau^t\}$. 
    \EndFor
    \end{algorithmic}
    \caption{Online MLE-OVI for Myopic Follower under Linear Markov Game}
    \label{alg:MLE-OVI}
\end{algorithm}

We also provide theoretical guarantee for the MLE-OVI algorithm.
\begin{theorem}[{Regret for MLE-OVI}]\label{thm:Online-ML}
    We choose 
    $$\Gamma^{(1, t)}_h(\cdot,\cdot,\cdot) = C_1 d H \allowbreak \sqrt{\log(2d H T^2/\delta)}\cdot \sqrt{\phi_h(\cdot,\cdot,\cdot)^\top (\Lambda_h^t)^{-1}\phi_h(\cdot,\cdot,\cdot)}$$ for some universal constant $C_1>0$ and $\beta = C_2 d\log(HT(1+\eta T^2)\delta^{-1})$ for some universal constant $C_2>0$. 
    For Scheme 4, 
\begin{align*}
    \Reg(T)\lesssim d H^2 \sqrt{d T} + \eta C_\eta H^2 d \sqrt{T} + \exp(4\eta B_A) (\eta C_\eta)^3 H^2 d^2 \log T, 
\end{align*}
and for Scheme 5, 
\begin{align*}
    \Reg(T)\lesssim d H^2 \sqrt{d T} + \exp(4\eta B_A)\eta C_\eta H^2 d \sqrt{T} + \exp(8\eta B_A) (\eta C_\eta)^3 H^2 d^2 \log T. 
\end{align*} 
    \begin{proof}
        See \Cref{sec:proof-Online-ML} for a detailed proof.
    \end{proof}
\end{theorem}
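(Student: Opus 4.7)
The plan is to mirror the offline analysis of Theorem \ref{thm:PMLE-VI-myopic} but replace the single-shot pessimistic inequality by per-episode optimism, and aggregate the resulting instantaneous suboptimality bounds across $T$ episodes via elliptical-potential-type arguments. Concretely, for each episode $t$ I would first apply the performance difference decomposition \eqref{eq:performance diff-1} combined with the linearization \eqref{eq:QRE-decompose} to $\hat J^t(\hat\pi^t) - J(\hat\pi^t)$, where $\hat J^t(\hat\pi^t) = \EE_{s_1\sim\rho_0}[\hat W_1^t(s_1)]$. This controls each episode's regret by (i) the leader's one-step Bellman residuals $(\hat U_h^t - \TT_h^{\hat\pi^t}\hat U_{h+1}^t)$ evaluated under $(\hat\pi^t,\nu^{\hat\pi^t})$ and (ii) first- and second-order quantal response errors measured by $\QRE(s_h,b_h;\tilde\theta^t,\hat\pi^t)$, where $\tilde\theta^t$ is either the adversarial maximizer in $\confset_{h,\Theta}^t(\beta)$ (Scheme 4) or $\hat\theta_{h,\MLE}^t$ (Scheme 5).

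Second, I would establish the high-probability optimism $\hat J^t(\hat\pi^t)\ge J(\pi^*)$ for both schemes. For Scheme 4 the adaptive MLE concentration (adaptive version of \Cref{lem:MLE-formal}, giving \eqref{eq:bandit-ub-1}) places $\theta^*\in \confset_{h,\Theta}^t(\beta)$, and the ridge bonus $\Gamma^{(1,t)}_h$, chosen as in Jin et al.\ 2020, guarantees $\hat U_h^t\ge \TT_h^{\pi^*} U_{h+1}^{\pi^*}$; feeding the feasible pair $(\hat U^t,\theta^*)$ into the inner maximization of \eqref{eq:scheme-4} yields the desired domination. For Scheme 5 the bonus $\Gamma^{(2,t)}_h$ plays the role that the confidence set played in the offline proof: by \Cref{lem:response diff-myopic-linear} applied to $\tilde\theta=\hat\theta_{h,\MLE}^t$, and invoking \Cref{prop:Hessian-ulb} to swap $\hat\theta_{h,\MLE}^t$ for $\theta^*$ inside the covariance (which costs an $\exp(2\eta B_A)$ factor per swap), the bonus upper bounds $H\cdot D_{\mathrm{TV}}(\nu^{\pi,\hat\theta_{h,\MLE}^t},\nu^{\pi,\theta^*})$ at $s_h$ on the MLE event, which together with the $\Gamma^{(1,t)}_h$ bonus delivers optimism.

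Third, I would aggregate across $t$. The leader's Bellman residual is bounded pointwise by $2\Gamma^{(1,t)}_h(s_h^t,a_h^t,b_h^t)$ up to an Azuma martingale term, and the standard elliptical potential lemma on $\Lambda_h^t$ gives $\sum_{t,h}\|\phi_h(s_h^t,a_h^t,b_h^t)\|_{(\Lambda_h^t)^{-1}}\lesssim H\sqrt{dT\log T}$, yielding the leading $dH^2\sqrt{dT}$ term exactly as in LSVI-UCB. For the first-order QRE term I would write $\EE^{\hat\pi^t,\nu^{\hat\pi^t}}[\QRE(\cdot;\tilde\theta^t,\hat\pi^t)^2\mid s_h^t] = \|\tilde\theta_h^t-\theta_h^*\|_{\Sigma_{s_h^t}^{\hat\pi^t,\theta^*}}^2$ and then combine the MLE guarantee $\sum_{i=1}^{t-1}\|\tilde\theta_h^t-\theta_h^*\|_{\Sigma_{s_h^i}^{\pi^i,\theta^*}}^2\lesssim \beta+B_\Theta^2$ with a self-normalized Cauchy–Schwarz step, controlling $\sum_t \|\tilde\theta_h^t-\theta_h^*\|_{\Sigma_{s_h^t}^{\hat\pi^t,\theta^*}}$ by the elliptical potential applied to the matrix sequence $\{\Sigma_{s_h^t}^{\hat\pi^t,\theta^*}\}$ accumulated into $\Sigma_{h,t}^{\theta^*}+I_d$. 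This produces $\eta C_\eta H^2 d\sqrt{T}$; the second-order term is lower order by a factor of $\sqrt{T/d}$. For Scheme 5 the substitution of $\hat\theta_{h,\MLE}^t$ for $\theta^*$ inside the covariance appearing in $\Gamma^{(2,t)}_h$ multiplies these pieces by one and two additional factors of $\exp(2\eta B_A)$ respectively, producing the stated $\exp(4\eta B_A)$ and $\exp(8\eta B_A)$ coefficients.

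The main obstacle is the elliptical-potential control of the QRE sum. The Hessian-like covariances $\Sigma_{s_h}^{\pi,\theta}$ are only positive semidefinite — their common null space contains any state-dependent constant shift $v(s_h,a_h,b_h)=\ind(a_h,b_h)c(s_h)$ as explained in Section \ref{sec:MLE for behavior model} — so the standard elliptical potential lemma cannot be applied directly. This forces working with pseudo-inverses together with the regulariser $I_d$ inside $\xi$ in \eqref{eq:Gamma^2-online}, which is exactly why $\trace((\Sigma_{h,t}^{\theta}+I_d)^\dagger\Sigma_{s_h}^{\pi,\theta})$ is the natural potential. One must additionally check that this additive $I_d$ is compatible with the $\beta$-scale of the MLE confidence level (via \eqref{eq:bandit-ub-1} and $B_\Theta$-boundedness of $\Theta$), and that in Scheme 5 the two Hessian swaps are consolidated so that the extra $\exp(2\eta B_A)$ factors do not compound beyond the orders claimed in the theorem.
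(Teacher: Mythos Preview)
Your plan is essentially correct and matches the paper's proof: establish per-episode optimism via the performance-difference lemma applied at $\pi^*$ (the paper uses \eqref{eq:perform-diff-linear} rather than \eqref{eq:performance diff-1}, since for Scheme~5 the bonus $\Gamma^{(2,t)}_h$ creates a value-mismatch term $\hat W_h^t\neq T_h^{\hat\pi^t,\hat\nu^t}\hat U_h^t$ that must be carried), then bound $\hat J^t(\hat\pi^t)-J(\hat\pi^t)$ by $\sum_h 2\Gamma^{(1,t)}_h+\Gamma^{(2,t)}_h(\cdot;\hat\pi^t,\theta^*)$ and aggregate via the matrix elliptical-potential lemma on $\{\Sigma_{s_h^t}^{\hat\pi^t,\theta^*}\}$. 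One small imprecision: the ridge bonus does not directly give $\hat U_h^t\ge \TT_h^{\pi^*}U_{h+1}^{\pi^*}$ but only $\hat U_h^t\ge u_h+P_h\hat W_{h+1}^t$; optimism then follows by backward induction (equivalently, by showing each term in \eqref{eq:perform-diff-linear} at $\pi^*$ is nonpositive), exactly as in the paper's Step~2.
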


We observe from \Cref{thm:Online-ML} that the the optimistic value iteration methods proposed by both Scheme 4 and Scheme 5 achieve sublinear online regret. 
Scheme 5 suffers from an additional exponential term in the second term, which corresponds to the first order quantal response error. 
What happens here resembles the offline suboptimality bound for Scheme 3 in \Cref{rmk:MLE-PVI-dist-shift} and the reason is quite similar given that we directly use the MLE estimator $\hat\theta^t_{h, \MLE}$ for Scheme 5 instead of exploiting the confidence set for $\theta$.

\ifmain
\subsection{Online Learning with General Function Class}\label{sec:online-myopic-general}
\fi 
\ifneurips
\subsubsection{Online Learning with General Function Class}\label{sec:online-myopic-general}
\fi 
We develop an online learning algorithm very similar to the one given in \Cref{sec:Offline-MG} for the offline general function class, despite that we use optimism in leader's policy learning.
However, for the leader's Bellman loss which is defined by \eqref{eq:myopic-offline-general-Bellman loss} in the offline case, we use a slightly different version that directly incorporates the rule of optimism, 
\begin{align}
    &\!\!\! \ell_h^t(U_h', U_{h+1}, \theta_{h+1}) \nend
    &\!\!\!\quad = \sum_{i=1}^{t-1}  \rbr{U_h'(s_h^i, a_h^i, b_h^i) \!-\! u_h^i - \!\!\!\! \max_{\pi_{h+1}(s_{h+1})\in\sA}\inp[\big]{U_{h+1}(s_{h+1}^i, \cdot, \cdot)}{\pi_{h+1}\otimes \nu_{h+1}^{\pi, \theta}(\cdot, \cdot\given s_{h+1}^i)}_{\cA\times\cB}}^2.\label{eq:online-MG-bellman loss}
\end{align}
Here, there is a slight abuse of notation and we distinguish this loss from its correspondence in the offline case by the superscript $t$. 
We would like to remark that this loss function assembles the one used in GOLF of \citet{jin2021bellman}.
We build the confidence set for both the environment model and the behavior model as
\begin{align}
    &\CI_{\cU, \Theta}^t(\beta) \nend
    &\quad= \cbr{
    (U,\theta)\in\cU^{\otimes H}\times\Theta:
    \rbr{ \ds
        \cL_h^t(\theta_h)-\inf_{\theta_h'\in\Theta_h}\cL_h^t(\theta_h') \le \beta 
    \atop \ds
        \ell_h^t(U_h, U_{h+1}, \theta_{h+1}) - \inf_{U'\in\cU_h} \ell_h^t(U', U_{h+1}, \theta_{h+1})\le H^2\beta}, 
    \forall h\in[H]}. \label{eq:myopic-online-general-confset}
\end{align}
Following the principle of optimism, we can output a pair of optimistic model parameter, 
\begin{align}
    (\hat U^t, \hat\theta^t)=\argmax_{\pi_1\in\Pi_1 \atop (U, \theta)\in\CI_{\cU,\Theta}^t(\beta)} \EE_{s_1\sim\rho_0} \sbr{\inp[\big]{U_1(s_1, \cdot, \cdot)}{\pi_1\otimes\nu_1^{\pi, \theta}(\cdot,\cdot\given s_1)}_{\cA\times\cB}}.\label{eq:online-MG-opt-parameter}
\end{align}
Specifically, for any given state $s_h$, an optimistic policy is then given greedily by 
\begin{align}
    \hat\pi^t(s_h)=\argmax_{\pi_{h}(s_h)\in\sA}\inp[\big]{U_{h}(s_{h}, \cdot, \cdot)}{\pi_{h}\otimes \nu_{h}^{\pi, \theta}(\cdot, \cdot\given s_{h})}_{\cA\times\cB}. \label{eq:online-MG-hat pi}
\end{align}
We summarize the above \textbf{M}aximum \textbf{L}ikelihood \textbf{E}stimation with \textbf{G}lobal \textbf{O}ptimism based on \textbf{L}ocal \textbf{F}itting (MLE-GOLF) algorithm as the following.
\begin{algorithm}[H]
    \begin{algorithmic}[1]
    \Require {$\eta$}
    \State Initiate $\cD =\emptyset$.
    \For{$t=1,\dots,T$}
    \State Construct confidence set $\CI_{\cU, \Theta}^t(\beta)$ by \eqref{eq:myopic-online-general-confset}. 
    \State Solve for $\hat U^t, \hat\theta^t$ by \eqref{eq:online-MG-opt-parameter}.
    \State Solve for the greedy policy $\hat\pi^t$ by \eqref{eq:online-MG-hat pi}. 
    \State Deploy $\hat\pi^t$ and collect a trajectory $\tau^t = \{(s_h^t, a_h^t, b_h^t, \hat\pi_h^t)\}_{h\in[H]}$. 
    \State $\cD\leftarrow \cD\cup \{\tau^t\}$. 
    \EndFor
    \end{algorithmic}
    \caption{Online MLE-GOLF for Myopic Follower under General Function Approximation}
    \label{alg:MLE-GOLF}
\end{algorithm}

We identify two function classes whose complexities determine the online learning hardness. The first one is the Bellman residuals $\cG_L:\cS\times\cA\times\cB\rightarrow \RR$ defined as 
$$\cG_L=\{U_h-\TT_h^{*,\theta} U_{h+1}, U\in\cU, \theta\in\Theta, h\in[H]\}, $$
where $\TT_h^{*,\theta}:\cF(\cS\times\cA\times\cB)\rightarrow \cF(\cS\times\cA\times\cB)$ is the Bellman optimality operator defined in \eqref{eq:define optimistic Bellman opt}.
The second one is the QRE defined in \eqref{eq:QRE} where we define the class of QREs $\cG_F:\cS\times\cB\rightarrow \RR$  as
$$
\cG_F =  \{\Upsilon_h^{\pi}(r_h^{\theta} - r_h), \pi\in\Pi, \theta\in\Theta, h\in[H]\}, 
$$
where recall the operator $\Upsilon_h^\pi:\cF(\cS\times\cA\times\cB)\rightarrow \cF(\cS\times\cB)$ defined as 
\begin{align*}
    \rbr{\Upsilon_h^\pi f}(s_h, b_h) = \dotp{\pi_h(\cdot\given s_h, b_h)}{f(s_h, \cdot, b_h)} - \dotp{\pi_h\otimes \nu_h^{\pi}(\cdot,\cdot\given s_h)}{f(s_h,\cdot,\cdot)}.
\end{align*}
In the sequel, we let $\dim(\cG_L)=\dim_\E(\cG_L, T^{-1/2})$ and $\dim(\cG_F)=\dim_\E(\cG_F, T^{-1/2})$ be the eluder dimensions for these two function classes. \footnote{See \Cref{sec:eluder dimension} for definition of the eluder dimension.} 

\begin{theorem}[{Regret for MLE-GOLF}]\label{thm:Online-MG}
    Suppose that the following conditions hold for model class $\Theta$ and function class $\cU$:
\begin{itemize}
    \item[(i)] (\textit{Realizability}) There exists $\theta^*\in\Theta$ such that $r_h^{\theta^*}=r_h$ for any $h\in[H]$. For $\theta^*\in\Theta$, there exists $U\in\cU$ such that $U_h = U_h^{*}$ for any $h\in[H]$;
    \item[(ii)] (\textit{Completeness}) For any $U\in\cU$ and $\theta\in\Theta$, there exists $U'\in\cU$ such that $U'=\TT_h^{*,\theta} U$ for any $h\in[H]$. 
\end{itemize}
    By choosing $\beta\ge c \max\cbr{ \log(HT\cN_\infty(\cZ, T^{-1})\delta^{-1}), \log(HT \cN_\rho(\Theta, T^{-1})/\delta)}$ for some universal constant $c>0$, where $\cN_\rho(\cZ,\epsilon)$ and $\cN_\rho(\Theta, \epsilon)$ are the maximal (over $h\in[H]$) $\epsilon$-covering number defined in \eqref{eq:cN-cZ}, \eqref{eq:cN-Theta-myopic} for the joint function class $\cZ_h = \Theta_{h+1}\times\cU^2$ and  $\Theta_h$, respectively. 
    We have for the online algorithm that 
    \begin{align*}
        \Reg(T)\lesssim H^2 \sqrt{\dim(\cG_L)\beta T} + H^2 \eta C_\eta \sqrt{\dim(\cG_F)\beta T} + H^2(\eta C_\eta)^3\exp(4\eta B_A) \beta \log T, 
    \end{align*}
    where $\lesssim$ only hides universal constants.
    \begin{proof}
        See \Cref{sec:proof-Online-MG} for a detailed proof.
    \end{proof}
\end{theorem}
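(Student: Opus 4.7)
The plan is to combine the performance difference lemma \eqref{eq:performance diff-1} with MLE concentration for the follower's behavior model and least-squares concentration for the leader's Bellman residuals, and then to convert the resulting in-sample squared-error control into out-of-sample first-order control via eluder-dimension pigeon-hole arguments applied separately to $\cG_L$ and $\cG_F$.

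First, I would show that the true pair $(U^*, \theta^*)$ lies in the confidence set $\CI_{\cU,\Theta}^t(\beta)$ for every $t\in[T]$ with probability at least $1-\delta$. For the likelihood constraint, this follows from the MLE martingale concentration specialized to the logistic quantal-response model (see \Cref{lem:MLE-formal}), combined with a union bound over an $T^{-1}$-cover of $\Theta_h$; this is precisely where $\log(HT\cN_\rho(\Theta, T^{-1})/\delta)$ enters the definition of $\beta$. For the squared-loss constraint, realizability gives $U^*_h = \TT_h^{*, \theta^*} U^*_{h+1}$, and a standard one-step squared-loss decoupling combined with completeness yields the $H^2\beta$ slack via a union bound over a covering of $\cZ_h$, accounting for the other term in $\beta$. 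Once the truth is feasible, optimism in \eqref{eq:online-MG-opt-parameter} guarantees $\EE_{s_1\sim\rho_0}[\hat W_1^t(s_1)] \ge J(\pi^*)$, so the per-episode regret is upper bounded by $\tilde J^t(\hat\pi^t) - J(\hat\pi^t)$ computed from $(\hat U^t, \hat\theta^t)$.

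Second, I would apply the performance difference lemma \eqref{eq:performance diff-1} to $(\hat U^t, \hat\theta^t)$ and $\hat\pi^t$, which decomposes $\tilde J^t(\hat\pi^t) - J(\hat\pi^t)$ into a leader's Bellman-error term along $(\hat\pi^t, \nu^{\hat\pi^t})$ and a quantal-response-error term, and then invoke \eqref{eq:QRE-decompose} to further split the QRE into first- and second-order contributions involving $\sqrt{\EE[\QRE^2]}$ and $\EE[\QRE^2]$ respectively. Membership of $(\hat U^t, \hat\theta^t)$ in the confidence set then translates into cumulative in-sample bounds: a standard empirical-squared-loss decoupling yields $\sum_{i<t}(\hat U_h^t - \TT_h^{*, \hat\theta^t}\hat U_{h+1}^t)^2$ evaluated on historical data of order $H^2\beta$, and the MLE argument yields $\sum_{i<t}\EE^{\pi^i, \nu^{\pi^i}}[\QRE(s_h, b_h; \hat\theta^t, \pi^i)^2 \mid s_h^i] = \cO(\beta)$, which are exactly the in-sample controls associated with the function classes $\cG_L$ and $\cG_F$.

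Finally, a pigeon-hole over the eluder dimension (in the spirit of Lemma~41 of \citet{jin2021bellman}) converts these in-sample squared-error controls into out-of-sample first-order bounds along the test trajectory $(\hat\pi^t, \nu^{\hat\pi^t})$, giving $\sum_{t=1}^T \EE^{\hat\pi^t, \nu^{\hat\pi^t}}[|\hat U_h^t - \TT_h^{*, \hat\theta^t}\hat U_{h+1}^t|] \lesssim \sqrt{\dim(\cG_L)\beta T}$ for the Bellman-error class and $\sum_t \sqrt{\EE[\QRE^2]} \lesssim \sqrt{\dim(\cG_F)\beta T}$ for the first-order QRE, both summed over $h\in[H]$. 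The second-order QRE term $\sum_t \EE[\QRE^2]$ admits a tighter potential-function bound of order $\dim(\cG_F)\beta \log T$ without the extra $\sqrt{T}$ factor, which, after absorbing the constants $(\eta C_\eta)^3 \exp(4\eta B_A)$ from \eqref{eq:QRE-decompose} and \Cref{lem:performance diff}, produces the final $\log T$ term. The main obstacle is controlling the QRE uniformly over $(\hat U^t, \hat\theta^t)$ in the confidence set, rather than just at the MLE point estimator, while keeping the $\exp(4\eta B_A)$ dependence isolated in the second-order remainder alone; this requires carrying the Taylor expansion from \Cref{lem:performance diff} through with care so that the first-order term remains exponential-free and contributes only $\sqrt{T}$, and the exponential cost is confined to the logarithmic second-order piece.
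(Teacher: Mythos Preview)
Your proposal is correct and follows the same three-step structure as the paper's proof: validity and accuracy of the confidence set, optimism plus the performance-difference/QRE decomposition, and eluder-dimension pigeon-holing. One technical refinement is worth flagging on the QRE side. The eluder dimension for $\cG_F$ is defined with respect to signed point-evaluation measures at the observed $(s_h^i, b_h^i)$, not the full trajectory distribution; accordingly, the in-sample control that feeds the pigeon-hole is the point-evaluation form $\sum_{i<t}\bigl[\QRE(s_h^i, b_h^i; \hat\theta^t, \hat\pi^i)\bigr]^2 \lesssim C_\eta^2\beta$ from \eqref{eq:MLE-guarantee-Q-3}, rather than the conditional-expectation form $\sum_{i<t}\EE[\QRE^2\mid s_h^i]$ you quote. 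Correspondingly, the paper inserts an intermediate martingale concentration (\Cref{cor:martigale concentration}) to pass from $\sum_t \EE^t[|\QRE|]$ and $\sum_t \EE^t[\QRE^2]$ to $\sum_t |\QRE(s_h^t, b_h^t)|$ and $\sum_t \QRE(s_h^t, b_h^t)^2$ before invoking \Cref{lem:de-regret}. This step is essential: since $\QRE$ is centered ($\EE_{s_h}[\QRE]=0$), a distributional-eluder argument with full trajectory measures would be vacuous, and $\sqrt{\EE^t[\QRE^2]}$ is not of the form $|\int g\,\rd\rho|$ that the pigeon-hole lemma handles directly. Otherwise your plan matches the paper.
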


\Cref{thm:Online-MG} characterizes the online learning complexity in terms of the eluder dimensions of function classes $\cG_L$ and $\cG_F$.
In particular, we do not suffer from any $\exp(2\eta B_A)$ coefficient in the $\sqrt{T}$ term thanks to optimism, though the $\log(T)$ term still has $C^{(3)}=\cO(\eta B_A\exp(2\eta B_A))$. 
We remark that when applied to the linear function approximation, we can reproduce the result in \Cref{thm:Online-ML} with $\dim(\cG_L) \lesssim d$ and $\dim(\cG_F) \lesssim d$. 
\ifmain
\section{Extension to Learning with Farsighted Follower} \label{sec:farsighted}
\fi
\ifneurips
\subsection{Extension to Learning with Farsighted Follower} \label{sec:farsighted}
\fi
In this section, we explore the possibility of learning the behavior model for a farsighted follower in both the offline and online settings.
We extend our previous techniques to  online learning the QSE with nonmyopic follower. 
To ease our presentation, we use $M=(r^M, u^M, P^M)\in\cM$ to denote the model and all definitions previously introduced with $\theta$ can be naturally extended to this larger model class $M$. In particular, we let $M^*$ denote the true model. We suppose that the follower's reward in our model (which contains the true model) satisfies a linear constraint $\la x(\cdot), r_h^M(s_h, a_h, \cdot)\ra_{\cB} = \varsigma$ for all $(s_h, a_h)\in\cS\times\cA$ and $h\in[H]$, where $x:\cB\rightarrow\RR$ is a known function and $\varsigma\in\RR$ is a known constant. In particular, such a constraint rules out a free dimension in the follower's reward and is introduced to ensure that $r$ can be uniquely identified in the quantal response model. 
For instance, the linear constraint can be that the reward averaged over $\cB$ is $0.5$. We remark that such an assumption is not without loss of generality for a farsighted follower.


\paragraph{Offline Algorithm.}
Although we have gained success in learning the behavior model for the myopic case with both general function approximation and linear function approximation, previous results does not generalize to nonmyopic follower easily. A unique challenge that we face in the farsighted follower case is that the follower's choice model has a long term dependency, which means that the uncertainty in the transition kernel also enters the estimation error of the follower's behavior model. Another challenge is that the follower's choice is a joint effect of the follower's future expected total utilities, and it takes additional efforts to decompose it stepwise so as to utilize the knowledge for planning.

We propose a model based method for jointly learning the environment model and the behavior model. Recall from \Cref{sec:markov_game_def} the model is given by $M=\cbr{P_h^M, r_h^M, u_h^M}_{h\in[H]}$. We consider the following negative generalized likelihood 
\begin{align*}
    \cL_h(M) = - \sum_{i=1}^T \rbr{\eta A_h^{\pi^i, M}(s_h^i, b_h^i) - \log P_h^M(s_{h+1}^i\given s_h^i, a_h^i, b_h^i) - (u_h^i - u_h^M(s_h^i, a_h^i, b_h^i))^2}, 
\end{align*}
where the first term $\eta A_h^{\pi^i, M}$ comes from the follower's quantal response under model $M$ and policy $\pi^i$, and the second term is the likelihood of the transition kernel. 
Note that computing this $A_h^{\pi^i, M}$ actually requires computing the best response for step $h+1,\dots,H$ and update the follower's Q- and V-functions according to \eqref{eq:qv_pi_qr}. 
Using this generalized likelihood, one can directly obtain a confidence set for model $M$ and do planning using pessimism, 
\begin{align*}
    \hat\pi &= \argmax_{\pi\in\Pi} \min_{M\in\CI_M(\beta)} J(\pi, M), \quad \text{where} \\
    \confset_\cM(\beta) &= \cbr{M\in\cM\given \cL_h (M)\le \inf_{M'\in\cM} \cL_h(M') + \beta, \quad \forall h\in[H]}, 
\end{align*}
and
$
    J(\pi, M)
$ is just the total reward for the leader evaluated under an estimated model $M$.
We have the following \textbf{P}essimistic \textbf{M}aximum \textbf{L}ikelihood \textbf{E}stimation algorithm for the offline setting. 
\begin{algorithm}[H]
    \begin{algorithmic}[1]
    \Require {$\eta, \cD$}
    \State Build confidence set $\confset_\cM(\beta) = \cbr{M\in\cM: \cL_h(M)-\inf_{M'\in\cM}\cL_h(M')\le \beta, \forall h\in[H]}$.
    \State Find the pessimistic policy $\hat\pi = \argmax_{\pi\in\Pi}\min_{ M\in\confset_\cM(\beta)} J(\pi, M)$. 
    \Ensure {$\hat\pi$}
    \end{algorithmic}
    \caption{Pessimistic MLE (PMLE) for Offline Learning with Farsighted Follower}
    \label{alg:real-PMLE}
\end{algorithm}
\begin{theorem}[Suboptimality for PMLE]\label{thm:PMLE}
We let $\beta\ge  \allowbreak 9\log(3e^2H \cN_\rho(\cM,T^{-1})\delta^{-1})$, where $\cN_\rho(\cM,\epsilon)$ is the minimal size of an $\epsilon$-optimistic covering net of $\cM$, which is defined in \Cref{lem:MLE}. 
Suppose $\la x, r_h^M(s_h, a_h, \cdot)\ra_\cB = \varsigma$ for all $M\in\cM$, $(s_h, a_h)\in\cS\times\cA$ and $h\in[H]$, and we let $\kappa = \nbr{x}_\infty/|\la x, \ind\ra_\cB|$.
Then with probability at least $1-\delta$, we have for the PMLE algorithm that 
\begin{align*}
    \subopt(\hat\pi) \lesssim C_1^{\pi^*} H^2 \sqrt{ \beta T^{-1}} + C_2^{\pi^*} \eta C_\eta H^{5/2}\sqrt{ \eff_H(\gamma)\beta T^{-1/2}} + C_3^{\pi^*}\cdot C^{(2)} L^{(2)} H \beta T^{-1}, 
\end{align*}
where 
\[C_1^{\pi^*} = \max_{M\in\cM, h\in[H]} \sqrt{\frac{\EE\Bigsbr{\Bigrbr{\bigrbr{U_h^{\pi^*, M}-\bigrbr{u_h+P_h W_{h+1}^{\pi^*, M}}}(s_h, a_h, b_h)}^2 }}{T^{-1}\sum_{i=1}^T \EE^i\Bigsbr{\Bigrbr{\bigrbr{U_h^{\pi^*, M}-\bigrbr{u_h+P_h W_{h+1}^{\pi^*, M}}}(s_h, a_h, b_h)}^2 }}}, \]
\[C_2^{\pi^*} =\max_{M\in\cM, h\in[H]}\sqrt\frac{\EE \Bigsbr{\rbr{\rbr{\EE_{s_h, b_h} -\EE_{s_h}}\sbr{\sum_{l=h}^H \gamma^{l-h}\rbr{r_l^M - r_l + \gamma \bigrbr{P_l^M - P_l} V_{l+1}^{\pi^*, M}}(s_l,a_l,  b_l)}}^2}}{T^{-1}\sum_{i=1}^T \EE^i \Bigsbr{\rbr{\rbr{\EE_{s_h, b_h}^i -\EE_{s_h}^i}\sbr{\sum_{l=h}^H \gamma^{l-h}\rbr{ r_l^M - r_l + \gamma \bigrbr{P_l^M - P_l} V_{l+1}^{\pi^*, M}}(s_l,a_l,  b_l)}}^2}}, \]
\[C_3^{\pi^*} = \max_{h\in[H], M\in\cM}{\frac{\EE\sbr{ \rbr{\EE_{s_h, b_h}\sbr{\bigrbr{r_h^{M} - r_h + \gamma \bigrbr{P_h^M - P_h}  V_{h+1}^{\pi^*, M}}(s_h, a_h, b_h)}}^2}}{T^{-1}\sum_{i=1}^T \EE^i\sbr{ \rbr {\EE_{s_h, b_h}^i\sbr{{\bigrbr{r_h^{M} - r_h + \gamma \bigrbr{P_h^M - P_h}  V_{h+1}^{\pi^*, M}}(s_h,a_h, b_h)}}}^2}}},\]
where the expectation $\EE$ without superscript is taken with respect to $(\pi^*, \nu^{\pi^*})$, and we have constants 
\begin{align*}
    L^{(2)} &= c H^2 \eff_H(c_2)^2 \kappa^2 \exp\rbr{8\eta B_A} (\eta^{-1}+B_A)^2, \\
    C^{(2)}  & =  2  \eta^2 H^2\cdot \exp\orbr{6\eta B_A} \cdot(1+4 \eff_H(\gamma)) \cdot \rbr{\eff_H(\exp(2\eta B_A)\gamma)}^2.
\end{align*}
\end{theorem}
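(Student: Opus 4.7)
The plan is to combine three ingredients: (i) a high-probability guarantee that $M^* \in \confset_\cM(\beta)$, (ii) the pessimism principle to reduce the suboptimality to a one-sided model mismatch, and (iii) the farsighted analogue of the performance difference lemma together with the MLE error bound.

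First, I would verify that $\beta \gtrsim \log(H \cN_\rho(\cM, T^{-1})/\delta)$ is large enough that $M^* \in \confset_\cM(\beta)$ with probability at least $1-\delta$. This follows from a standard MLE concentration argument over an $\epsilon$-optimistic covering net of $\cM$ (see \Cref{lem:MLE}), applied to the per-episode generalized likelihood. The three components of $\cL_h(M)$ — the soft-advantage term $\eta A_h^{\pi^i, M}$, the log-density of the transition, and the squared leader reward error — each admit martingale concentration on the filtration induced by the offline episodes, and combining them yields a uniform TV/Hellinger-type guarantee on the joint distribution of $(b_h^i, s_{h+1}^i, u_h^i)$. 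Conditioning on this event, pessimism delivers
\begin{align*}
\subopt(\hat\pi) &\le J(\pi^*, M^*) - \min_{M \in \confset_\cM(\beta)} J(\hat\pi, M) \le J(\pi^*, M^*) - \min_{M \in \confset_\cM(\beta)} J(\pi^*, M),
\end{align*}
where the second inequality uses optimality of $\hat\pi$ for the pessimistic objective. Hence it suffices to upper bound $J(\pi^*, M^*) - J(\pi^*, M)$ uniformly over $M \in \confset_\cM(\beta)$.

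Next I would apply the farsighted version of the performance difference lemma developed in \S\ref{sec:subopt decomposition} (the analogue of \eqref{eq:performance diff-1}--\eqref{eq:QRE-decompose} for nonmyopic followers), which decomposes $J(\pi^*, M^*) - J(\pi^*, M)$ into (a) the leader's Bellman error $U_h^{\pi^*, M} - (u_h + P_h W_{h+1}^{\pi^*, M})$, (b) a first-order quantal response error expressed as the $(\EE_{s_h, b_h} - \EE_{s_h})$-centered fluctuation of $\sum_{l=h}^H \gamma^{l-h}(r_l^M - r_l + \gamma(P_l^M - P_l)V_{l+1}^{\pi^*, M})$, and (c) a second-order quantal response term arising from Taylor expansion of the softmax in \eqref{eq:quantal_response_policy}--\eqref{eq:qv_pi_qr}. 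Matching these three error terms against the offline data-dependent norms yields precisely the concentrability coefficients $C_1^{\pi^*}, C_2^{\pi^*}, C_3^{\pi^*}$ in the theorem statement.

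Finally I would convert the MLE guarantee $\sum_{i=1}^T \EE^i[\cdot] \lesssim \beta$ for each of the three error quantities into the rates in the bound: the first-order terms yield $\sqrt{\beta/T}$ contributions and the second-order term yields $\beta/T$. The main obstacle is the second-order term for the farsighted case. Unlike the myopic setting, a perturbation in $r_l$ propagates through the soft Bellman recursion \eqref{eq:qv_pi_qr} to affect the follower's Q-function at all preceding steps, inducing a geometric amplification $\eff_H(\exp(2\eta B_A)\gamma)$ once the log-sum-exp is linearized; this is why $C^{(2)}$ and $L^{(2)}$ acquire their $\exp(6\eta B_A)$ and $\exp(8\eta B_A)$ prefactors. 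Here the linear constraint $\langle x, r_h^M(s_h, a_h, \cdot)\rangle_\cB = \varsigma$ is essential: it removes the one-dimensional symmetry in the follower's reward that would otherwise make the second-order error nonidentifiable, ensuring that the Taylor remainder can be controlled in terms of $\|r^M - r\|$ rather than only in terms of the quantal response TV distance. Assembling these pieces yields the stated bound with the prescribed $C_1^{\pi^*}, C_2^{\pi^*}, C_3^{\pi^*}$ and rates $H^2\sqrt{\beta/T}$, $\eta C_\eta H^{5/2}\sqrt{\eff_H(\gamma)\beta/T}$, and $C^{(2)}L^{(2)}H\beta/T$.
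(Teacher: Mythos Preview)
Your proposal is correct and follows essentially the same route as the paper: the paper's proof (\S\ref{sec:proof-PMLE}) reduces $\subopt(\hat\pi)$ to $J(\pi^*)-J(\pi^*,M^{\pi^*})$ via pessimism exactly as you describe, applies \Cref{lem:subopt-decomposition} and \Cref{lem:performance diff} to obtain the three-term decomposition (leader's Bellman error, first-order $\tilde\Delta_h^{(1)}$, second-order follower Bellman error), and then matches each against the MLE guarantee from \Cref{lem:MLE} to extract the concentrability ratios $C_1^{\pi^*},C_2^{\pi^*},C_3^{\pi^*}$. The one technical step you do not name explicitly is that when bounding the data-side denominators for $C_2^{\pi^*}$ and $C_3^{\pi^*}$, the paper must swap $V_{l+1}^{\pi^*,\tilde M}$ for $V_{l+1}^{\pi^i,\tilde M}$ (since the MLE guarantee lives on trajectories generated by $\pi^i$, not $\pi^*$), absorbing the discrepancy via the TV bound on $\tilde P_l-P_l$ before invoking \Cref{lem:1st-ub} and \Cref{lem:2nd-ub}; this is a bookkeeping detail consistent with your sketch.
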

\begin{proof}
    See \Cref{sec:proof-PMLE} for a detailed proof.
\end{proof}

Compared to \Cref{thm:Offline-MG}, the suboptimality for a farsighted follower becomes more complicated in the sense that concentrability coefficient $C_2^{\pi^*}$ for the follower's quantal response error also depend on estimation errors for both the follower's reward $r_h$ and the transition $P_h$ along the optimal trajectory.
In addition, the second order quantal response error, i.e., the last term in the suboptimality, suffers from a $\exp(\cO(\eta H))$ coefficient, which occurs as a result of the quantal response model and the coefficient also grows exponentially with respect to the inverse temperature $\eta$.
Such a result suggests for small data size $T$, the second order QRE terms dominates. 

\paragraph{Online Algorithm.}
Similar to the offline setting, 
the negative generalized-likelihood for the model $M=\cbr{r_h^M, P_h^M, u_h^M}_{h\in[H]}$ at step $h$ and time $t$ is given by
\begin{align}
    \cL_h^t(M)& = -\sum_{i=1}^{t-1} \bigg(\eta A_{h}^{\pi^i, M}(s_h^i, b_h^i) + \log P_h^M(s_{h+1}^i\given s_h^i, a_h^i, b_h^i)   - \rbr{u_{h}^i - u_{h}^M(s_h^i, a_h^i, b_h^i)}^2\bigg),  \label{eq:MLE}
\end{align}
The OMLE algorithm plays the greedy policy with respect to the leader's most favorable model in the $\beta$-superlevel set of the above log-likelihood $\cL^t(\cdot)$, 
\begin{align}
    \pi^t=\argmax_{\pi\in\Pi, M\in\confset_\cM^t(\beta)} J(\pi, M), \quad \st \quad \confset_\cM^t(\beta) = \cbr{M\in\cM\given \cL_h^t (M)\le \inf_{M'\in\cM} \cL_h^t(M') + \beta, \forall h\in[H]}, \label{eq:OMLE}
\end{align}
where
$
    J(\pi, M) 
$
is just the total reward for the leader evaluated under an estimated model $M$.
We have the following \textbf{O}ptimistic \textbf{M}aximum \textbf{L}ikelihood \textbf{E}stimation algorithm for the online setting. 
\begin{algorithm}[H]
    \begin{algorithmic}[1]
    \Require {$\eta, T$}
    \State Initiate $\cD=\emptyset$.
    \For{$t=1,\dots,T$}
    \State Build confidence set $\confset^t_\cM(\beta) = \cbr{M\in\cM: \cL_h^t(M)-\inf_{M'\in\cM}\cL_h^t(M')\le \beta, \forall h\in[H]}$.
    \State Find the optimistic policy $\pi^t = \argmax_{\pi\in\Pi, M\in\confset_\cM^t(\beta)} J(\pi, M)$. 
    \State Announce $\pi^t$ and observe a trajectory $\tau^t = \ocbr{\orbr{s_h^t, a_h^t, b_h^t, \pi_h^t}}_{h\in[H]}$.
    \State Update $\cD \leftarrow \cD \cup \{\tau^t\}$.
    \EndFor
    \end{algorithmic}
    \caption{Optimistic MLE (OMLE) for Online Farsighted Follower}
    \label{alg:OMLE}
\end{algorithm}

In the sequel, we define three types of errors the corresponding (distributional) eluder dimensions for characterizing the online learning complexities.
The first is the leader's Bellman residuals and the class is defined as 
$\cG_L=\{\EE^{\pi}[(U_h^{*, M} - u_h -  W_{h+1}^{*, M})(s_h,a_h,b_h)], M\in\cM, h\in[H]\}$, where we define $U^{*, M}, W^{*, M}$ as the leader's optimistic U- and W-functions under $\pi^{M} = \argmax_{\pi\in\Pi}J(\pi, M)$ and $M$.
The second function class is an analogy to the $\QRE$ in \eqref{eq:QRE} for farsighted follower defined as
\begin{align}\label{eq:GF1-neurips}
     \cG_F^1 = \cbr{(\EE_{s_h, b_h}^{\pi} - \EE_{s_h}^{\pi})\sbr{\sum_{l=h}^H \gamma^{l-h}\bigrbr{r_l^M- r_l + \gamma (P_l^{M} - P_h) V_{l+1}^{*, M}}(s_l, a_l, b_l)}}, 
\end{align}
for all $(\pi, M, h)\in(\Pi, \cM, [H])$ and $(s_h, b_h)\in\cS\times\cB$. Here, we let $\EE_{s_h, b_h}^\pi[\cdot] = \EE^{\pi, \nu^\pi}[\cdot\given s_h, b_h]$ and $V^{*, M}$ is the follower's optimistic V-function under $\pi^M$ and $M$. In particular, when we take $\gamma=0$ for myopic follower, \eqref{eq:GF1-neurips} reduces to the $\QRE$ defined in \eqref{eq:QRE}.
The last function class is unique for the farsighted case, which captures the second order term in the QRE as the follower's Bellman error, 
\ifmain
\begin{align*}
    \cG_F^2=\cbr{\EE_{s_h, b_h}^{\pi} \sbr{\bigrbr{r_h^M- r_h + \gamma (P_h^{M} - P_h) V_{l+1}^{*, M}}(s_h, a_h, b_h)}},
\end{align*}
\fi
for all $(\pi, M, h)\in(\Pi, \cM, [H])$ and $(s_h, b_h)\in\cS\times\cB$.
We denote by $\dim(\cG_L)$, $\dim(\cG_F^1)$, and $\dim(\cG_F^2)$ their eluder dimensions\footnote{See \Cref{sec:eluder farsighted} for definitions.} with properly selected parameters. 

\begin{theorem}[{Regret for OMLE}]\label{thm:OMLE-farsighted}
    We let $\beta\ge  \allowbreak 9\log(3e^2H T\cN_\rho(\cM,T^{-1})\delta^{-1})$, where $\cN_\rho(\cM,\epsilon)$ is the minimal size of an $\epsilon$-optimistic covering net of $\cM$, which is defined in \Cref{lem:MLE}. 
    Suppose $\la x, r_h^M(s_h, a_h, \cdot)\ra_\cB = \varsigma$ for all $M\in\cM$, $(s_h, a_h)\in\cS\times\cA$ and $h\in[H]$, and we let $\kappa = \nbr{x}_\infty/|\la x, \ind\ra_\cB|$.
    Then with probability at least $1-\delta$, we have for the OMLE algorithm that 
    \begin{align*}
        \Reg(T) \le H^2\sqrt{\dim(\cG_L) \beta T} + \eta C_\eta  H^2\eff_H(\gamma) \sqrt{\dim(\cG_F^1) \beta T} + H^2 C^{(2)} L^{(2)} \dim(\cG_F^2)\beta \log(T),
    \end{align*}
    where 
    $L^{(2)} = c H^2 \eff_H(c_2)^2 \kappa^2 \exp\rbr{8\eta B_A}  C_\eta^2$ for some universal constant $c>0$ and for $c_2 = \gamma(2\exp(2\eta B_A)+\kappa\exp(4\eta B_A))$, and $C^{(2)} = 2  \eta^2 H^2  \exp\orbr{6\eta B_A}  (1+4 \eff_H(\gamma)) \cdot \rbr{\eff_H(\exp(2\eta B_A)\gamma)}^2$.
    \begin{proof}
        See \Cref{sec:proof-farsighted MDP} for a detailed proof.
    \end{proof}
\end{theorem}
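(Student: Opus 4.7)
The plan is to follow the standard ``optimism + performance difference + MLE concentration + eluder dimension'' pipeline, but adapted to the farsighted QSE setting where both the transition kernel, the leader's reward, and the follower's reward all enter the generalized likelihood \eqref{eq:MLE}. First I would show an optimism guarantee: by a standard martingale analysis of MLE (as in \Cref{lem:MLE}, whose hypotheses are met by our choice of $\beta$), with probability at least $1-\delta$ the true model $M^*$ lies in $\confset_\cM^t(\beta)$ for every $t\in[T]$ simultaneously. Combined with the optimistic selection rule \eqref{eq:OMLE}, this yields $J(\pi^t,M^t)\ge J(\pi^*,M^*)=J(\pi^*)$, so
\begin{align*}
\Reg(T)\le {\sum_{t=1}^T}\bigl(J(\pi^t,M^t)-J(\pi^t,M^*)\bigr).
\end{align*}

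Second, I would apply the farsighted performance-difference lemma (the analogue of \eqref{eq:performance diff-1}--\eqref{eq:QRE-decompose} whose structure is already previewed in the definitions of $C_1^{\pi^*},C_2^{\pi^*},C_3^{\pi^*}$ in \Cref{thm:PMLE}) to each summand. This decomposes $J(\pi^t,M^t)-J(\pi^t,M^*)$ into (i) a leader Bellman-residual term whose integrand lies in $\cG_L$, (ii) a first-order follower QRE term whose integrand lies in $\cG_F^1$ and which inherits the $\eta C_\eta\,\eff_H(\gamma)$ prefactor from linearizing the soft Bellman operator iterated over $h,h+1,\dots,H$, and (iii) a second-order remainder whose integrand lies in $\cG_F^2$ and which, by the quadratic Taylor remainder of the log-partition function under quantal response, picks up the multiplicative constant $C^{(2)}L^{(2)}$.

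Third, the MLE concentration of \Cref{lem:MLE} supplies in-sample bounds: for every $M\in\confset_\cM^t(\beta)$ and every $h$, the cumulative squared Hellinger distance between $(P_h^M,\nu_h^{\pi^i,M})$ and $(P_h,\nu_h^{\pi^i})$ evaluated at the data is $O(\beta)$. Standard Hellinger-to-TV conversion, combined with the identification condition $\langle x,r_h^M(s_h,a_h,\cdot)\rangle_\cB=\varsigma$ (which is what unlocks inverting the quantal-response map to recover $r_h^M-r_h$ and hence the integrands of $\cG_F^1,\cG_F^2$), gives in-sample bounds of the form ${\sum_{i<t}}\EE^{\pi^i}[g(s_h,\cdot)^2]\lesssim \beta$ for each $g\in\cG_L\cup\cG_F^1\cup\cG_F^2$ evaluated at the confidence-set model. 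Finally, the distributional eluder-dimension machinery of \citet{jin2021bellman} (or its variant tailored to our $\cG_F^1,\cG_F^2$, cf.\ \Cref{sec:eluder farsighted}) converts these in-sample bounds to cumulative out-of-sample regret: ${\sum_{t=1}^T}\EE^{\pi^t}[g_t]\lesssim \sqrt{\dim(\cG)\,\beta\,T}$ for first-order (square-root) terms, and $\sum_{t=1}^T\EE^{\pi^t}[g_t^2]\lesssim \dim(\cG)\,\beta\,\log T$ for second-order squared-error terms. Summing the three contributions produces exactly the three summands in the stated bound.

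The main obstacles I anticipate are two. First, deriving the farsighted performance-difference lemma with the \emph{exact} prefactors $\eta C_\eta\,\eff_H(\gamma)$ and $C^{(2)}L^{(2)}$: because the follower's quantal response at step $h$ depends on his entire future soft value function, linearizing it in $(r_h^M-r_h)+\gamma(P_h^M-P_h)V_{h+1}^{*,M}$ requires iterating a $\gamma\exp(2\eta B_A)$-contractive soft-Bellman operator, and the worst-case amplification across $H$ steps is what produces the $\eff_H$ factors; getting the second-order Taylor remainder controlled uniformly requires the constraint $\langle x,r_h^M(s_h,a_h,\cdot)\rangle_\cB=\varsigma$ so that the reward-constant ambiguity does not blow up. Second, in the eluder-dimension step, the class $\cG_F^1$ is not a Bellman residual in the usual MDP sense but a centered expectation of cumulative discounted model differences along $\pi$, so one must verify that the MLE-based in-sample error indeed controls the squared $\cG_F^1$-value at the covariates $(s_h^t,b_h^t)$; this is where the optimistic-covering definition of $\cN_\rho(\cM,\cdot)$ and the linear constraint on $r^M$ together do the work.
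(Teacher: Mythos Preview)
Your proposal is correct and follows essentially the same route as the paper: optimism from the MLE confidence set, the farsighted performance-difference decomposition (\Cref{lem:subopt-decomposition} and \Cref{lem:performance diff}) into leader's Bellman error plus first- and second-order QRE terms, MLE-based in-sample control (\Cref{lem:MLE}), and eluder-dimension conversion (\Cref{lem:de-regret}). One small clarification: the linear identification constraint $\langle x, r_h^M(s_h,a_h,\cdot)\rangle_\cB=\varsigma$ is only needed for the second-order term via \Cref{lem:2nd-ub}; the first-order $\cG_F^1$ term is controlled directly through Q-function differences and Hellinger distances via \Cref{lem:1st-ub} without invoking identification.
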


We remark that our theorem handles a wide range of MDP classes, e.g., linear Markov game in \Cref{def:linear MDP}, linear matrix MDP \citep{zhou2021provably}, or linear mixture MDP \citep{chen2022unified},
where we have
$\dim(\cG_L) = \dim(\cG_F^2) \lesssim d$ and $\dim(\cG_F^1)\lesssim Hd$. 
See \Cref{sec:eluder farsighted} for more details.
We also note that the first order terms (with $\sqrt T$ regret) only depends polynomially on $\eta, H$ and the exponential effect lies in the $\log(T)$ term, which indicates that we can handle a follower with more rationality. Additionally, in the best case we have $\kappa = |\cB|^{-1}$ by taking $x=\ind$ and how to relax this $\la x(\cdot), r_h^M(s_h, a_h, \cdot)\ra_{\cB} = \varsigma$ constraint leaves for future work.


\section{Conclusion}

In this work, we study the problem of learning a quantal Stackelberg equilibrium (QSE) in a leader-follower Markov game, where the follower always adopts the quantal response policy against the leader. 
Moreover, in this game,  the leader cannot observe the follower's rewards, and has to infer how the follower reacts to her announced policy from follower's actions. 
We propose sample efficient algorithms for both the online and offline settings where the follower can be either myopic or  farsighted. 
Besides, we allow the state space to be very large by incorporating   function approximators in value function or transition model estimation. 
Our algorithms are based on a combination of (i) the principle of pessimism/optimism in the face of uncertainty, (ii) model-free or model-based RL for leader's problem, and (iii) maximum likelihood estimation for follower's quantal response mapping. 
We prove that our algorithms are sample efficient. 
Moreover, when specialized to the case with a myopic follower and linear function approximation, our algorithms are also computationally efficient. 
Our theoretical analysis features a novel performance difference lemma that is tailored to the bilevel structure of learning QSE. 
We believe such a result would be useful in other multi-agent RL problems involving inferring the behaviors of other agents.

\newpage
\bibliographystyle{ims}
\bibliography{reference}

\newpage 
\appendix

\newpage

\section{Additional Background and Notation}\label{sec:app-notations}
We stick to the following notations and their shorthands in the appendix.
\begin{table}[H]
\centering
{
    \setlength\doublerulesep{1pt}

\ifmain\begin{tabular}{p{3cm} p{11cm}}\fi
\ifneurips\begin{tabular}{p{4cm} p{8cm}}\fi
    \toprule[2pt]\midrule[0.5pt]
    Notations & Interpretations \\ \toprule[1.5pt]
    $u_h, r_h, P_h$ & the leader's and the follower's rewards and the transition kernel for the MDP.\\\midrule
    $M$, $M^*$, $\cM$ & a model for the leader's and follower's rewards and also the transition kernel. $M^*$ is the true model. The  model class is $\cM$.\\\midrule
    $\theta$, $\Theta$ & We use $\theta$ to denote the part of the model (follower's reward if myopic, and follower's reward and transition kernel if nonmyopic) that determines the follower's quantal response mapping $\pi\rightarrow\nu^\pi$. The model class is $\Theta$.
    \\\midrule
    $\pi, \nu^{\pi, M}, \nu^{\pi,\theta}$ & $\pi$ is the leader's policy, $\nu^{\pi,M}$ is the follower's response under $(\pi, M)$, $\nu^{\pi, \theta}$ has the same meaning as $\nu^{\pi,M}$.
    \\\midrule
    $J(\pi,M)$ & the total utility collected by the leader as the follower responses is $\nu^{\pi, M}$.\\\midrule
    $Q_h^{\pi, M}, V_h^{\pi, M}, A_h^{\pi, M}$, $U_h^{\pi,M}, W_h^{\pi,M}$ & (Q, V, A)-functions for the follower and the (U, W) functions for the leader under $(\pi, \nu^{\pi, M}, P^M, r^M, u^M)$ at step $h$.\\\midrule
    $\EE^{\pi, M}, \EE$ & the expectation $\EE^{\pi, M}$ is taken with respect to the trajectory generated by $(\pi,\nu^{\pi, M}, T^M)$, $\EE$ is a always a short hand for $\EE^{\pi, M^*}$.\\\midrule
    $\EE_{s_h, b_h}, \EE_{s_h}$ & $\EE_{s_h, b_h}$ is a short hand of $\EE^{\pi, M^*}\sbr{\cdot\given s_h, b_h}$ and $\EE_{s_h}$ is a short hand of $\EE^{\pi, M^*}\sbr{\cdot\given s_h}$.\\\midrule
    $r_h^\pi(s_h, b_h)$ & $r_h^\pi = \inp[]{r_h(s_h, \cdot, b_h)}{\pi_h(\cdot\given s_h, b_h)}_\cA$ 
    \\\midrule
    $ P_h^\pi(s_{h+1}\given s_h, b_h)$ & $P_h^\pi = \inp[]{P_h(s_{h+1}\given s_h, \cdot, b_h)}{\pi_h(\cdot\given s_h, b_h)}_\cA$.\\\midrule
    $\pi^*$, $\pi^{*,M}$ & $\pi^*$ is the best policy for the leader, and $\pi^{*, M}$ is the best policy for the leader under model $M$, i.e. $\pi^{*,M} = \argmax_{\pi\in\Pi} J(\pi, M)$,\\\midrule
    $\Upsilon_h^\pi$ & the operator for the QRE defined in \eqref{eq:Upsilon}
    \\\midrule
    $T_h^{\pi,\nu}$, $T_h^{\pi}$, $T_h^{\pi, \theta}$, $T_h^{*}$, $T_h^{*, \theta}$ & $T_h^{\pi,\nu}$, $T_h^{\pi}$, $T_h^{\pi, \theta}$ are integral operators defined in \eqref{eq:operator_T_pi_nu}, \eqref{eq:operator_T_pi}, \eqref{eq:operator_T_pi_theta}, respectively. $T_h^*$ and $T_h^{*,\theta}$ are optimality integral operators defined in \eqref{eq:greedy} and \eqref{eq:optimistic-operator}, respectively.
    \\\midrule
    $\TT_h^{\pi}$, $\TT_h^{\pi,\theta}$, $\TT_h^*$, $\TT_h^{*,\theta}$ & 
    These are Bellman (optimality) operators defined in \eqref{eq:TT_pi}, \eqref{eq:TT_pi_theta}, \eqref{eq:bellman_opt_oper}, \eqref{eq:TT_*_theta}, respectively.\\\midrule
    $B_A$ & $B_A$ upper bounds the follower's Q-, V-, and A-functions and is specified in \eqref{eq:define_BA}.\\

    \bottomrule[2pt]
\end{tabular}
}
\caption{Table for notations.}\label{tab:notation}
\end{table}

In the following, we introduce additional background knowledge that will be useful in the proofs. 
Let $\Theta$ be the set of model parameters that determines the follower's quantal response model. 
In particular, when the follower is myopic, i.e., $\gamma = 0$, 
$\theta$ is the parameter of the follower's reward function.
In this case, we assume that there exists some $\theta^* \in \Theta$ such that $r_h = r_h^{\theta^*}$ for all $h \in [H]$. 
In addition, when the follower is farsighted with $\gamma >0$, we parameterize both the follower's reward function and transition kernel and write $\{ r^{\theta}, P^{\theta} \}_{\theta \in \Theta}$. 
In this case, we assume that there exists $\theta^* \in \Theta$ such that 
$r_h = r_h^{\theta^*}$ and $P_h = P_h^{\theta^*}$ for all $h \in [ H ] $. 
We note that each $\theta$ uniquely specifies a quantal response mapping according to \eqref{eq:quantal_response_policy}.
We let $\nu^{\pi, \theta} $ denote the quantal response of a  leader's policy $\pi$ under the model with parameter $\theta$. 

Moreover, recall that we denote the leader's state- and action-value function by $W$ and $U$ respectively. 
To simplify the notation, 
for any leader's policy $\pi$ and follower's policy $\nu$,  we define integral operators $ T_h^{\pi, \nu }, T_h ^{\pi} \colon  \cF(\cS\times\cA \times \cB)\rightarrow \cF(\cS ) $ by letting 
\#
\bigl ( T_h^{\pi , \nu }   U_{h} \bigr)(s_{h}) = \la U_{h}(s_{h}, \cdot, \cdot), \pi_{h}\otimes \nu_{h}  (\cdot, \cdot\given s_{h})\ra _{\cA\times \cB}, \label{eq:operator_T_pi_nu}\\
\bigl ( T_h^{\pi }   U_{h} \bigr)(s_{h}) = \la U_{h}(s_{h}, \cdot, \cdot), \pi_{h}\otimes \nu_{h}^{\pi} (\cdot, \cdot\given s_{h})\ra _{\cA\times \cB} ,\label{eq:operator_T_pi} 
\#
where $\nu^{\pi} $ is the quantal response of $\pi$ under the true model. 
Then the quantal  Bellman operator $\TT^{\pi}_h$ defined in \eqref{eq:bellman_operator_leader}  can be equivalently written as 
\begin{align}\label{eq:TT_pi}
\bigl( \TT_h^{\pi } f \bigr)  (s_h, a_h, b_h) = u_h(s_h, a_h, b_h) + \EE_{s_{h+1}\sim P_h(\cdot\given s_h, a_h, b_h)} \sbr{\bigl ( T_h^{\pi} f\bigr) (s_{h+1})}.
\end{align}
Similarly, for any $\theta \in \Theta$, we define $T^{\pi, \theta}_h $ by letting 
\begin{align}\label{eq:operator_T_pi_theta}
\big ( T_h^{\pi, \theta} U_{h} \bigr) (s_{h}) = \la U_{h}(s_{h}, \cdot, \cdot), \pi_{h}\otimes \nu_{h}^{\pi,\theta}(\cdot, \cdot\given s_{h})
\ra _{\cA\times \cB} . 
\end{align}
We similarly define a quantal Bellman operator $\TT_h^{\theta}$ by letting 
\begin{align}\label{eq:TT_pi_theta}
    \bigl( \TT_h^{\pi,\theta} f \bigr)  (s_h, a_h, b_h) = u_h(s_h, a_h, b_h) + \EE_{s_{h+1}\sim P_h(\cdot\given s_h, a_h, b_h)} \bigl[ \bigl ( T_h^{\pi,\theta} f\bigr) (s_{h+1})\bigr] .
\end{align}

{\noindent \bf Bellman Optimality  Equation in Myopic Case.}
Furthermore, recall that we let $\Pi = \{ \Pi_h \}_{h\in [ H]} $ denote the class of leader's policies. 
Specifically
 for the case where the follower is 
 myopic, we define an operator 
$T_h^*  \colon \cF(\cS\times\cA\times\cB)\rightarrow \cF(\cS)$ as 
\begin{align}
   \bigl( T_{h}^*  f\bigr) (s_h) = \max_{\pi_h\in\Pi_h} \dotp{f(s_h,\cdot,\cdot)}{\pi_h\otimes \nu^{\pi}_h(\cdot, \cdot\given s_h)}_{\cA\times \cB }. \label{eq:greedy}
\end{align}
In other words, \eqref{eq:greedy} can be regarded as finding the  ``greedy'' policy of the leader, assuming leader's reward function is $f$ and the follower is myopic. 
Based on $T_h^*$, we define the Bellman optimality operator $\TT_h^*\colon \cF(\cS\times\cA\times\cB)\rightarrow \cF(\cS\times\cA\times\cB)
$ 
by letting 
\begin{align}
    \bigl ( \TT_h^{*} f \bigr )  (s_h, a_h, b_h) = u_h(s_h, a_h, b_h) + \EE_{s_{h+1}\sim P_h(\cdot\given s_h, a_h, b_h)} \bigl[ \bigl ( T_{h+1}^{*} f\bigr) (s_{h+1}) \bigr] ,\label{eq:bellman_opt_oper}
\end{align}
Note that we let $\pi^*$  denote the optimal policy of the leader.
Let $W^*$ and $U^*$ denote $W^{\pi^*}$ and $U^{\pi^*}$ respectively, which are the value functions of the leader at QSE. 
Using the Bellman operator defined in \eqref{eq:bellman_opt_oper}, we obtain the \emph{Bellman optimality equation} for the leader: 
\#\label{eq:bellman_opt_eqn}
U_h^* (s,a,b) = \bigl ( \TT_h^* U_{h+1}^* \bigr )(s,a,b), \qquad 
W_h^*(s) =  \bigl( T_{h}^*  U_h^* \bigr) (s),\qquad \forall (s,a,b,h) ,
\# 
where we stick to the convention that $U_{H+1} ^* = \mathbf{0}$. 

The key message conveyed in the  Bellman equation \eqref{eq:bellman_opt_eqn}  is that, when the model is known and the follower is myopic, the optimal policy of the follower can be computed via dynamic programming, which is similar to the case of an MDP.  
However, such a benign property cannot be extended to the farsighted case where $\gamma >0 $. 
The main reason for such dichotomy is that, in the myopic case, for each time step $h$, the quantal response $\nu_h^{\pi}$ only depends on $\pi$ through $\pi_h$. 
Thus, whenIn this case, 
we can regard the leader's problem with an auxiliary MDP with the action space at step $h$ being $\Pi_h $. 
The reward function $\tilde r_h$ and the transition kernel $\tilde P_h$ of such an auxiliary MDP are given by  
\$
\tilde r_h (s_h, \pi_h) & =  \la u_h(s_h, \cdot, \cdot), \pi_h \otimes \nu_h^{\pi}  (\cdot, \cdot \given s_h) \ra _{\cS \times \cA }, \notag \\
\tilde P_h (s_{h+1} \given s_h , \pi_h ) & =  \la  (s_{h+1}  \given s_h, \cdot, \cdot ) , \pi_h \otimes \nu_h^{\pi}  (\cdot, \cdot \given s_h) \ra _{\cS \times \cA },  
\$
where $\pi_h \in \Pi_h$ is an action,   and $u_h$ is the reward of the leader and $P_h$ is the transition kernel of the leader-follower Markov game. 
The optimal policy $\pi^*$ of the leader is exactly the optimal policy of the auxiliary MDP, and thus can be found via dynamic programming. 
In contrast, for a farsighted follower,
for each timestep $h$, 
the quantal response policy $\nu_h^{\pi}$ 
depends on $\pi$ through $\{ \pi_\ell  \}_{\ell \geq h }.$
As a result, the quantal response is a complicated mapping of the leader's policy, which prohibits dynamic programming.\footnote{The  \emph{feedback Stackelberg equilibrium} for leader-follower games with farsighted followers can be solved via dynamic programming. See, e.g., \cite{bacsar1998dynamic} for details. Our notion of Stackelberg equilibrium corresponds to the global Stackelberg equilibrium \citep{bacsar1998dynamic}, which does not admits a dynamic programming formulation in general.} 

Finally, we can also define  the Bellman optimality operator when the follower is critic and has reward $r^{\theta} $. 
In this case, similar to $T_h^*$ defined in \eqref{eq:greedy}, 
we define an operator $T_h^{* , \theta}$   by letting 
\begin{align} \label{eq:optimistic-operator}
\bigl(  T_{h}^{*, \theta } f\bigr) (s_h) = \max_{\pi_h\in\Pi_h} \big \la f(s_h,\cdot,\cdot), \pi_h\otimes \nu^{\pi, \theta}_h (\cdot, \cdot\given s_h)\bigr \ra _{\cA\times \cB },
\end{align}
where $\nu^{\theta} _h $ is the quantal response based on  reward $r^{\theta}_h $.  
Based on $T_{h}^{*, \theta} $, we define  
the  Bellman optimality operator for the leader 
$\TT_h^{*, \theta}:\cF(\cS\times\cA\times\cB)\rightarrow \cF(\cS\times\cA\times\cB)$ by letting 
\begin{align}\label{eq:TT_*_theta}
    \bigl ( \TT_h^{*, \theta} f \bigr )  (s_h, a_h, b_h) = u_h(s_h, a_h, b_h) + \EE_{s_{h+1}\sim P_h(\cdot\given s_h, a_h, b_h)} \bigl[ \bigl ( T_{h+1}^{*, \theta}f\bigr) (s_{h+1}) \bigr] ,
\end{align}
The Bellman optimality equation corresponding to $\TT^{*, \theta}$ can be established similar to \eqref{eq:bellman_opt_eqn}.

\vspace{5pt} 
{\noindent \bf Additional Notation.} 
In the sequel, we let $B_A$ be the global upper bound for the follower's advantage function. We derive an explicit form for $B_A$ in the sequel.
Specifically, for the follower's true advantage functions, we have $\nbr{A_h^\pi}_\infty = \nbr{Q_h^\pi - V_h^\pi}_\infty$.
Here, an upper bound for $V_h^\pi$ can be derived by
\begin{align*}
    \abr{V_h^\pi(s_h)}  = \biggabr{\eta^{-1} \log\biggrbr{\sum_{b_h\in\cB} \exp\rbr{\eta Q_h^\pi(s_h, b_h)}} } \le \eta^{-1} \log\abr{\cB} + \nbr{Q_h^\pi(s_h,\cdot)}_\infty,
\end{align*}
where $\abr{\cB}$ is the number of follower's actions if $\cB$ is discrete, and $\abr{\cB}$ is the length of $\cB$ on the real line for continuous action case. 
Moreover, in the continuous action case, we can always normalize $\cB$ to the unit interval on the real line, i.e., $\cB=[0,1]$, which helps us get rid of the $\eta^{-1}\log|\cB|$ term.  
In the sequel, we just keep this term in the upper bound and remind the readers what $\abr{\cB}$ stands for here.
Therefore, we have that 
\begin{align*}
    \nbr{Q_h^\pi}_\infty \le \nbr{r_h^\pi}_\infty  + \gamma \nbr{V_{h+1}^\pi}_\infty\le \rbr{\gamma \eta^{-1} \log |\cB| + 1} + \gamma \nbr{Q_{h+1}^\pi}_\infty.
\end{align*}
By a recursive argument, we have for the Q function that 
\begin{align}
    \nbr{Q_h^\pi}_\infty \le \eff_{H-h+1}(\gamma)\cdot \rbr{\gamma \eta^{-1}\log\abr{\cB}+1} \eqdef B_Q, \quad \forall h\in[H], \pi\in\Pi, \label{eq:def B_Q}
\end{align}
where we define $\eff_h(\gamma) = (1-\gamma^h)/(1-\gamma)$ as the effective horizon for the follower truncated for $h$ steps and $B_Q$ as the upper bound for the follower's Q function.
Therefore, for the follower's advantage function, we have 
\begin{align*}
    \nbr{A_h^\pi}_\infty &= \nbr{Q_h^\pi - V_h^\pi}_\infty \nend
    &= \max_{(s_h, b_h)\in\cS\times\cB}\Bigabr{Q_h^\pi(s_h, b_h) -  \max_{\nu'\in\Delta(\cB)}\rbr{\dotp{Q_h^\pi(s_h,\cdot)}{\nu'(\cdot)} + \eta^{-1}\cH(\nu')}
    }\nend
    & \le \nbr{Q_h^\pi}_\infty + \eta^{-1} \log\abr{\cB}\nend
    &\le \rbr{1+\eff_{H-h+1}(\gamma) }\cdot  \rbr{\eta^{-1}\log\abr{\cB}+1}. 
\end{align*}
Therefore, it suffices to set 
\#\label{eq:define_BA}
B_A = \rbr{1+\eff_{H}(\gamma) }\cdot  \bigrbr{\eta^{-1}\log\abr{\cB}+1}.
\# 
Note that this $B_A$ also boundes $Q_h$ and $V_h$ by our derivation. Hence, we also denote by $B_A$ the upper bounds for the Q and the V functions for the follower.
\vspace{5pt}

\subsection{Function Classes and Covering Number}\label{sec:covering number}
We define several function class with their corresponding covering numbers that will be used in establishing our learning guarantees.

\paragraph{General Model Class $\cM$.}

we consider $\cM$ to be the model class where each $M\in\cM$ uniquely specifies 
the  environment of a Markov game for both the leader and the follower. 
Specifically, for any $M\in\cM$, with slight abuse of notation, we write  $M=(u^M, r^M, P^M)$, which  specifies the leader's and the follower's reward as well as the transition kernel. We consider the covering number of $\cM$ with respect to the following distance:
\begin{align*}
    \varrho(M,\tilde M)\defeq \max_{h\in[H], \atop 
    (s_h, a_h, b_h)\in\cS\times\cA\times\cB}
    \cbr{\bignbr{u_h- \tilde u_h}_\infty, \bignbr{r_h- \tilde r_h}_\infty, \bignbr{P_h(\cdot\given s_h, a_h, b_h)-\tilde P_h(\cdot\given s_h, a_h, b_h)}_1}, 
\end{align*}
where $(r, u, P)$ is a short hand of $(r^M, u^M, P^M)$ and $(\tilde r, \tilde u, \tilde P)$ is a short hand of $(r^{\tilde M}, u^{\tilde M}, P^{\tilde M})$.
With  this definition, we have the following proposition that bounds the error betweeen policies and value functions in terms of $\varrho (\cdot, \cdot)$.
\begin{proposition} \label{prop:error-prop}
    For any two $M,\tilde M\in\cM$ such that $\rho(M, \tilde M)\le \epsilon$, and two policy $\pi, \tilde\pi$ such that $\bignbr{\pi_h - \tilde\pi_h}_1\le \epsilon$, 
    we let $(U, W, Q, V,   \nu)$ be the value functions and quantal response associated with $\pi$  under $M$,  , and let $(\tilde U, \tilde W, \tilde Q, \tilde V,   \tilde \nu)$ be corresponding terms associated with $\tilde \pi$ under $\tilde M$. 
    Then,  
    for all  $h\in[H]$ and $ (s_h, a_h, b_h)\in\cS\times\cA\times\cB$, we have 
    \begin{align*}
    \bignbr{Q_h - \tilde Q_h}_\infty &\le 2\epsilon\cdot (1+\gamma B_A) \cdot \eff_{H}(\gamma), \nend
    \bignbr{U_h - \tilde U_h}_\infty &\le \epsilon H \cdot \bigrbr{4\eta H (1+\gamma B_A) \cdot \eff_H(\gamma) + 1 + 2H}, \nend
    \bignbr{V_h - \tilde V_h }_\infty 
    &\le \bignbr{Q_h - \tilde Q_h}_\infty \le 2\epsilon\cdot (1+\gamma B_A) \cdot \eff_{H}(\gamma), \nend
    \bignbr{W_h - \tilde W_h}_\infty &\le \epsilon (H+1) \cdot \bigrbr{4\eta H (1+\gamma B_A)\cdot \eff_H(\gamma) + 1 + 2H}. 
    \end{align*}
    Meanwhile, 
   for the quantal response, 
    \begin{align*}
        D_\H^2(\tilde \nu_h(\cdot\given s_h), \nu_h(\cdot\given s_h)) \le \nbr{\nu_h(\cdot\given s_h) - \tilde \nu_h(\cdot\given s_h)}_1 \le 8 \eta \epsilon \cdot (1+\gamma B_A)  \cdot \eff_{H}(\gamma).
    \end{align*}
\end{proposition}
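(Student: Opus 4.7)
The plan is backward induction on $h$ from $H+1$ down to $1$, handling the follower's quantities first (which feed into the quantal response bound) and then the leader's quantities. At each step I would carefully separate the contribution coming from the \emph{model} perturbation $\varrho(M,\tilde M)\le\epsilon$ from the contribution coming from the \emph{policy} perturbation $\|\pi_h-\tilde\pi_h\|_1\le\epsilon$, using that all reward functions lie in $[0,1]$, that $\|W_{h+1}\|_\infty\le H$, and that $\|Q_{h+1}\|_\infty,\|V_{h+1}\|_\infty\le B_A$ by \eqref{eq:def B_Q}.

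First, for the follower I would apply the soft Bellman equation \eqref{eq:qv_pi_qr} to write
\[
Q_h-\tilde Q_h=(r_h^{\pi}-\tilde r_h^{\tilde\pi})+\gamma\,(P_h^{\pi}V_{h+1}-\tilde P_h^{\tilde\pi}\tilde V_{h+1}),
\]
split each parenthesis by inserting the hybrid terms $\tilde r_h^{\pi}$ and $\tilde P_h^{\pi}V_{h+1}$, and bound each piece by either $\epsilon$ (model part) or by $\epsilon\cdot\|\cdot\|_\infty$ (policy part, using $\|r\|_\infty\le 1$, $\|V_{h+1}\|_\infty\le B_A$). This gives the recursion
\[
\|Q_h-\tilde Q_h\|_\infty\le 2\epsilon(1+\gamma B_A)+\gamma\|V_{h+1}-\tilde V_{h+1}\|_\infty,
\]
and the log-sum-exp definition of $V_h$ in \eqref{eq:qv_pi_qr} is 1-Lipschitz in the $\ell_\infty$ norm of $Q_h$, so $\|V_h-\tilde V_h\|_\infty\le\|Q_h-\tilde Q_h\|_\infty$. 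Unrolling yields the stated bound with $\eff_H(\gamma)=\sum_{l=0}^{H-h}\gamma^l$.

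Second, for the quantal response I would use the standard softmax Lipschitz estimate: along the linear path $Q_t=(1-t)Q_h+t\tilde Q_h$, the Jacobian of $\mathrm{softmax}(\eta Q_t)$ is $\eta(\mathrm{diag}(\nu_t)-\nu_t\nu_t^\top)$, whose operator norm as a map $\ell_\infty\to\ell_1$ is at most $2\eta$. Integrating gives $\|\nu_h(\cdot\given s_h)-\tilde\nu_h(\cdot\given s_h)\|_1\le 2\eta\|Q_h-\tilde Q_h\|_\infty$, and substituting the bound from Step~1 yields the stated inequality (with constant $4\eta$, absorbed into the stated $8\eta$). The Hellinger bound follows from $D_{\mathrm H}^2\le\|\cdot\|_1$.

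Third, for the leader I would perform a joint induction on $(U_h,W_h)$. Writing $U_h=u_h+P_hW_{h+1}$ gives $\|U_h-\tilde U_h\|_\infty\le\epsilon+\epsilon H+\|W_{h+1}-\tilde W_{h+1}\|_\infty$. Writing $W_h=\langle U_h,\pi_h\otimes\nu_h\rangle$ and telescoping over the three perturbations,
\[
W_h-\tilde W_h=\langle U_h-\tilde U_h,\pi_h\otimes\nu_h\rangle+\langle\tilde U_h,(\pi_h-\tilde\pi_h)\otimes\nu_h\rangle+\langle\tilde U_h,\tilde\pi_h\otimes(\nu_h-\tilde\nu_h)\rangle,
\]
the first piece is at most $\|U_h-\tilde U_h\|_\infty$, the second is at most $H\epsilon$ (since $\|\tilde U_h\|_\infty\le H$ and integrating out $b$ against $\nu_h$ preserves the $\|\pi_h-\tilde\pi_h\|_1\le\epsilon$ bound), and the third is at most $H\|\nu_h-\tilde\nu_h\|_1\le 4\eta H\epsilon(1+\gamma B_A)\eff_H(\gamma)$ by Step~2. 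Combining gives the recursion
\[
\|W_h-\tilde W_h\|_\infty\le\|W_{h+1}-\tilde W_{h+1}\|_\infty+\epsilon\bigl(1+2H+4\eta H(1+\gamma B_A)\eff_H(\gamma)\bigr),
\]
which unrolls over $H-h+1\le H+1$ steps to produce the stated bound on $\|W_h-\tilde W_h\|_\infty$, and then one more application of the $U$-recursion gives the bound on $\|U_h-\tilde U_h\|_\infty$. The one subtle point worth isolating is the third term in the $W_h$ decomposition, where the signed measure $\tilde\pi_h\otimes(\nu_h-\tilde\nu_h)$ must be handled via $\sum_{a,b}|\tilde\pi_h(a\given s,b)(\nu_h(b\given s)-\tilde\nu_h(b\given s))|=\|\nu_h-\tilde\nu_h\|_1$; this is the only place where the nonlinearity of the quantal response enters, and it is exactly why the $4\eta H(1+\gamma B_A)\eff_H(\gamma)$ factor appears in the final bounds.
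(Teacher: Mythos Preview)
Your proposal is correct and follows essentially the same route as the paper's proof. Both argue by backward induction, decompose $Q_h-\tilde Q_h$ via the soft Bellman equation into model and policy perturbation pieces to get the recursion $\|Q_h-\tilde Q_h\|_\infty\le 2\epsilon(1+\gamma B_A)+\gamma\|V_{h+1}-\tilde V_{h+1}\|_\infty$, use the $1$-Lipschitz property of the log-sum-exp for $V$, telescope $W_h-\tilde W_h$ into the same three pieces (difference in $U$, difference in $\pi$, difference in $\nu$), and unroll the resulting coupled recursions. The only cosmetic differences are that (i) you derive the softmax Lipschitz bound $\|\nu_h-\tilde\nu_h\|_1\le 2\eta\|Q_h-\tilde Q_h\|_\infty$ directly from the Jacobian $\eta(\mathrm{diag}(\nu)-\nu\nu^\top)$, whereas the paper quotes a lemma from \citet{chen2022adaptive} with constant $4\eta$; and (ii) you first close the $Q$-recursion and then substitute into a scalar $W$-recursion, whereas the paper packages the $(U,Q)$ pair into a $2\times 2$ upper-triangular matrix inequality and solves it jointly---but the arithmetic is the same.
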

\begin{proof}
    See Appendix \ref{proof:prop:error-prop} for a detailed proof.
\end{proof}
Therefore, we see it suffices to control $\varrho(\cdot, \cdot)$ in order for the value functions for both the follower and the leader to be under control. Moreover, the quantal response is also under control.
Therefore, we define a new distance $\rho$ for $\cM$ as 
\begin{equation}\label{eq:rho-cM}
    \rho(M,\tilde M)\defeq 6 \!\!\!\!\!\!\!\!\!\!\!\!\max_{\pi\in\Pi, h\in[H], \atop 
    (s_h, a_h, b_h)\in\cS\times\cA\times\cB}\!\!
    \cbr{\begin{aligned}
        &\bignbr{u_h- \tilde u_h}_\infty, \bignbr{r_h- \tilde r_h}_\infty, D_\H\orbr{P_h(\cdot\given s_h, a_h, b_h), \tilde P_h(\cdot\given s_h, a_h, b_h)}\\
        & \bignbr{Q_h^\pi - \tilde Q_h^\pi}_\infty, \bignbr{U_h^\pi - \tilde U_h^\pi}_\infty, D_\H(\tilde \nu_h^\pi(\cdot\given s_h), \nu_h^\pi(\cdot\given s_h)) 
    \end{aligned}}
\end{equation}
And we denote by $\cN(\cM)=\cN_\rho(\cM, T^{-1})$ the covering number for model class $\cM$. Note that $\cN_\rho(\cM,\epsilon)$ is related to $\cN_\varrho(\cM, \epsilon)$ only by a change of $\epsilon$ according to \Cref{prop:error-prop} where we just take $\pi=\tilde\pi$.

\paragraph{Response Model Class $\Theta$.}
Let $\Theta$ be the set of model parameters that determines the follower's quantal response model. 
In particular, when the follower is myopic, i.e., $\gamma = 0$, 
$\theta$ is the parameter of the follower's reward function.
In this case, we assume that there exists some $\theta^* \in \Theta$ such that $r_h = r_h^{\theta^*}$ for all $h \in [H]$. 
In addition, when the follower is farsighted with $\gamma >0$, we parameterize both the follower's reward function and transition kernel and write $\{ r^{\theta}, P^{\theta} \}_{\theta \in \Theta}$. 
In this case, we assume that there exists $\theta^* \in \Theta$ such that 
$r_h = r_h^{\theta^*}$ and $P_h = P_h^{\theta^*}$ for all $h \in [ H ] $. 
We note that each $\theta$ uniquely specifies a quantal response mapping according to \eqref{eq:quantal_response_policy}.
We consider distance $\rho$ for model $\Theta$ as
\begin{equation}\label{eq:rho-Theta}
    \rho(\theta,\tilde\theta)\defeq \max_{\pi\in\Pi, h\in[H], \atop 
    (s_h, a_h, b_h)\in\cS\times\cA\times\cB}
    \cbr{\begin{aligned}
        &\bignbr{r_h- \tilde r_h}_\infty, \bignbr{P_h(\cdot\given s_h, a_h, b_h) - \tilde P_h(\cdot\given s_h, a_h, b_h)}_1\\
        & \bignbr{Q_h^\pi - \tilde Q_h^\pi}_\infty, D_\H(\tilde \nu_h^\pi(\cdot\given s_h), \nu_h^\pi(\cdot\given s_h)) 
    \end{aligned}}, 
\end{equation}
where $(r, P, Q, \nu)$ is given under $\theta$ and $(\tilde r, \tilde P, \tilde Q, \tilde \nu)$ is given under $\tilde \theta$.
We note that $\theta$ is just a subclass of $\cM$. We denote by $\cN_\rho(\Theta, \epsilon)$ the covering number of $\Theta$ with respect to this $\rho$.

Notably, when the follower is myopic, we just need to cover each $\Theta_h$, where $\Theta_h$ only contains the parameters for $r_h^\theta$, with respect to the following distance 
\begin{align}\label{eq:rho-Theta_h}
    \rho(\theta_h,\tilde\theta_h) \defeq \max_{s_h\in\cS} \cbr{(1+\eta)\|r_h-\tilde r_h\|_\infty, D_\H(\tilde \nu_h^\pi(\cdot\given s_h), \nu_h^\pi(\cdot\given s_h)) }.
\end{align}
Here, the additional $(1+\eta)$ term is needed by \Cref{lem:MLE-formal}.
A covering number for $\Theta_h$ can be thus denoted by $\cN_\rho(\Theta_h, \epsilon)$.
Only for the myopic case, we denote by 
\begin{align}\label{eq:cN-Theta-myopic}
    \cN_\rho(\Theta, \epsilon) = \max_{h\in[H]} \cN_\rho(\Theta_h, \epsilon).
\end{align}
For the nonmyopic case, this covering number $\cN_\rho(\Theta, \epsilon)$ is with respect to the distance defined in \eqref{eq:rho-Theta}.

We calculate this covering number in \eqref{eq:cN-Theta-myopic} for a given step $h\in[H]$ where the follower's reward parameter space is $\Theta_h$ in the linear myopic case.
Specifically, we don't need to consider the transition for myopic follower, and the Q function is simply the reward function. Therefore, we just need to bound $\|r_h-\tilde r_h\|_\infty\le \min \cbr{\epsilon^2 /(8 \eta), \epsilon /(1+\eta)} $, and by \Cref{prop:error-prop}, we have $D_\H(\tilde \nu_h^\pi(\cdot\given s_h), \nu_h^\pi(\cdot\given s_h)) <\epsilon$. Therefore, a covering number for $\Theta_h$ is given by 
\begin{align}\label{eq:cN-Theta_h}
    \log \cN_\rho(\Theta_h, \epsilon) \le d \log\rbr{1+\frac{2 B_\Theta B_\phi}{\min \cbr{\epsilon^2 /(8 \eta), \epsilon /(1+\eta)}}} \lesssim d\log\rbr{1+ \eta /\epsilon^2 + (1+\eta)/\epsilon}.
\end{align}
where $B_\Theta$ bounds the $\Theta_h$ class in the 2 norm and $B_\phi$ bounds the feature mapping $\phi$ in the 2 norm for each $(s_h, a_h, b_h)$. The covering number for $\Theta$ is just $H\cN_\rho(\Theta_h, \epsilon)$.

We remark that $\cM$ is strictly larger than $\Theta$ in the sense that $\cM$ also contains the leader's reward.
We introduce $\cM$ for model-based learning for the leader in \Cref{sec:farsighted}, and $\Theta$ is used for learning the quantal response mapping via model-based maximum likelihood estimation as we have discussed before.

\paragraph{Leader's Value Function Class  $\cU$.}
For both online and offline learning the leader's value function with general function approximation and myopic follower, we introduce function class $\cU:\cS\times\cA\times\cB\rightarrow \RR$, which we assume is uniformly bounded by $H$ and the following completeness and realizability assumption holds. 
\begin{condition}\label{cond:real-comp}
We say that $\cU$ satisfies the realizability and the completeness conditions if the followings hold, 
\begin{itemize}
    \item[(i)] (\textit{Realizability}) There exists $\theta^*\in\Theta$ such that $r_h^{\theta^*}=r_h$ for any $h\in[H]$. For any $\pi\in\Pi, \theta\in\Theta$, there exists $U\in\cU$ such that $U_h = \TT_h^{\pi,\theta} U_{h+1}$ for any $h\in[H]$;
    \item[(ii)] (\textit{Completeness}) For any $U\in\cU$, $\pi\in\Pi$, $\theta\in\Theta$, and $h\in[H]$, there exists $U'\in\cU$ such that $U'=\TT_h^{\pi,\theta} U_{h+1}$. 
\end{itemize}
\end{condition}
We consider the covering number of $\cU$ with respect to $\rho(U, \tilde U) = \onbr{U-\tilde U}_\infty$. Specifically, we denote by $\cN(\cU)=\cN_\rho(\cU, T^{-1})$ the covering number of $\cU$. In the following, we characterize two joint function classes including $\cU$ and $\Theta$ that will be used under the general function approximation setting with myopic follower.

\paragraph{Joint Class $\cY$.}
For studying the offline case with myopic follower using general function approximation, we have a joint class $\cY_h = \Theta_{h+1}\times \Pi_{h+1}\times \cU^2$, where $\Theta_h$ only contains the follower's reward at step $h$. 
This class is used for studying the suboptimality of the MLE-BCP in \Cref{alg:MLE-BCP}, where $\cU^2$ is used to approximate the leader's value function $U_h^{\pi, \theta}, U_{h+1}^{\pi,\theta}$, $\Theta_{h+1}$ is class for the follower's reward at step $h+1$, and $\Pi_{h+1}$ is the leader's policy class at step $h+1$.
We consider the following distance, 
  \begin{equation}\label{eq:rho-cY}
    \rho(y, \tilde y) = \max_{s_{h+1}\in\cS}\cbr{\begin{aligned}
        &\bignbr{U_h-\tilde U_h}_\infty,  \bignbr{U_{h+1}-\tilde U_{h+1}}_\infty, \\
        &\sup_{s_{h+1}\in\cS}\bignbr{(\pi_{h+1}\otimes \nu_{h+1}^{\pi, \theta}-\tilde \pi_{h+1}\otimes \nu_{h+1}^{\tilde\pi, \tilde\theta})(\cdot, \cdot\given s_{h+1})}_1
    \end{aligned}}.
\end{equation}
We let $\cN(\cY)=\cN_\rho(\cY, T^{-1})$. Define $\cN_\rho(\Pi, \epsilon)$ as the $\epsilon$-covering number of the policy class $\Pi$ with respect to distance $\rho(\pi_h, \tilde\pi_h) = \max_{(s_h, b_h)\in\cS\times\cB} \nbr{(\pi_h - \tilde\pi_h)(\cdot\given s_h, b_h)}_1$.
It is easy to see that $\cN_\rho(\cY_h, \epsilon)\le \cN_\rho(\cU,\epsilon)\cdot \cN_\rho(\Pi_{h+1}, \epsilon')\cdot\cN_\rho(\Theta_{h+1}, \epsilon') $ following the result in \Cref{prop:error-prop}, where for the myopic follower, 
$$\epsilon' = \frac{\epsilon}{8\eta (1+\gamma B_A)\eff_H(\gamma)} = \frac{\epsilon}{8\eta}.$$
In particular, we let 
\begin{align}\label{eq:cN-cY}
    \cN_\rho(\cY, \epsilon) = \max_{h\in[H]} \cN_\rho(\cY_h, \epsilon).
\end{align}

\paragraph{Joint Class $\cZ$. }
For the online myopic setting with general function approximation $\cU$, we consider a joint function class $\cZ_h=\cU^2\times\Theta_{h+1}$. 
This class is used for studying the regret of the MLE-GOLF in \Cref{alg:MLE-GOLF}, where $\cU^2$ is used to approximate the leader's optimistic value function $U_h^{*,\theta}, U_{h+1}^{*,\theta}$, $\Theta_{h+1}$ is class for the follower's reward at step $h+1$.
We consider the following distance for $z, \tilde z\in\cZ_h$, 
    \begin{align}\label{eq:rho-cZ}
        &\rho\orbr{z, \tilde z}  = \max_{h\in[H]}\cbr{\bignbr{U_h-\tilde U_h}_\infty, \bignbr{ T_{h+1}^{*,\theta} U_{h+1} (\cdot) -  T_{h+1}^{*,\tilde\theta} \tilde U_{h+1} (\cdot)}_\infty }, 
    \end{align} 
    where the optimistic operator $T_{h+1}^{*, \theta}$ is defined in \eqref{eq:optimistic-operator}.
    We denote by $\cN(\cZ_h)=\cN_\rho(\cZ_h, T^{-1})$ the covering number of the smallest $T^{-1}$-covering net for $\cZ_h$ with respect to this distance $\rho(\cdot,\cdot)$. One should notice that for any $\bignbr{U_{h+1} - \tilde U_{h+1}}_\infty\le \epsilon$, $\onbr{r_{h+1}^\theta - r_{h+1}^{\tilde \theta}}_\infty \le \epsilon$, we have
    \$
    &\bignbr{ T_{h+1}^{*,\theta} U_{h+1} (\cdot) -  T_{h+1}^{*,\tilde\theta} \tilde U_{h+1} (\cdot)}_\infty \nend
    &\quad \le \max_{\pi\in\Pi}\cbr{\bignbr{T_{h+1}^{\pi,\theta} U_{h+1} (\cdot) -  T_{h+1}^{\pi,\tilde\theta} \tilde U_{h+1} (\cdot)}_\infty, \bignbr{T_{h+1}^{\tilde\pi,\theta} U_{h+1} (\cdot) -  T_{h+1}^{\tilde\pi,\tilde\theta} \tilde U_{h+1} (\cdot)}_\infty}\nend
    &\quad \le \max_{\pi\in\Pi} \bignbr{W_{h+1}^\pi - \tilde W_{h+1}^\pi}_\infty \nend
    &\quad \le \epsilon (H+1) \rbr{4\eta H + 1 + 2H}
    \$
    for the myopic case, 
    where we let $\pi$ be the optimal policy under $\theta, U$ and $\tilde\pi$ be the optimal policy under $\tilde \theta, \tilde U$. Here, the last inequality comes from \Cref{prop:error-prop} with $\lambda=0$.
    Therefore, $\cN_\rho(\cZ_h, \epsilon)\le \cN_\rho(\cU, \epsilon) \cdot \cN_\rho(\cU, \epsilon') \cdot \cN_\rho(\Theta_{h+1}, \epsilon')$, where 
    $$\epsilon' = \frac{\epsilon}{(H+1)(4\eta H +1 + 2H)}. $$

This definition will be used in \Cref{lem:MLE} for the farsighted follower case where we directly incorporate MLE algorithm using $\cM$.
In particular, we let
\begin{align}\label{eq:cN-cZ}
    \cN_\rho(\cZ, \epsilon) = \max_{h\in[H]} \cN_\rho(\cZ_h, \epsilon). 
\end{align}


\subsection{Eluder Dimension}\label{sec:eluder dimension}
we present the definition of (distributional) eluder dimension 
that will be useful for our online learning purpose.

\begin{definition}[Eluder Dimension]\label{def:Eluder dimension}
    Let $\cG$ be a function class defined on $\cX$.
	The  eluder dimension $\dim_\E(\cG,\epsilon)$ is the length of the longest sequence $\{x_1, \ldots, x_n\} \subset \cX$ such that there exists $\epsilon'\ge\epsilon$ where for all $m \in [n]$, there exists $g_m\in\cG$ such that
    \begin{align*}
        \sum_{i=1}^{m-1} \rbr{g_m(x_i)}^2 \le \epsilon', \quad \abr{g_m(x_m)} > \epsilon'.
    \end{align*}
\end{definition}
We can similarly define an eluder dimension for signed measure.
\begin{definition}[Eluder Dimension for Signed Measures]
    \label{def:DE}
    Let $\cG$ be a function class defined on measurable space $\cX$, and $\sP$ be a family of signed measures over $\cX$. Suppose that any $g\in\cG$ is integrable with respect to any $\rho\in\sP$.
	The eluder dimension for signed measures with respect to $\cG$ and $\sP$ is denoted by $\dim_\DE(\cG,\sP,\epsilon)$, which is the length of the longest sequence $\{\rho_1, \ldots, \rho_n\} \subset \sP$ satisfying the following condition:  
    there exists $\epsilon'\ge\epsilon$ where for any $m\in[n]$, there exists $g_m\in\cG$ such that  
    \begin{align*}
        \sum_{i=1}^{m-1} \rbr{\int_\cX g_m \rd \rho_i}^2 \le \epsilon', \quad \abr{\int_\cX  g_m \rd\rho_m} > \epsilon'.
    \end{align*}
\end{definition}

In the sequel, we will denote by $\dim(\cG)=\dim_\DE(\cG, \sP, T^{-1/2})$ the eluder dimension with respect to the class of signed measures $\sP$. Note that the standard distributional eluder dimension is a special case for the eluder dimension for signed measures.

\subsubsection{Eluder Dimensions in Myopic Case}\label{sec:eluder myopic}
We discuss the function classes and their corresponding eluder dimensions (with respect to signed measures) that characterize the hardness of leader's exploration problem in the face of a myopic  follower. 
Such a complexity measure will be used for analyzing the regret of the MLE-GOLF algorithm.
Detailed proofs of the regret upper bound in terms of  these eluder dimensions  are available in \Cref{sec:proof-Online-MG}.
\paragraph{Eluder Dimension for Leader's Bellman Error.}
As is shown in the proof \Cref{sec:proof-Online-MG}, the leader's Bellman error we are dealing with is 
\begin{align*}
    \sum_{t=1}^T \EE^{\hat\pi^t} [(\hat U_h^t - \TT_h^{*, \hat\theta^t} \hat U_{h+1}^t)(s_h, a_h, b_h)], 
\end{align*}
where $\hat U^t, \hat\theta^t$ are just the optimistic estimators obtained at episode $t$.
We define a class of functions that corresponds to this error
\begin{align*}
    \cG_L=\cbr{g:\Pi\times\cU^2\times\Theta\rightarrow \RR: g=\EE^\pi [U_h-\TT_h^{*,\theta} U_{h+1}]}. 
\end{align*}
For the leader's Bellman error, we have the following configurations:
\begin{itemize}[leftmargin=20pt]
    \item[(i)] Define function class $\cG_{h,L} = \bigcbr{g:\cS\times\cA\times\cB\rightarrow \RR\given  g = U_h-\TT_h^{*, \theta} U_{h+1}, \exists U\in\cU, \theta\in\Theta}$. 
    Moreover, consider sequence $\{g_h^i = \hat U_h^i - \TT_h^{*, \hat\theta^i} \hat U_{h+1}^i\}_{i\in[T]}$, where $\hat\theta^i, \hat U^i$ are the estimated $\theta$, $U$ at episode $i$.  
    It is obvious that $g_h^i\in\cG_{h,L}$;
    \item[(ii)] Define the class of probability measures as 
    $$\sP_{h,L} = \cbr{\rho\in\Delta(\cS\times\cA\times\cB)\given\rho(\cdot)=\PP^\pi((s_h, a_h, b_h)=\cdot), \pi\in\Pi}.$$
    Moreover, consider sequence $\{\rho^i(\cdot) = \PP^{\hat\pi^i}((s_h, a_h, b_h)=\cdot)\}_{i\in[T]}$, where $\hat\pi^i$ is the optimistic policy used at episode $i$;
\end{itemize}
Under these definitions, we let low rank property for $\cG_L$ that $\dim(\cG_L) = \max_{h\in[H]}\dim_\DE\rbr{\cG_{h, L}, \sP_{h, L}, T^{-1/2}}$.
This configuration works because for the chosen sequences, we have
\begin{align*}
    \EE_{\rho_h^i}[g_h^t] = \EE^{i} \sbr{(\hat U_h^t - \TT_h^{*, \hat\theta^t} \hat U_{h+1}^t)(s_h,a_h,b_h)}.
\end{align*}
If $i=t$, this will be the leader's Bellman error we aim to bound.
Moreover, the online guarantee will be
$$\sum_{i=1}^{t-1} \rbr{\EE_{\rho^i}\sbr{g_h^t}}^2 = \sum_{i=1}^{t-1}\EE^{i}[(\hat U_{h}^{t} - \TT_{h}^{*, \hat\theta^t} \hat U_{h + 1}^t)^2]\lesssim H^2\beta$$ for any $t\in[T]$, which enables us to use \Cref{lem:de-regret}.
In particular, for the 
linear Markov Game in \Cref{def:linear MDP}, we have $\dim(\cG_L)\lesssim d$ as is discussed in \citet{jin2021bellman}.

\paragraph{Eluder Dimension for Follower's Quantal Response Error.}
As is shown in the proof of \Cref{sec:proof-Online-MG}, 
the follower's quantal response error is characterized by the following term, 
\begin{align*}
    \sum_{t=1}^T \bigrbr{\Upsilon_h^{\hat\pi^t} (r_h^{\hat\theta^t}-r_h)}(s_h^t,b_h^t)= \sum_{t=1}^T\QRE(s_h^t, b_h^t;\hat\theta^t, \hat\pi^t).
\end{align*}
We define 
\begin{align*}
    \cG_F = \cbr{g:\cS\times\cB\times\Pi\times\Theta\rightarrow \RR: g=\Upsilon_h^{\pi}(r_h^\theta - r_h)(s_h, b_h)},
\end{align*}
where we  recall the linear operator $\Upsilon_h^\pi:\cF(\cS\times\cA\times\cB)\rightarrow \cF(\cS\times\cB)$ defined as 
\begin{align*}
    \rbr{\Upsilon_h^\pi f}(s_h, b_h) = \dotp{\pi_h(\cdot\given s_h, b_h)}{f(s_h, \cdot, b_h)} - \dotp{\pi_h\otimes \nu_h^{\pi}(\cdot,\cdot\given s_h)}{f(s_h,\cdot,\cdot)}. 
\end{align*}
Here, $(\Upsilon_h ^{\pi} f ) (s_h,b_h) $ quantifies  how far $b_h$ is from being the quantal response of $\pi$ at state $s_h$, measured in terms of  $f$. One can also think of $(\Upsilon_h^\pi f)(s_h, b_h)$ as the \say{advantage} of the reward induced by action $b_h$ compared to the reward induced by the quantal response. 
We have the following configurations for the follower's quantal response error:
\begin{itemize}[leftmargin=20pt]
    \item[(i)] We define function class on $\cS\times\cA\times\cB$ as,
    \begin{align*}
    \cG_{h, F} = \cbr{g:\cS\times\cA\times\cB\rightarrow \RR: \exists \theta\in\Theta, g= {r_h^\theta - r_h}}.
    \end{align*}
    In addition, we consider a sequence $\{g_h^i = r_h^{\hat\theta^i} - r_h\}_{i\in[T]}$. 
    \item[(ii)] We define a class of signed measures on $\cS\times\cA\times\cB$ as
    \begin{align*}
        \sP_{h, F} &= \{\rho(\cdot) = \PP^\pi(a_h=\cdot\given s_h, b_h)\delta_{(s_h, b_h)}(\cdot) 
        - \PP^\pi((a_h, b_h)=\cdot\given s_h)\delta_{(s_h)}(\cdot)\nend
        &\qqquad \biggiven \pi\in\Pi, (s_h, b_h)\in\cS\times\cB\}, 
    \end{align*}
    where $\delta_{(s_h, b_h)}(\cdot)$ is the measure that assigns measure $1$ to state-action pair $(s_h, b_h)$. 
    In addition, we consider a sequence 
    $$\cbr{\rho_h^i(\cdot) = \PP^{\hat\pi^i}(a_h=\cdot\given s_h, b_h)\delta_{(s_h, b_h)}(\cdot) 
    - \PP^{\hat\pi^i}((a_h, b_h)=\cdot\given s_h)\delta_{(s_h)}(\cdot)}_{i\in[T]}.$$
\end{itemize}
Under these definitions, we let $\dim(\cG_F) = \max_{h\in[H]}\dim_\DE\rbr{\cG_{h, F}, \sP_{h, F}, T^{-1/2}}$.
For simplicity, we denote by $g_h^t(s_h^i, b_h^i, \pi^i)$ the integral of $g_h^t$ with respect to the signed measure $\rho_h^i$ since $\rho_h^i$ is uniquely determined by $(s_h^i, b_h^i, \pi^i)$. 
    It is easy to check that $$g_h^t(s_h^i, b_h^i, \pi^i) = \bigrbr{\Upsilon_h^{\pi^i}(r_h^{\hat\theta^t}-r_h)}(s_h^i, b_h^i) = \QRE(s_h^i, b_h^i; \hat\theta^t, \pi^i).$$
This configurations work because when we take $t=i$, $g_h^t(s_h^t, b_h^t, \pi^t)$ will be exactly the quantal response error, and the online guarantee for these two sequences are 
    \begin{align*}
        \sum_{i=1}^{t-1} (g_h^t(s_h^i, b_h^i, \pi^i))^2 \lesssim C_\eta^2\beta, 
    \end{align*}
by our MLE guarantee, which enables us to use \Cref{lem:de-regret}.
Moreover, we remark that under the linear Markov game setting, i.e., $r_h(s_h, a_h,b_h) = \la \phi_h(s_h, a_h,b_h), \theta\ra_\cB$, we simply have $g_h^t(s_h^i, b_h^i, \pi^i) = \la (\Upsilon_h^{\pi^i}\phi_h)(s_h^i, b_h^i), \hat\theta_h^{t} - \theta_h^*\ra_\cB$, which has a bilinear form and we simply have for the eluder dimension $\dim(\cG_{F})\lesssim d$.

\subsubsection{Eluder Dimensions in Farsighted Case}
\label{sec:eluder farsighted}
We discuss the function classes and their corresponding eluder dimensions (with respect to signed measures) that characterize the hardness of leader's exploration problem in the face of a farsighted follower. 
Such a complexity measure will be used for analyzing the regret of the OMLE algorithm. 
Detailed proof using the eluder dimensions of these classes are available in \Cref{sec:proof-farsighted MDP}.

\paragraph{Eluder Dimension for the Leader's Bellman Error.}
As is shown in \Cref{sec:proof-farsighted MDP}, the leader's Bellman error is 
\begin{align*}
    \sum_{t=1}^T \EE^{\pi^t} \sbr{\bigrbr{\tilde U_h^t - u_h}(s_h, a_h, b_h) - \tilde W_{h+1}^t(s_{h+1})}, 
\end{align*}
where $\tilde U^t=U^{\pi^t, M^t}$, $\tilde W^t=W^{\pi^t, M^t}$, and $M^t,\pi^t$ are the optimistic model estimator and optimistic policy obtained at episode $t$. 
We define the following error class
\begin{align*}
    \cG_L = \cbr{\cM\times\Pi\rightarrow\RR: \EE^\pi\sbr{\bigrbr{U_h^{\pi^M, M} - u_h}(s_h, a_h, b_h) - W_{h+1}^{\pi^M,M}(s_{h+1})}}, 
\end{align*}
where $\pi^M=\argmax_{\pi\in\Pi} J(\pi, M)$ is the optimistic policy corresponding to model $M$.
To formally define the eluder dimension for $\cG_L$, we consider the following configurations for step $h\in[H]$.
\begin{itemize}[leftmargin=20pt]
    \item[(i)] Define function class $\cG_{h,L}$ as
    \begin{align*}
        \cG_{h, L} &= \Big\{g:\cS\times\cA\times\cB\rightarrow \RR \Biggiven g={\bigrbr{U_h^{\pi^{\tilde M},\tilde M} - u - P_h W_{h+1}^{\pi^{\tilde M},\tilde M}}(s_h, a_h, b_h)}, \exists \tilde M\in\cM\Big\}, 
    \end{align*}
    where we define $\pi^M = \argmax_{\pi\in\Pi}J(\pi, M)$.
    Specifically, the expectation is taken under $\pi$ and the true model. 
    Consider a sequence of function $\{g_h^i = (\tilde U_h^{i} - u - P_h \tilde W_{h+1}^i)\}_{i\in[T]}$. We have $g_h^i\in\cG_{h, L}$ since we define $\tilde U^i = U^{\pi^i, M^i}$ and we have by the optimism in the algorithm that $\pi^i = \pi^{M^i}$. The same also holds for $\tilde W^i$.
    \item[(ii)] Define a class of probability measures over $\cS\times\cA\times\cB$ as $$\sP_{h, L}=\{\PP^\pi((s_h, a_h, b_h)=\cdot), \forall \pi\in\Pi\}.$$
    Consider a sequence of probability measures $\{\rho_h^i(\cdot)=\PP^{\pi^i}((s_h, a_h, b_h)=\cdot)\}_{i\in[T]}$, where $\pi^i$ is the policy used at episode $i$.
    \item[(iii)] Under these two sequences, we denote by $g_h^t(\pi^i) = \EE_{\rho_h^i}[g_h^t]$ for simplicity. 
    We have 
    $$g_h^t(\pi^i) =\EE^{\pi^i}\bigsbr{{\bigrbr{\tilde U_h^t - u - P_h \tilde W_{h+1}^t}(s_h,a_h,b_h)}}, $$
    which should be bounded by $3H$.
\end{itemize}
We thus denote by $\dim(\cG_L) = \max_{h\in[H]}\dim_\DE(\cG_{h, L}, \sP_{h, L}, T^{-1/2})$. This is actually the distributional eluder dimension of $\cG_{h, L}$, which satisfies $\dim(\cG_L)\lesssim d$ for the linear Markov game defined in \Cref{def:linear MDP} because for any $g\in\cG_{h, L}$ and $\rho\in\sP_{h, L}$, the integral of $g$ with respect to $\rho$ always admits a bilinear form $\EE_\rho[g]= \la \EE_\rho[\phi_h(s_h, a_h, b_h)], \zeta^g\ra$. 


\paragraph{Eluder Dimension for the  First-Order Error of the Follower's Quantal Response.}
As we will show in the proof \Cref{sec:proof-farsighted MDP}, the first order term in the quantal response error is just
\begin{align*}
    \sum_{h=1}^H\sum_{t=1}^T |\tilde\Delta_h^{1, t}(s_h^t, b_h^t)|,
\end{align*}
where $\tilde\Delta_h^{1, t}$ is defined as
\begin{align*}
    \tilde \Delta^{(1, t)}_h(s_h, b_h) &=  \rbr{\EE_{s_h, b_h}^t -\EE_{s_h}^t}\Biggsbr{\sum_{l=h}^H \gamma^{l-h}{\rbr{\tilde Q_l^t - r_l^{\pi^t} - \gamma P_l^{\pi^t} \tilde V_{l+1}^t}(s_l, b_l)}}, 
\end{align*} 
Here, $\tilde r_l^t = r_l^{M^t}$, $\tilde P_l^t = P_l^{M^t}$, $\tilde V^t = V^{\pi^t, M^t}$, and $\tilde Q^t=Q^{\pi^t, M^t}$. $M^t$ is the estimated model and $\pi^t$ is the optimistic policy at episode $t$. In particular, $\pi^t = \argmax_{\pi\in\Pi}J(\pi, M^t) \eqdef \pi^{M^t}$.
We define 
\begin{align*}
    \cG_F^1 &= \bigg\{\Pi\times\cM\times\cS\times\cB\rightarrow \RR: \nend
    &\qqquad \ts
    (\EE_{s_h, b_h}^\pi - \EE_{s_h}^\pi)\Bigsbr{\sum_{l=h}^H \gamma^{l-h}\bigrbr{r_l^M- r_l + \gamma (P_l^{M} - P_l) V_{l+1}^{\pi^M, M}}(s_l, a_l, b_l)}, \forall h\in[H]\bigg\} . 
\end{align*}
We consider the following configurations for the definition of the eluder dimension of $\cG_F^1$. 
\begin{itemize}[leftmargin=20pt]
    \item[(i)] Define function class $\cG_{h, F}^1$ as 
    \begin{align*}
        \cG_{h, F}^1 &= \Bigg\{g:(\cS\times\cA\times\cB)^{H-h+1}\rightarrow \RR \bigggiven \exists M\in\cM, \nend
        &\qqquad g((s_l, a_l, b_l)_{l=h}^H) = {\sum_{l=h}^H \gamma^{l-h}\bigrbr{r_l^M- r_l + \gamma (P_l^{M} - P_l) V_{l+1}^{\pi^M, M}}(s_l, a_l, b_l)}
        \Bigg\},
    \end{align*}
    where we remind the readers that $\pi^M = \argmax_{\pi\in\Pi}J(\pi, M)$ only depends on $M$. Consider 
    sequences 
    $$\cbr{g_h^t=\sum_{l=h}^H \gamma^{l-h}\orbr{\tilde r_l^t- r_l + \gamma (\tilde P_l^t - P_l) \tilde V_{l+1}^t}}_{t\in[T]}. $$
    It is clear that $g_h^t\in\cG_{h, F}^1$ since $\tilde V_h^t = V_h^{\pi^t, M^t}$ and $\pi^t = \argmax_{\pi\in\Pi}J(\pi, M^t) = \pi^{M^t}$.
    
    \item [(ii)] Define a class of signed measures over $(\cS\times\cA\times\cB)^{H-h+1}$ as 
    \begin{equation*}
        \sP_{h, F}^1 = \cbr{\begin{aligned}
            &\PP^\pi(((s_l, a_l, b_l)_{l=h+1}^H , a_h)=\cdot \given s_h, b_h)\delta_{(s_h, b_h)}(\cdot) \nend
            &\quad - \PP^\pi(((s_l, a_l, b_l)_{l=h+1}^H , a_h, b_h)=\cdot \given s_h)\delta_{(s_h)}(\cdot) 
        \end{aligned}
        \bigggiven \pi\in\Pi, (s_h, b_h)\in\cS\times\cB}, 
    \end{equation*}
    where $\delta_{s_h, b_h}$ is the measure that puts measure $1$ on a single state-action pair $(s_h, b_h)$, and the conditional is well defined by the Markov property. Also, consider the following sequence, 
    \begin{align*}
        \Big\{\rho_h^t(\cdot)&=\PP^{\pi^t}(((s_l, a_l, b_l)_{l=h+1}^H , a_h)=\cdot \given s_h^t, b_h^t)\delta_{(s_h^t, b_h^t)}(\cdot) \nend
        &\qquad - \PP^{\pi^t}(((s_l, a_l, b_l)_{l=h+1}^H , a_h, b_h)=\cdot \given s_h^t)\delta_{(s_h^t)}(\cdot)\Big\}_{t\in[T]},
    \end{align*}
    and we also have $\rho_h^t\in\sP_{h, F}^1$.
    \item   [(iii)] 
    In particular, we define $g_h^t(s_h^i, b_h^i, \pi^i)$ as the integral of $g_h^t$ with respect to $\rho_h^i$, which is given by
    $$g_h^t(s_h^i, b_h^i, \pi^i)= \rbr{\EE_{s_h^i, b_h^i}^{\pi^i}-\EE_{s_h^i}^{\pi^i}} \sbr{\sum_{l=h}^H \gamma^{l-h}\Bigrbr{\tilde r_l^t- r_l + \gamma (\tilde P_l^t - P_l) \tilde V_{l+1}^t}(s_l, a_l, b_l)},$$
    Note that the sequence of signed measures is uniquely determined by $\{(s_h^t, b_h^t, \pi^t)\}_{t\in[T]}$. Moreover, we have $g_h^t(s_h^i, b_h^i, \pi^i)$ bounded by $\eff_H(\gamma)(2\nbr{r_h}_\infty + 2\nbr{V_{h+1}}_\infty) \le 4B_A \eff_H(\gamma)$, where the definition of $B_A$ is available in \eqref{eq:define_BA};  
\end{itemize}
We define the maximal eluder dimension of $\cG_{h, F}^1$ with respect to $\sP_{h,F}^1$ as $$\dim(\cG_F^1) =\max_{h\in[H]} \dim_\DE(\cG_{h, F}^1,\sP_{h, F}^1, T^{-1/2}). $$
We remark that the linear Markov game have eluder dimension $\dim(\cG_{h, F}^2)\lesssim Hd$.
Note that for each $l\in\{h, \dots, H\}$ in the expression of $g_h^t(s_h^i, b_h^i, \pi^i)$, we have 
\begin{align*}
    &\rbr{\EE_{s_h^i, b_h^i}^{\pi^i}-\EE_{s_h^i}^{\pi^i}} \sbr{\sum_{l=h}^H \gamma^{l-h}\Bigrbr{\tilde r_l^t- r_l + \gamma (\tilde P_l^t - P_l) \tilde V_{l+1}^t}(s_l, a_l, b_l)}\nend
    &\quad  = \sum_{l=h}^H \gamma^{l-h} \Bigdotp{\bigrbr{\EE_{s_h^i, b_h^i}^{\pi^i}-\EE_{s_h^i}^{\pi^i}}\sbr{\phi_l(s_l,a_l,b_l)}}{\hat\theta_h^t - \theta_h + \sum_{s_{l+1}}(\hat\mu_l^t - \mu_l)(s_{l+1})\tilde V_{l+1}^t(s_{l+1})}, 
\end{align*}
which admits a bilinear form. 
Since we can stack $H$ vectors in the summation together for both sides of bilinear form, which creates a bilinear form with dimension $Hd$, we claim that $\dim(\cG_{F}^1)\lesssim Hd$ in this case.

\paragraph{Eluder Dimension for the Second-Order Error of the Follower's Quantal Response. }
According to \Cref{sec:proof-farsighted MDP}, the second order term in the quantal response error is just 
\begin{align*}
    \sum_{t=1}^T \EE^t\sbr{ \rbr{\rbr{\tilde Q_h^t - r_h^{\pi^t} - \gamma P_h^{\pi^t} \tilde V_{h+1}^t}(s_h, b_h)}^2}.
\end{align*}
We denote by $\cG_{F}^2$ the class of functions corresponding to this second-order QRE, 
\begin{align*}
    \cG_F^2=\bigcbr{\EE_{s_h, b_h}^{\pi} \osbr{\orbr{r_h^M- r_h + \gamma (P_h^{M} - P_h) V_{l+1}^{*, M}}(s_h, a_h, b_h)}}, 
\end{align*}
for all $\pi\in\Pi, M\in\cM, (s_h, a_h, b_h)\in\cS\times\cA\times\cB$ and $h\in[H]$.
We consider the following configurations.
\begin{itemize}[leftmargin =20pt]
    \item[(i)] We take the same function class $\cG_F^2$ as
    \begin{align*}
        \cG_{h, F}^2 &= \Bigg\{g:\cS\times\cA\times\cB\rightarrow \RR: \exists M\in\cM, h\in[H] \nend
        &\qqquad g(s_h, b_h, a_h) = {\bigrbr{r_h^M- r_h + \gamma (P_h^{M} - P_h) V_{l+1}^{\pi^M, M}}(s_h, a_h, b_h)}
        \Bigg\},
    \end{align*}
    where $\cG_F^2$ is bounded by $4B_A$. 
    \item[(ii)] We define a class of probability measures on $\cS\times\cA\times\cB$ as $$\sP_{h, F}^2 = \cbr{\PP^\pi(a_h=\cdot\given s_h, b_h)\delta_{(s_h, b_h)}(\cdot)\given \pi\in\Pi, (s_h,b_h)\in\cS\times\cB}, $$
    where $\delta_{(s_h,b_h)}(\cdot)$ is the measure that assigns $1$ to the state-action pair $(s_h, b_h)$. 
    \item[(iii)] We take a sequence of functions $\{g_h^t\}_{t\in[T]}$ as $\{g_h^t = \tilde r_h^{t}- r_h + \gamma (\tilde P_h^t - P_h) \tilde V_{l+1}^{t}\}_{t\in[T]}$, and take a sequence of probability measures as $\{\rho_h^t(\cdot) = \PP^{\pi^t}(a_h=\cdot\given s_h^t, b_h^t)\delta_{(s_h^t, b_h^t)}(\cdot)\}_{t\in[T]}$, where we define $\tilde r_h^t = r_h^{M^t}$ and $\tilde P_h^t = P_h^{M^t}$.
    One can check that $g_h^t\in\cG_{h,F}^1$ since $\tilde V_h^t = V_h^{\pi^t, M^t}$ and we have $\pi^t = \argmax_{\pi\in\Pi}J(\pi, M^t) = \pi^{M^t}$. In addition, we define $g_h^t(s_h^i, b_h^i, \pi^i)$ as the integral of $g_h^t$ with respect to $\rho_h^i$, which is given by
    \begin{align*}
        g_h^t (s_h^i, b_h^i, \pi^i) = \EE_{s_h^i, b_h^i}^{\pi^i} \sbr{\bigrbr{\tilde r_h^{t}- r_h + \gamma (\tilde P_h^t - P_h) \tilde V_{l+1}^{t}}(s_h, a_h, b_h)},
    \end{align*}
    Note that the sequence of probability measures is uniquely determined by $\{(s_h^t, b_h^t, \pi^t)\}_{t\in[T]}$.
\end{itemize}
We let $\dim(\cG_F^2)=\max_{h\in[H]}\dim_\DE(\cG_{h,F}^2,\sP_{h, F}^2, T^{-1/2})$ be the eluder dimension.
Similar to the previous discussion, we also have a bilinear for $g_h^t(s_h^i, b_h^i, \pi^i)$ under the linear Markov game. Hence, $\dim(\cG_{h, F}^2)\lesssim d$.

We remark that 
for the linear matrix MDP \citep{zhou2021provably}, we just have $P(s_{h+1}|s_h, a_h, b_h) = \psi_h(s_h, a_h, b_h)^\top W_h \varphi_h(s')$ for some unknown $\RR^{d\times d}$ matrix $W$. 
By viewing $W_h\varphi_h(s')=\mu_h(s')$, our discussions hold for the linear matrix MDP as well.
For the linear mixture MDP \citep{chen2022unified}, the rewards and the transition kernel share the same parameter $\theta_h$, but the transition kernel is given by $P_h(s_{h+1}\given s_h,a_h, b_h)=\la \psi_h(s_{h+1}, s_h, a_h, b_h), \theta_h\ra$. 
We remark that we can instead take 
$$g_h^t(\pi^i) =\EE^{\pi^i}\bigsbr{{\bigrbr{(u_h^{M^t} - u_h) + (P_h^{M^t} - P_h) \tilde W_{h+1}^i}(s_h,a_h,b_h)}}, $$ 
for the leader's Bellman error, 
$$g_h^t(s_h^i, b_h^i, \pi^i)= \rbr{\EE_{s_h^i, b_h^i}^{\pi^i}-\EE_{s_h^i}^{\pi^i}} \sbr{\sum_{l=h}^H \gamma^{l-h}\Bigrbr{\tilde r_l^t- r_l + \gamma (\tilde P_l^t - P_l) \tilde V_{l+1}^i}(s_l, a_l, b_l)},$$
for the first order quantal response error, and 
\begin{align*}
    g_h^t (s_h^i, b_h^i, \pi^i) = \EE_{s_h^i, b_h^i}^{\pi^i} \sbr{\bigrbr{\tilde r_h^{t}- r_h + \gamma (\tilde P_h^t - P_h) \tilde V_{l+1}^{i}}(s_h, a_h, b_h)},
\end{align*}
for the second order quantal response error, these errors still admit a low rank factorization. In fact, the online guarantees of these three errors are already implied by \eqref{eq:OnN-guarantee-Bellmanerror}, \eqref{eq:OnN-1st-g-sq}, and \eqref{eq:OnN-guarantee-2ndQRE}, where a change of the V- or the W-functions does not matter because we have strong guarantee on the TV distance for the transition kernel, and the change of V- or W-functions only introduces a $\cO(\beta)$ terms in the online guarantee which is caused by the squared TV distance for the transition kernel.


{\ifneurips 
\input{neurips/algorithm_detail_neurips.tex}
\fi}


\section{More Details of Technical Ingredients}\label{sec:app-major-tech}

In this section, we summarize and provide  proofs of the  important techniques used for analyzing the QSE. The following is a table of addition constants used in this section. 
\begin{table}[h!]
    \centering
    {\setlength\doublerulesep{1pt}
    \begin{tabular}{p{2cm}|p{10.5cm}}
        \toprule[2pt]\midrule[0.5pt]
        Notations & Interpretations \\ \toprule[1.5pt]
        $C^{(0)}$   & $C^{(0)}=2\eta H$ \\\midrule
        $C^{(1)}$ & $C^{(1)}=\eta^2 H   \bigrbr{1+ 4 \eff_H(\gamma)}\cdot \exp(2\eta B_A)$ \\\midrule
        $C^{(2)}$ & $
           C^{(2)}   =  2  \eta^2 H^2\cdot \exp\orbr{6\eta B_A} \cdot(1+4 \eff_H(\gamma)) \cdot \rbr{\eff_H(\exp(2\eta B_A)\gamma)}^2$ 
        \\\midrule
        $C^{(3)}$ & $C^{(3)}={\eta^2 \exp(2\eta B_A)}\bigrbr{2+\eta B_A \cdot  \exp\rbr{2\eta B_A}}/2$
        \\\midrule
        $L^{(1)}$ & $L^{(1)} = 6(\eta^{-1}+2 B_A)\cdot {\eff_H\rbr{\gamma}}$
        \\\midrule 
        $L^{(2)}$ & $L^{(2)} = c H^2 \eff_H(c_2)^2 \kappa^2 \exp\rbr{8\eta B_A} (\eta^{-1}+B_A)^2$, where 
        $c_2 = \gamma(2\exp(2\eta B_A)+\kappa\exp(4\eta B_A))$, and $c$ is a universal constant.\\
        \bottomrule[2pt]
    \end{tabular}
    }
    \caption{Constants used for \Cref{sec:app-major-tech}}\label{tab:}
\end{table}

\subsection{Performance Difference Lemma for for QSE}\label{sec:app-subopt-decompose}

In this subsection, we further elaborate on the performance difference lemma introduced in \S\ref{sec:subopt decomposition}. 
For generality, we consider the farsighted case.  
In the following,  
we consider a fixed policy $\pi$ and let its quantal response under the true model be $\nu^{\pi}$. 
Let $\tilde \nu$ be an estimate of $\nu^{\pi}$ and let $\tilde U$ and $\tilde W$ be any estimates of  $U^{\pi} $ and $  W^{\pi}$, which are defined respectively in    \eqref{eq:U_function} and \eqref{eq:W_function}.
We note that $\tilde W$ and $\tilde U$ not necessarily satisfy $
\tilde W_h(s ) = \la \tilde U_h (s, \cdot , \cdot ) , \pi _h \otimes \tilde \nu_h^{\pi} (\cdot , \cdot \given s) \ra . 
$.
We present a slightly more general version of   the performance difference lemma, which directly implies \eqref{eq:performance diff-1}.


\begin{lemma}[Performance Difference]\label{lem:subopt-decomposition} 
For any fixed policy $\pi$, 
let $\tilde \nu$ be an estimate of the quantal response $\nu^{\pi}$ and let $\tilde U$ and $\tilde W$ be estimates of $U^\pi$ and $W^{\pi}$ respectively. 
Based on $\tilde U$ and $\tilde W$, we can estimate $J(\pi)$ defined in \eqref{eq:J} by 
\$
\EE_{s_1\sim\rho_0}\bigl [ \orbr{T_1^{\pi,\tilde\nu}\tilde U_1} (s_1 )\bigr ]  \qquad \textrm{and} \qquad \EE_{s_1\sim\rho_0}\bigl [ \tilde W_1(s_1)\bigr]  , 
\$
where we operator $T_h^{\pi,\tilde\nu}$ is defined in \eqref{eq:operator_T_pi_nu}.
The error or these estimators can be bounded as follows: 
\begin{align}
    &\EE_{s_1\sim\rho_0}\bigsbr{ \orbr{T_1^{\pi,\tilde\nu}\tilde U_1}(s_1 )} - J(\pi ) \nend
    &\quad\le\underbrace{{\sum_{h=1}^H \EE\Bigsbr{\orbr{\tilde U_h - u_h}(s_h, a_h, b_h)-  \orbr{T_{h+1}^{\pi,\tilde\nu} \tilde U_{h+1}}(s_{h+1})}}}_{\dr \text{Leader's Bellman error}}+ \underbrace{\sum_{h=1}^H   H  \cdot \EE \bigsbr{ \nbr{\rbr{\tilde \nu_h-\nu_h^{\pi }}(\cdot\given s_h)}_1} }_{\dr \text{Quantal response error}}\label{eq:perform-diff-general}, \\
    & \EE_{s_1\sim\rho_0}\bigsbr{\tilde W_1(s_1)}  - J(\pi)\nend
    &\quad=\underbrace{{\sum_{h=1}^H \EE\bigsbr{\orbr{\tilde U_h - u_h}(s_h, a_h, b_h)- \tilde W_{h+1}(s_{h+1})}}}_{\dr \text{Leader's Bellman error}} + \underbrace{\sum_{h=1}^H \EE\bigsbr{\tilde W_h(s_h) -  \orbr{T_{h}^{\pi,\tilde\nu} \tilde U_{h}} (s_{h})}}_{\ds\text{Value mismatch error}}\nend
    &\qqquad+\underbrace{\sum_{h=1}^H  H  \cdot  \EE  \bigsbr{ \nbr{\rbr{\tilde \nu_h-\nu_h^{\pi }}(\cdot\given s_h)}_1} }_{\dr \text{Quantal response error}}, 
    \label{eq:perform-diff-linear} 
\end{align}
where  the expectation is taken with respect to the randomness of the trajectory generated by $(\pi, \nu^{\pi})$ on the true model $M^*$.

\end{lemma}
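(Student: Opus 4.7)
The plan is to establish both decompositions by a stepwise telescoping argument, repeatedly adding and subtracting intermediate quantities so that each term in the sum is identified as either a Bellman error, a quantal response error, or (in the second identity) a value mismatch. For the first identity \eqref{eq:perform-diff-general}, I would start from
\[
\EE_{s_1\sim\rho_0}\bigsbr{\orbr{T_1^{\pi,\tilde\nu}\tilde U_1}(s_1)} - J(\pi) = \EE_{s_1\sim\rho_0}\bigsbr{\orbr{T_1^{\pi,\tilde\nu}\tilde U_1}(s_1) - W_1^\pi(s_1)},
\]
using $J(\pi) = \EE_{s_1\sim\rho_0}[W_1^\pi(s_1)]$ with $W_1^\pi(s_1) = \orbr{T_1^{\pi,\nu^\pi} U_1^\pi}(s_1)$ by \eqref{eq:W_function}. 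The main identity to iterate at step $h$ is
\[
\orbr{T_h^{\pi,\tilde\nu}\tilde U_h}(s_h) - \orbr{T_h^{\pi,\nu^\pi} U_h^\pi}(s_h) = \odotp{\tilde U_h - U_h^\pi}{\pi_h\otimes\nu_h^\pi(\cdot,\cdot\given s_h)}_{\cA\times\cB} + \odotp{\tilde U_h}{\pi_h\otimes(\tilde\nu_h - \nu_h^\pi)(\cdot,\cdot\given s_h)}_{\cA\times\cB}.
\]
The second summand is bounded in absolute value by $\nbr{\tilde U_h}_\infty \cdot \nbr{(\tilde\nu_h - \nu_h^\pi)(\cdot\given s_h)}_1 \le H\cdot \nbr{(\tilde\nu_h - \nu_h^\pi)(\cdot\given s_h)}_1$, yielding the quantal response term. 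For the first summand, I would use the Bellman equation $U_h^\pi = u_h + P_h W_{h+1}^\pi$ and the add-subtract
\[
\tilde U_h - U_h^\pi = \bigrbr{\tilde U_h - u_h - P_h T_{h+1}^{\pi,\tilde\nu}\tilde U_{h+1}} + P_h \bigrbr{T_{h+1}^{\pi,\tilde\nu}\tilde U_{h+1} - W_{h+1}^\pi}.
\]
Taking expectation under $\pi_h\otimes\nu_h^\pi$ then $P_h$, the first bracket becomes the Bellman error at step $h$ (recognizing that $\EE[\tilde W_{h+1}(s_{h+1})]$ is replaced here by $\EE[(T_{h+1}^{\pi,\tilde\nu}\tilde U_{h+1})(s_{h+1})]$), while the second bracket is exactly what appears at step $h+1$ of the recursion. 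Unrolling from $h=1$ to $H$ (with $U_{H+1}^\pi = \tilde U_{H+1} = 0$) gives \eqref{eq:perform-diff-general}.

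For the second identity \eqref{eq:perform-diff-linear}, the decomposition is the same with one extra bookkeeping step: at each $h$ I would insert
\[
\tilde W_h(s_h) = \orbr{T_h^{\pi,\tilde\nu}\tilde U_h}(s_h) + \bigrbr{\tilde W_h(s_h) - \orbr{T_h^{\pi,\tilde\nu}\tilde U_h}(s_h)},
\]
so that the first piece feeds into the previous argument and the second piece contributes the value mismatch term. Simultaneously, in the add/subtract for $\tilde U_h - U_h^\pi$ I would use $P_h \tilde W_{h+1}$ instead of $P_h T_{h+1}^{\pi,\tilde\nu}\tilde U_{h+1}$, so that the Bellman error appears in the form stated in \eqref{eq:perform-diff-linear}. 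Summing over $h$ and recognizing $\EE_{s_{h+1}\sim P_h}[\tilde W_{h+1}(s_{h+1})] = \EE[\tilde W_{h+1}(s_{h+1})]$ under the trajectory distribution gives the result.

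The routine parts are algebraic: verifying the add/subtract rearrangements and checking that boundary terms ($h=1$ initial state, $h=H$ zero-padding) align correctly. The step that requires a little care is the quantal response bound: one must use $\nbr{\tilde U_h}_\infty \le H$, which implicitly constrains the estimator class, and must show that the inner product against $\pi_h\otimes(\tilde\nu_h-\nu_h^\pi)$ is controlled by the $L_1$ norm of $(\tilde\nu_h-\nu_h^\pi)(\cdot\given s_h)$ alone (the $\pi_h$ factor integrates out to $1$ in the action variable $a_h$ for any fixed $b_h$). This is the step where the inequality in \eqref{eq:perform-diff-general} arises, since we replace a signed quantity by its absolute bound; all other steps are equalities.
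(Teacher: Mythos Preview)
Your proposal is correct and follows essentially the same approach as the paper. The paper's proof performs the identical add--subtract telescoping: it first splits off the quantal response term via $T_h^{\pi,\tilde\nu}\tilde U_h - T_h^{\pi,\nu}\tilde U_h$, then unrolls $T_h^{\pi,\nu}\tilde U_h - T_h^{\pi,\nu}U_h^\pi$ using the Bellman equation and recursion, and for \eqref{eq:perform-diff-linear} inserts $\tilde W_h - T_h^{\pi,\tilde\nu}\tilde U_h$ exactly as you describe; your observation that the inequality arises solely from the H\"older step bounding $\langle \tilde U_h, \pi_h\otimes(\tilde\nu_h-\nu_h^\pi)\rangle$ by $H\|\tilde\nu_h-\nu_h^\pi\|_1$ is also what the paper does (and incidentally means the ``$=$'' in \eqref{eq:perform-diff-linear} should read ``$\le$'').
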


Here, we provide two ways to estimate $J(\pi)$.
The first inequality \eqref{eq:perform-diff-general} is useful for  setting with general function approximation, and the second inequality \eqref{eq:perform-diff-linear} becomes handy particularly in the   setting with linear function approximation, where we  additionally introduced an algorithm with penalty/bonus terms. 

This lemma shows that the estimation error of $\EE_{s_1\sim\rho_0}\bigl [ \orbr{T_1^{\pi,\tilde\nu}\tilde U_1} (s_1 )\bigr ] $ can be decomposed into a sum of three terms --- leader's Bellman error, the error of the estimated quantal response, and an additional term that measures the mismatch between $\tilde W$ and $\tilde U$, which is included here for generality and will be used in the analysis of both \Cref{alg:PMLE} and \Cref{alg:MLE-OVI} where some penalties/bonuses are included in the estimation of $\hat W$.

Furthermore,  by the definition of 
$T_h^{\pi,\tilde\nu}$ in \eqref{eq:operator_T_pi_nu},
 $
 \tilde W_h(s ) = \la \tilde U_h (s, \cdot , \cdot ) , \pi _h \otimes \tilde \nu_h^{\pi} (\cdot , \cdot \given s) \ra
 $
 is equivalent to $\tilde W_ h = T_h^{\pi, \tilde \nu} \tilde U_h$. 
Thus, 
\eqref{eq:perform-diff-linear} directly implies \eqref{eq:performance diff-1}
as a special case.


\begin{proof}
By the definitions of $U^{\pi}$ and $W^{\pi}$ in \eqref{eq:U_function} and \eqref{eq:W_function}, $J(\pi)$ can be written as 
\$
J(\pi) = \EE_{s_1 \sim \rho_0 } [ W_1 ^{\pi} (s_1) ] = \EE_{s_1 \sim \rho_0 } [  \orbr{T_1 ^{\pi, \nu}U_1 ^{\pi} } (s_1) ],
\$ 
where we write $\nu = \nu^{\pi}$ to simplify the notation.
Recall that we define the quantal Bellman operator $\TT_h^{\pi}$ in \eqref{eq:bellman_operator_leader}, whose fixed point is $U^{\pi}$.
Then, by direct calculation, we have 
\#\label{eq:subopt-decompose-equality-0}
&\EE_{s_1\sim\rho_0}\bigsbr{ \orbr{T_1^{\pi,\tilde\nu}\tilde U_1}(s_1 )} - J(\pi )    \nend 
& \quad  =    \EE_{s_1\sim\rho_0}\bigsbr{\orbr{T_1^{\pi,\tilde\nu} - T_1^{\pi,\nu}} \tilde U_1(s_1)}  + \EE_{s_1\sim\rho_0}\bigsbr{\orbr{  T_1^{\pi,\nu}  \tilde U_1 - T_1^{\pi,\nu}  U_1^{\pi} } (s_1)}. 
\#
Furthermore, using the Bellman equation of $U^{\pi}$, we have 
\#\label{eq:subopt-decompose-equality-01}
& \EE_{s_1\sim\rho_0}\bigsbr{\orbr{  T_1^{\pi,\nu}  \tilde U_1 - T_1^{\pi,\nu}  U_1^{\pi} } (s_1)} 
= \EE \bigsbr{  \tilde U_1(s_1, s_1, a_1) - U_1 ^{\pi} (s_1, a_1, b_1)}\nend 
& \quad  = \EE \bigsbr{  \orbr{\tilde U_1 -u_1} (s_1, s_1, a_1) - T_{2}^{\pi,\nu} \tilde U_{2}(s_2 ) }  +  \EE\bigsbr{ T_{2}^{\pi,\nu} \tilde U_{2} (s_2) - T_{2}^{\pi,\nu}U_{2}^{\pi}  (s_2) }. 
\# 
Here the second equality follows from the Bellman equation, and the expectation is taken with respect to the randomness of the trajectory generated by $(\pi, \nu^{\pi})$ on the true model $M^*$. . 
Furthermore, by replacing $T_{2}^{\pi,\nu} \tilde U_{2}$ by $T_{2}^{\pi, \tilde \nu} \tilde U_{2}$ in \eqref{eq:subopt-decompose-equality-01}, and combining \eqref{eq:subopt-decompose-equality-0}, we obtain that 
\# 
&\EE_{s_1\sim\rho_0}\bigsbr{ \orbr{T_1^{\pi,\tilde\nu}\tilde U_1}(s_1 )} - J(\pi )    \nend 
     & \quad 
     =  {\sum_{h=1}^H \EE\bigsbr{\bigrbr{\tilde U_h - u_h}(s_h, a_h, b_h)-  \orbr{T_{h+1}^{\pi,\tilde\nu} \tilde U_{h+1}}(s_{h+1})}} + \sum_{h=1}^H \EE\bigsbr{\orbr{ T_h^{\pi,\tilde\nu}-  T_h^{\pi,\nu}} \tilde U_h(s_h)} ,\label{eq:subopt-decompose-equality-1}
\# 
where we apply recursion over all $h\in [H]$. 
Finally, note that $\tilde U_h $ is bounded by $H$ in the $\ell_{\infty}$-norm. Using H\"older's inequality, we have 
\#
\EE\bigsbr{\orbr{ T_h^{\pi,\tilde\nu}-  T_h^{\pi,\nu}} \tilde U_h(s_h)} \leq H \cdot \EE \bigsbr{ \nbr{\rbr{\tilde \nu_h-\nu_h}(\cdot\given s_h)}_1}.\label{eq:subopt-decompose-equality-12}
\#
Combining \eqref{eq:subopt-decompose-equality-1} and \eqref{eq:subopt-decompose-equality-12}, we establish \eqref{eq:perform-diff-general}.

It remains to prove  \eqref{eq:perform-diff-linear}. 
To this end, it suffices to incorporate the value mismatch error in \eqref{eq:perform-diff-linear} into \eqref{eq:perform-diff-general}.
Specifically,  for any $h \in [H]$, we have 
\#\label{eq:subopt-decompose-equality-2}
& \EE\bigsbr{\bigrbr{\tilde U_h - u_h}(s_h, a_h, b_h)-  T_{h+1}^{\pi,\tilde\nu} \tilde U_{h+1}(s_{h+1})} \notag \\
& \quad    ={\sum_{h=1}^H \EE\bigsbr{\bigrbr{\tilde U_h - u_h}(s_h, a_h, b_h)- \tilde W_{h+1}(s_{h+1})}} + \sum_{h=2}^H \EE\bigsbr{\tilde W_h(s_h) - T_h^{\pi,\tilde\nu}\tilde U_h(s_h)}. 
\#
Combining \eqref{eq:perform-diff-general} and \eqref{eq:subopt-decompose-equality-2}
we have 
\$
& \EE_{s_1\sim\rho_0}\bigsbr{ \tilde W_1(s_1)} - J(\pi) \notag \\
& \quad = \EE\bigsbr{\tilde W_1(s_1) - \orbr{T_1^{\pi,\tilde\nu}\tilde U_1}(s_1)} +  \EE\bigsbr{ \orbr{T_1^{\pi,\tilde\nu}\tilde U_1} (s_1)} - J(\pi) 
\notag \\
&  \quad\le {\sum_{h=1}^H \EE\bigsbr{\bigrbr{\tilde U_h - u_h}(s_h, a_h, b_h)- \tilde W_{h+1}(s_{h+1})}} + \sum_{h=1}^H \EE\bigsbr{\tilde W_h(s_h) - \orbr{T_h^{\pi,\tilde\nu}\tilde U_h }(s_h)}\nend
&\qqquad+ \sum_{h=1}^H  H \cdot  \EE \bigsbr{ \nbr{\rbr{\tilde \nu_h-\nu_h}(\cdot\given s_h)}_1} ,
\$ 
which gives us \eqref{eq:perform-diff-linear}. 
    Hence, we complete the proof.
\end{proof}

Recall that we estimate the quantal response mapping via   model-based maximum likelihood  estimation. 
In particular, we approximate the true quantal response policy $\nu^{\pi}$ within class $\{ \nu^{\pi, \theta}\}_{\theta \in \Theta}$, where 
$\nu^{\pi, \theta}$ can be written as 
\$
\nu_h^{\pi, \theta} (b_h\given s_h) &= \exp\bigl ( \eta \cdot A_h^{\pi, \theta} (s_h, b_h) \bigr), \quad\text{where}\quad A_h^{\pi, \theta} (s_h, b_h) = Q_h^{\pi, \theta} (s_h, b_h) - V_{   h}^{\pi, \theta} (s_h) ,
\$
Here $A^{\pi, \theta}$, $Q^{\pi, \theta}$, and $V^{\pi, \theta}$ are the advantage function, and value functions corresponding to policy $\pi$, under the model $\{ r^{\theta} , P^{\theta}\}$. 
In the following, we present a lemma which relates the error of quantal response mapping in \eqref{eq:perform-diff-general} and \eqref{eq:perform-diff-linear} to  estimation errors of the follower's value functions.
To simplify the notation, 
we consider a fixed policy $\pi$, and define  
$r_h^\pi(s_h,b_h) =\inp{r_h(s_h, \cdot, b_h)}{\pi(\cdot\given s_h, b_h)}_{\cA}$ and $P_h^\pi(s_{h+1}\given s_h,b_h) =\inp{P_h(s_{h+1}\given s_h, \cdot, b_h)}{\pi(\cdot\given s_h, b_h)}_{\cA} $.
To simplify the notation, 
in the rest of subsection, 
we let $\EE=\EE^{\pi,M^*}$ and $\EE_{z} [\cdot]=\EE^{\pi,M^*}[\cdot\given z]$ for any variable $z$.



\begin{lemma}[Response Model Error]
    \label{lem:performance diff}
    We consider a fixed policy $\pi$
and   let $\tilde Q$    be an estimate  of $Q^{\pi}$. 
We define a V-function  $\tilde V$ and an advantage function $\tilde A$ by letting 
\#\label{eq:tilde_functions}
\tilde V_h (s) = \frac{1}{\eta} \cdot \log \bigg(  \sum_{b \in \cB} \exp( \eta \cdot \tilde Q_h(s, b) ) \biggr), \qquad \textrm{and}\qquad \tilde A_h(s,a) = \tilde Q_h (s,b) - \tilde V_h (s). 
\# 
Furthermore, we define a follower's policy $\tilde \nu$ by letting $\tilde \nu_h(b \given s) = \exp( \eta\cdot \tilde A_h(s,b))$. 
Then the difference between $\tilde \nu$ and $\nu^{\pi}$ can be bounded by 
\begin{align}
    &\sum_{h=1}^H  H \cdot  \EE \bigsbr{\nbr{\rbr{\tilde \nu_h-\nu_h^{\pi} }(\cdot\given s_h)}_1} \nend
    &\quad\le C^{(0)} \cdot \sum_{h=1}^H \underbrace{\EE\bigsbr{\bigabr{\tilde \Delta^{(1)}_h(s_h, b_h)}}}_{\ds\text{1st-order error}}  + C^{(1)} \cdot 
    \sum_{h=1}^H \underbrace{\EE\bigsbr{ \bigabr{(\tilde A_h  - A_h^{\pi})(s_h, b_h)}^2}}_{\ds\text{2nd-order error}} \label{eq:taylor-myopic}\\
    &\quad\le 
    C^{(0)} \cdot 
    \sum_{h=1}^H \underbrace{\EE\bigsbr{\bigabr{\tilde \Delta^{(1)}_h(s_h, b_h)}}}_{\ds\text{1st-order error}}  + C^{(2)} \cdot 
    \max_{h\in [H]} \underbrace{\EE\bigsbr{ \bigabr{\orbr{\tilde Q_h - r_h^\pi - \gamma P_h^\pi \tilde V_{h+1}}(s_h, b_h)}^2}}_{\ds\text{2nd-order error}},\label{eq:taylor-farsighted} 
\end{align}
where $\tilde \Delta^{(1)}_h(s_h, b_h)$ is defined as
\begin{align*}
    \tilde \Delta^{(1)}_h(s_h, b_h) &=  \rbr{\EE_{s_h, b_h} -\EE_{s_h}}\Biggsbr{\sum_{l=h}^H \gamma^{l-h}\underbrace{\rbr{\tilde Q_l - r_l^\pi - \gamma P_l^\pi \tilde V_{l+1}}(s_l, b_l)}_{\ds\text{Follower's Bellman error}}}. 
\end{align*}
Furthermore, the constants $C^{(0)}$, $C^{(1)}$, and $C^{(2)}$ are given by
\#\label{eq:define_constants}
\begin{split}
    C^{(0)}&=2\eta H , \qquad   C^{(1)}=\eta^2 H   \bigrbr{1+ 4 \eff_H(\gamma)}\cdot \exp(2\eta B_A), \\
C^{(2)}  & =  2  \eta^2 H^2\cdot \exp\orbr{6\eta B_A} \cdot(1+4 \eff_H(\gamma)) \cdot \rbr{\eff_H(\exp(2\eta B_A)\gamma)}^2,
\end{split}
\#
where $B_A$ defined in \eqref{eq:define_BA} is an upper bound on the magnitude of the advantage function, and we define $\eff_H(x) = (1-x^H)/(1-x)$ as the \say{effective}  horizon with respect to $x$.

\end{lemma}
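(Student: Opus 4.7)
The plan is to first bound the expected total variation distance between two Gibbs-type policies $\tilde\nu_h(\cdot\given s_h) \propto \exp(\eta\tilde Q_h(s_h,\cdot))$ and $\nu_h^\pi(\cdot\given s_h)\propto \exp(\eta Q_h^\pi(s_h,\cdot))$ by a second-order expansion of the softmax, and then to unroll $\tilde Q - Q^\pi$ against the follower's soft Bellman equation so that everything is expressed in terms of the Bellman residual $\tilde Q_h - r_h^\pi - \gamma P_h^\pi \tilde V_{h+1}$.

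\textbf{Step 1: From TV to advantage gaps.} For any state $s_h$, write the log-density ratio $\log \tilde\nu_h(b\given s_h) - \log\nu_h^\pi(b\given s_h) = \eta[(\tilde A_h - A_h^\pi)(s_h,b)]$, so the two softmax distributions differ by a multiplicative factor $e^{\eta(\tilde Q_h - Q_h^\pi)}/Z$. Using a standard second-order expansion around the point $\nu_h^\pi$ (equivalently, Pinsker combined with a second-order bound on the KL of two Gibbs measures with bounded log-density gap), I will obtain
\[
\big\|\tilde\nu_h(\cdot\given s_h) - \nu_h^\pi(\cdot\given s_h)\big\|_1 \lesssim 2\eta\,\EE_{b_h\sim\nu_h^\pi}\!\big[\big|(\tilde A_h - A_h^\pi)(s_h,b_h)\big|\big] + \eta^2 e^{2\eta B_A}\,\EE_{b_h\sim\nu_h^\pi}\!\big[(\tilde A_h - A_h^\pi)(s_h,b_h)^2\big],
\]
where the $e^{2\eta B_A}$ factor comes from the Hessian of $\log Z$ being bounded by $\eta^2$ times a variance that is in turn controlled by $e^{2\eta B_A}$. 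Multiplying by $H$ and summing over $h$ yields the basic shape of \eqref{eq:taylor-myopic}, with the constants $C^{(0)}=2\eta H$ and $C^{(1)}$ essentially set by this softmax expansion.

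\textbf{Step 2: Rewriting the first-order term via Bellman residuals.} The first-order piece above is really $\EE[|(\tilde A_h - A_h^\pi)(s_h,b_h)|]$ integrated against $\nu_h^\pi(\cdot\given s_h)$. Since $\tilde A - A^\pi$ is invariant under any state-only shift, I can replace it by $\Upsilon_h^\pi(\tilde Q_h - Q_h^\pi)$-style fluctuations (the same operator that appears in \eqref{eq:Upsilon}). Next I iterate the follower's Bellman equation $Q_h^\pi = r_h^\pi + \gamma P_h^\pi V_{h+1}^\pi$ together with its soft-max identity to write
\[
\tilde Q_h - Q_h^\pi = \sum_{l=h}^H \gamma^{l-h}\big(\tilde Q_l - r_l^\pi - \gamma P_l^\pi \tilde V_{l+1}\big) + \text{softmax remainder},
\]
and note that on the second line of the expansion, the softmax remainder is again of squared-Bellman-error type (so it gets absorbed into the second-order bucket). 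Taking $(\EE_{s_h,b_h} - \EE_{s_h})$ of this identity produces exactly $\tilde\Delta^{(1)}_h(s_h,b_h)$ defined in the statement, giving the clean first-order expression.

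\textbf{Step 3: From $(\tilde A - A^\pi)^2$ to squared Bellman residuals (the passage to \eqref{eq:taylor-farsighted}).} This is where the main technical work lies. I will bound
\[
\EE\big[(\tilde A_h - A_h^\pi)(s_h,b_h)^2\big] \le 2\sum_{l=h}^H \rbr{\gamma e^{2\eta B_A}}^{l-h}\,\eff_H(\gamma e^{2\eta B_A})\cdot \EE\big[\big(\tilde Q_l - r_l^\pi - \gamma P_l^\pi \tilde V_{l+1}\big)(s_l,b_l)^2\big],
\]
by recursively peeling one Bellman layer: each invocation of $V_{l+1}^\pi - \tilde V_{l+1}$ gets bounded by a soft-Bellman contraction that costs a factor $e^{2\eta B_A}$, and summing the resulting geometric series in $\gamma e^{2\eta B_A}$ yields the factor $\eff_H(e^{2\eta B_A}\gamma)^2$ in $C^{(2)}$. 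Maximizing over $h$ in place of summing gives the $H^2$ factor together with the $(1+4\eff_H(\gamma))$ term that \eqref{eq:define_constants} records.

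\textbf{Anticipated difficulty.} The bookkeeping in Step 3 is the main obstacle: one must track how the soft-Bellman update mixes the squared Bellman errors across steps without collecting an $e^{2\eta B_A H}$ explosion, which is why it is essential to pull out a $(\gamma e^{2\eta B_A})^{l-h}$ factor and sum it as an effective horizon rather than bounding each term by a common constant. Getting the constants $C^{(1)}$ and $C^{(2)}$ to match \eqref{eq:define_constants} exactly will require being careful about whether a bound of the form $\mathrm{Var}_{\nu^\pi}[\cdot] \le e^{2\eta B_A}\EE_{\nu^\pi}[\cdot]^2$ or $e^{4\eta B_A}\|\cdot\|_\infty^2$ is used, and only the former is tight enough to keep the exponential factor at $e^{6\eta B_A}$ inside $C^{(2)}$.
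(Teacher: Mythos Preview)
Your overall strategy matches the paper's: bound the TV distance by a first-plus-second-order expansion in $\tilde A-A^\pi$, rewrite the first-order part through the follower's soft Bellman recursion to produce $\tilde\Delta^{(1)}_h$, and then (for \eqref{eq:taylor-farsighted}) recursively control $(\tilde A_h-A_h^\pi)^2$ via $|Q_h-\tilde Q_h|$, picking up the geometric factor $\gamma e^{2\eta B_A}$ whose sum gives $\eff_H(e^{2\eta B_A}\gamma)$.

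The one imprecision is in your Step~2. The ``softmax remainder'' that appears when you unroll $Q_h^\pi-\tilde Q_h$ is \emph{not} of squared-Bellman-error type; the exact identity is $V_h^\pi-\tilde V_h=\langle\nu_h^\pi,Q_h^\pi-\tilde Q_h\rangle-\eta^{-1}\kl\infdivx{\nu_h^\pi}{\tilde\nu_h}$, and iterating it leaves a remainder $\sum_{l>h}\gamma^{l-h}\eta^{-1}\kl\infdivx{\nu_l^\pi}{\tilde\nu_l}$. Each KL term is \emph{first}-order in the advantage gap (indeed $\kl\infdivx{\nu_l^\pi}{\tilde\nu_l}=\eta\,\EE_{s_l}[A_l^\pi-\tilde A_l]$), so it cannot be pushed into the second-order bucket for free. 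The paper's device is the chain $\kl\le\chi^2\le\eta^2\,\EE_{s_l}\bigl[\exp(\eta|A_l^\pi-\tilde A_l|)\,(A_l^\pi-\tilde A_l)^2\bigr]$, which converts every KL into a genuine second-order advantage term; summing the resulting geometric series in $\gamma$ is exactly what produces the $4\,\eff_H(\gamma)$ inside $C^{(1)}$. If you try to treat the remainder as ``already squared'' you will not get the clean split between $\tilde\Delta^{(1)}_h$ and $(\tilde A_h-A_h^\pi)^2$, and the constant in \eqref{eq:taylor-myopic} will not close.
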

\begin{proof}
    See \Cref{sec:proof-performance diff} for a detailed proof.
\end{proof}

If we view the quantal response as a functional of the advantage function (as shown in \eqref{eq:quantal_response_policy}),  this lemma plays the role of Taylor expansion of the quantal response into the first- and second-order errors. 
In particular, the first-order error corresponds to the Bellman error of the follower's problem, and the second-order term is mean-squared error of the advantage function (as in \eqref{eq:taylor-myopic}) or the sum of squares of the Bellman error (as in \eqref{eq:taylor-farsighted}). 
Here we establish two versions of upper bounds because \eqref{eq:taylor-myopic} is handy for the myopic case while the second \eqref{eq:taylor-farsighted} is more useful for the farsighted case.
 
In the case with a  myopic follower,
the $Q$-function is reduced to the reward function $r$ of the follower. 
We have the following corollary. 
\begin{corollary}[Response Model Error for Myopic Case]\label{cor:response-diff-myopic}
    Let $r$ be the true reward function and let $\tilde r$ be an estimated reward. 
Let $\pi \colon \cS \times \cB \rightarrow \Delta(\cA )$ be a fixed policy. 
Let $\nu$ and $\tilde \nu$ be the quantal response function based on $r$ and $\tilde r$, respectively, i.e.,
\$
\nu(b\given s) \propto \exp \big (  \eta\cdot  r^{\pi} (s,b)\big ) ,\qquad \nu(b\given s) \propto \exp\big ( \eta \cdot \tilde r^{\pi} (s,b)\big ).
\$
Here we define $r^{\pi} $ by letting $r^{\pi} (s,b) = \la r(s,\cdot, b), \pi(\cdot \given s, b)\ra$, and define $\tilde r^{\pi}$ similarly. 
Then for any state $s\in \cS$, we have 
    \begin{align}
        D_\TV\rbr{\nu(\cdot\given s), \tilde\nu(\cdot\given s)}
        &\le  \eta  \EE_s\sbr{\abr{(\tilde r^\pi(s, b) - r^\pi(s, b)) - \EE_s\bigsbr{\tilde r^\pi(s, b) - r^\pi(s, b)}}} \nend
        &\qquad + C^{(3)}\EE_s\bigsbr{\rbr{\rbr{\tilde r^\pi(s, b) -r^\pi(s, b)} - \EE_s\sbr{\tilde r^\pi(s, b) -r^\pi(s, b)}}^2}, \label{eq:TV-for-myopic}  
    \end{align}
    where $C^{(3)}={\eta^2 \exp(2\eta B_A)}\bigrbr{2+\eta B_A \cdot  \exp\rbr{2\eta B_A}}/2$ and $B_A$  defined in \eqref{eq:define_BA} is $2 + 2 \log |\cB| / \eta  $. 
    Here the expectation $\EE_s$ is only taken with respect to $b \sim \nu(\cdot \given s)$ when conditioned on this $s$.
    \begin{proof}
        See \Cref{sec:proof-response-diff-myopic} for a detailed proof.
    \end{proof}
\end{corollary}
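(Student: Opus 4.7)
The plan is to exploit the fact that both $\nu$ and $\tilde\nu$ are written as exponentials of advantage functions, so that their ratio is a simple exponential whose expectation under $\nu$ equals $1$. First, I would write $\tilde\nu(b|s)/\nu(b|s)=\exp(\phi(b))$ with $\phi(b)=\eta\bigl(\tilde A(s,b)-A(s,b)\bigr)$, using $\tilde\nu(b|s)=\exp(\eta\tilde A(s,b))$ and $\nu(b|s)=\exp(\eta A(s,b))$. Since $|\tilde A|,|A|\le B_A$, this immediately gives $|\phi(b)|\le 2\eta B_A$, and since both are probability distributions $\EE_s[\exp(\phi)]=1$. This exponential-of-advantage representation is the right object to work with because all the partition-function normalization is already absorbed.

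Second, I would relate $\phi$ to the centered reward difference $h(b)=(\tilde r^\pi-r^\pi)(s,b)-\EE_s[\tilde r^\pi-r^\pi]$ appearing in the statement. Since $\tilde A-A=(\tilde r^\pi-r^\pi)-(\tilde V-V)$ and $\tilde V-V$ is independent of $b$, we obtain $\phi(b)=\eta h(b)+c$ where $c=\eta\EE_s[\tilde A-A]$ is a $b$-independent constant. Using $\EE_s[\exp(\phi)]=1$ and $\EE_s[h]=0$, we get $c=-\log\EE_s[\exp(\eta h)]$. A second-order Taylor bound $|e^x-1-x|\le \tfrac{x^2}{2}e^{|x|}$ with $|\eta h|\le 2\eta B_A$ then yields
\[
|c|\;\le\;|\EE_s[\exp(\eta h)]-1|\;\le\;\tfrac{\eta^2}{2}\exp(2\eta B_A)\,\EE_s[h(b)^2],
\]
i.e.\ $|c|$ is already a second-order quantity in $h$.

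Third, I would write $|\tilde\nu(b|s)-\nu(b|s)|=\nu(b|s)\,|\exp(\phi(b))-1|$ and apply the same Taylor inequality to obtain
\[
|\exp(\phi(b))-1|\;\le\;|\phi(b)|+\tfrac12\phi(b)^2\exp(|\phi(b)|)\;\le\;|\phi(b)|+\tfrac12\exp(2\eta B_A)\,\phi(b)^2.
\]
Taking expectation under $\nu(\cdot|s)$, substituting $\phi=\eta h+c$, using $|\phi|\le \eta|h|+|c|$ and $\phi^2\le 2\eta^2 h^2+2c^2$, and absorbing $|c|$ and $c^2$ into the $\EE_s[h^2]$ term via the bound above, I recover
\[
D_{\mathrm{TV}}(\nu(\cdot|s),\tilde\nu(\cdot|s))\;\le\;\eta\,\EE_s[|h(b)|]+C^{(3)}\,\EE_s[h(b)^2],
\]
where $C^{(3)}=\eta^2\exp(2\eta B_A)\bigl(2+\eta B_A\exp(2\eta B_A)\bigr)/2$ is exactly the constant coming from combining the leading $\tfrac12\exp(2\eta B_A)\cdot 2\eta^2 h^2$ term with the contribution of $|c|$ (itself of order $\eta^2\exp(2\eta B_A)\EE_s[h^2]$, which after multiplication by the prefactor $\eta B_A$ coming from bounding $|c|$ by its sup produces the second summand inside $C^{(3)}$).

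The routine part is Taylor expansion; the one genuinely delicate step is the bookkeeping of constants. Specifically, one must bound $|\phi|$ using the advantage representation (giving $2\eta B_A$) rather than the raw reward difference (which would give $4\eta B_A$), and one must notice that $|c|$ is a second-order quantity so that its contribution does not pollute the linear term $\eta\EE_s[|h|]$. This is why the decomposition $\phi=\eta h+c$ combined with the partition-function identity $\EE_s[\exp(\phi)]=1$ is the key device—it allows us to center $\phi$ by $\eta h$ while keeping explicit control of the offset $c$ in terms of the same quantity $\EE_s[h^2]$ that already appears in the second-order remainder.
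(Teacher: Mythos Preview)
Your proposal is correct and actually yields a slightly sharper bound than the stated corollary (your first-order coefficient is $\eta/2$ rather than $\eta$). The route, however, differs from the paper's.

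The paper's proof first invokes its general TV upper bound (Lemma~\ref{lem:response diff}) to get $D_\TV(\nu,\tilde\nu)\le \eta\EE_s[|\tilde A-A|]+\tfrac{\eta^2}{2}\exp(2\eta B_A)\EE_s[(\tilde A-A)^2]$, then uses the decomposition from Lemma~\ref{lem:AQV-func diff} to write $\EE_s[|\tilde A-A|]\le \EE_s[|h|]+\eta^{-1}\kl\infdivx{\nu}{\tilde\nu}$ and $\Var(\tilde A-A)=\EE_s[h^2]$, and finally bounds the residual $\kl\infdivx{\nu}{\tilde\nu}$ term via a Hessian comparison argument (Proposition~\ref{prop:Hessian-ulb}) that shows $\kl\infdivx{\nu}{\tilde\nu}\le \tfrac{\eta^2}{2}\exp(2\eta B_A)\EE_s[h^2]$. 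Your approach collapses this chain into a single computation: you observe that the offset $c=\eta\EE_s[\tilde A-A]$ is precisely $-\kl\infdivx{\nu}{\tilde\nu}$, and you bound it directly from the partition-function identity $c=-\log\EE_s[\exp(\eta h)]$ together with Jensen and a one-line Taylor remainder, bypassing the Hessian lemma entirely. The two derivations are bounding the same quantity by the same value, but yours is more elementary and self-contained, while the paper's route reuses machinery (Lemmas~\ref{lem:response diff} and~\ref{lem:AQV-func diff}, Proposition~\ref{prop:Hessian-ulb}) that it also needs for the farsighted case and for the explicit penalty functions in the linear algorithms.
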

In the following, we present the result for a special case where 
the follower is myopic with a linear reward function. 
In this case, we write 
  $Q^{\pi, \theta}(s, b) = r^{\pi, \theta}(s, b) = \inp[]{\phi^\pi(s, b)}{\theta}$ for some $\RR^d$ kernel $\phi:\cS\times\cA\times\cB\rightarrow\RR^d$ with $\phi^\pi(s, b)=\inp{\phi(s, \cdot, b)}{\pi(\cdot\given s, b)}_\cA$ and parameter $\theta\in\RR^d$.
  The quantal response policy is given by $\nu^{\pi, \theta} (b \given s) \propto \exp(\eta \cdot \la \phi^{\pi}(s,b), \theta \ra)$. 
In particular, we show that the 1st- and the 2nd-order QRE decomposition in \eqref{eq:QRE-decompose} of \Cref{sec:learning QSE} is just a direct result of \Cref{cor:response-diff-myopic}. Recall by definition of the $\QRE$ in \eqref{eq:QRE}, 
\begin{align*}
    \QRE(s_h, b_h;\tilde\theta,\pi) &=  (\Upsilon_h^{\pi}(\tilde r_h - r_h))\orbr{s_h, b_h}\nend
    &= \dotp{\pi_h(\cdot\given s_h, b_h)}{(\tilde r_h - r_h)(s_h, \cdot, b_h)} - \dotp{\pi_h\otimes \nu_h^{\pi}(\cdot,\cdot\given s_h)}{(\tilde r_h - r_h)(s_h,\cdot,\cdot)}\nend
    & = \sbr{{(\tilde r_h^\pi(s_h, b_h) - r_h^\pi(s_h, b_h)) - \EE_{s_h}\bigsbr{\tilde r_h^\pi(s_h, b_h) - r_h^\pi(s_h, b_h)}}}. 
\end{align*}
We plug in the linear representation of the follower's reward $r_h^\pi(s_h, b_h) = \la\phi_h^\pi(s_h, b_h), \theta_h^*\ra$, which implies that 
\begin{align*}
    \EE_{s_h}\QRE(s_h, b_h;\tilde\theta, \pi)^2 = \Cov_{s_h}^{\pi,\theta^*} \sbr{(\tilde r_h^\pi - r_h^\pi)(s_h, b_h)} = \onbr{\tilde \theta_h -\theta_h}_{\Sigma_{s_h}^{\pi,\theta^*}}^2,
\end{align*}
where $\Sigma_{s_h}^{\pi, \theta^*} = \Cov_{s_h}^{\pi, \theta^*}[\phi_h^\pi(s_h, b_h)]$. Moreover, For the first term on the right hand side of \eqref{eq:TV-for-myopic}, 
\begin{align*}
    &\EE_{s_h}\sbr{\abr{(\tilde r^\pi(s_h, b_h) - r^\pi(s_h, b_h)) - \EE_s\bigsbr{\tilde r^\pi(s_h, b_h) - r^\pi(s_h, b_h)}}} \nend
    &\quad \le \sqrt{\EE_{s_h}\bigsbr{\rbr{\rbr{\tilde r^\pi(s_h, b_h) -r^\pi(s_h, b_h)} - \EE_{s_h}\sbr{\tilde r^\pi(s_h, b_h) -r^\pi(s_h, b_h)}}^2}} \nend
    &\quad = \EE_{s_h}\QRE(s_h, b_h;\tilde\theta, \pi)^2,
\end{align*}
where the inequality holds by the Cauchy-Schwarz inequality. The second term follows similarly.
We further generalize the above argument to the following corollary. 

\begin{corollary}[Response Model Error for Linear and  Myopic Case]\label{lem:response diff-myopic-linear}
    Under the setting of \Cref{cor:response-diff-myopic}, we assume the reward function of the myopic follower is a linear function of $\phi$. 
    Let $\theta^*$ be the parameter of the true reward function and let 
    $\tilde \theta \in \Theta$ be another parameter. 
    Let $\pi \in \Pi$ be any policy and let $s \in \cS$ be any state. 
   We define a matrix  $\Sigma_s^{\pi,\theta}$  as $ \Cov_s^{\pi,\theta}[\phi^\pi(s, b)], $ i.e.,
   \#\label{eq:define_sigma_s}
   \Sigma_s^{\pi,\theta} = \EE_s \bigl [ (\phi^\pi(s, b) - \EE_s [\phi^\pi(s, b)] \bigr )(\phi^\pi(s, b) - \EE_s [\phi^\pi(s, b)] \bigr )^\top \bigr ],
   \#
   where the expectation is taken with respect to $\nu^{\pi,\theta}(\cdot \given s)$.
    Then we have 
    \begin{align}
        &D_\TV\rbr{\nu^{\pi,\theta^*}(\cdot\given s) , \nu^{\pi,\tilde\theta}(\cdot\given s)} \nend
        &\quad \le \min\Bigcbr{f\Bigrbr{\sqrt{\trace\bigrbr{{\Psi}^{\dagger} \Sigma_{ s}^{\pi, \tilde\theta}}}\cdot \bignbr{\theta^*- \tilde\theta}_{{\Psi}}}, f\Bigrbr{\sqrt{\trace\bigrbr{{\Psi}^{\dagger} \Sigma_{ s}^{\pi, \theta^*}}}\cdot \bignbr{\theta^*- \tilde\theta}_{{\Psi}}}},
    \end{align}
    where $\Psi\in \SSS_+^{d}$ can be any fixed nonnegative definite matrix, $\Psi^{\dagger}$ is the pseudo-inverse of $\Psi$,  and the univariate function $f$ is defined as
    $
        f(x) = \eta x + C^{(3)} x^2 
    $ with 
    $$C^{(3)}={\eta^2 \exp(2\eta B_A)}\bigrbr{2+\eta B_A \cdot  \exp\rbr{2\eta B_A}}/2.$$
\end{corollary}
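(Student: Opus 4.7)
The plan is to start from the myopic bound in \Cref{cor:response-diff-myopic} applied to $r=r^{\theta^*}$ and $\tilde r=r^{\tilde\theta}$, and then exploit the linear structure $r^{\pi,\theta}(s,b)=\inp{\phi^\pi(s,b)}{\theta}$ to turn the two expectations on the right-hand side of \eqref{eq:TV-for-myopic} into Mahalanobis quantities in $\theta^*-\tilde\theta$. Writing $v=\tilde\theta-\theta^*$, the centered increment becomes $(\tilde r^\pi-r^\pi)(s,b)-\EE_s[\tilde r^\pi-r^\pi]=\la\phi^\pi(s,b)-\EE_s\phi^\pi(s,b),v\ra$, so by the very definition of $\Sigma_s^{\pi,\theta^*}$ in \eqref{eq:define_sigma_s},
\[
\EE_s\bigsbr{\bigrbr{(\tilde r^\pi-r^\pi)(s,b)-\EE_s[\tilde r^\pi-r^\pi]}^2}=\nbr{v}_{\Sigma_s^{\pi,\theta^*}}^2,
\]
and Cauchy--Schwarz upgrades the first absolute moment to the same quantity without the square. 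Substituting into \eqref{eq:TV-for-myopic} yields $D_\TV\le\eta\nbr{v}_{\Sigma_s^{\pi,\theta^*}}+C^{(3)}\nbr{v}_{\Sigma_s^{\pi,\theta^*}}^2=f(\nbr{v}_{\Sigma_s^{\pi,\theta^*}})$, since $f$ is increasing on $\RR_+$.

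The next step is the change of weighting matrix from $\Sigma_s^{\pi,\theta^*}$ to an arbitrary $\Psi\in\SSS_+^d$. The key identity is
\[
\nbr{v}_{\Sigma_s^{\pi,\theta^*}}^2=\trace\bigrbr{\Sigma_s^{\pi,\theta^*}\cdot vv^\top}=\trace\bigrbr{\Psi^{\dagger/2}\Sigma_s^{\pi,\theta^*}\Psi^{\dagger/2}\cdot\Psi^{1/2}vv^\top\Psi^{1/2}},
\]
followed by the trace--operator-norm inequality $\trace(AB)\le\trace(A)\nbr{B}_{\mathrm{op}}$ for nonnegative definite $A,B$, which gives $\nbr{v}_{\Sigma_s^{\pi,\theta^*}}^2\le\trace\bigrbr{\Psi^{\dagger}\Sigma_s^{\pi,\theta^*}}\cdot\nbr{v}_\Psi^2$. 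Taking square roots and monotonicity of $f$ then turn the above $D_\TV$ bound into $f\bigrbr{\sqrt{\trace(\Psi^\dagger\Sigma_s^{\pi,\theta^*})}\cdot\nbr{\theta^*-\tilde\theta}_\Psi}$, which is the second branch of the minimum.

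To obtain the first branch of the minimum it suffices to invoke symmetry of TV distance: $D_\TV\rbr{\nu^{\pi,\theta^*}(\cdot\given s),\nu^{\pi,\tilde\theta}(\cdot\given s)}=D_\TV\rbr{\nu^{\pi,\tilde\theta}(\cdot\given s),\nu^{\pi,\theta^*}(\cdot\given s)}$, and then rerun the exact same argument with the roles of $\theta^*$ and $\tilde\theta$ swapped. This time the conditional expectation $\EE_s$ is taken under $\nu^{\pi,\tilde\theta}(\cdot\given s)$, so the covariance that naturally appears is $\Sigma_s^{\pi,\tilde\theta}$, and every remaining step is identical. Finally, one takes the minimum of the two resulting bounds.

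The only subtle point I foresee is the change-of-weighting step: the trace inequality $\nbr{v}_{\Sigma}^2\le\trace(\Psi^\dagger\Sigma)\nbr{v}_\Psi^2$ implicitly requires $v$ to lie in the range of $\Psi$, otherwise $\nbr{v}_\Psi=0$ while the left-hand side can be nonzero and the bound is vacuous. In the applications to linear Markov games the matrix $\Psi$ is chosen as a regularized sample covariance such as $T\Sigma_{h,\cD}^{\hat\theta}+I_d$ (see \eqref{eq:Gamma^2}), which is strictly positive definite so that $\Psi^{1/2}$ is invertible and the inequality holds unconditionally; consequently there is no essential obstacle, though it is worth noting this full-rank requirement in the statement or right after the inequality. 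The remainder of the argument is purely algebraic bookkeeping.
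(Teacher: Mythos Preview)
Your proposal is correct and follows essentially the same route as the paper's proof: apply \Cref{cor:response-diff-myopic}, use linearity to identify the second moment as $\nbr{\theta^*-\tilde\theta}_{\Sigma_s^{\pi,\theta^*}}^2$, bound the first moment by its square root via Jensen/Cauchy--Schwarz, perform the change-of-norm step $\nbr{v}_{\Sigma}^2\le\trace(\Psi^\dagger\Sigma)\nbr{v}_\Psi^2$, and obtain the second branch by swapping $\theta^*$ and $\tilde\theta$. Your remark on the range condition for $\Psi$ is a valid caveat that the paper leaves implicit (and which, as you note, is harmless in the applications where $\Psi$ is chosen strictly positive definite).
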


\begin{proof}
    See \Cref{sec:proof-response-diff-myopic_linear} for a detailed proof. 
\end{proof}
    
We first show that the uncertainty quantifier $\Gamma^{(2)}_h(s_h;\pi, \theta_h)$ used in \eqref{eq:Gamma^2} of \S\ref{sec:offline-ML} {\neurips and \eqref{eq:Gamma^2-neurips}} can be derived directly from \Cref{lem:response diff-myopic-linear}. 
Recall by definition, 
\begin{align*}
    \Gamma^{(2)}_h(s_h;\pi_h , \theta_h) = 2 H(\eta  \xi(s_h;\pi,\theta_h)  + C^{(3)}  \xi(s_h;\pi,\theta_h)^2 ), 
\end{align*}
where $\xi(s_h;\pi,\theta_h)^2$ is defined as
\begin{equation*}
    \xi(s_h;\pi, \theta_h)^ 2 =  \trace\bigrbr{\bigrbr{T\Sigma_{h,\cD}^{\theta} + I_d}^\dagger \Sigma_{s_h}^{\pi , \theta}}  \cdot  \bigl( 2 (\eta^{-1}+B_A)^2 \beta + 4B_\Theta^2 \bigr).
\end{equation*}
One can easily check that for any $\theta_h\in\CI_{h,\Theta}(\beta)$ where $\CI_{h,\Theta}(\beta)$ is the offline confidence set, 
\begin{align*}
    \Gamma^{(2)}_h(s_h;\pi_h,\theta_h) 
    &= 2 H \cdot f\Bigrbr{\sqrt{\trace\bigrbr{\bigrbr{T\Sigma_{h,\cD}^{\theta}+I_d}^\dagger \Sigma_{s_h}^{\pi,\theta}}} (2C_\eta^2 \beta + 4 B_\Theta^2)}\nend
    &\ge 2 H \cdot f\Bigrbr{\sqrt{\trace\bigrbr{\bigrbr{T\Sigma_{h,\cD}^{\theta}+I_d}^\dagger \Sigma_{s_h}^{\pi,\theta}}} (T\nbr{\theta_h -\theta_h^*}_{\Sigma_{h,\cD}^{\theta}}^2 + \nbr{\theta_h -\theta_h^*}_{I_d}^2)}\nend
    & \ge 2 H D_\TV\bigrbr{\nu_h^{\pi,\theta^*}(\cdot\given s_h), \nu_h^{\pi,\theta}(\cdot\given s_h)}, 
\end{align*}
where we use definition $C_\eta^2 = \eta^{-1}+B_A$ in the first equality, and in the first inequality, $2C_\eta^2\beta T^{-1} \ge \nbr{\theta_h-\theta_h^*}_{\Sigma_{h,\cD}^\theta}^2$ holds by \Cref{cor:formal-MLE confset-linear myopic} if $\CI_{h,\Theta}(\beta)$ is a valid confidence set, and $4B_\Theta^2\ge \nbr{\theta_h-\theta_h^*}_{I_d}^2$ holds by noting that $\nbr{\theta_h}^2\le B_\Theta$ for any $\theta_h\in\Theta_h$. 
The last inequality holds just by \Cref{lem:response diff-myopic-linear} where we plug in $\Psi = T\Sigma_{h,\cD}^{\theta}+I_d$.

\subsection{Learning Quantal Response via MLE}\label{sec:app-MLE}
In \S\ref{sec:MLE for behavior model}, we introduce how to learn the follower's  quantal response model from the follower's feedbacks via maximum likelihood estimation. 
In the following, we provide a slightly stronger lemma that implies  \Cref{sec:proof-MLE-general} in \S\ref{sec:MLE for behavior model}. 
In the following, we follow the notation used in \S\ref{sec:MLE for behavior model}, where 
we let $\theta = \{ \theta_h \}_{h \in [H]} \in \Theta $ denote the parameters of the follower's response model,
where $\Theta = \Theta_1 \times \ldots \times \Theta_H$ is the set of all parameters. 
Here we assume $\theta_h \in \Theta_h$ for all $h\in [H]$.
In specific, each $\theta$ is associated with a model, denoted by  ${ r^{\theta}, P^{\theta}}$, 
which is   shorthand notation for 
\$
\{ r_1^{\theta_1}, P_1^{\theta_1},  \ldots, r_H ^{\theta_H}, P_H^{\theta_H}\}. 
\$ 
Moreover, when the follower is myopic, $\theta$ only parameterize a reward function $r^{\theta}$. 
We assume that the parametric model of the follower captures the true model. That is, there exists $\theta^* \in \Theta $ such that 
$\{r^{\theta}, P^{\theta^*}\}$ is the true environment.


For any $\theta \in \Theta$ and any policy $\pi$ of the leader, we let $\nu^{\pi, \theta}$, $A^{\pi, \theta}$, $Q^{\pi, \theta } $ and $V^{\pi, \theta }$ denote the quantal response of $\pi$, advantage function, and Q- and V-functions under model $\{ r^{\theta}, P^{\theta} \}$, which are defined according to  \eqref{eq:quantal_response_policy}--{\main\eqref{eq:v_pi_qr}\fi}{\neurips \eqref{eq:qv_pi_qr}\fi}.
 The quantal response   under the true model is $  \nu^{\pi, \theta^*}$.
Thus, given a  (possibly adaptive) dataset $\cD = \{(s_h ^i, a_h ^i, b_h ^i, \pi_h ^i)\}_{i\in[t-1], h\in [H]}$,
the negative log-likelihood at step $h$ is given by
\#\label{eq:loglikelihood-1}
\cL_h^t  (\theta  )  & = -  \sum_{i = 1}^{t-1}  
\log \nu^{\pi^i, \theta} (b_h^i \given s_h^i) 
  = -  \sum_{i=1}^{t-1}   \eta  \cdot A_h^{\pi^i , \theta}(s^i,  b^i), 
\# 
 where   $\nu^{\pi^i, \theta} (b_h^i \given s_h^i) $ is the probability of observing the follower's action $b_h^i$ when the model parameter is $\theta$, state is $s_h^i$, and the leader announces $\pi^i$, and the second equality in \eqref{eq:loglikelihood-1} is due to \eqref{eq:quantal_response_policy}.   
 Here we assume the data $\cD$ satisfies the compliance property, i.e., $\PP^{\pi^t}_\cD(b_h^t\given s_h^t, (s_j^t, a_j^t, b_j^t, u_j^t)_{j\in[h-1]}, \tau^{1:t-1})  = \nu_h^{\pi^t}(b_h^t\given s_h^t), \forall h\in[H], t\in[T]$. 
 Such a property is satisfied 
by the online setting and the offline setting where behavior policies are adaptive. 
 Therefore, the MLE estimator of $\theta^*$ can be obtained by minimizing $\cL_h^t(\theta)$ over $\Theta$. 
Moreover, based on $\cL_h$, we can construct a confidence set for $\theta^*$: 
\begin{align}
    \confset_{h,\Theta}^t(\beta)=\cbr{\theta\in\Theta: \cL_h^t(\theta)\le \inf_{\theta'\in\Theta}\cL_h^t(\theta') + \beta}, \label{eq:behavior_model_confset-1}
\end{align}
where  $\cL_h^t(\theta) = \sum_{i=1}^{t-1} \eta A_h^{\pi^i, \theta}(s_h^i, b_h^i)$.

We remark that $\cL_h^t(\theta)$ \eqref{eq:loglikelihood-1} is a function of $\theta = \{ \theta_h \}_{h\in [H]}$. 
The reason is that the follower's quantal response is obtained by solving the optimal policy of an entropy-regularized MDP, whose optimal policy depends on the reward and transitions across all $H$ steps. 
But if the follower is myopic, then $\cL_h^t(\theta)$ only depends on $\theta_h$. In this case, we can regard $\cL_h^t$ as a function on $\Theta_h$, and replace $\Theta$ in \eqref{eq:behavior_model_confset-1} by $\Theta_h$.  We will leverage such observation in \Cref{sec:learning QSE}.

\begin{lemma}[Confidence Set]\label{lem:MLE-formal}
We define a distance $\rho  $ on $\Theta$ by letting 
\begin{align}\label{eq:rho for MLE}
 \rho(\theta, \tilde \theta) \defeq \max_{\pi\in\Pi, s_h\in\cS, h\in[H]} \cbr{D_\H\orbr{\nu_h^{\pi, \theta}(\cdot\given s_h), \nu_h^{\pi, \tilde \theta}(\cdot\given s_h)}, (1+\eta) \cdot\bignbr{Q_h^{\pi, \theta} - Q_h^{\pi, \tilde\theta}}_\infty}, 
\end{align}
where $B_A$ upper bounds the follower's A-, Q-, and V-functions and is specified in \eqref{eq:define_BA}. 
Let $\cN_\rho(\Theta,\epsilon)$ be the $\epsilon$-covering number of $\Theta$ with respect to the distance $\rho$.
That is, $\cN_\rho(\Theta,\epsilon)$ is the smallest $N \geq 1$ with the following property: there exists $\{ \theta^i\}_{i\in [N]} \subseteq \Theta$ such that, for any $\theta \in \Theta$, there exists $\theta^i$ such that $\rho(\theta, \theta^i) \leq \epsilon$.
For any $\delta \in (0,1)$, we set $\beta \ge  2\log(e^3 H\cdot \cN_\rho(\Theta, T^{-1})/\delta)$.  
Then  with probability at least $1-\delta$,   the following properties hold for $\confset_{h, \Theta}^t (\beta)$ defined in \eqref{eq:behavior_model_confset-1}:
    \begin{itemize}
    \item [(i)]  (Validity) $\theta^*\in\confset_{h, \Theta}^t(\beta)$; 
    \item [(ii)] (Accuracy) For any $\theta\in\Theta, t\in[T], h\in[H]$, it holds that 
    \#
        &
        \sum_{i=1}^{t-1} D_\H^2\bigrbr{\nu_h^{\pi^i, \theta}(\cdot\given s_h^i), \nu_h^{\pi^i, \theta^*}(\cdot\given s_h^i)} \le  \frac {1}{2}\rbr{\cL_h^t(\theta) - \cL_h^t(\theta^*)+ \beta}, \label{eq:MLE-guarantee-hellinger-1}\\
        &
        \sum_{i=1}^{t-1} \EE^{\pi^i }D_\H^2\bigrbr{\nu_h^{\pi^i, \theta}, \nu_h^{\pi^i, \theta^*}} \le  \frac {1}{2}\rbr{\cL_h^t(\theta) - \cL_h^t(\theta^*)+ \beta}\label{eq:MLE-guarantee-hellinger-2}.
 \#
 \end{itemize} 
 Furthermore, the above two inequalities ensure respectively that for 
   $\forall \theta'\in\bigcbr{\theta^*, \theta}, \theta\in\Theta, \forall h\in[H]$,
    \begin{align}
        &\sum_{i=1}^{t-1} {\Var_{s_h^i}^{\pi^i, \theta'} \bigsbr{Q_h^{\pi^i, \theta}(s_h, b_h) - Q_h^{\pi^i, \theta^*}(s_h, b_h)}} \le 4 C_\eta^2 \rbr{\cL_h^t(\theta) - \cL_h^t(\theta^*)+ \beta}, \label{eq:MLE_guarantee_Q-1} \\
        & \sum_{i=1}^{t-1} \EE^{\pi^i }{\Var_{s_h}^{\pi^i, \theta'} \bigsbr{Q_h^{\pi^i, \theta}(s_h, b_h) - Q_h^{\pi^i, \theta^*}(s_h, b_h)}} \le  4 C_\eta^2 \rbr{\cL_h^t(\theta) - \cL_h^t(\theta^*)+ \beta},\label{eq:MLE_guarantee_Q}
    \end{align}
    where $\Var_s^{\pi, \theta}[Z] = \Var^{\pi, \theta}[Z\given s] = \EE^{\pi,\theta}[(Z - \EE^{\pi, \theta}[Z\given s])^2\given s]$, $C_\eta =\eta^{-1}+B_A$. Moreover, for any $\theta\in\CI_\Theta(\beta), h\in[H]$, and a given  $t\in[T]$, we have with probability at least $1-\delta$ that
\begin{align}
&\sum_{i=1}^{t-1} 
    \rbr{\rbr{Q_h^{\pi^i, \theta} - Q_h^{\pi^i, \theta^*}}(s_h^i, b_h^i)
    -\EE_{s_h^i}^{\pi^i,\theta^*}\sbr{ \bigrbr{Q_h^{\pi^i, \theta} - Q_h^{\pi^i, \theta^*}}(s_h, b_h)}}^2 
    \le  \cO(C_\eta^2 \beta) .  \label{eq:MLE-guarantee-Q-3}
\end{align}
 
    \begin{proof}
        See \Cref{sec:proof-MLE-general} for a detailed proof.
    \end{proof}
\end{lemma}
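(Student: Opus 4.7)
The plan is to follow the classical MLE-by-Hellinger-distance analysis (as in van de Geer, Zhang, and the more recent martingale-based versions in Foster--Krishnamurthy and Chen--Jin--Wang), and then translate Hellinger bounds into variance bounds on the Q-functions using the exponential-family structure of the quantal response model. I will treat each of the four claims in turn.

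\textbf{Step 1: Validity via the log-likelihood ratio.} Define $Z_h^i(\theta)=\log\bigl(\nu_h^{\pi^i,\theta^*}(b_h^i\mid s_h^i)/\nu_h^{\pi^i,\theta}(b_h^i\mid s_h^i)\bigr)$. Under the compliance condition, conditional on the past and $s_h^i,\pi^i$, the action $b_h^i$ is drawn from $\nu_h^{\pi^i,\theta^*}(\cdot\mid s_h^i)$, so $\EE[\exp(-\frac12 Z_h^i(\theta))\mid \mathcal{F}_{h-1}^i]=\int \sqrt{\nu_h^{\pi^i,\theta^*}\,\nu_h^{\pi^i,\theta}}\,db=1-D_{\mathrm H}^2(\nu_h^{\pi^i,\theta},\nu_h^{\pi^i,\theta^*})$. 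By the Ville-type inequality on the exponential supermartingale, $\exp\bigl(-\tfrac12\sum_i Z_h^i(\theta)-\sum_i D_{\mathrm H}^2\bigr)\le 1$ in expectation, so Markov plus a union bound over an $\epsilon$-net of $\Theta$ (with $\epsilon=T^{-1}$) and a chaining argument using the $\rho$-distance in \eqref{eq:rho for MLE} imply that for all $\theta$ in the net and all $(h,t)$ simultaneously,
\[
\tfrac12\bigl(\cL_h^t(\theta)-\cL_h^t(\theta^*)\bigr)\ge \sum_{i=1}^{t-1} D_{\mathrm H}^2\bigl(\nu_h^{\pi^i,\theta},\nu_h^{\pi^i,\theta^*}\bigr)-\log\bigl(H\,\mathcal N_\rho(\Theta,T^{-1})/\delta\bigr).
\]
Choosing $\theta=\theta^*$ and reading off the right-hand side yields $\cL_h^t(\theta^*)-\inf_{\theta'}\cL_h^t(\theta')\le \beta$, establishing validity (i). Since the Hellinger distance and the A/Q-functions are Lipschitz in $\theta$ under the $\rho$-metric (by \eqref{eq:rho for MLE}), the discretization error is $O(1)$ after picking $\epsilon=T^{-1}$, which is absorbed into $\beta$.

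\textbf{Step 2: Accuracy in Hellinger (the in-sample bound \eqref{eq:MLE-guarantee-hellinger-1}).} The same display above, applied to arbitrary $\theta\in\Theta$, directly gives
\[
\sum_{i=1}^{t-1} D_{\mathrm H}^2\bigl(\nu_h^{\pi^i,\theta}(\cdot\mid s_h^i),\nu_h^{\pi^i,\theta^*}(\cdot\mid s_h^i)\bigr)\le \tfrac12\bigl(\cL_h^t(\theta)-\cL_h^t(\theta^*)+\beta\bigr).
\]
For the expected-sum bound \eqref{eq:MLE-guarantee-hellinger-2}, I apply a Freedman-style concentration to the bounded difference $\EE^{\pi^i}[D_{\mathrm H}^2(\nu_h^{\pi^i,\theta},\nu_h^{\pi^i,\theta^*})\mid s_h^i]-D_{\mathrm H}^2(\cdots)$, again with a net argument over $\Theta$, picking up another additive $\log\mathcal N_\rho(\Theta,T^{-1})/\delta$ term that is absorbed in $\beta$.

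\textbf{Step 3: From Hellinger to Q-function variance.} The crux of the lemma is the bound \eqref{eq:MLE_guarantee_Q-1}--\eqref{eq:MLE_guarantee_Q}. Because $\nu_h^{\pi,\theta}(b\mid s)\propto \exp(\eta Q_h^{\pi,\theta}(s,b))$, the two quantal responses differ only by a Gibbs tilt of size $\eta\Delta$ where $\Delta=(Q_h^{\pi^i,\theta}-Q_h^{\pi^i,\theta^*})(s_h^i,\cdot)$. A direct computation on tilted distributions (see e.g.\ \citet{zhu2023principled} and the soft-Bellman identity) gives
\[
D_{\mathrm H}^2(\nu_h^{\pi^i,\theta},\nu_h^{\pi^i,\theta^*})\;\gtrsim\; \frac{1}{C_\eta^2}\cdot \Var_{s_h^i}^{\pi^i,\theta'}[\,Q_h^{\pi^i,\theta}(s_h,b_h)-Q_h^{\pi^i,\theta^*}(s_h,b_h)\,]
\]
for $\theta'\in\{\theta,\theta^*\}$, where $C_\eta=\eta^{-1}+B_A$ accounts for the Lipschitz constant of the log-partition function over the admissible range of Q-values. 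Multiplying through by $4C_\eta^2$ and summing over $i$ yields \eqref{eq:MLE_guarantee_Q-1} from \eqref{eq:MLE-guarantee-hellinger-1}, and \eqref{eq:MLE_guarantee_Q} from \eqref{eq:MLE-guarantee-hellinger-2}. The main technical obstacle is verifying this Hellinger-to-variance lower bound with the correct $C_\eta$ dependence; the cleanest path is to use the identity $1-\int\sqrt{p q}=\tfrac12 \EE_p[(1-\sqrt{q/p})^2]$, write $q/p=\exp(\eta\Delta - \eta\EE_p[\Delta]+O(\eta^2\mathrm{Var}_p[\Delta]))$, and Taylor-expand while controlling the remainder using the uniform bound $|\eta\Delta|\le \eta\cdot 2B_A$.

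\textbf{Step 4: The empirical-deviation bound \eqref{eq:MLE-guarantee-Q-3}.} Finally, I want to upgrade \eqref{eq:MLE_guarantee_Q} to the squared-sample-deviation statement. Given a fixed $\theta\in\CI_\Theta(\beta)$, define $X_h^i=(Q_h^{\pi^i,\theta}-Q_h^{\pi^i,\theta^*})(s_h^i,b_h^i)-\EE_{s_h^i}^{\pi^i,\theta^*}[(Q_h^{\pi^i,\theta}-Q_h^{\pi^i,\theta^*})(s_h,b_h)]$. Then $X_h^i$ is a bounded martingale difference with $|X_h^i|\le 2B_A$ and $\EE[(X_h^i)^2\mid\cF^{i-1}]=\Var_{s_h^i}^{\pi^i,\theta^*}[\cdots]$. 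A Freedman/Bernstein argument over an $\epsilon$-net of $\CI_\Theta(\beta)$ gives $\sum_{i=1}^{t-1}(X_h^i)^2\le O\bigl(\sum_{i=1}^{t-1}\EE[(X_h^i)^2\mid \cF^{i-1}]+\log(H\mathcal N_\rho(\Theta,T^{-1})/\delta)\bigr)$, which combined with \eqref{eq:MLE_guarantee_Q-1} and the absorbing choice of $\beta$ yields \eqref{eq:MLE-guarantee-Q-3}.

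\textbf{Main obstacle.} The routine pieces (Ville's inequality, net discretization, Freedman) are standard; the nontrivial computation is the lower bound in Step 3 matching Hellinger distance to the Q-variance with the $C_\eta^{-2}$ prefactor, because the tilt parameter $\eta$ can be large and the naive bound loses factors of $e^{\eta B_A}$. The remedy is to exploit the \emph{centered} form of $\Delta$ (the Q-difference minus its weighted mean), which is exactly what the variance captures, and to perform the Taylor expansion around the correct tilted reference measure rather than around $\theta^*$ naively.
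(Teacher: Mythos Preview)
Your proposal is correct and follows essentially the same route as the paper: the Ville/exponential-supermartingale argument for the Hellinger bound, a net over $\Theta$ under the $\rho$-distance for uniformity, the Hellinger-to-variance conversion via the lower bound $D_{\mathrm H}^2(\nu,\tilde\nu)\ge \tfrac{\eta^2}{8(1+\eta B_A)^2}\langle\nu,(A-\tilde A)^2\rangle$ together with $\Var[Q-\tilde Q]=\Var[A-\tilde A]\le \langle\nu,(A-\tilde A)^2\rangle$, and a Freedman-type martingale concentration for \eqref{eq:MLE-guarantee-Q-3}. The only methodological difference is in your Step~2: to obtain the expected-sum bound \eqref{eq:MLE-guarantee-hellinger-2} you propose an additional Freedman step on top of the in-sample bound, whereas the paper derives both \eqref{eq:MLE-guarantee-hellinger-1} and \eqref{eq:MLE-guarantee-hellinger-2} directly from the same supermartingale identity by choosing two different filtrations (conditioning on $s_h^i$ versus only on $\tau^{1:i-1}$), which avoids the extra concentration step and keeps the constant $1/2$ exact.
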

We remark that if $\theta\in\CI_\Theta^t(\beta)$, the guarantees in the accuracy results \eqref{eq:MLE_guarantee_Q} and \eqref{eq:MLE_guarantee_Q-1} are just $8C_\eta^2 \beta$ since $\cL_h^t(\theta)-\cL_h^t(\theta^*)\le \cL_h^t(\theta) - \inf_{\theta'\in\Theta}\cL_h^t(\theta') \le \beta$ for all $h\in[H]$.
Moreover, the covering number defined here is a special case of a more general version given by \eqref{eq:rho-Theta} and also in the myopic case it is given by \eqref{eq:rho-Theta_h}. 
Next, we comment on the myopic case, where it suffices to consider a covering for $\Theta_h$ (which only contains the parameters for the follower's reward function at step $h$) instead of the whole $H$-step class $\Theta$.
\begin{remark}[Myopic case for \Cref{lem:MLE-formal}]\label{rmk:MLE-formal-myopic}
    If the follower is myopic, it suffices to use the distance $\rho$ for class $\Theta_h$ defined in \eqref{eq:rho-Theta_h}, and let $\cN_\rho(\Theta, \epsilon) = \max_{h\in[H]} \cN_\rho(\Theta_h, \epsilon)$. The conclusion in \Cref{lem:MLE-formal} still applies.
\end{remark}

Next, we give a proof on \Cref{eq:bandit-ub-1} 
, which borrows \Cref{rmk:MLE-formal-myopic} and the covering number of $\Theta_h$ given in \eqref{eq:cN-Theta_h}.
\begin{remark}[Formal statement of \eqref{eq:bandit-ub-1}
    ]\label{cor:formal-MLE confset-linear myopic}
    Consider the myopic and linear case, Suppose that $\beta\ge C d\log(H(1+\eta T^2 + (1+\eta)T)\delta^{-1})$ for some universal constant $C>0$,  \eqref{eq:MLE_guarantee_Q-1} further implies that for all $h\in[H]$
        \begin{align}
            \max\cbr{\bignbr{\hat\theta_h-\theta_h^*}_{\Sigma_{h, t}^{\theta^*}}^2, \bignbr{\hat\theta_h-\theta_h^*}_{\Sigma_{h, t}^{\hat\theta}}^2, 
            \EE^{\pi^i, M^*}\bignbr{\hat\theta_h-\theta_h^*}_{\Sigma_{h, t}^{\theta^*}}^2, 
            \EE^{\pi^i, M^*}\bignbr{\hat\theta_h-\theta_h^*}_{\Sigma_{h, t}^{\hat\theta}}^2 } \le 8 C_{\eta}^2 \beta,
            \label{eq:app-bandit-ub-1}
        \end{align}
        where $C_\eta =\eta^{-1}+B_A$, $B_A$ is specified in \eqref{eq:define_BA}, and $\Sigma_{h, t}^{\theta}$ is a data-dependent covariance matrix defined as 
        \begin{align}\label{eq:app-cov matrix}
            \Sigma_{h, t}^{\theta}= \sum_{i=1}^{t-1}
            {\Cov_{s_h^i}^{\pi^i, \theta}\bigsbr{\phi^{\pi^i}(s_h, b_h)}}, 
        \end{align}
    where $\Cov_{s_h}^{\pi, \theta}[\phi^{\pi}(s_h, b_h)]$ represents the covariance matrix of the feature $\phi^\pi$ with respect to $\nu_h^{\pi,\theta}(\cdot\given s_h)$.
\end{remark}

We next consider a special case where each trajectory in the offline data is independently collected. 
We have the following lemma for the MLE guarantee with independent dataset.

\begin{lemma}[{Confidence in Hellinger Distance with Independent Data}]
    \label{lem:MLE-indep-data}
    Suppose the conditions in \Cref{lem:MLE-formal} hold.
    Suppose each trajectory in the offline dataset $\cD=\{\tau^t\}_{t\in[T]}$ is independently collected. 
    For the confidence set $\CI_\Theta(\beta)$ defined in \Cref{eq:behavior_model_confset-1} with $\beta$ properly chosen according to \Cref{lem:MLE-formal},
    with probability at least $1-2\delta$, it holds that (i) $\theta^*\in\CI_\Theta(\beta)$; (ii) for any $\theta\in\CI_\Theta(\beta)$ , $h\in[H]$, 
    \begin{align*}
        \sum_{i=1}^T\EE_\cD\sbr{D_\H^2\rbr{\nu_h^{\pi^i, \theta}(\cdot\given s_h^i), \nu_h^{\pi^i, \theta^*}(\cdot\given s_h^i)}}  \le \cO(\beta),
    \end{align*}
    where the expectation is taken for  the randomness in both the trajectories and in the leader's policy choices.
    \begin{proof}
        See \ref{sec:proof-MLE-indep-data} for a detailed proof.
    \end{proof}
\end{lemma}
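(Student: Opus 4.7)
The plan is to bootstrap the conclusion from \Cref{lem:MLE-formal} using the extra independence structure. The martingale-based guarantee in \Cref{lem:MLE-formal} already controls two empirical quantities: the pathwise empirical sum $\sum_i D_\H^2(\nu_h^{\pi^i,\theta}(\cdot\given s_h^i), \nu_h^{\pi^i,\theta^*}(\cdot\given s_h^i))$ in \eqref{eq:MLE-guarantee-hellinger-1} and its conditional-expectation version $\sum_i \EE^{\pi^i}[D_\H^2(\nu_h^{\pi^i,\theta},\nu_h^{\pi^i,\theta^*})]$ in \eqref{eq:MLE-guarantee-hellinger-2}, both by $\frac{1}{2}(\cL_h^T(\theta)-\cL_h^T(\theta^*)+\beta)\le \beta$ for any $\theta\in\CI_\Theta(\beta)$, once we invoke validity $\theta^*\in\CI_\Theta(\beta)$ from part (i). What remains is a second concentration step that replaces the data-dependent average by the full expectation $\EE_\cD$ over both the trajectory randomness and the leader's policy randomness.

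For this transfer, I would fix $h\in[H]$ and consider $X_i(\theta)\defeq D_\H^2\bigrbr{\nu_h^{\pi^i,\theta}(\cdot\given s_h^i), \nu_h^{\pi^i,\theta^*}(\cdot\given s_h^i)}\in[0,1]$. Since each trajectory is collected independently, $\{X_i(\theta)\}_{i=1}^T$ are independent bounded random variables with $\Var(X_i(\theta))\le \EE[X_i(\theta)^2]\le \EE[X_i(\theta)]$. Bernstein's inequality applied to $\sum_i(\EE[X_i(\theta)]-X_i(\theta))$ then yields, with probability at least $1-\delta'$,
\[
\sum_{i=1}^T \EE_\cD[X_i(\theta)] \le \sum_{i=1}^T X_i(\theta) + \sqrt{2S(\theta)\log(1/\delta')}+\tfrac{2}{3}\log(1/\delta'),
\]
where $S(\theta)=\sum_i\EE_\cD[X_i(\theta)]$. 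Applying AM--GM to absorb $\sqrt{2S\log(1/\delta')}$ into $S/2+\log(1/\delta')$ gives $S(\theta)\le 2\sum_i X_i(\theta)+\cO(\log(1/\delta'))$. Combined with the empirical bound $\sum_iX_i(\theta)\le \beta$ from Step~1, this produces $S(\theta)\le 2\beta+\cO(\log(1/\delta'))$ for each fixed $\theta$.

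To make the bound uniform over $\theta\in\CI_\Theta(\beta)$, I would union-bound Bernstein over a minimal $T^{-1}$-covering net of $\Theta$ under the metric $\rho$ from \eqref{eq:rho for MLE}, together with $h\in[H]$, setting $\delta'=\delta/(H\cN_\rho(\Theta,T^{-1}))$. The perturbation from rounding $\theta$ to its nearest covering point $\tilde\theta$ is harmless because $\rho(\theta,\tilde\theta)\le T^{-1}$ implies $D_\H(\nu_h^{\pi,\theta}(\cdot\given s),\nu_h^{\pi,\tilde\theta}(\cdot\given s))\le T^{-1}$ pointwise, so by the triangle inequality for $D_\H$ and $|a^2-b^2|\le 2|a-b|$ (since $D_\H\le 1$), each $X_i(\theta)$ and each $\EE_\cD[X_i(\theta)]$ move by at most $\cO(T^{-1})$, contributing $\cO(1)$ to both sides. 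The stated choice $\beta\gtrsim \log(H\cN_\rho(\Theta,T^{-1})/\delta)$ already absorbs $\log(1/\delta')$, yielding $\sum_i \EE_\cD[X_i(\theta)]\le \cO(\beta)$ simultaneously for all $h$ and all $\theta\in\CI_\Theta(\beta)$, and the two failure events (validity and Bernstein) combine to give the $1-2\delta$ guarantee.

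The main obstacle is the interplay between Bernstein and the variance self-bound $\Var(X_i)\le \EE[X_i]$: without this variance control one would only get a $\sqrt{T\log(1/\delta)}$ slack that is too loose to keep the final rate at $\cO(\beta)$; using $\Var\le \EE X$ and solving the resulting self-referential inequality via AM--GM is what closes the loop. The covering step is standard once one observes that $\rho$ was designed precisely to dominate the Hellinger perturbation, so it should not pose a serious difficulty.
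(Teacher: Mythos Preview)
Your proposal is correct and matches the paper's proof essentially step for step: apply Bernstein to the independent $[0,1]$-valued variables $X_i(\theta)=D_\H^2(\nu_h^{\pi^i,\theta}(\cdot\mid s_h^i),\nu_h^{\pi^i,\theta^*}(\cdot\mid s_h^i))$ using the variance self-bound $\Var(X_i)\le\EE[X_i]$, absorb the square-root term into the mean via AM--GM, take a union bound over $h\in[H]$ and a $T^{-1}$-covering net of $\Theta$ under $\rho$, and finish with the empirical bound \eqref{eq:MLE-guarantee-hellinger-1} from \Cref{lem:MLE-formal}. The paper arrives at the explicit constant $11\beta$ via the same manipulations.
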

Up to now, we have obtain all the guarantees we  need from the MLE of the follower's quantal response.

\subsection{Learning Leader's Value Function}\label{sec:app-value function}
In this section, we study the problem of learning the leader's value function for both the offline and the online setting. 

\paragraph{Learning Leader's Value Function in Offline Setting.}
for each $\pi$ and estimated follower's response model $\theta$. 
We only focus on myopic follower in this section. Recall the Bellman loss we defined in \Cref{sec:offline-myopic}, which is defined as
\begin{align*}
    &\ell_h(U_h', U_{h+1}, \theta, \pi) = \sum_{i=1}^T \rbr{U_h'(s_h^i, a_h^i, b_h^i) - u_h^i -   T^{\pi, \theta}_h U_{h+1}(s_{h+1}^i)}^2.
\end{align*}
where we define $ T_h^{\pi, \theta}U_{h+1}(s_{h+1}) = \inp[]{U_{h+1}(s_{h+1}, \cdot, \cdot)}{\pi_{h+1}\otimes \nu_{h+1}^{\pi, \theta}(\cdot, \cdot\given s_{h+1})}$. 
We aim to characterize the confidence set 
\begin{align*}
    \CI_{\cU}^{\pi,\theta}(\beta) = \cbr{U\in\cU: \ell_h(U_h, U_{h+1},\theta, \pi) - \inf_{U_h'\in\cU} \ell_h(U_h', U_{h+1}, \theta,\pi)\le \beta, \forall h\in[H]}.
\end{align*}
Recall the definition of the Bellman operator for the leader in \eqref{eq:bellman_operator_leader}. Similar to this definition, we define $\TT_h^{\pi,\theta}:\sF(\cS\times\cA\times\cB)\rightarrow \sF(\cS\times\cA\times\cB)$ as 
\begin{align*}
    \rbr{\TT_h^{\pi,\theta} f} (s_h, a_h, b_h) = u_h(s_h, a_h, b_h) + \EE_{s_{h+1}\sim P_h(\cdot\given s_h, a_h, b_h)} \sbr{\rbr{ T_{h+1}^{\pi,\theta}f}(s_{h+1})},
\end{align*}
and we add $\theta$ to the superscription to remind ourselves that the expectation within $\TT_h^{\pi,\theta}$ is taken with respect to the follower's quantal behavior guided by both $\pi$ and $\theta$.
In the sequel, we denote by $U^{\pi,\theta}$ the leader's U function defined similar to \eqref{eq:U_function} but with respect to policy $\pi$ and the response model $\theta$, 
\begin{align*}
    U_h^{\pi,\theta}(s_h, a_h, b_h) & = u_h(s_h, a_h, b_h) + \rbr{ P_h \circ T_{h+1}^{\pi,\theta} \circ U_{h+1}^{\pi,\theta}}(s_h, a_h, b_h) = \TT_h^{\pi,\theta} U_{h+1}^{\pi,\theta}(s_h, a_h, b_h).
\end{align*}
We clarify that $\theta$  only contains the estimated reward for myopic follower. 
Now, we present the following corollary on the validity and accuracy of the confidence set $\CI_\cU^{\pi,\theta}(\beta)$. 
\begin{lemma}[Confidence Set $\CI_{\cU}^{\pi,\theta}(\beta)$]\label{lem:leader-bellman-loss}
    Suppose that each trajectory in the data is independently collected and the function class $\cU$ satisfies the realizability and the completeness assumption given by \Cref{thm:Offline-MG}. Suppose we have 
    $\beta \ge 
    {110 H^2\cdot\log(H \cN_\rho(\cY, T^{-1})\delta^{-1}) }  + (45 H^2 + 60 H )
  $, where the covering number is defined by \eqref{eq:cN-cY}.
  Here, we have a joint class $\cY_h = \Theta_{h+1}\times \Pi_{h+1}\times \cU^2$ and the $\epsilon$-covering number $\cN_\rho(\cY_h,\epsilon)$ is with respect to the distance $\rho$ specified in \eqref{eq:rho-cY}.
Then for any $h\in[H], \theta\in\Theta, \pi\in\Pi$, we have with probability at least $1-\delta$: (i) $U^{\pi, \theta}\in \CI_\cU^{\pi,\theta}(\beta)$; (ii) for any $\tilde U\in\CI_\cU^{\pi,\theta}(\beta)$, $\EE_\cD[\|\tilde U_{h} - \TT_{h}^{\pi,\theta}\tilde U_{h + 1}\|^2]\le 4\beta T^{-1}$, where $\EE_\cD$ is the expectation taken with respect to the data generating distribution.


    \begin{proof}
        See \Cref{sec:proof-leader-bellman-loss} for a detailed proof.
    \end{proof}
\end{lemma}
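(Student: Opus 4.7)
The plan is to follow the standard template for least-squares value-function learning under realizability and completeness (in the spirit of Xie et al., 2021 and the cited Bellman-consistent pessimism literature), but carefully adapted so that the estimated response model $\theta$ and the leader's policy $\pi$ enter the Bellman operator $\TT_h^{\pi,\theta}$, and the guarantee is uniform over $(\pi,\theta)\in\Pi\times\Theta$ as well as over $U_{h+1}\in\cU$. The key algebraic identity, standard for least-squares regression, is that for each fixed triple $(U_{h+1},\theta,\pi)$ the target $y_h^i = u_h^i +  T_{h+1}^{\pi,\theta} U_{h+1}(s_{h+1}^i)$ satisfies $\EE_\cD[y_h^i \given s_h^i, a_h^i, b_h^i] = (\TT_h^{\pi,\theta} U_{h+1})(s_h^i,a_h^i,b_h^i)$, by the compliance of the data and the definition of $\TT_h^{\pi,\theta}$. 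Hence for any $U_h'\in\cU$,
\begin{align*}
\ell_h(U_h',U_{h+1},\theta,\pi) - \ell_h(\TT_h^{\pi,\theta} U_{h+1},U_{h+1},\theta,\pi)
= \sum_{i=1}^T \!\Bigl(\bigl(U_h' - \TT_h^{\pi,\theta}U_{h+1}\bigr)^2(s_h^i,a_h^i,b_h^i) + 2\,X_h^i(U_h')\Bigr),
\end{align*}
where $X_h^i(U_h') = (U_h'-\TT_h^{\pi,\theta}U_{h+1})(s_h^i,a_h^i,b_h^i)\cdot(\TT_h^{\pi,\theta}U_{h+1} - y_h^i)(s_h^i,a_h^i,b_h^i)$ is a zero-mean noise term bounded by $O(H^2)$. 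This is the central identity that I will exploit.

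Next I would establish a uniform concentration inequality: with probability at least $1-\delta$, for every $h\in[H]$, every $U_h'\in\cU$, and every $(U_{h+1},\pi_{h+1},\theta_{h+1})$ in a minimal $T^{-1}$-net of $\cY_h$ (recall $\cY_h = \Theta_{h+1}\times\Pi_{h+1}\times\cU^2$ with distance \eqref{eq:rho-cY}), one has
\begin{align*}
\Bigl|\sum_{i=1}^T X_h^i(U_h')\Bigr|
\;\lesssim\; \sqrt{\log\!\bigl(H\cN_\rho(\cY,T^{-1})/\delta\bigr)\cdot H^2\!\sum_{i=1}^T\bigl(U_h'-\TT_h^{\pi,\theta}U_{h+1}\bigr)^2(s_h^i,a_h^i,b_h^i)} + H^2\log\!\bigl(H\cN_\rho(\cY,T^{-1})/\delta\bigr),
\end{align*}
via a Freedman/Bernstein-type bound on the independent-trajectory sum (each summand is mean zero conditional on $(s_h^i,a_h^i,b_h^i)$ and has conditional variance dominated by the squared Bellman residual), combined with a union bound over the covering net. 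The step from a net-level bound to a bound uniform over $\cY_h$ uses \Cref{prop:error-prop} to absorb the discretization error of $U_{h+1}$, $\pi_{h+1}$, $\theta_{h+1}$ into the $O(H^2+H)$ constant appearing in the prescribed $\beta$.

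With this concentration in hand, the two claims follow by now-standard manipulations. For validity, specialize $U_h' = U_h^{\pi,\theta}$ using realizability: completeness gives $U_h^{\pi,\theta} = \TT_h^{\pi,\theta}U_{h+1}^{\pi,\theta}$, so the squared-Bellman-residual term vanishes, the concentration yields $\ell_h(U_h^{\pi,\theta},U_{h+1}^{\pi,\theta},\theta,\pi) - \inf_{U_h'\in\cU}\ell_h(U_h',U_{h+1}^{\pi,\theta},\theta,\pi) \le \beta$, so $U^{\pi,\theta}\in\CI_\cU^{\pi,\theta}(\beta)$. For accuracy, for any $\tilde U\in\CI_\cU^{\pi,\theta}(\beta)$, use completeness to produce $\bar U_h \defeq \TT_h^{\pi,\theta}\tilde U_{h+1}\in\cU$, rewrite
$\ell_h(\tilde U_h,\tilde U_{h+1},\theta,\pi)-\ell_h(\bar U_h,\tilde U_{h+1},\theta,\pi)\le \beta$,
plug in the identity above, and apply the concentration inequality to convert the empirical squared Bellman residual $T^{-1}\sum_i(\tilde U_h - \TT_h^{\pi,\theta}\tilde U_{h+1})^2(s_h^i,a_h^i,b_h^i)$ into its population analogue $\EE_\cD[(\tilde U_h - \TT_h^{\pi,\theta}\tilde U_{h+1})^2]$. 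A quadratic-in-$\sqrt{\cdot}$ inequality then yields the claimed $4\beta T^{-1}$ bound.

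The main obstacle will be the uniform concentration in the second step: the target $y_h^i$ depends on the triple $(U_{h+1},\pi_{h+1},\theta_{h+1})$ that we need to range over, which couples the regressand with the function class being covered. I would handle this by covering $\cU\times\Pi_{h+1}\times\Theta_{h+1}$ jointly via $\cY_h$, using \Cref{prop:error-prop} to control how perturbations in $\theta_{h+1}$ and $\pi_{h+1}$ propagate through $\nu^{\pi,\theta}_{h+1}$ and hence through $T_{h+1}^{\pi,\theta}U_{h+1}$; this is exactly the reason the distance \eqref{eq:rho-cY} is built to dominate all three of $\|U_h-\tilde U_h\|_\infty$, $\|U_{h+1}-\tilde U_{h+1}\|_\infty$, and $\|(\pi_{h+1}\otimes\nu^{\pi,\theta}_{h+1}-\tilde\pi_{h+1}\otimes\nu^{\tilde\pi,\tilde\theta}_{h+1})(\cdot,\cdot\given s_{h+1})\|_1$. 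The numerical constants $110H^2$, $45H^2+60H$ in the hypothesis on $\beta$ should emerge from this discretization-plus-Bernstein bookkeeping.
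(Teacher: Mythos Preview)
Your proposal is correct and follows essentially the same approach as the paper. The paper's proof also centers on the loss difference $Z_h^i=\ell_h(U_h',U_{h+1},\theta,\pi)-\ell_h(\TT_h^{\pi,\theta}U_{h+1},U_{h+1},\theta,\pi)$, uses the same conditional-mean identity $\EE_\cD[y_h^i\mid s_h^i,a_h^i,b_h^i]=(\TT_h^{\pi,\theta}U_{h+1})(s_h^i,a_h^i,b_h^i)$, covers the joint class $\cY_h$ with the distance \eqref{eq:rho-cY}, and then reads off validity and accuracy from the resulting two-sided concentration; the only organizational difference is that the paper applies Bernstein directly to $Z_h^i$ (using the self-bounding relation $\Var[Z_h^i]\le 49H^2\,\EE[Z_h^i]$) to obtain a one-shot bound of the form $|\sum_i Z_h^i - T\EE[Z_h^i]|\le \tfrac{T}{2}\EE[Z_h^i]+\epsilon_S$, rather than splitting $Z_h^i$ into the squared residual plus your cross term $2X_h^i$ and concentrating these separately.
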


       

\paragraph{Learning Leader's Value Function in Online Setting.}
In this subsection, we provide gurantee for online learning the leader's value function. The analysis in this subsection maily follows \citet{jin2021bellman}.
Recall the online Bellman loss given by \eqref{eq:online-MG-bellman loss} in \Cref{sec:myopic-online}, 
\begin{align*}
    &\ell_h^t(U_h', U_{h+1}, \theta_{h+1}) \nend
    &\quad = \sum_{i=1}^{t-1}  \rbr{U_h'(s_h^i, a_h^i, b_h^i) - u_h^i -  \max_{\pi_{h+1}\in\sA}\inp[\Big]{U_{h+1}(s_{h+1}^i, \cdot, \cdot)}{\pi_{h+1}\otimes \nu_{h+1}^{\pi, \theta}(\cdot, \cdot\given s_{h+1}^i)}}^2.
\end{align*}
We define confidence set for each $\theta\in\Theta, t\in[T]$ as 
\begin{align*}
    \CI_\cU^{t, \theta}(\beta) = \cbr{U\in\cU: \ell_h^t(U_h, U_{h+1}, \theta_{h+1}, \pi) - \inf_{U'\in\cU_h} \ell_h^t(U', U_{h+1}, \theta_{h+1}, \pi)\le \beta, \forall h\in[H]}.
\end{align*}
In the sequel, we define $U^{*, \theta}$ as the optimal value function if the follower's true response model is $\theta$. 
Specifically, $U^{*, \theta}$ satisfies
\begin{align*}
    U_h^{*, \theta}(s_h, a_h, b_h) = u_h(s_h, a_h, b_h) + \bigrbr{\bigrbr{P_h\circ  T_{h+1}^{*, \theta}} U_{h+1}^{*, \theta}} (s_h, a_h, b_h), 
\end{align*}
where $ T_h^{*,\theta}:\cF(\cS\times\cA\times\cB)\rightarrow \cF(\cS)$ is the policy optimization operator defined as
\begin{align*}
     T_{h}^{*,\theta} f(s_h) = \max_{\pi_h\in\sA} \dotp{f(s_h,\cdot,\cdot)}{\pi_h\otimes \nu^{\pi, \theta}(\cdot, \cdot\given s_h)}.
\end{align*}
With respect to $ T^{*,\theta}$, we define the optimistic Bellman operator for the leader 
$\TT_h^{*,\theta}:\sF(\cS\times\cA\times\cB)\rightarrow \sF(\cS\times\cA\times\cB)$ as 
\begin{align*}
    \big(\TT_h^{*,\theta} f \bigr) 
    (s_h, a_h, b_h) = u_h(s_h, a_h, b_h) + \EE_{s_{h+1}\sim P_h(\cdot\given s_h, a_h, b_h)} \bigsbr{\bigrbr{ T_{h+1}^{*,\theta}f}(s_{h+1})}.
\end{align*}
We have the following guarantee on the confidence set.
\begin{lemma}[\textrm{Online guarantee for the confidence set of the leader's value function}]\label{lem:CI-U-online}
    For the online setting and the function class $\cU$ that satisfies the realizability and the completeness assumption given by \Cref{thm:Online-MG}. We consider a joint function class $\cZ_h=\cU^2\times\Theta_{h+1}$ and denote by $\cN_\rho(\cZ_{h}, \epsilon)$ the covering number of the smallest $\epsilon$-covering net for $\cZ_h$ with respect to this distance $\rho$ defined in \eqref{eq:rho-cZ}.
    By selecting $\beta \ge \epsilon_S=c H^2 \allowbreak \log(HT\cN_\rho(\cZ, T^{-1})\delta^{-1}) + (45 H^2 + 60 B_u)$ for some universal constant $c$ where the covering number is defined by \eqref{eq:cN-cZ}, for any $t\in[T], h\in[H], \theta\in\Theta$, we have with probability at least $1-\delta$: (i) $U^{*, \theta}\in \CI_\cU^{t,\theta}(\beta)$; (ii) for any $\tilde U\in\CI_\cU^{t,\theta}(\beta)$, $\sum_{i=1}^{t-1}\EE^{\pi^i}[\orbr{\orbr{\tilde U_{h} - \TT_{h}^{*,\theta}\tilde U_{h + 1}}(s_h, a_h, b_h)}^2]\le 4\beta$ and $\sum_{i=1}^{t-1} \orbr{\orbr{\tilde U_{h} - \TT_{h}^{*, \theta}\tilde U_{h + 1}}(s_h^i, a_h^i, b_h^i)}^2 \le 4 \beta$.
    \begin{proof}
        See \Cref{sec:proof-CI-U-online} for a detailed proof.
    \end{proof}
\end{lemma}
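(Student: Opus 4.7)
The plan is to mirror the offline argument of \Cref{lem:leader-bellman-loss} but replace the i.i.d.\ concentration step by a martingale-based concentration that accommodates the adaptively collected online data, and to account for the fact that the Bellman target now involves the \emph{optimistic} operator $\TT_h^{*,\theta}$ rather than the policy-evaluation operator $\TT_h^{\pi,\theta}$.

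First, to prove validity (i), I would invoke Condition~\ref{cond:real-comp}: realizability ensures there exists $U\in\cU$ with $U_h = U_h^{*,\theta}$ for every $h$, and completeness ensures that for any candidate $U_{h+1}\in\cU$ the one-step optimistic backup $\TT_h^{*,\theta} U_{h+1}$ lies in $\cU$. Plugging $U_h' = U_h^{*,\theta}$ into $\ell_h^t$ and comparing against the minimizer, the difference reduces to a sum of cross-terms $2\sum_{i=1}^{t-1} \xi_h^i \cdot (U_h^{*,\theta} - \hat U_h')(s_h^i,a_h^i,b_h^i)$ where $\xi_h^i = u_h^i + (T_{h+1}^{*,\theta} U_{h+1})(s_{h+1}^i) - \TT_h^{*,\theta}U_{h+1}(s_h^i,a_h^i,b_h^i)$ is a martingale difference bounded by $2H$. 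A Freedman-type bound controls this uniformly, with the $\beta$ absorbing the constant offset.

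For accuracy (ii), I would follow the decoupling trick used in \citet{jin2021bellman}: for any $\tilde U\in \cU^{\otimes H}$ and the completeness-provided $\TT_h^{*,\theta}\tilde U_{h+1}\in\cU$,
\begin{align*}
\ell_h^t(\tilde U_h, \tilde U_{h+1}, \theta_{h+1}) - \ell_h^t(\TT_h^{*,\theta}\tilde U_{h+1}, \tilde U_{h+1}, \theta_{h+1})
&= \sum_{i=1}^{t-1}\bigl((\tilde U_h - \TT_h^{*,\theta}\tilde U_{h+1})(s_h^i,a_h^i,b_h^i)\bigr)^2 \\
&\qquad - 2\sum_{i=1}^{t-1}\xi_h^i\cdot (\tilde U_h - \TT_h^{*,\theta}\tilde U_{h+1})(s_h^i,a_h^i,b_h^i).
\end{align*}
The left-hand side is bounded by $H^2\beta$ on the event $\tilde U\in\CI_\cU^{t,\theta}(\beta)$, so once the martingale cross-term is shown (with high probability and uniformly in the arguments) to be at most half of the squared-error sum plus an $\cO(\beta)$ offset, we recover both the empirical bound $\sum_i ((\tilde U_h-\TT_h^{*,\theta}\tilde U_{h+1})(s_h^i,a_h^i,b_h^i))^2\le 4\beta$ and, via a separate Freedman/Bernstein step comparing empirical squared loss with its conditional expectation, the in-expectation bound $\sum_i\EE^{\pi^i}[(\cdot)^2]\le 4\beta$.

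The nontrivial step, and the one deserving most care, is making the two martingale concentrations \emph{uniform} over $(\tilde U_h, \tilde U_{h+1}, \theta_{h+1})$ and over $t\in[T]$. This is exactly why the covering is taken on the joint class $\cZ_h = \cU^2\times\Theta_{h+1}$ under the distance $\rho$ in \eqref{eq:rho-cZ}: a perturbation $(\tilde U, \tilde\theta)\mapsto (\tilde U^\epsilon,\tilde\theta^\epsilon)$ with $\rho$-distance $\le T^{-1}$ changes both $\tilde U_h(s,a,b)$ and $T_{h+1}^{*,\theta}\tilde U_{h+1}(s)$ by $\le T^{-1}$, by the Lipschitz estimates of \Cref{prop:error-prop} applied inside the optimistic backup. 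I would union-bound Freedman's inequality over the smallest $T^{-1}$-net of $\cZ_h$ and over $(t,h)\in[T]\times[H]$, yielding a deviation term of order $H^2\log(HT\cN_\rho(\cZ,T^{-1})\delta^{-1})$; the additive $45H^2+60B_u$ term in the definition of $\beta$ absorbs the $T^{-1}$-discretization errors times $T$, and $\epsilon_S=\beta$ ensures both (i) and (ii) hold simultaneously. The main obstacle is getting the Lipschitz dependence through the $\max_{\pi_{h+1}}$ inside $T_{h+1}^{*,\theta}$ without a loss that depends on $|\sA|$, which is handled cleanly once one notes that the optimal prescription changes continuously with $(\tilde U_{h+1},\theta_{h+1})$ and the bound is obtained by evaluating at the optimum of one side, as in the derivation of $\cN_\rho(\cZ_h,\epsilon)$ after \eqref{eq:rho-cZ}.
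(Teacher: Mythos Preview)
Your proposal is correct and follows essentially the same route as the paper. The only cosmetic difference is that the paper packages the martingale as the single random variable $Z_h^i=\ell_h(U_h,\cdot)-\ell_h(\TT_h^{*,\theta}U_{h+1},\cdot)$, shows $\EE^i[Z_h^i]=\EE^i[(U_h-\TT_h^{*,\theta}U_{h+1})^2]$ and $\Var^i[Z_h^i]\le 49B_U^2\,\EE^i[Z_h^i]$, and then applies Freedman directly to $Z_h^i-\EE^i[Z_h^i]$ (with two filtration choices to get the empirical and in-expectation versions), whereas you expand $Z_h^i$ into the squared Bellman residual minus the cross term $2\xi_h^i(\tilde U_h-\TT_h^{*,\theta}\tilde U_{h+1})$ and apply Freedman to the cross term; both lead to the same self-bounding inequality and the identical covering/union-bound step over $\cZ_h$.
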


\subsection{Putting Everything Together:  Bounding Leader's Suboptimality}\label{sec:app-connection}

Here, we give a summary of the results presented in this section and show how the results in the previous parts are connected with each other for obtaining a guarantee of the suboptimality.

\vspace{5pt} 
{\noindent \bf Controlling Leader's Bellman Error.}
In \Cref{sec:app-subopt-decompose}, we study the suboptimality decomposition for the leader. From \Cref{lem:subopt-decomposition}, we learn that the suboptimality comprises two major terms, namely the leader's Bellman error and the follower's response error. The leader's Bellman error is simply given by 
\begin{align*}
    \text{Leader's Bellman error} = \sum_{h=1}^H \EE\sbr{\bigrbr{\tilde U_h - u_h}(s_h, a_h, b_h)-  T_{h+1}^{\pi,\tilde\nu} \tilde U_{h+1}(s_{h+1})} = \sum_{h=1}^H \EE\sbr{\tilde U_h - \TT_h^{\pi,\tilde\theta} \tilde U_{h+1}}, 
\end{align*}
where a list of definitions for $\TT_h$ and $ T_h$ can be found in \Cref{sec:app-notations}. For the offline setting with independent collected data, we use the guarantee from \Cref{lem:leader-bellman-loss} that $\EE_\cD[\|\tilde U_{h} - \TT_{h}^{\pi,\theta} \tilde U_{h + 1}\|^2]\le 4\beta T^{-1}$ if $\tilde U$ if properly chosen from the confidence set $\CI_\cU^{\pi,\theta}(\beta)$. 
For the online setting, we employ  \Cref{lem:CI-U-online} to show that $\sum_{i=1}^{t-1}\EE^{\pi^i}[\|\tilde U_{h} - \TT_{h}^{\theta}\tilde U_{h + 1}\|^2]\le 4\beta$ if $\tilde U$ is properly chosen such that  $\tilde U\in\CI_\cU^{t,\theta}(\beta)$. 
Moreover, in the online setting, $\pi$ is just the optimistic policy and $\TT_h^{\pi,\theta} = \TT_h^{\theta}$. Hence, the leader's Bellman error and the value function guarantee matches and we can control the leader's Bellman error by a distribution shift argument leveraging concentrability coefficients in the offline setting or via the eluder dimension of a proper function class that captures the complexity of this Bellman error in the online setting.

\vspace{5pt} 
{\noindent \bf Controlling Myopic Follower's Quantal Response Error.}
We first look at the easier setting where we aim to control a myopic follower's quantal response error, which is given by the TV distance betweem $\nu$ and $\tilde \nu$.
Recall from \Cref{cor:response-diff-myopic} that for a given state $s\in\cS$,
\begin{align*}
    \EE D_\TV\rbr{\nu(\cdot\given s), \tilde\nu(\cdot\given s)}
        &\le  \eta  \EE\sbr{\abr{(\tilde r^\pi(s, b) - r^\pi(s, b)) - \EE\bigsbr{\tilde r^\pi(s, b) - r^\pi(s, b)}}} \nend
        &\qquad + C^{(3)}\EE\sbr{\rbr{\rbr{\tilde r^\pi(s, b) -r^\pi(s, b)} - \EE\sbr{\tilde r^\pi(s, b) -r^\pi(s, b)}}^2}.
\end{align*}
If we look at the guarantee of MLE in \eqref{eq:MLE_guarantee_Q} of \Cref{lem:MLE-formal} for the myopic case, we directly have 
\begin{align*}
    \sum_{i=1}^{t-1} \EE^{\pi^i, M^*}{\Var_{s_h}^{\pi^i, \theta^*} \bigsbr{r^{\pi^i, \theta}(s, b) - r^{\pi^i, \theta^*}(s, b)}} \le  2 C_\eta^2 \beta, 
\end{align*}
for both the online and the offline cases, which gives control to both the first order and the second order terms in the TV distance upper bound.

\vspace{5pt} 
{\noindent \bf Controlling Farsighted Follower's Quantal Response Error.}
The last part is a more challenging case for a farsighted follower. Using the result in \Cref{lem:performance diff}, 
\begin{align*}
    \text{Quantal response error} \le C^{(0)}
    \sum_{h=1}^H \underbrace{\EE\sbr{\abr{\tilde \Delta^{(1)}_h(s_h, b_h)}}}_{\ds\text{1st-order error}}  + C^{(2)}
    \max_{h\in [H]} \underbrace{\EE\sbr{ \rbr{\tilde Q_h - r_h^\pi - \gamma P_h^\pi \tilde V_{h+1}}^2}}_{\ds\text{2nd-order error}}, 
\end{align*}
with $\tilde\Delta^{(1)}$ given by 
\begin{align*}
    \tilde \Delta^{(1)}_h(s_h, b_h) &=  \rbr{\EE_{s_h, b_h} -\EE_{s_h}}\Biggsbr{\sum_{l=h}^H \gamma^{l-h}\underbrace{\rbr{\tilde Q_l - r_l^\pi - \gamma P_l^\pi \tilde V_{l+1}}(s_l, b_l)}_{\ds\text{Follower's Bellman error}}}, 
\end{align*}
it is not straightforward to see how to bound these two terms by guarantee of the MLE in \Cref{lem:MLE-formal}. 
Fortunately,  we have the following two lemmas that  bound  the first-order error and the second-order error  separately.
\begin{lemma}[Bounding First-Order Error]\label{lem:1st-ub}
    For any $\pi\in\Pi$ and $(\tilde U, \tilde W, \tilde Q, \tilde V, \tilde A, \tilde \nu)$ satisfying the conditions in \Cref{lem:subopt-decomposition}, we have for all $h\in[H]$ that
    \begin{align*}
        \EE\sbr{\abr{\tilde \Delta^{(1)}_h(s_h, b_h)}} &\le   L^{(1)} \cdot
        \max_{h\in[H]} \EE \bigl [  D_\H(\nu_h(\cdot\given s_h),\tilde\nu_h(\cdot\given s_h)) \bigr ] , 
    \end{align*}
    where $L^{(1)} = 6(\eta^{-1}+2 B_A)\cdot {\eff_H\rbr{\gamma}}$ and  $\eff_H(\gamma) = \orbr{1-\gamma^H}/\orbr{1-\gamma}$ is the effective foresight of the follower. For the second order, we have
    \begin{align}\label{eq:1st-ub-2}
        &\bigrbr{\tilde \Delta_h^{(1)}(s_h, b_h)}^2  \nend
        &\quad \le 2 \rbr{\rbr{\EE_{s_h, b_h}-\EE_{s_h}} \bigsbr{\orbr{Q_h - \tilde Q_h}(s_h, b_h)}}^2 \\
        &\qqquad + 16 \gamma^2  \rbr{\eta^{-1} +2 B_A}^2\eff_H(\gamma) \sum_{l=h+1}^H \gamma^{l-h-1} {\rbr{\EE_{s_h}+\EE_{s_h, b_h}}\sbr{D_\H^2(\nu_l(\cdot\given s_l), \tilde\nu_l(\cdot\given s_l))}}. \notag 
    \end{align}
        \begin{proof}
            See \Cref{sec:proof-1st-ub} for a detailed proof.
        \end{proof}
    \end{lemma}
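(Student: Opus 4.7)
\textbf{Proof proposal for \Cref{lem:1st-ub}.} The plan is to reduce the per-step Bellman residual $\tilde Q_l - r_l^\pi - \gamma P_l^\pi \tilde V_{l+1}$ to pure differences of $Q$ and $V$ functions by invoking the true follower's Bellman equation $Q_l = r_l^\pi + \gamma P_l^\pi V_{l+1}$, yielding
\begin{align*}
  \tilde Q_l - r_l^\pi - \gamma P_l^\pi \tilde V_{l+1} = \Delta Q_l - \gamma P_l^\pi \Delta V_{l+1},
\end{align*}
where $\Delta Q_l = \tilde Q_l - Q_l$ and $\Delta V_l = \tilde V_l - V_l$. Then I would take $\EE_{s_h, b_h}[\cdot]$ and $\EE_{s_h}[\cdot]$ of the discounted sum, use the tower property to rewrite $\EE[(P_l^\pi \Delta V_{l+1})(s_l, b_l)] = \EE[\Delta V_{l+1}(s_{l+1})]$, and then reindex. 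The terms telescope into a simple form; in particular,
\begin{align*}
  \tilde \Delta_h^{(1)}(s_h, b_h) = \bigl[\Delta Q_h(s_h, b_h) - \langle \nu_h, \Delta Q_h\rangle(s_h)\bigr] + \sum_{l=h+1}^H \gamma^{l-h}\,(\EE_{s_h, b_h} - \EE_{s_h})[\Delta Q_l(s_l, b_l) - \Delta V_l(s_l)].
\end{align*}
The first bracket is exactly $-(\EE_{s_h, b_h} - \EE_{s_h})[(Q_h - \tilde Q_h)(s_h, b_h)]$, which is what appears in \eqref{eq:1st-ub-2}.

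Next I would exploit the identity $\Delta Q_l - \Delta V_l = \Delta A_l = \eta^{-1}\log(\tilde\nu_l/\nu_l)$, so that conditional on $s_l$ and averaging $b_l \sim \nu_l(\cdot \given s_l)$ (the true follower dynamics),
\begin{align*}
  \EE[\Delta Q_l(s_l, b_l) - \Delta V_l(s_l) \given s_l] = \langle \nu_l, \Delta A_l\rangle(s_l) = -\eta^{-1}\mathrm{KL}(\nu_l \| \tilde\nu_l)(s_l).
\end{align*}
Analogously, the first bracket becomes $\eta^{-1}\bigl[\log(\tilde\nu_h/\nu_h)(b_h) + \mathrm{KL}(\nu_h \| \tilde\nu_h)(s_h)\bigr]$. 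Both expressions are scale-invariant to adding constants to $\tilde Q$ (as expected, since the quantal response is), and they give the right mechanism for relating the first-order error to the Hellinger distance between $\nu_l$ and $\tilde\nu_l$.

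For the first inequality \eqref{eq:taylor-myopic}, I take absolute values and apply the key analytic bound (valid for distributions with bounded log-ratio $\|\log(\tilde\nu/\nu)\|_\infty \le 2\eta B_A$):
\begin{align*}
  \EE_\nu\bigl|\log(\tilde\nu/\nu) + \mathrm{KL}(\nu\|\tilde\nu)\bigr| \le 2\,\mathrm{KL}(\nu\|\tilde\nu) + 2\,D_\TV(\nu,\tilde\nu) \le 2(2\eta B_A + 1)\,D_\TV(\nu, \tilde\nu),
\end{align*}
which is obtained via the identity $|a| = -a + 2a^+$ combined with $\EE_\nu[\log^+(\tilde\nu/\nu)] \le D_\TV(\nu, \tilde\nu)$ (using $\log x \le x - 1$) and $\mathrm{KL}(\nu\|\tilde\nu) \le \|\log\nu/\tilde\nu\|_\infty \cdot D_\TV$. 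Converting $D_\TV \le \sqrt{2}\,D_\H$, the first bracket contributes $2\sqrt{2}(\eta^{-1} + 2B_A)\,\EE[D_\H]$, and each $l \ge h+1$ term yields $4\sqrt{2}\,B_A\,\EE[D_\H(\nu_l, \tilde\nu_l)]$. Summing the geometric series $\sum_{l=h+1}^H \gamma^{l-h}$ against $\eff_H(\gamma)$ and using $1 + \eff_H(\gamma) \le 2\eff_H(\gamma)$ consolidates everything into $6(\eta^{-1} + 2B_A)\eff_H(\gamma) \cdot \max_h \EE[D_\H]$, which is exactly $L^{(1)}$.

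For the second inequality \eqref{eq:1st-ub-2}, I would square the decomposition $\tilde\Delta^{(1)}_h = A + B$ using $(A+B)^2 \le 2A^2 + 2B^2$, with $A$ matching the first term on the right-hand side verbatim. For $B = \gamma \sum_{l=h+1}^H \gamma^{l-h-1} T_l$ with $T_l = -\eta^{-1}(\EE_{s_h, b_h} - \EE_{s_h})[\mathrm{KL}(\nu_l\|\tilde\nu_l)(s_l)]$, I apply Cauchy--Schwarz to pull out $\gamma^2 \eff_H(\gamma)$ and estimate $T_l^2 \le 2\eta^{-2}[(\EE_{s_h, b_h} \mathrm{KL})^2 + (\EE_{s_h} \mathrm{KL})^2]$. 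Using $\mathrm{KL} \le 2\eta B_A \,D_\TV$ and $D_\TV^2 \le 2 D_\H^2$ (with Jensen for $(\EE\cdot)^2 \le \EE(\cdot)^2$) produces $T_l^2 \le 16 B_A^2 (\EE_{s_h, b_h} + \EE_{s_h}) D_\H^2(\nu_l, \tilde\nu_l)$, which, with $64 B_A^2 \le 16(\eta^{-1}+2B_A)^2$, matches the claimed constant. The main obstacle I anticipate is keeping the constant free of the exponential factor $e^{O(\eta B_A)}$ --- this is precisely why the non-trivial identity $\EE_\nu|\log(\tilde\nu/\nu) + \mathrm{KL}| \le 2\mathrm{KL} + 2D_\TV$ (with its $O(1)$ constant) is essential, and naive bounds via $\chi^2$ or likelihood-ratio truncation would fail to reproduce the prefactor $L^{(1)}$ stated in the lemma.
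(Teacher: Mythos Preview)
Your proposal is essentially the paper's argument, reorganized. The telescoping you carry out is exactly what the paper packages as the ``A-function decomposition'' (\Cref{lem:AQV-func diff}): it yields the identity
\[
\tilde\Delta_h^{(1)}(s_h,b_h)=-(\EE_{s_h,b_h}-\EE_{s_h})\bigl[(Q_h-\tilde Q_h)(s_h,b_h)\bigr]-\gamma\eta^{-1}(\EE_{s_h,b_h}-\EE_{s_h})\bigl[\Delta_{h+1}^{(2)}(s_{h+1})\bigr],
\]
with $\Delta_{h+1}^{(2)}$ the discounted sum of future KL terms. From here both routes are the same in structure: take absolute values (or square), split the future-KL term, and convert each piece to a TV/Hellinger distance. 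The only substantive difference is \emph{how} the per-step conversion is done. The paper invokes the prepared lower bound $D_\TV(\nu_l,\tilde\nu_l)\ge \tfrac{\eta}{2(1+2\eta B_A)}\langle\nu_l,|A_l-\tilde A_l|\rangle$ from \Cref{lem:response diff}, which directly gives $\langle\nu_l,|A_l-\tilde A_l|\rangle\le 2(\eta^{-1}+2B_A)D_\TV$ and hence the factor $6(\eta^{-1}+2B_A)\eff_H(\gamma)$ after summing. You instead bound $\EE_\nu|\log(\tilde\nu/\nu)+\mathrm{KL}|$ and $\mathrm{KL}$ by elementary inequalities. Both reach the same conclusion; your route is a touch more self-contained, the paper's is cleaner once \Cref{lem:response diff} is in hand.

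One point to tighten: the step ``$\mathrm{KL}(\nu\|\tilde\nu)\le \|\log(\nu/\tilde\nu)\|_\infty\cdot D_\TV$'' is asserted without proof. The immediate argument you sketch (write $\mathrm{KL}\le\langle\nu-\tilde\nu,\log(\nu/\tilde\nu)\rangle$ and bound by $\|\cdot\|_\infty\cdot\|\nu-\tilde\nu\|_1$) only gives $2\|\log(\nu/\tilde\nu)\|_\infty D_\TV$, since $\|\nu-\tilde\nu\|_1=2D_\TV$. This slack is harmless for the lemma---with the factor of $2$ you land at roughly $4\sqrt2(\eta^{-1}+2B_A)\eff_H(\gamma)$ in Hellinger distance, which still sits comfortably below the stated $L^{(1)}=6(\eta^{-1}+2B_A)\eff_H(\gamma)$---but the intermediate inequality as written should either be proved or relaxed by a factor of $2$. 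The squared bound \eqref{eq:1st-ub-2} goes through exactly as you describe and matches the paper's computation.
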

An important observation is that the first order term is bounded only by the follower's quantal response distance without invoking any transition model error, even for farsighted follower. This is because that the first order term only captures parts of the follower's response error. We next bound the second order term.
\begin{lemma}[Bounding Second-Order Error] \label{lem:2nd-ub}
    For any $\pi\in\Pi$ and $(\tilde U, \tilde W, \tilde Q, \tilde V, \tilde A, \tilde \nu)$ satisfying the conditions in \Cref{lem:subopt-decomposition}, we additionally assume $\tilde Q_h(s_h, b_h) = \tilde r_h^\pi(s_h, b_h) + (\tilde P_h^{\pi} \tilde V_{h+1})(s_h, b_h)$ for estimated reward $\tilde r$ and transition kernel $\tilde P$. Suppose that the follower's reward at each state satisfies a linear constraint $\dotp{x}{r_h(s_h, a_h, \cdot)} = \varsigma$ at every $(s_h,a_h)\in\cS\times\cA$ for some $x:\cB\rightarrow \RR$ such that $\inp{\ind}{x}\neq 0$ and $\varsigma\in\RR$. Define ratio $\kappa = \nbr{x}_\infty/|\inp{x}{\ind}|$. Then we have that
    \begin{align*}
        &\max_{h\in[H]}\EE\sbr{ \rbr{\rbr{\tilde Q_h - r_h^\pi - \gamma P_h^\pi \tilde V_{h+1}}(s_h, b_h)}^2} \nend
        &\quad \le L^{(2)} \max_{h\in[H]}\cbr{\EE D_\H^2(\nu_h(\cdot\given s_h),\tilde\nu_h(\cdot\given s_h))+\EE D_\TV^2(P_h^\pi(\cdot\given s_h, b_h),\tilde P_h^\pi(\cdot\given s_h, b_h))}, 
    \end{align*}
    where $L^{(2)} = c H^2 \eff_H(c_2)^2 \kappa^2 \exp\rbr{8\eta B_A} (\eta^{-1}+B_A)^2$ for some absolute constant $c>0$, and 
    $$c_2 = \gamma \rbr{2\exp\rbr{2\eta B_A}+\kappa\exp\rbr{4\eta B_A} }.$$
    \begin{proof}
        See \Cref{sec:proof-2nd-ub} for a detailed proof.
    \end{proof}
\end{lemma}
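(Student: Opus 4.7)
The plan is to bound the square of the follower's Bellman residual $E_h(s,b) := (\tilde Q_h - r_h^\pi - \gamma P_h^\pi \tilde V_{h+1})(s,b)$ by writing it in a form that exposes the follower's advantage error, the level mismatch $\tilde V_h - V_h$, and the propagation of the value error from step $h+1$, and then running a backward induction in $h$ that uses the linear constraint on the rewards to pin down the absolute level.

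First I would use the two Bellman identities $\tilde Q_h = \tilde r_h^\pi + \gamma \tilde P_h^\pi \tilde V_{h+1}$ (by assumption) and $Q_h = r_h^\pi + \gamma P_h^\pi V_{h+1}$ (true), together with the advantage decomposition $A_h = Q_h - V_h$ and $\tilde A_h = \tilde Q_h - \tilde V_h$, to rewrite
\[
E_h(s,b) \;=\; (\tilde A_h - A_h)(s,b) \;+\; (\tilde V_h - V_h)(s) \;-\; \gamma\, P_h^\pi(\tilde V_{h+1} - V_{h+1})(s,b).
\]
The first summand is controlled in mean-square under $(\pi,\nu^\pi)$ by the Hellinger distance $D_\H(\tilde\nu_h(\cdot\given s), \nu_h(\cdot\given s))$ through the defining relation $A_h = \eta^{-1}\log\nu_h^\pi(\cdot\given s) + \eta^{-1}\log Z_h(s)$ and a Taylor-type expansion around $\tilde\nu_h = \nu_h$, paying factors of $C_\eta = \eta^{-1} + B_A$ and $\exp(2\eta B_A)$, by the same linearization idea that underlies \Cref{cor:response-diff-myopic} and \Cref{lem:MLE-formal}. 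The third summand is exactly the step-$(h+1)$ level error pushed forward by $P_h^\pi$, which will be handled by the inductive hypothesis. So the entire analysis reduces to controlling the uniform level error $\|\tilde V_h - V_h\|_\infty$, which only depends on $s$.

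To pin down $\tilde V_h - V_h$, I would apply the linear functional $b \mapsto \la x, \cdot\ra_\cB / \la x,\mathbf{1}\ra_\cB$ to the identity above. Expressing $E_h$ the other way as $(\tilde r_h^\pi - r_h^\pi) + \gamma(\tilde P_h^\pi - P_h^\pi)\tilde V_{h+1}$ and equating yields
\[
(\tilde V_h - V_h)(s) = \frac{1}{\la x,\mathbf{1}\ra}\bigl[\la x, \tilde r_h^\pi - r_h^\pi\ra + \gamma\la x, (\tilde P_h^\pi - P_h^\pi)\tilde V_{h+1}\ra - \la x, \tilde A_h - A_h\ra + \gamma\la x, P_h^\pi(\tilde V_{h+1} - V_{h+1})\ra\bigr].
\]
Since the reward constraint $\la x, r_h(s,a,\cdot)\ra_\cB = \varsigma$ holds for both $r$ and $\tilde r$, the first term is controlled by $\kappa \|\tilde r_h - r_h\|_\infty$ with $\kappa = \|x\|_\infty / |\la x,\mathbf{1}\ra|$; using the first Bellman identity again to re-expand $\tilde r_h - r_h$ in terms of $\tilde Q_h - Q_h$ (and hence $(\tilde A_h - A_h) + (\tilde V_h - V_h)$) and the transition error, one obtains a backward recursion for $\|\tilde V_h - V_h\|_\infty^2$ with contraction-type factor $c_2 = \gamma(2\exp(2\eta B_A) + \kappa\exp(4\eta B_A))$ and forcing terms of Hellinger-squared of quantal responses and TV-squared of transitions. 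Unrolling the recursion from $h = H$ contributes the factor $\eff_H(c_2)^2\,\kappa^2$ and accumulates the $\exp(8\eta B_A)\,C_\eta^2$ prefactor that matches the claimed $L^{(2)}$, and substituting back into the decomposition of $E_h$ and taking expectations under $(\pi,\nu^\pi,P)$ gives the advertised bound.

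The main obstacle will be the level-pinning step: although the reward constraint forces $\la x, \tilde r_h - r_h\ra_\cB = 0$ pointwise in $(s,a)$, the composed object $\la x, \tilde r_h^\pi - r_h^\pi\ra_\cB$ is \emph{not} identically zero because the prescription $\pi_h(a\given s,b)$ genuinely depends on $b$, so one must instead bound it by $\kappa\,\|\tilde r_h - r_h\|_\infty$ and feed this back through the Bellman recursion. Tracking how this ratio $\kappa$ and the exponential amplification $\exp(2\eta B_A)$ per step compound through $H$ steps without picking up spurious powers of $|\cB|$ is the delicate accounting that ultimately produces the constant $L^{(2)} = cH^2\eff_H(c_2)^2\kappa^2\exp(8\eta B_A)(\eta^{-1}+B_A)^2$.
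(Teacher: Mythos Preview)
Your high-level architecture---decompose $E_h = (\tilde A_h - A_h) + (\tilde V_h - V_h) - \gamma P_h^\pi(\tilde V_{h+1} - V_{h+1})$ and control the advantage piece by the Hellinger distance---matches the paper. The difference is in how the level error $\tilde V_h - V_h$ is handled. The paper does not integrate the identity against $x$ to solve for $\tilde V_h - V_h$; instead it uses the soft-max Lipschitz bound $|V_h - \tilde V_h| \le \exp(2\eta B_A)\,\EE_{s_h}|Q_h - \tilde Q_h|$ and runs the recursion on the \emph{conditional $L^1$ norm} $\EE_{s_h}|Q_h - \tilde Q_h|$. The linear constraint enters only once, via a separate identification lemma: an ``up-to-shift'' bound $\inf_\xi \EE_{s_h}|r_h^\pi - \tilde r_h^\pi - \xi| \le \varepsilon$ together with $\la x, r_h^\pi - \tilde r_h^\pi\ra = 0$ yields $\EE_{s_h}|r_h^\pi - \tilde r_h^\pi| \le (1 + \exp(2\eta B_A)\kappa)\,\varepsilon$. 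Because the recursion variable is an expectation under $\nu_h(\cdot\given s_h)$, the forcing terms are the state-$s_l$ quantities $D_\TV(\nu_l,\tilde\nu_l)$ and $\EE_{s_l}D_\TV(P_l^\pi,\tilde P_l^\pi)$, and squaring plus averaging over the trajectory at the end produces exactly the $\EE D_\H^2 + \EE D_\TV^2$ in the statement.

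Your mechanism for pinning down $\tilde V_h - V_h$ has a genuine gap at precisely the obstacle you flag. You propose to control $|\la x, \tilde r_h^\pi - r_h^\pi\ra|$ by $\kappa\,\|\tilde r_h - r_h\|_\infty$ and feed $\|\tilde r_h - r_h\|_\infty$ back through a sup-norm recursion on $\|\tilde V_h - V_h\|_\infty$. But (i) that bound does not actually use the constraint---it is just H\"older, and H\"older gives $\|x\|_1/|\la x,\ind\ra|$, not $\kappa = \|x\|_\infty/|\la x,\ind\ra|$, so you would pick up the $|\cB|$ factor you wanted to avoid; and (ii) more fundamentally, the sup-norm quantities $\|\tilde r_h - r_h\|_\infty$, $\|\tilde A_h - A_h\|_\infty$, and $\sup_{s,b} D_\TV(P_h^\pi,\tilde P_h^\pi)$ that appear when you unroll your recursion are not controllable by the \emph{expected} Hellinger and TV distances in the lemma. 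Integrating against the fixed weight $x$ rather than against $\nu_h$ means the recursion does not propagate along the Markov chain, so even a correctly accounted sup-norm recursion could only terminate at pointwise (sup-over-state) Hellinger/TV bounds, not at the $\EE^{\pi,\nu^\pi}$-averaged ones. The fix is the paper's: keep the recursion in $\EE_{s_h}|\cdot|$ and invoke the constraint once, through the identification lemma, to lift the up-to-shift bound to an absolute one.
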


In the farsighted follower case, we will intensively turn to these two lemmas to control the first- and second-order terms both online and offline. Specifically, the first term in \eqref{eq:1st-ub-2} can be controlled by \eqref{eq:MLE-guarantee-Q-3} in \Cref{lem:MLE-formal}, while the second term is just the Hellinger distance, which can be controlled by our guarantee in \eqref{eq:MLE-guarantee-hellinger-1}. The argument for \Cref{lem:2nd-ub} is quite the same, while both the Hellinger distance of the quantal response and the TV distance of the transition kernel is controllable, as we will see in \Cref{lem:MLE}.











\section{Proofs for Offline Myopic Case}
We present proof for \Cref{sec:offline-myopic} in this part. 
In all these proofs, $\lesssim $ only hides universal constants.


\subsection{Proof of \Cref{thm:Offline-MG}}\label{sec:proof-offline-MG}
We give a proof for offline learning the best leader's policy with myopic follower and general function class.
The proof will be carried out in three steps. In the first step, we check the validity and the accuracy of the confidence built by our algorithm. 
In the second step, we decompose the suboptimality of the leader's total reward and relate the suboptimality to the leader's Bellman error and the follower's response error under the optimal policy $\pi^*$. 
In the last step, we relate the suboptimality to the guarantee of the confidence set via a distribution shift argument, which gives us the offline learning guarantee.

\paragraph{Step 1. Validity and Accuracy of the Confidence Set.}
We first recall the confidence set constructed in \eqref{eq:myopic-offline-general-confset},
\begin{align}
    &\CI_{\cU, \Theta}^\pi(\beta) \nend
    &\quad= \cbr{
    (U,\theta)\in\cU^{\otimes H}\times\Theta:
    \rbr{ \ds
        \cL_h(\theta_h)-\inf_{\theta'\in\Theta_h}\cL_h(\theta') \le \beta 
    \atop \ds
        \ell_h(U_h, U_{h+1}, \theta_{h+1}, \pi) - \inf_{U'\in\cU_h} \ell_h(U', U_{h+1}, \theta_{h+1}, \pi)\le H^2\beta}, 
    \forall h\in[H]}. \label{eq:myopic-offline-general-confset-1}
\end{align}
which contains constraints for both the follower's reward parameter $\theta$ and the leader's value function $U$.

To see that this confidence set is valid, 
we first look at the follower's side by invoking \Cref{lem:MLE-indep-data}. To do so, we need to check the conditions of \Cref{lem:MLE-formal} as the following: 
\begin{itemize}
    \item[(i)] the data compliance condition $\PP^{\pi^t}(b_h^t\given s_h^t, (s_j^t, a_j^t, b_j^t, u_j^t)_{j\in[h-1]}, \tau^{1:t-1})  = \nu_h^{\pi^t}(b_h^t\given s_h^t), \forall h\in[H], t\in[T]$ is satisfied for independently collected trajectories; 
    \item[(ii)] $\beta \ge  2\log(e^3H\cdot \cN_\rho(\Theta, T^{-1})/\delta)$ holds by our condition on $\beta$, where $\cN_\rho(\Theta,\epsilon)= \max_{h\in[H]} \cN_\rho(\Theta_h,\epsilon)$ and the distance $\rho$ is specified by \eqref{eq:rho-Theta}.
    \item[(iii)] The confidence set for $\theta$ in \eqref{eq:myopic-offline-general-confset-1} is exactly the same as \eqref{eq:behavior_model_confset-1}. 
\end{itemize}
Therefore, \Cref{lem:MLE-indep-data} combined with \Cref{rmk:MLE-formal-myopic} says it holds with probability at least $1-2\delta$ that (i) $\theta^*\in\CI_{\cU,\Theta}^\pi(\beta)$; (ii) for any $\theta\in\CI_{\cU,\Theta}^\pi(\beta)$ and $h\in[H]$ that
\begin{align*}
    \sum_{i=1}^T\EE_\cD\sbr{D_\H^2\rbr{\nu_h^{\pi^i, \theta}(\cdot\given s_h^i), \nu_h^{\pi^i, \theta^*}(\cdot\given s_h^i)}}  \lesssim\beta.
\end{align*}
Now, using the relationship between the Hellinger distance and the variance of the Q function given by \eqref{eq:MLE_guarantee_Q} in \Cref{lem:MLE-formal}, we conclude that for all $h\in[H], \theta\in\CI_{\cU, \Theta}^\pi(\beta)$, $\pi\in\Pi$,  
\begin{align}
    \sum_{i=1}^{T} \EE_\cD {\Var_{s_h}^{\pi^i, \theta^*} \bigsbr{r_h^{\pi^i, \theta}(s_h, b_h) - r_h^{\pi^i, \theta^*}(s_h, b_h)}} \lesssim (\eta^{-1}+ B_A)^{2} \beta,\label{eq:OffMG-variance-r-MLE-guarantee}
\end{align}
where we plug in $Q_h^{\pi, \theta}=r_h^{\pi,\theta}$ for myopic follower.
Therefore, we justify that $\CI_{\cU,\Theta}^\pi(\beta)$ is a valid confidence set for $\theta$ with probability at least $1-2\delta$.

We next look at the leader's side.
For the leader's value function, we invoke \Cref{lem:leader-bellman-loss} which says that under the realizability and completeness assumptions,  if $$H^2\beta \ge 
{110 H^2\cdot\log(H \cN_\rho(\cY, T^{-1})\delta^{-1}) }  + (45 H^2 + 60 H )$$ for the joint class $\cY_h=\cU^2\times\Pi_{h+1}\times\Theta_{h+1}$ where $\cN_\rho(\cdot)$ is defined in \eqref{eq:rho-cY}, we have with probability at least $1-\delta$ that for any $\theta\in\CI_{\cU,\Theta}^\pi(\beta)$, $\pi\in\Pi$ and $h\in[H]$: (i) $U^{\pi,\theta}\in\CI_{\cU,\Theta}^\pi(\beta)$; (ii) $\EE_\cD[( U_{h} - \TT_{h}^{\pi,\theta}U_{h + 1})^2]\le 4H^2 \beta T^{-1}$ for any $U\in\CI_{\cU,\Theta}^\pi(\beta)$. Note that $\EE_\cD$ takes expecation with respect to the data generating distribution.

In particular, the condition $H^2\beta \ge 
{110 H^2\cdot\log(H \cN_\rho(\cY, T^{-1})\delta^{-1}) }  + (45 H^2 + 60 H )$ is automatically satisfied by our choice of $\beta$.
Therefore, we conclude that with probabiility at least $1-3\delta$, we have that: 
(i) (validity) $(\theta^*, U^{\theta^*, \pi})\in\CI_{\cU,\Theta}^\pi(\beta)$ for all $\pi\in\Pi$; 
(ii) (accuracy) $\EE_\cD[(U_{h} - \TT_{h}^{\pi,\theta} U_{h + 1})^2]\lesssim H^2\beta T^{-1}$ and 
\eqref{eq:OffMG-variance-r-MLE-guarantee} for all $(U, \theta)\in\CI_{\cU,\Theta}^\pi(\beta), \pi\in\Pi, h\in [H]$. The following part is based on the sucess of the confidence set $\CI_{\cU, \Theta}^\pi(\beta)$.

\paragraph{Step 2. Suboptimality Decomposition.}
Recall the pessimistic policy optimization in \eqref{eq:offline-MG-pi^hat},
\begin{align}
    (\hat\pi, \hat U, \hat \theta)=\argmax_{\pi\in\Pi} \argmin_{(U, \theta)\in\CI_{\cU,\Theta}^\pi(\beta)} 
    \underbrace{\EE_{s_1\sim\rho_0} \sbr{\inp[\big]{U_1(s_1, \cdot, \cdot)}{\pi_1\otimes\nu_1^{\pi, \theta}(\cdot,\cdot\given s_1)}}}_{\ds J(\pi, U, \theta)}.\label{eq:offline-MG-pi^hat-1}
\end{align}
Define $(\tilde U, \tilde \theta) = \argmin_{(U, \theta)\in\CI_{\cU,\Theta}^{\pi^*}(\beta)} J(\pi^*, U, \theta)$ as the pessimistic estimators of $U$ and $\theta$ under $\pi^*$, respectively.
We can decompose the suboptimality as 
\begin{align*}
    \subopt(\hat\pi) &= J(\pi^*)-J(\hat\pi) \nend
    & = J(\pi^*)- J(\pi^*, \tilde U, \tilde \theta) + \underbrace{J(\pi^*, \tilde U, \tilde \theta) - J(\hat\pi, \hat U, \hat \theta)}_{\dr  (i)} +  \underbrace{J(\hat\pi, \hat U, \hat \theta) - J(\hat\pi)}_{\dr (ii)} \nend
    &\le J(\pi^*)- \EE_{s_1\sim\rho_0} \sbr{\inp[\big]{\tilde U_1(s_1, \cdot, \cdot)}{\pi_1^*\otimes\nu_1^{\pi^*, \tilde\theta}(\cdot,\cdot\given s_1)}_{\cA \times \cB }}
\end{align*}
where the inequality holds by noting that $(\textrm{i})\le 0$ and $(\textrm{ii})\le 0$.
Here, $(\textrm{i})\le 0$ simply by policy optimization in \eqref{eq:offline-MG-pi^hat-1}.
For $(\textrm{ii})$, we note that
$J(\hat\pi) = J(\hat\pi, U^{\hat\pi,\theta^*}, \theta^*)\ge J(\hat\pi, \hat U,\hat \theta)$ following from the validity of the confidence set $\CI_{\cU,\Theta}^\pi(\theta)$ that $(U^{\hat\pi, \theta^*}, \theta^*)\in \CI_{\cU,\Theta}^{\hat\pi}(\theta)$ and also that $\hat U, \hat\theta$ is the minimizer to $J(\hat\pi, \cdot, \cdot)$.
The remaining term is just the estimation error with respect to $\pi^*$. For simplicity, we let $\tilde\nu(b_h\given s_h)=\nu^{\pi^*, \tilde \theta}(b_h\given s_h)$ and $\tilde r^{\pi^*}(s_h,b_h) = r^{\pi^*,\tilde \theta}(s_h, b_h)$. We now invoke \Cref{lem:subopt-decomposition} on the suboptimality decomposition (applied with $\pi^*, \tilde U, \tilde \nu$) and \Cref{cor:response-diff-myopic} on the response model error, which gives us,
\begin{align}
    \subopt(\hat\pi)
    &\le {{\sum_{h=1}^H \EE\abr{\bigrbr{\tilde U_h - u_h}(s_h, a_h, b_h)-  T_{h+1}^{{\pi^*},\tilde\nu} \tilde U_{h+1}(s_{h+1})}}}
    + {\sum_{h=1}^H  H \EE \nbr{\rbr{\tilde \nu_h-\nu_h}(\cdot\given s_h)}_1}\nend
    &\le {\sum_{h=1}^H \underbrace{\EE\abr{\bigrbr{\tilde U_h - u_h}(s_h, a_h, b_h)-  T_{h+1}^{{\pi^*},\tilde\nu} \tilde U_{h+1}(s_{h+1})}}_{\dr (I_1)}}\nend
    &\qquad + \sum_{h=1}^H  2H \eta  \underbrace{\EE\sbr{\abr{(\tilde r_h^{\pi^*}(s_h, b_h) - r_h^{\pi^*}(s_h, b_h)) - \EE\bigsbr{\tilde r_h^{\pi^*}(s_h, b_h) - r_h^{\pi^*}(s_h, b_h)}}}}_{\dr  (I_2)} \label{eq:OffMG-subopt}\\
    &\qquad + \sum_{h=1}^H  2H C^{(3)} \underbrace{\EE\sbr{\rbr{\rbr{\tilde r_h^{\pi^*}(s_h, b_h) -r_h^{\pi^*}(s_h, b_h)} - \EE\sbr{\tilde r_h^{\pi^*}(s_h, b_h) -r_h^{\pi^*}(s_h, b_h)}}^2}}_{\dr (I_3)},   \notag
\end{align}
where the expectation is taken with respect to $\pi^*$ and the true model. 
\paragraph{Step 3. Handling the Distribution Shift.} 
For term $(\dr{I_1})$, we have by the accuracy result on the leader's side that
\begin{align}
    (\dr{I_1}) 
    &\le \sqrt{\EE\sbr{\rbr{\rbr{\tilde U_h -  \TT_h^{\pi^*,\tilde\theta} \tilde U_{h+1}} (s_h, a_h, b_h)}^2}}\nend
    & \lesssim \sqrt{H^2\beta T^{-1}} \cdot \frac{\sqrt{{\bignbr{{\tilde U_h  -  \TT_h^{\pi^*,\tilde\theta} \tilde U_{h+1}} }_{2, d^{\pi^*}}^2}}}{\sqrt{\|\tilde U_{h} - \TT_{h}^{\pi^*,\tilde\theta} \tilde U_{h + 1}\|_{2, \cD}^2}}\nend
    &\le \sqrt{H^2\beta T^{-1}} \cdot \max_{U\in\cU,\theta\in\Theta}
    \sqrt{\frac{{{\bignbr{{ U_h  -  \TT_h^{\pi^*,\theta}  U_{h+1}} }_{2, d^{\pi^*}}^2}}}{{{\bignbr{ U_{h} - \TT_{h}^{\pi^*,\theta}  U_{h + 1}}_{2,\cD}^2}}}}. \label{eq:OffMG-I1}
\end{align}
where we define $\nbr{\cdot}_{2, d^{\pi^*}}^2 = \EE^{\pi^*}\sbr{(\cdot)^2}$ and define $\nbr{\cdot}_{2, \cD}^2 = \EE_\cD\sbr{(\cdot)^2}$. Here, $d^{\pi^*}$ is the distribution of the trajectory generated under $\pi^*$, and $\EE_\cD$ is taken with respect to the data generating distribution, where the policy $\pi^i$ is also random. 
Here, the first inequality holds by noting that the leader's Bellman operator satisifies $\TT_h^{\pi^*,\tilde\theta} = u_h + P_h \circ T_{h+1}^{\pi^*, \tilde\nu} $ by definition and using the Cauchy-Schwartz inequality, and the first inequality holds by using $\EE_\cD[\|\tilde U_{h} - \TT_{h}^{\pi^*,\tilde \theta} \tilde U_{h + 1}\|^2]\lesssim H^2\beta T^{-1}$ for $(\tilde \theta, \tilde U)$ in the confidence set $\CI_{\cU,\Theta}^{\pi^*}(\beta)$. 
The last inequality holds by further taking a supreme over all possible $(\tilde U, \tilde\theta)$.
We remind the readers that the expectation in the numerator is taken with respect to $\pi^*$ and the true model. The supremum taken over both $\cU$ and $\Theta$ is reasonable because we treat $(\theta,U)$ as the model and need small Bellman error ensured from each choice of $(\theta, U)$.

For term $\dr (I_2)$ and $\dr (I_3)$, we first note that using the Cauchy Schwartz inequality, we have that $\dr (I_2)\le \sqrt{(\dr I_3)}$. Therefore, we just need to control $(\dr I_3)$. Then, we have by the accuracy of the confidence set on the follower's side that
\begin{align}
    {\dr (I_3)}
    &= \EE\sbr{\rbr{\rbr{\tilde r_h^{\pi^*}(s_h, b_h) -r_h^{\pi^*}(s_h, b_h)} - \EE\sbr{\tilde r_h^{\pi^*}(s_h, b_h) -r_h^{\pi^*}(s_h, b_h)}}^2} \nend
    &\lesssim {(\eta^{-1}+ B_A)^{2} \beta T^{-1}} \cdot {
        \frac{\EE\Var_{s_h}^{\pi^*,\theta^*}\sbr{\rbr{\tilde r_h^{\pi^*} - r_h^{\pi^*}}(s_h, b_h)}} 
        {\EE_\cD \Var_{s_h}^{\pi^i, \theta^*} \sbr{\rbr{\tilde r_h^{\pi^i} - r_h^{\pi^i}}(s_h, b_h)}}
    }\nend
    &\le {(\eta^{-1}+ B_A)^{2} \beta T^{-1}} \cdot \max_{\theta\in\Theta} {
        \frac{\EE\Var_{s_h}^{\pi^*,\theta^*}\sbr{\rbr{r_h^{\pi^*, \theta} - r_h^{\pi^*, \theta^*}}(s_h, b_h)}} 
        {\EE_\cD \Var_{s_h}^{\pi^i, \theta^*} \sbr{\rbr{r_h^{\pi^i, \theta} - r_h^{\pi^i, \theta^*}}(s_h, b_h)}}
    }. \label{eq:OffMG-I3}
\end{align}
where the first inequality holds by plugging in $\tilde \theta$ into the superscript of the follower's reward function in \eqref{eq:OffMG-variance-r-MLE-guarantee}, which still ensures the guarantee in \eqref{eq:OffMG-variance-r-MLE-guarantee} because $\tilde \theta\in\CI_{\cU,\Theta}^{\pi^*}(\beta)$, and the last inequality holds by taking a supremum over $\tilde\theta\in\Theta$. We remind the readers that $\Var_{s_h}^{\pi^*, \theta^*}$ is taken with respect to $\nu_h^{\pi^*, \theta^*}(\cdot\given s_h)$.
For the follower's side, recall the linear operator $\Upsilon_h^\pi:\cF(\cS\times\cA\times\cB)\rightarrow \cF(\cS\times\cB)$ defined as 
\begin{align*}
    \rbr{\Upsilon_h^\pi f}(s_h, b_h) = \dotp{\pi_h(\cdot\given s_h, b_h)}{f(s_h, \cdot, b_h)} - \dotp{\pi_h\otimes \nu_h^{\pi}(\cdot,\cdot\given s_h)}{f(s_h,\cdot,\cdot)}. 
\end{align*}
Here, $(\Upsilon_h ^{\pi} f ) (s_h,b_h) $ quantifies  how far $b_h$ is from being the quantal response of $\pi$ at state $s_h$, measured in terms of  $f$. One can also think of $(\Upsilon_h^\pi f)(s_h, b_h)$ as the \say{advantage} of the reward induced by action $b_h$ compared to the reward induced by the quantal response. 
We remark that the distribution shift in \eqref{eq:OffMG-I3} can be equivalently written as  
\begin{align*}
    \max_{\theta\in\Theta} 
        \frac{\EE\Var_{s_h}^{\pi^*,\theta^*}\sbr{\rbr{r_h^{\pi^*, \theta} - r_h^{\pi^*, \theta^*}}(s_h, b_h)}} 
        {\EE_\cD \Var_{s_h}^{\pi^i, \theta^*} \sbr{\rbr{r_h^{\pi^i, \theta} - r_h^{\pi^i, \theta^*}}(s_h, b_h)}} 
        &= \max_{\theta\in\Theta} \frac{\EE \sbr{\rbr{\rbr{\Upsilon_h^{\pi^*} (r_h^\theta - r_h^{\theta^*})}(s_h, b_h)}^2}}{\EE_\cD \sbr{\rbr{\rbr{\Upsilon_h^{\pi^i} (r_h^\theta - r_h^{\theta^*})}(s_h, b_h)}^2}}\nend
        & = \max_{\theta\in\Theta} \frac{{\bignbr{\Upsilon_h^{\pi^*} (r_h^\theta - r_h^{\theta^*})}_{2, d^{\pi^*}}^2}}{\bignbr{{\Upsilon_h^{\pi^i} (r_h^\theta - r_h^{\theta^*})}}_{2,\cD}^2}
\end{align*}


Now, plugging the results in \eqref{eq:OffMG-I1} and \eqref{eq:OffMG-I3} back into the suboptimality decomposition \eqref{eq:OffMG-subopt}, we have our result that
\begin{align*}
\subopt(\hat\pi) 
&\lesssim \max_{U\in\cU,\theta\in\Theta, h\in[H]}
    \sqrt{\frac{{{\bignbr{{ U_h  -  \TT_h^{\pi^*,\theta}  U_{h+1}} }_{2, d^{\pi^*}}^2}}}{{{\bignbr{ U_{h} - \TT_{h}^{\pi^*,\theta}  U_{h + 1}}_{2,\cD}^2}}}} 
\cdot H^2\sqrt{\beta T^{-1}}  \nend
&\qquad + \max_{\theta\in\Theta, h\in[H]}\sqrt{ \frac{{\bignbr{\Upsilon_h^{\pi^*} (r_h^\theta - r_h^{\theta^*})}_{2, d^{\pi^*}}^2}}{\bignbr{{\Upsilon_h^{\pi^i} (r_h^\theta - r_h^{\theta^*})}}_{2,\cD}^2}} \cdot  H^2 \sqrt{(1+\eta B_A)^2\beta T^{-1}}\nend
&\qquad  +   \max_{\theta\in\Theta, h\in[H]} {
    \frac{{\bignbr{\Upsilon_h^{\pi^*} (r_h^\theta - r_h^{\theta^*})}_{2, d^{\pi^*}}^2}}{\bignbr{{\Upsilon_h^{\pi^i} (r_h^\theta - r_h^{\theta^*})}}_{2,\cD}^2}
}\cdot  H^2 C^{(3)}{(\eta^{-1}+ B_A)^{2} \beta T^{-1}}.
\end{align*}
Moreover, note that 
\begin{align*}
    C^{(3)} (\eta^{-1}+B_A)^2 &\le \frac 1 2 \eta^2 \exp(2\eta B_A)(2+\eta B_A\exp(2\eta B_A)) (\eta^{-1}+B_A)^2\nend
    &\le \exp(4\eta B_A) (1+\eta B_A)^3\nend
    &\le \exp(4\eta B_A) (\eta C_\eta)^3,
\end{align*}
where $C_\eta = \eta^{-1}+B_A$. 
Therefore, we complete the proof of \Cref{thm:Offline-MG}.

\subsection{Proof of \Cref{thm:PMLE-VI-myopic}}\label{sec:proof-PMLE-VI-myopic} 
In this section, we give finite sample guarantee for \Cref{alg:PMLE}. 
During this proof, we let $\hat M = \{\hat U_h, \hat W_h\}_{h\in[H]}$ be the estimated value functions given by \Cref{alg:PMLE} and $M^{\pi} = \{U_h^{\pi}, W_h^{\pi}\}_{h\in[H]}$ be the value functions under policy $\pi$ and the true model. We also let $\hat\pi$ be the estimated policy.
We first summarize the three schemes used in \Cref{alg:PMLE}.
\begin{align}
    &\textbf{S1:}\quad  \hat\pi_h(s_h)  = \argmax_{\pi_h \in\sA} \min_{\theta_h\in\confset_{h, \Theta}(\beta)} \inp[\big]{\hat U_h(s_h, \cdot,\cdot)}{\pi_h \otimes\nu_h^{\pi_h , \theta_h}(\cdot,\cdot\given s_h)}_{\cA\times\cB},\label{eq:scheme-1-new}\\
    &\textbf{S2:}\quad  \hat\pi_h(s_h) = \argmax_{\pi_h \in\sA, \atop \theta_h\in\confset_{h, \Theta}(\beta)} \inp[\big]{\hat U_h(s_h, \cdot,\cdot)}{\pi_h \otimes\nu^{\pi_h , \theta_h}(\cdot,\cdot\given s_h)}_{\cA\times\cB} - \Gamma^{(2)}_h(s_h;\pi_h , \theta_h),\label{eq:scheme-2-new}\\
    & \textbf{S3:}\quad  \hat\pi_h(s_h) = \argmax_{\pi_h \in\sA}  \inp[\big]{\hat U_h(s_h, \cdot,\cdot)}{\pi_h \otimes\nu^{\pi_h , \hat\theta_{h,\MLE}}(\cdot,\cdot\given s_h)}_{\cA\times\cB} - \Gamma^{(2)}_h(s_h;\pi_h , \hat\theta_{h, \MLE}) .\label{eq:scheme-3-new}
\end{align}
Moreover, $\hat W_h(s_h)$ is the optimal value in these three schemes.
\paragraph{Step 1. Uncertainty Quantifications. }
We first invoke the following lemma that characterizes the uncertainty on the leader's side. 
\begin{lemma}[\textit{$\Gamma^{(1)}_h$ as a  $\delta$-uncertainty quantifier, Lemma 5.2 in \citet{jin2021pessimism}}] \label{lem:Gamma_1-pessi}
     Under the data compliance condition $\PP_\cD(u_h^i=u, s_{h+1}^i=s\given \tau^{i-1}, \{s_{h'}^i, \alpha_{h'}^i, a_{h'}^i, b_{h'}^i\}_{h'\in[h]}) =\PP(u_h=u, s_{h+1}=s\given s_h=s_h^i, a_h=a_h^i, b_h=b_h^i)$, in \Cref{alg:PMLE}, by setting $\Gamma^{(1)}_h(\cdot,\cdot,\cdot) \ge C_1 d H \sqrt{\log(2d H T/\delta)}\cdot \sqrt{\phi_h(\cdot,\cdot,\cdot)^\top \Lambda_h^{-1}\phi_h(\cdot,\cdot,\cdot)}$ for some universal constant $C_1>0$, we have with probability at least $1-\delta$ that the following holds,
\begin{align}
    \abr{\rbr{P_h\hat W_{h+1}  + u_h - \phi_h^\top \hat\omega_h}(s_h, a_h, b_h)} \le \Gamma^{(1)}_h(\cdot,\cdot,\cdot), \quad\forall (s_h,a_h,b_h)\in\cS\times\cA\times\cB, h\in[H].\label{eq:OffML-Bellman-error-guarantee}
\end{align}
\begin{proof}
    See \citet[Lemma 5.2]{jin2021pessimism} for a detailed proof.
\end{proof}
\end{lemma}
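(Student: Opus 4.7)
The plan is to mirror the standard ridge-regression uncertainty argument for linear MDPs (as in Jin et al., 2020; Jin et al., 2021), but applied to our leader-side Bellman backup under the linear Markov game in Definition~\ref{def:linear MDP}. The key observation is that, although $\hat W_{h+1}$ depends on the trajectory history, the population Bellman backup $u_h + P_h \hat W_{h+1}$ is linear in $\phi_h$: by Definition~\ref{def:linear MDP} we have $u_h(s,a,b) = \inp{\phi_h(s,a,b)}{\vartheta^*_h}$ and $P_h(\cdot\given s,a,b) = \inp{\phi_h(s,a,b)}{\varpi^*_h(\cdot)}$, so there exists $\omega_h^\star \in \RR^d$ (depending on $\hat W_{h+1}$) with $(u_h + P_h \hat W_{h+1})(s,a,b) = \inp{\phi_h(s,a,b)}{\omega_h^\star}$, and $\|\omega_h^\star\|_2 \lesssim \sqrt{d}\,H$.

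First, I would rewrite the ridge estimator's error as a familiar two-term decomposition:
\[
\phi_h(s,a,b)^\top(\hat\omega_h - \omega_h^\star) = \underbrace{\phi_h^\top \Lambda_h^{-1}\sum_{i=1}^T \phi_h^i\bigl(u_h^i + \hat W_{h+1}(s_{h+1}^i) - \inp{\phi_h^i}{\omega_h^\star}\bigr)}_{\text{stochastic error}} - \underbrace{\phi_h^\top \Lambda_h^{-1}\omega_h^\star}_{\text{regularization bias}}.
\]
The regularization bias is controlled by Cauchy–Schwarz, giving $\le \|\phi_h\|_{\Lambda_h^{-1}}\cdot \|\omega_h^\star\|_2 \lesssim \sqrt{d}H\,\|\phi_h\|_{\Lambda_h^{-1}}$. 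For the stochastic term, I would again apply Cauchy–Schwarz in the $\Lambda_h^{-1}$-norm to reduce it to a self-normalized quantity $\|\sum_i \phi_h^i \epsilon_h^i\|_{\Lambda_h^{-1}}$, where $\epsilon_h^i := u_h^i + \hat W_{h+1}(s_{h+1}^i) - \inp{\phi_h^i}{\omega_h^\star}$.

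The main obstacle is that $\hat W_{h+1}$ itself depends on the entire dataset, so $\{\epsilon_h^i\}$ is not directly a martingale-difference sequence. The standard remedy—and what I would do here—is a uniform covering argument over the class $\cV$ of possible $\hat W_{h+1}$ functions induced by the algorithm. Each such $W$ has the explicit parametric form $\min\{H, \max\{0, \phi^\top \omega - \Gamma^{(1)}\}\}$ where $\Gamma^{(1)}$ is a bonus of the form $\beta\sqrt{\phi^\top \Lambda^{-1}\phi}$ with bounded $\omega$ and $\Lambda\succeq I$. Using $\epsilon$-nets in $\omega$ and in the inverse covariance, one shows $\log \cN_\epsilon(\cV) \lesssim d^2\log(1+T/\epsilon)$. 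Applying the Abbasi-Yadkori self-normalized Hoeffding inequality (Lemma~D.4 of Jin et al.~2020) to each fixed $W$ in the net under the data compliance condition (which makes $u_h^i + W(s_{h+1}^i) - (u_h+P_hW)(s_h^i,a_h^i,b_h^i)$ a true martingale difference bounded by $H+1$) and taking a union bound over the net and over $h\in[H]$ yields
\[
\bignbr{\sum_{i=1}^T \phi_h^i\bigl(u_h^i + \hat W_{h+1}(s_{h+1}^i) - \inp{\phi_h^i}{\omega_h^\star}\bigr)}_{\Lambda_h^{-1}} \;\lesssim\; dH\sqrt{\log(dHT/\delta)},
\]
after absorbing the covering discretization into the constant by choosing $\epsilon = 1/T$.

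Finally, combining the stochastic bound with the regularization bias and applying Cauchy–Schwarz one more time in $\|\phi_h\|_{\Lambda_h^{-1}}$ gives
\[
\bigabr{(P_h\hat W_{h+1}+u_h - \phi_h^\top\hat\omega_h)(s_h,a_h,b_h)} \;\lesssim\; dH\sqrt{\log(2dHT/\delta)}\cdot \sqrt{\phi_h^\top \Lambda_h^{-1}\phi_h},
\]
which is exactly the required pointwise bound once $C_1$ is chosen large enough. The argument is uniform in $(s_h,a_h,b_h)$ because the right-hand side already depends only on $\phi_h(s_h,a_h,b_h)$, and the union bound over $h\in[H]$ gives a high-probability statement with failure probability $\delta$.
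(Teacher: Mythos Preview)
Your proposal is correct and is precisely the argument of Lemma~5.2 in \citet{jin2021pessimism}, which the paper simply cites without reproducing (remarking only that the sole change is that $\phi_h$ now takes the follower's action as an extra argument). One small caveat: the parametric form you write for the covering class---truncated $\phi^\top\omega$ minus a bonus---is the form of $\hat U_{h+1}$ here, not of $\hat W_{h+1}$; in \Cref{alg:PMLE} the regression target is $\hat W_{h+1}$, which is the optimal value of S1/S2/S3 and therefore layers an additional policy/quantal-response optimization (and, for S2/S3, the penalty $\Gamma^{(2)}_{h+1}$) on top of $\hat U_{h+1}$. The covering argument still goes through because the map $\hat U_{h+1}\mapsto \hat W_{h+1}$ is $1$-Lipschitz in sup-norm (it is an average of $\hat U_{h+1}$ against a probability measure, then a max/min) and the remaining data-dependent ingredients contribute only $O(d^2)$ real parameters, so the $O(d^2\log T)$ log-covering and the resulting $dH$ rate are unchanged---which is presumably what the paper means by ``nothing much is changed.''
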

Here, \Cref{lem:Gamma_1-pessi} is only slightly different from Lemma 5.2 in \citet{jin2021pessimism} in terms of the feature map $\phi$ since we additionally have a follower's action. However, nothing much is changed, and we omit the proof here.
For the follower, we invoke \Cref{eq:bandit-ub-1} in \Cref{cor:formal-MLE confset-linear myopic}, and obtain 
\begin{align}
     \max\cbr{\bignbr{\hat\theta_h-\theta_h^*}_{\Sigma_{h, \cD}^{\theta_h^*}}^2, \bignbr{\hat\theta_h-\theta_h^*}_{\Sigma_{h, \cD}^{\hat\theta_h}}^2, 
    \EE^{\pi^i}\bignbr{\hat\theta_h-\theta_h^*}_{\Sigma_{h, \cD}^{\theta_h^*}}^2, 
    \EE^{\pi^i}\bignbr{\hat\theta_h-\theta_h^*}_{\Sigma_{h, \cD}^{\hat\theta_h}}^2 } \le 8C_{\eta}^2 \beta T^{-1}, \label{eq:OffML-MLE-guarantee}
\end{align}
for any $\hat\theta_h\in \confset_{h,\Theta}(\beta)=\{\theta_h\in\Theta:\cL_{h,\cD}(\theta_h)\ge \min_{\theta_h'\in\Theta_h}\cL_{h,\cD}(\theta_h') + \beta\}$. 
Note that \eqref{eq:OffML-MLE-guarantee} holds because $\beta\ge C d\log(H(1+\eta T^2 + (1+\eta)T)\delta^{-1})$ for some universal constant $C>0$, which satisfies the condition in \Cref{cor:formal-MLE confset-linear myopic}. 

It holds straightforward that the confidence set is valid and accurate. In the sequel, for ease of presentation, we take $\CI_{\Theta}(\beta)=\cbr{\{\theta_h\}_{h\in[H]}: \theta_h\in\cC_{h,\Theta}(\beta), \forall h\in[H]}$, which is just a combination of these $H$ independent confidence sets, and all the results still hold for $\CI_\Theta(\beta)$. In the sequel, if we say $\hat\theta_h\in\CI_\Theta(\beta)$, we actually mean $\hat\theta_h\in\CI_{h,\Theta}(\beta)$. The following part is based on the success of \eqref{eq:OffML-Bellman-error-guarantee} and \eqref{eq:OffML-MLE-guarantee}.

\paragraph{Step 2. Suboptimality Decompostion via Pessimism.}
We define $\hat J=\EE_{s_1\sim\rho_0}[\hat W_1(s_1)]$ where $\hat W_1$ is the estimated state value function given by \Cref{alg:PMLE}. By a standard regret decomposition, we have for the estimated policy $\hat\pi$,
\begin{align}
    \subopt(\hat\pi) = J(\pi^*) - J(\hat\pi) = {J(\pi^*) - \EE[\hat W_1(s_1)]} + \underbrace{\EE[\hat W_1(s_1)] - J(\hat\pi)}_{\dr (i)}.\label{eq:suboopt-1}
\end{align}
We invoke \Cref{lem:subopt-decomposition}
for $\hat\pi, \hat U, \hat W, \hat\nu$, where $\hat\nu=\nu^{\hat\pi, \hat\theta}$, 
and expand term (i) of the suboptimality in \eqref{eq:suboopt-1} into
\begin{align}
    \dr{(i)} &= \EE^{\hat\pi}[\hat W_1(s_1)] - J(\hat\pi) \nend
    & = \underbrace{{\sum_{h=1}^H \EE^{\hat\pi}\sbr{\bigrbr{\hat U_h - u_h}(s_h, a_h, b_h)- \hat W_{h+1}(s_{h+1})}}}_{\dr \text{Leader's Bellman error}} + \underbrace{\sum_{h=1}^H \EE^{\hat\pi}\sbr{\hat W_h(s_h) -  T_{h}^{\hat\pi,\hat\nu} \hat U_{h}(s_{h})}}_{\ds\text{Value mismatch error}}\nend
    &\qqquad+\underbrace{\sum_{h=1}^H  H \EE^{\hat\pi} \nbr{\rbr{\hat \nu_h-\nu_h}(\cdot\given s_h)}_1}_{\dr \text{Quantal response error}}.\label{eq:subopt-2}
\end{align}
We notify the readers that $\EE^{\hat\pi}$ is the expectation taken with respect to $\hat\pi$ and the true model, 
$\nu$ is the quantal response under policy $\hat\pi$ and the true model, 
and $(\hat W, \hat U, \hat \nu, \hat\pi)$ are the estimated value funcions, estimated response, and estimated policy given by \Cref{alg:PMLE}.
Specifically, we have by the form of these three schemes that $\hat\nu = \nu^{\hat\pi, \hat\theta_{s_h}}$ where we denote by $\hat\theta_{s_h}\in\CI_\Theta(\beta)$ the minimizer/maximizer corresponding to the policy optimization problem at each state $s_h$ in \eqref{eq:scheme-1-new} and \eqref{eq:scheme-2-new}, 
and with a little abuse of notation, we also let $\hat\theta_{s_h}$ denote the MLE estimator $\hat\theta_{h, \MLE}=\argmin_{\theta_h'\in\Theta_h} \cL_{h}^t(\theta_h')$ if we are talking about Scheme 3 in \eqref{eq:scheme-3-new}.
Next, we aim to show ${\dr (i)}\le 0$ by guarantee of pessimism in the value iteration step. Firstly, we have by \Cref{lem:Gamma_1-pessi} that the leader's Bellman error is always nonpositive. We just need to look at the value mismatch error and the quantal response error. For the quantal response error, we have by \Cref{lem:response diff-myopic-linear} that 
\begin{align}
    &2 H D_\TV\rbr{\nu_h(\cdot\given s_h), \hat\nu_h(\cdot\given s_h)} \nend
    &\quad \le 2 H \min_{\Psi\in\SSS^d}\cbr{
    f\rbr{\sqrt{\trace\rbr{{\Psi}^{\dagger} \Sigma_{ s_h}^{\hat\pi, \hat\theta_{s_h}}}}\cdot \bignbr{\theta^*- \hat\theta_{s_h}}_{{\Psi}}}, 
    f\rbr{\sqrt{\trace\rbr{{\Psi}^{\dagger} \Sigma_{ s_h}^{\hat\pi, \theta_h^*}}}\cdot \bignbr{\theta^*- \hat\theta_{s_h}}_{{\Psi}}}
    }\nend
    &\quad \le 
        \Gamma^{(2)}_h(s_h;\hat\pi(s_h), \hat\theta_{s_h}), \label{eq:OffML-TV-bonus}
\end{align}
where in the last inequality, we use the first term in the minimization. Specifically, we notice the definition $f(x) = \eta x + C^{(3)} x^2$ in \Cref{lem:response diff-myopic-linear} and the definition of $\Gamma^{(2)}_h(s_h;\alpha_h, \theta_h)$ in \eqref{eq:Gamma^2}, which are matching with each other, and use the fact that $\bignbr{\theta^*- \hat\theta_{s_h}}_{\Psi} \le \sqrt{8C_\eta^2 \beta+ 4 B_\Theta^2}$ if we plug in $\Psi = T\Sigma_{h, \cD}^{\hat\theta_{s_h}} + I_d$. 
The $8C_\eta^2 \beta$ term comes from the second guarantee in \eqref{eq:OffML-MLE-guarantee} where we use the property that $\hat\theta_{s_h}\in\CI_\Theta(\beta)$ and the $4B_\Theta^2$ term comes from the norm $\bignbr{\theta^*-\hat\theta_{s_h}}_{I_d} \le 2 B_\Theta$.
Therefore, for Scheme 2 and Scheme 3, we have by \eqref{eq:subopt-2} that
\begin{align*}
    {\dr (i)} &\le \sum_{h=1}^H \EE^{\hat\pi}\sbr{\hat W_h(s_h) -  T_{h}^{\hat\pi,\hat\nu} \hat U_{h}(s_{h})} + \sum_{h=1}^H  H \EE^{\hat\pi} \nbr{\rbr{\hat \nu_h-\nu_h}(\cdot\given s_h)}_1\nend
    &\le \sum_{h=1}^H \rbr{
        - \EE^{\hat\pi}\Gamma^{(2)}_h(s_h;\hat\pi(s_h), \hat\theta_{s_h}) + \EE^{\hat\pi}\Gamma^{(2)}_h(s_h;\hat\pi(s_h), \hat\theta_{s_h})
    } \le 0,
\end{align*}
where the first inequality holds by noting that the leader's Bellman error is nonpositive by \eqref{eq:OffML-Bellman-error-guarantee}, the first term in the second inequality comes directly from the definition of $\hat W_h(s_h)$, and the second term comes from \eqref{eq:OffML-TV-bonus}.
Moreover, for Scheme 1, we have by \eqref{eq:subopt-decompose-equality-2} in \Cref{lem:subopt-decomposition}  applied to $\hat\pi, \hat W, \hat U, \hat\nu$ that 
\begin{align*}
    {\dr (i)} &= {\sum_{h=1}^H \EE^{\hat\pi}\sbr{\bigrbr{\hat U_h - u_h}(s_h, a_h, b_h)- \hat W_{h+1}(s_{h+1})}} + \sum_{h=1}^H \EE^{\hat\pi}\sbr{\hat W_h(s_h) - T_h^{\hat \pi,\hat\nu}\hat U_h(s_h)}\nend
    &\qquad+  \sum_{h=1}^H \EE^{\hat\pi}\sbr{\rbr{ T_h^{\hat \pi,\hat\nu}-  T_h^{\hat \pi,\nu}} \hat U_h(s_h)}\le 0, 
\end{align*}
where we recall that $\nu$ is the quantal response under policy $\hat\pi$ and the true model. 
Similar to our previous discussion, by \eqref{eq:OffML-Bellman-error-guarantee} we have the first term nonpositive, and by definition of $\hat W$, we have the second term equal to $0$. 
For the third term, 
by our policy optimization in \eqref{eq:scheme-1-new}, we notice that $\nu=\nu^{\hat\pi, \theta^*}$ and $\hat\nu = \nu^{\hat\pi, \hat\theta_{s_h}}$, where $\hat\nu$ minimizes $T_h^{\hat\pi, \nu}\hat U_h$ when $\theta_h\in\CI_\Theta(\beta)$. Since the confidence set is valid, we conclude that the third term is also nonpositive. 

\paragraph{Step 3. Bounding Suboptimality.}
As a result of the previous discussions, we have for all these three schemes that 
\begin{align*}
    \subopt(\hat\pi) 
    &=   {J(\pi^*) - \EE[\hat W_1(s_1)]} + {\dr (i)}\nend
    & \le  {\sum_{h=1}^H \EE\sbr{\hat W_{h+1}(s_{h+1}) - \bigrbr{\hat U_h - u_h}(s_h, a_h, b_h) }}  
    + \sum_{h=1}^H \EE\sbr{T_h^{\pi^*,\nu^*}\hat U_h(s_h) - \hat W_h(s_h)}\nend
    & \le  {\sum_{h=1}^H 2\EE \Gamma^{(1)}_h(s_h, a_h, b_h)}  
    + \sum_{h=1}^H \underbrace{\EE\sbr{T_h^{\pi^*,\nu^*}\hat U_h(s_h) - \hat W_h^*(s_h)}}_{\dr (ii)},
\end{align*}
where the first inequality holds by ${\dr{(i)}}\le 0$ and using \eqref{eq:perform-diff-linear} in \Cref{lem:subopt-decomposition} for $\pi^*, \hat W, \hat U, \nu^*$. We remind the readers that $\EE$ is taken with respect to $\pi^*$ and define $\nu$ as the quantal response under $\pi^*$ and the true model.
The second inequality holds by \eqref{eq:OffML-Bellman-error-guarantee}.
Note that $\hat W$ is the optimizer corresponding to $\hat\pi$, which is still a mismatch for $\pi^*$ and we use it simply for the purpose of replacing the leader's Bellman error. 
However, we are also lucky to have $\hat W$ as the optimal value to all these schemes, which means that $\hat W_h(s_h)$ can be lower bounded by $\hat W_h^*(s_h)$, where $\hat W_h^*(s_h)$ is the optimal value of \eqref{eq:scheme-1-new}, \eqref{eq:scheme-2-new}, or \eqref{eq:scheme-3-new} when fixing $\alpha_h = \pi^*(s_h)$. 
Therefore, we just need to control (ii) then.
We do this separately for these three schemes. 

\subparagraph{Scheme 1:} For Scheme 1, we have by definition 
\begin{align*}
    {\dr (ii)} &= \EE\sbr{T_h^{\pi^*,\nu^*}\hat U_h(s_h) - \hat W_h^*(s_h)} \nend
    & = \EE\sbr{T_h^{\pi^*,\hat \nu^*_{s_h}}\hat U_h(s_h) - \hat W_h^*(s_h)} + \EE\sbr{\rbr{\rbr{T_h^{\pi^*,\nu^*} -T_h^{\pi^*,\hat \nu^*_{s_h}}} \hat U_h}(s_h)} \nend
    & \le 2 H \EE D_\TV\rbr{\nu_h^*(\cdot\given s_h), \hat \nu_{s_h}^*(\cdot)}, 
\end{align*}
where we define $\hat \nu^*_{s_h}\in\Delta(\cB)$ as the quantal response in \eqref{eq:scheme-1-new}, \eqref{eq:scheme-2-new}, or \eqref{eq:scheme-3-new} with respect to $\alpha_h =\pi^*(s_h)$ and $\theta_h = \hat\theta_{s_h}^*$ for $\hat\theta_{s_h} ^*$ as the optimizer to \eqref{eq:scheme-1-new}, \eqref{eq:scheme-2-new}, or \eqref{eq:scheme-3-new} under $\alpha_h =\pi^*(s_h)$. 
Here, the inequality holds by noting that the first term is equal to $0$, and upper bounding the second term with the TV bound.
Next, we invoke \Cref{lem:response diff-myopic-linear} as what we have done in \eqref{eq:OffML-TV-bonus}, and we use it for $\pi^*$ and plug in $\Psi = T \Sigma_{h, \cD}^{\theta_h^*} + I_d$,
\begin{align}
    &2 H D_\TV\rbr{\hat\nu_{s_h}^*(\cdot), \nu_h^*(\cdot\given s_h)} \nend
    &\quad \le 2 H \min_{\Psi\in\SSS^d}\cbr{
    f\rbr{\sqrt{\trace\rbr{{\Psi}^{\dagger} \Sigma_{ s_h}^{\pi^*, \hat\theta^*_{s_h}}}}\cdot \bignbr{\theta^*- \hat\theta_{s_h}}_{{\Psi}}}, 
    f\rbr{\sqrt{\trace\rbr{{\Psi}^{\dagger} \Sigma_{ s_h}^{\pi^*, \theta_h^*}}}\cdot \bignbr{\theta^*- \hat\theta_{s_h}}_{{\Psi}}}
    }\nend
    &\quad \le 
    \min\cbr{\Gamma^{(2)}_h(s_h;\pi^*(s_h), \hat \theta_{s_h}^*), \Gamma^{(2)}_h(s_h;\pi^*(s_h), \theta_h^*)}. \label{eq:OffML-TV-guarantee-3}
\end{align}
This time, we use the second term in the minimization rather than the first term that we have used in \eqref{eq:OffML-TV-bonus} for obtaining the last inequaltiy. We also have $\bignbr{\theta^*- \hat\theta_{s_h}}_{\Psi} \le \sqrt{8C_\eta^2 \beta+ 4 B_\Theta^2}$ by the first one in \eqref{eq:OffML-MLE-guarantee}, which relates the term within the $f$ function to $\Gamma^{(2)}_h$. Therefore, we conclude with ${\dr (ii)}\le \Gamma^{(2)}_h(s_h;\pi^*(s_h), \theta_h^*)$ for Scheme 1. 
Hence, we have for the suboptimality that
\begin{align*}
    \subopt(\hat\pi) \le \sum_{h=1}^H 2\EE \Gamma^{(1)}_h(s_h, a_h, b_h) + \EE \Gamma^{(2)}_h(s_h;\pi^*(s_h), \theta_h^*).
\end{align*}

\subparagraph{Scheme 2:} For Scheme 2, we notice 
\begin{align*}
\hat W_h^*(s_h) 
&= \max_{\theta_h\in\confset_{h, \Theta}(\beta)} \inp[\big]{\hat U_h(s_h, \cdot,\cdot)}{\pi^*\otimes\nu^{\pi^*, \theta_h}(\cdot,\cdot\given s_h)}_{\cA\times\cB} - \Gamma^{(2)}_h(s_h;\pi^*(s_h), \theta_h)\nend
&\ge T_h^{\pi^*, \nu^*} \hat U_h(s_h)- \Gamma^{(2)}_h(s_h;\pi^*(s_h), \theta_h^*),
\end{align*}
where the inequality holds by noting that $\theta_h^*\in\CI_\Theta(\beta)$ and $\nu^*=\nu^{\pi^*, \theta^*}$.
Therefore, we directly have ${\dr(ii)}\le \Gamma^{(2)}_h(s_h;\pi^*(s_h), \theta_h^*)$. Hence, we have for the suboptimality that
\begin{align*}
    \subopt(\hat\pi) \le \sum_{h=1}^H 2\EE \Gamma^{(1)}_h(s_h, a_h, b_h) + \EE\Gamma^{(2)}_h(s_h;\pi^*(s_h), \theta_h^*).
\end{align*}

{\noindent \bf Scheme 3:}
For Scheme 3, we have by definition
\begin{align*}
    \hat W_h^*(s_h) =  T_h^{\pi^*, \hat\nu_{s_h}^*} \hat U_h(s_h) - \Gamma^{(2)}_h(s_h;\alpha_h , \hat\theta_{s_h}^*), 
\end{align*}
where we remind the readers that $\hat\theta_{s_h}^* = \hat\theta_{h, \MLE}$ in the third scheme. Therefore, we have
\begin{align*}
    {\dr (ii)} 
    &= \EE\sbr{\rbr{T_h^{\pi^*, \nu^*} - T_h^{\pi^*, \hat\nu_{s_h}^*}}\hat U_h (s_h)} + \Gamma^{(2)}_h(s_h;\alpha_h , \hat\theta_{s_h}^*)\nend
    & \le 2 H D_\TV\rbr{\nu_h^*(\cdot\given s_h), \hat \nu_{s_h}^*(\cdot)} + \Gamma^{(2)}_h(s_h;\alpha_h , \hat\theta_{s_h}^*)\nend
    &\le 2 \Gamma^{(2)}_h(s_h;\pi^*(s_h), \hat \theta_{s_h}^*) =  2 \Gamma^{(2)}_h(s_h;\pi^*(s_h), \hat \theta_{h, \MLE}),
\end{align*}
where the last inequality holds by \eqref{eq:OffML-TV-guarantee-3}, but with $\hat\theta_{s_h}^*$ plugged in and using the bound for the first term in the minimization this time.
Hence, we have for the suboptimality that
\begin{align*}
    \subopt(\hat\pi) \le \sum_{h=1}^H 2\EE \Gamma^{(1)}_h(s_h, a_h, b_h) + 2 \EE\Gamma^{(2)}_h(s_h;\pi^*(s_h), \hat \theta_{h, \MLE}),
\end{align*}
which completes the proof of \Cref{thm:PMLE-VI-myopic}.

\subsection{Proof of \Cref{rmk:MLE-PVI-dist-shift}}\label{sec:proof-MLE-PVI-dist-shift}

\begin{proof}
We remark on (i) the distribution shift and (ii) the first order coefficient. 
For the distribution shift on the leader's side, 
suppose there exists some constant $c_1$ such that \begin{align}\Lambda_h\succeq I + c_1 T \EE^{\pi^*,\nu^{\pi^*}}[\phi_h\phi_h^\top],\label{eq:MLE-PVI-coverage-1}
\end{align}
    with high probability for all $h\in[H]$, then we have 
\begin{align}
\EE^{\pi^*,\nu^{\pi^*}}[\phi^\top (I+c_1 T \EE^{\pi^*,\nu^{\pi^*}}[\phi\phi^\top])^\dagger \phi] 
&\le (c_1 T)^{-1} \trace\orbr{(\EE^{\pi^*,\nu^{\pi^*}}[\phi\phi^\top]) (\EE^{\pi^*,\nu^{\pi^*}}[\phi\phi^\top])^\dagger}\nend 
&\le (c_1 T)^{-1}d, \label{eq:MLE-PVI-discuss-1}
\end{align}
which implies that 
$\sum_{h=1}^H\EE^{\pi^*, \nu^{\pi^*}} \osbr{\Gamma^{(1)}_h(s_h, a_h, b_h)}\le C_1  d^{3/2}H^2 (c_1 T)^{-1/2}$. 
For the follower's side, it is hard to directly put a guarantee with this $\hat\theta_{h,\MLE}$. However, we notice that
\begin{align*}
    \Sigma_{h,\cD}^{\theta_h} 
    &= T^{-1}\sum_{i=1}^T \phi_h^{\pi^i}(s_h^i, \cdot)^\top \rbr{\diag(\nu_h^{\pi^t, \theta_h}(\cdot\given s_h^i)) - \nu_h^{\pi^t, \theta_h}(\cdot\given s_h^i) \nu_h^{\pi^t, \theta_h}(\cdot\given s_h^i)^\top } \phi_h^{\pi^i}(s_h^i, \cdot)\nend
    &\succeq \exp(-2\eta B_A )T^{-1}\sum_{i=1}^T \phi_h^{\pi^i}(s_h^i, \cdot)^\top \rbr{\diag(u(\cdot)) - u(\cdot)u(\cdot)^\top} \phi_h^{\pi^i}(s_h^i, \cdot)\nend
    &=\exp(-2\eta B_A )\Sigma_{h,\cD}^{u} , 
\end{align*}
where the first equality holds by definition, the first inequality uses the conclusion in \Cref{prop:Hessian-ulb} where we plug in $u$ as the uniform distribution over $\cB$, and we just define $\Sigma_{h,\cD}^u$ by this identity.
If we plug in $\nu_h^{\pi^t,\theta_h}(\cdot\given s_h^i)$ instead of $u$ in the inequality, we also have
\begin{align*}
    \Sigma_{h,\cD}^{\theta_h}\succeq \exp(-2\eta B_A )\Sigma_{h,\cD}^{\theta_h^*}.
\end{align*}
Similarly, we also have $\Sigma_{s_h}^{\pi^*,\theta_h}\preceq \exp(2\eta B_A)\Sigma_{s_h}^{\pi^*,u}$ and $\Sigma_{s_h}^{\pi^*,\theta_h}\preceq \exp(2\eta B_A)\Sigma_{s_h}^{\pi^*,\theta_h^*}$ for any $\theta_h\in\Theta_h$ according to \Cref{prop:Hessian-ulb} and we define $\Sigma_{s_h}^{\pi^*,u}$ similarly.
Therefore we have for $\hat\theta_{h,\MLE}$ and $\theta_h^*$ that
\begin{align}\label{eq:shifting}
    \trace\rbr{\bigrbr{I + T\Sigma_{h,\cD}^{\hat\theta_{h,\MLE}}}^\dagger \Sigma_{s_h}^{\pi^*, \hat\theta_{h,\MLE}} } \le \exp(4\eta B_A)\trace\rbr{\bigrbr{I + T\Sigma_{h,\cD}^{\theta_h^*}}^\dagger \Sigma_{s_h}^{\pi^*, \theta_h^*} }. 
\end{align}
Recall that $\Sigma_{h,\cD}^{\theta_h^*}$ and $\Sigma_{s_h}^{\pi^*, \theta_h^*}$ has the following representations, 
\begin{align*}
    \Sigma_{h,\cD}^{\theta_h^*} &= T^{-1} \sum_{t=1}^T \EE_{s_h^i}^{\pi^i, \nu^{\pi^i}} \bigsbr{(\Upsilon_h^{\pi^i} \phi_h) (s_h^i, b_h) (\Upsilon_h^{\pi^i} \phi_h) (s_h^i, b_h)^\top}, \nend
    \Sigma_{s_h}^{\pi^*, \theta_h^*}& = \EE_{s_h}^{\pi^*, \nu^{\pi^*}} \bigsbr{(\Upsilon_h^{\pi^*} \phi_h) (s_h, b_h) (\Upsilon_h^{\pi^*} \phi_h) (s_h, b_h)^\top}.
\end{align*}
Under the condition that 
\begin{align*}
    &I + \sum_{t=1}^T \EE_{s_h^i}^{\pi^i, \nu^{\pi^i}} \bigsbr{(\Upsilon_h^{\pi^i} \phi_h) (s_h^i, b_h) (\Upsilon_h^{\pi^i} \phi_h) (s_h^i, b_h)^\top} \nend
    &\quad \succeq I + c_2T\EE^{\pi^*, \nu^{\pi^*}} \bigsbr{(\Upsilon_h^{\pi^*} \phi_h) (s_h, b_h) (\Upsilon_h^{\pi^*} \phi_h) (s_h, b_h)^\top} , 
\end{align*}
we have by the same argument in \eqref{eq:MLE-PVI-discuss-1} that 
\begin{align*}
    \trace\rbr{(I + T \Sigma_{h,\cD}^{\theta_h^*})^\dagger \Sigma_{s_h}^{\pi^*,\theta_h^*}} 
    &\le (c_2 T)^{-1}\trace\rbr{\bigrbr{\EE^{\pi^*, \nu^{\pi^*}}\bigsbr{(\Upsilon_h^{\pi^*} \phi_h) (s_h, b_h) (\Upsilon_h^{\pi^*} \phi_h) (s_h, b_h)^\top}}^\dagger \Sigma_{s_h}^{\pi^*,\theta_h^*}} \nend
    &\le (c_2 T)^{-1}\trace\rbr{
        \bigrbr{\EE^{\pi^*,\theta_h^*}
        \bigsbr{\Sigma_{s_h}^{\pi^*,\theta_h^*}}}^\dagger 
    \EE^{\pi^*,\theta_h^*}[\Sigma_{s_h}^{\pi^*,\theta_h^*}]}\nend
    &\le (c_2 T)^{-1} d.
\end{align*}
By definition of $\Gamma_h^{(2)}$, we have
\begin{align*}
    \Gamma^{(2)}_h(s_h;\pi , \theta_h^*) = 2 H(\eta \xi  + C^{(3)} \xi^2 ) 
    \end{align*}
and also 
\begin{align*}
    \xi = \sqrt{\trace\rbr{\bigrbr{T\Sigma_{h,\cD}^{\theta_h} + I_d}^\dagger \Sigma_{s_h}^{\pi , \theta}}} \cdot \sqrt{8 C_{\eta}^2 \beta + 4B_\Theta^2}, 
\end{align*}
Therefore, the first order term in $\sum_{h=1}^H \EE^{\pi^*,\nu^{\pi^*}}\Gamma^{(2)}_h(s_h;\pi, \theta_h^*)$ is bounded by
\begin{align*}
    &2 H^2 \eta \cdot\sqrt{8C_\eta^2 \beta+ 4B_\Theta^2}\cdot \sqrt{\trace\bigrbr{\bigrbr{T\Sigma_{h,\cD}^{\theta_h^*} + I}^\dagger \Sigma_{s_h}^{\pi,\theta_h^*}}}\nend
    &\quad \le 
    2 H^2 \eta \sqrt{d(8C_\eta^2 \beta+ 4B_\Theta^2)}\cdot  (c_2 T)^{-1/2}\nend
    &\quad \lesssim H^2 \eta C_\eta \sqrt{d\beta /(c_2 T)}, 
\end{align*}
and the second order term in $\sum_{h=1}^H \EE^{\pi^*,\nu^{\pi^*}}\Gamma^{(2)}_h(s_h;\pi, \theta_h^*)$ is bounded by 
\begin{align*}
    &2H^2 C^{(3)}\cdot  \rbr{8C_\eta^2 \beta+ 4B_\Theta^2}\cdot {\trace\bigrbr{\bigrbr{T\Sigma_{h,\cD}^{\theta_h^*} + I}^\dagger \Sigma_{s_h}^{\pi,\theta_h^*}}} \nend
    &\quad \le  2H^2 C^{(3)}\cdot  d \rbr{8C_\eta^2 \beta+ 4B_\Theta^2}\cdot (c_2 T)^{-1}\nend
    &\quad \lesssim H^2 d(\eta C_\eta)^3 \exp(4\eta B_A) \beta (c_2 T)^{-1}.
\end{align*}
Moreover, we plug in $\beta\sim d$ given by the covering number $\cN_\rho(\Theta,T^{-1})$ in \eqref{eq:cN-Theta_h}, $C_\eta = \eta^{-1}+B_A$, $B_\Theta =\cO(1)$, and $C^{(3)} = \rbr{2+\exp\rbr{2\eta B_A}\eta B_A} \eta^2 \exp\rbr{2\eta B_A}/{2}$, which gives us
\begin{align*}
    {\subopt(\hat\pi) 
    \lesssim \frac{d^{3/2}H^2} {\sqrt{c_1 T}} +  \eta C_\eta H^{2}d \cdot \sqrt{\frac{1}{c_2 T}} +  e^{4\eta B_A} (\eta C_\eta)^3 H^2  d^2 \cdot  \frac{1}{c_2 T}, }
\end{align*}
for Scheme 1 and Scheme 2. 
For Scheme 3, we have by our previous result in \eqref{eq:shifting} that
\begin{align*}
    { \subopt(\hat\pi) 
    \lesssim \frac{d^{3/2}H^2} {\sqrt{c_1 T}} + e^{2\eta B_A} \eta C_\eta H^{2}d \cdot \sqrt{\frac{1}{c_2 T}} +  e^{8\eta B_A} (\eta C_\eta)^3 H^2  d^2 \cdot  \frac{1}{c_2 T}.}
\end{align*}
Therefore, we conclude the proof.
\end{proof}
\section{Proofs for Online Myopic Case}
We present proof for \Cref{sec:myopic-online} in this part.
Before we dive into the proof,

\subsection{Proof of \Cref{thm:Online-MG}}\label{sec:proof-Online-MG}
We give a proof on \Cref{thm:Online-MG} in this subsection.

\paragraph{Step 1. Validity and Accuracy for the Confidence Set.}
Recall the confidence set we constructed in \eqref{eq:myopic-online-general-confset}
\begin{align}
    &\CI_{\cU, \Theta}^t(\beta) \nend
    &\quad= \cbr{
    (U,\theta)\in\cU\times\Theta:
    \rbr{ \ds
        \cL_h^t(\theta_h)-\inf_{\theta_h'\in\Theta_h}\cL_h^t(\theta_h') \le \beta 
    \atop \ds
        \ell_h^t(U_h, U_{h+1}, \theta_{h+1}) - \inf_{U'\in\cU_h} \ell_h^t(U', U_{h+1}, \theta_{h+1})\le H^2\beta}, 
    \forall h\in[H]}. \label{eq:OnMG-confset}
\end{align}
For the follower's side, note that the data compliance condition is automatically satisfied by the online interaction process. 
Since $\beta\ge C\log(H T\cN_\rho(\Theta, T^{-1})\delta^{-1})$, we have by \eqref{eq:MLE-guarantee-Q-3} in \Cref{lem:MLE-formal} that with probability at least $1-\delta$, for all $\theta\in \CI_{\cU,\Theta}^t(\beta), h\in[H], t\in[T]$, 
\begin{align}
    &\sum_{i=1}^{t-1} \rbr{ \rbr{r_h^{\hat\pi^i, \theta} - r_h^{\hat\pi^i, \theta^*}}(s_h^i, b_h^i) - \EE_{s_h^i}^{\hat\pi^i, \theta'}\sbr{\rbr{r_h^{\hat\pi^i, \theta} - r_h^{\hat\pi^i, \theta^*}}(s_h, b_h)}}^2 
    \lesssim C_\eta^2 \beta
    \label{eq:OnMG-MLE-guarantee}
\end{align}
where we recall the definition of $\cN_\rho(\Theta, \epsilon)$ in \eqref{eq:cN-Theta-myopic}. 
Recall that $\hat\pi^i$ is the policy used in episode $i$. On the leader's side, noticing that $H^2\beta \gtrsim H^2\log(HT\cN_\rho(\cZ, T^{-1})\delta^{-1})$, we have by \Cref{lem:CI-U-online} that with probability at least $1-\delta$ for all $t\in[T]$: 
\begin{itemize}
    \item[(i)] (Validity) $U^{*, \theta}\in \CI_{\cU, \Theta}^t(\beta)$ for any $\theta\in\CI_{\cU,\Theta}^t(\beta)$; 
    \item[(ii)] (Accuracy) for any $(\theta, U)\in\CI_{\cU, \Theta}^t(\beta)$, $\sum_{i=1}^{t-1}\EE^{\hat\pi^i}[(U_{h} - \TT_{h}^{*, \theta} U_{h + 1})^2]\lesssim H^2\beta$ and $\sum_{i=1}^{t-1}|(U_{h} - \TT_{h}^{*,\theta} U_{h + 1})(s_h^i, a_h^i, b_h^i)|^2\lesssim H^2\beta$ for all $h\in[H]$,
\end{itemize}
where we remind the readers that $U^{*, \theta}$ is defined as
\begin{align*}
    U_h^{*, \theta}(s_h, a_h, b_h) = u_h(s_h, a_h, b_h) + \bigrbr{\bigrbr{P_h\circ  T_{h+1}^{*, \theta}} U_{h+1}^{*, \theta}} (s_h, a_h, b_h), 
\end{align*}
where $ T_h^{*,\theta}:\cF(\cS\times\cA\times\cB)\rightarrow \cF(\cS)$ is the one-step optimistic integral operator defined as
\begin{align*}
     T_{h}^{*, \theta} f(s_h) = \max_{\alpha_h\in\cA} \dotp{f(s_h,\cdot,\cdot)}{\alpha_h\otimes \nu^{\alpha_h, \theta}(\cdot, \cdot\given s_h)}.
\end{align*}
With respect to $ T^{*, \theta}$, we define the optimistic Bellman operator for the leader 
$\TT_h^{*,\theta}:\sF(\cS\times\cA\times\cB)\rightarrow \sF(\cS\times\cA\times\cB)$ as 
\begin{align*}
    \bigrbr{\TT_h^{*, \theta} f} (s_h, a_h, b_h) = u_h(s_h, a_h, b_h) + \EE_{s_{h+1}\sim P_h(\cdot\given s_h, a_h, b_h)} \bigsbr{\bigrbr{ T_{h+1}^{*, \theta}f}(s_{h+1})}.
\end{align*}
Combining these results, we have for $\CI_{\cU,\Theta}^t(\beta)$ that with probability at least $1-2\delta $ and for all $t\in[T]$ that,
\begin{itemize}
    \item[(i)] (Validity) $(\theta^*, U^{*})\in \CI_{\cU, \Theta}^t(\beta)$; 
    \item[(ii)] (Accuracy) for any $(\theta, U)\in\CI_{\cU, \Theta}^t(\beta)$, we have \eqref{eq:OnMG-MLE-guarantee} and  $\sum_{i=1}^{t-1}\EE^{\hat\pi^i}[\|U_{h} - \TT_{h}^{*,\theta} U_{h + 1}\|^2]\lesssim H^2\beta$, $\sum_{i=1}^{t-1}\|(U_{h} - \TT_{h}^{*, \theta} U_{h + 1})(s_h^i, a_h^i, b_h^i)\|^2\lesssim H^2\beta$ hold for all $h\in[H]$.
\end{itemize}
The following proof is based on the success of $\CI_{\cU, \Theta}^t(\beta)$.

\paragraph{Step 2. Suboptimality Decomposition via Optimism.}
Recall the optimisitic policy optimization in \eqref{eq:online-MG-opt-parameter} that 
\begin{align}
    (\hat U^t, \hat\theta^t)=\argmax_{(U, \theta)\in\CI_{\cU,\Theta}^t(\beta)} \underbrace{\max_{\pi_1\in\Pi}\EE_{s_1\sim\rho_0} \sbr{{\inp[\big]{U_1(s_1, \cdot, \cdot)}{\pi_1\otimes\nu_1^{\pi, \theta}(\cdot,\cdot\given s_1)}}}_{\cA\times \cB } }_{\ds J(U, \theta)}, \label{eq:OnMG-U theta hat}
\end{align}
and also the optimistic policy in \eqref{eq:online-MG-hat pi} that 
\begin{align}
    \hat\pi^t(s_h)=\argmax_{\pi(s_h)\in\sA}\inp[]{U_{h}(s_{h}, \cdot, \cdot)}{\pi_{h}\otimes \nu_{h}^{\pi, \theta}(\cdot, \cdot\given s_{h})}. \label{eq:OnMG-pi hat}
\end{align}
We have for the suboptimality that
\begin{align*}
    \subopt(\hat\pi^t) = J(\pi^*) - J(\hat U^t, \hat\theta^t) + J(\hat U^t, \hat\theta^t) - J(\hat\pi^t) \le J(\hat U^t, \hat\theta^t) - J(\hat\pi^t)
\end{align*}
For the first inequality, we notice by the validity of $\CI_{\cU,\Theta}^t(\beta)$ that $(U^{\theta^*}, \theta^*)\in \CI_{\cU,\Theta}^t(\beta)$ and and by optimism $J(\hat U^t, \hat \theta^t) \ge J(U^{\theta^*}, \theta^*) = J(\pi^*)$. 
we then invoke \eqref{eq:subopt-decompose-equality-1} of \Cref{lem:subopt-decomposition} with respect to $\hat\pi^t, \hat U^t, \hat\nu^t$ where $\hat\nu^t$ is the quantal response with respect to $\hat\pi^t$ and $\hat\theta^t$,
\begin{align*}
    \subopt(\hat\pi^t) &\le J(\hat U^t, \hat\theta^t) - J(\hat\pi^t) \nend
    &\le {\sum_{h=1}^H \EE^t\Bigsbr{ {\bigrbr{\hat U_h^t - u_h}(s_h, a_h, b_h)-  T_{h+1}^{\hat\pi^t,\hat\nu^t} \hat U_{h+1}^t(s_{h+1})}}} 
    + \sum_{h=1}^H 2 H \EE^t D_\TV\rbr{\hat\nu_h^t(\cdot\given s_h), \nu_h^t(\cdot\given s_h)}\nend
    &\le  {\sum_{h=1}^H \EE^t\Bigsbr{\underbrace{\bigrbr{\hat U_h^t - u_h}(s_h, a_h, b_h)-  T_{h+1}^{*, \hat\theta^t} \hat U_{h+1}^t(s_{h+1})}_{\ds \hat U_h^t - \TT_h^{*, \hat\theta^t}\hat U_{h+1}^t }}} \nend
    &\qquad + \sum_{h=1}^H 2 H \eta \EE^t\Bigsbr{\bigabr{ \underbrace{(\hat r_h^t(s_h, b_h) - r_h^t(s_h, b_h)) - \EE^t\bigsbr{\hat r_h^t(s_h, b_h) - r_h^t(s_h, b_h)}}_{\ds \Upsilon_h^{\hat\pi^t} (r_h^{\hat\theta^t} - r_h) }} } \nend
    &\qquad + \sum_{h=1}^H 2 H C^{(3)}\EE^t\sbr{\rbr{\rbr{\hat r_h^t(s_h, b_h) -r_h^t(s_h, b_h)} - \EE^t \sbr{\hat r_h^t(s_h, b_h) -r_h^t(s_h, b_h)}}^2},
\end{align*}
where $\nu^t$ is the actual quantal response under $\hat\pi^t$ and the true model, $r^t(s_h, b_h) = \inp[]{\hat\pi^t(\cdot\given s_h, b_h)}{r_h(s_h, \cdot, \cdot)}$ is the integral reward for the follower under $\hat\pi^t$ and the true model, and $\hat r^t$ is the alternative of $r^t$ under the same policy $\hat\pi^t$ but the estimated model $\hat\theta^t$. We also denote by $\EE^t$ the expectation taken with respect to $\hat\pi^t$ and the true model.
Here, for the last inequality, we replace $T_{h+1}^{\hat\pi^t, \hat\nu^t}$ by the optimistic integral operator $T_{h+1}^{*, \hat\theta^t}$ by definition of $\hat\pi^t$ in the first term, and the remaining hold by using \Cref{cor:response-diff-myopic} for the TV distance.

\paragraph{Controlling Regret by eluder Dimension.}
For the leader's side, we have the following configurations:
\begin{itemize}[leftmargin=20pt]
    \item Define function class $\cG_{h,L} = \bigcbr{g:\cS\times\cA\times\cB\rightarrow \RR\given  g = U_h-\TT_h^{*, \theta} U_{h+1}, \exists U\in\cU, \theta\in\Theta}$. 
    Moreover, consider sequence $\{g_h^i = \hat U_h^i - \TT_h^{*, \hat\theta^i} \hat U_{h+1}^i\}_{i\in[T]}$, it is obvious that $g_h^i\in\cG_{h,L}$;
    \item Define the class of probability measures as 
    $$\sP_{h,L} = \cbr{\rho\in\Delta(\cS\times\cA\times\cB)\given\rho(\cdot)=\PP^\pi((s_h, a_h, b_h)=\cdot), \pi\in\Pi}.$$
    Moreover, consider sequence $\{\rho^i(\cdot) = \PP^{\hat\pi^i}((s_h, a_h, b_h)=\cdot)\}_{i\in[T]}$;
    \item For the chosen sequences, we have
    \begin{align*}
        \EE_{\rho_h^i}[g_h^t] = \EE^{i} \sbr{(\hat U_h^t - \TT_h^{*, \hat\theta^t} \hat U_{h+1}^t)(s_h,a_h,b_h)}.
    \end{align*}
    If $i=t$, this corresponds to the leader's Bellman error we aim to bound.
    Moreover, the online guarantee is 
    $$\sum_{i=1}^{t-1} \rbr{\EE_{\rho^i}\sbr{g_h^t}}^2 = \sum_{i=1}^{t-1}\EE^{i}[(\hat U_{h}^{t} - \TT_{h}^{*, \hat\theta^t} \hat U_{h + 1}^t)^2]\lesssim H^2\beta$$ for any $t\in[T]$ by the guarantee of $\CI_{\cU, \Theta}^t(\beta)$ since $(\hat U^t, \hat\theta^t)\in\CI_{\cU,\Theta}^t(\beta)$. Moreover, $|\EE_{\rho^i}[g_h^t]|\le 2H$ globally.
\end{itemize}
Under these conditions, we let $\dim(\cG_L) = \max_{h\in[H]}\dim_\DE\rbr{\cG_{h, L}, \sP_{h, L}, T^{-1/2}}$ and we have by the first order cumulative error in \Cref{lem:de-regret} that
\begin{align*}
    &\sum_{i=1}^T \abr{\EE^t\sbr{\bigrbr{\hat U_h^t - u_h}(s_h, a_h, b_h)-  T_{h+1}^{\hat\pi^t,\hat\nu^t} \hat U_{h+1}^t(s_{h+1})}} \nend
    & \quad \lesssim \sqrt{\dim(\cG_L) H^2\beta T} + \min\cbr{T, \dim(\cG_L)} H + \sqrt T.
\end{align*}
For the follower's side, recall the linear operator $\Upsilon_h^\pi:\cF(\cS\times\cA\times\cB)\rightarrow \cF(\cS\times\cB)$ defined as 
\begin{align*}
    \rbr{\Upsilon_h^\pi f}(s_h, b_h) = \dotp{\pi_h(\cdot\given s_h, b_h)}{f(s_h, \cdot, b_h)} - \dotp{\pi_h\otimes \nu_h^{\pi}(\cdot,\cdot\given s_h)}{f(s_h,\cdot,\cdot)}. 
\end{align*}
Here, $(\Upsilon_h ^{\pi} f ) (s_h,b_h) $ quantifies  how far $b_h$ is from being the quantal response of $\pi$ at state $s_h$, measured in terms of  $f$. One can also think of $(\Upsilon_h^\pi f)(s_h, b_h)$ as the \say{advantage} of the reward induced by action $b_h$ compared to the reward induced by the quantal response. 
We have the following configurations for the follower's quantal response error:
\begin{itemize}[leftmargin=20pt]
    \item We define function class on $\cS\times\cA\times\cB$ as,
    \begin{align*}
    \cG_{h, F} = \cbr{g:\cS\times\cA\times\cB\rightarrow \RR: \exists \theta\in\Theta, g= {r_h^\theta - r_h}}.
    \end{align*}
    In addition, we consider a sequence $\{g_h^i = r_h^{\hat\theta^i} - r_h\}_{i\in[T]}$. 
    \item We define a class of signed measures on $\cS\times\cA\times\cB$ as
    \begin{align*}
        \sP_{h, F} &= \{\rho(\cdot) = \PP^\pi(a_h=\cdot\given s_h, b_h)\delta_{(s_h, b_h)}(\cdot) 
        - \PP^\pi((a_h, b_h)=\cdot\given s_h)\delta_{(s_h)}(\cdot)\nend
        &\qqquad \biggiven \pi\in\Pi, (s_h, b_h)\in\cS\times\cB\}, 
    \end{align*}
    where $\delta_{(s_h, b_h)}(\cdot)$ is the measure that assigns measure $1$ to state-action pair $(s_h, b_h)$. 
    In addition, we consider a sequence 
    $$\cbr{\rho_h^i(\cdot) = \PP^{\hat\pi^i}(a_h=\cdot\given s_h, b_h)\delta_{(s_h, b_h)}(\cdot) 
    - \PP^{\hat\pi^i}((a_h, b_h)=\cdot\given s_h)\delta_{(s_h)}(\cdot)}_{i\in[T]}$$
    \item 
    For simplicity, we denote by $g_h^t(s_h^i, b_h^i, \pi^i)$ the integral of $g_h^t$ with respect to the signed measure $\rho_h^i$ since $\rho_h^i$ is uniquely determined by $(s_h^i, b_h^i, \pi^i)$. 
    It is easy to check that $$g_h^t(s_h^i, b_h^i, \pi^i) = \bigrbr{\Upsilon_h^{\pi^i}(r_h^{\hat\theta^t}-r_h)}(s_h^i, b_h^i) = \QRE(s_h^i, b_h^i; \hat\theta^t, \pi^i).$$
    We have $g_h^t(s_h^i, b_h^i, \pi^i)f$ globally bounded by $4$ and the
    online guarantee for these two sequences are 
    \begin{align*}
        \sum_{i=1}^{t-1} (g_h^t(s_h^i, b_h^i, \pi^i))^2 \lesssim C_\eta^2\beta, 
    \end{align*}
    which holds because $\hat\theta^t\in\CI_{\cU, \Theta}^t(\beta)$ and by \eqref{eq:OnMG-MLE-guarantee}.
\end{itemize}
We let $\dim(\cG_F) = \max_{h\in[H]}\dim_\DE(\cG_{h,F},\sP_{h, F}, T^{-1/2})$ and by using \eqref{eq:DE error-1st order}, we have
\begin{align*}
    &\sum_{t=1}^T \EE^t\sbr{\abr{(\hat r_h^t(s_h, b_h) - r_h^t(s_h, b_h)) - \EE^t\bigsbr{\hat r_h^t(s_h, b_h) - r_h^t(s_h, b_h)}}}\nend
    &\quad \lesssim  2\sum_{t=1}^T \abr{\Upsilon_h^{\hat\pi^t} \rbr{r_h^{\hat\theta^t} - r_h}(s_h^t, b_h^t) } + \log\rbr{\delta^{-1} H\cN_\rho(\Theta, T^{-1})} + 1\nend
    &\quad \lesssim  \sqrt{\dim(\cG_F) (C_\eta^2 + 32)\beta T} + \min\cbr{T, \dim(\cG_F)} +  \beta, 
\end{align*}
for all $h\in[H]$. Here, the first inequality uses a standard martingale concentration in \Cref{cor:martigale concentration} and note that the approximation error of a $T^{-1}$- covering net $\Theta_{T^{-1}}$ is $\cO(1)$. Moreover, we have for the second order term that 
\begin{align*}
    &\sum_{t=1}^T \EE^t\sbr{\rbr{\rbr{\hat r_h^t(s_h, b_h) -r_h^t(s_h, b_h)} - \EE^t \sbr{\hat r_h^t(s_h, b_h) -r_h^t(s_h, b_h)}}^2}\nend
    &\quad \lesssim 2\sum_{t=1}^T \rbr{\Upsilon_h^{\hat\pi^t} \rbr{r_h^{\hat\theta^t} - r_h}(s_h^t, b_h^t)}^2 + \log\rbr{\delta^{-1} H\cN_\rho(\Theta, T^{-1})} +  1\nend
    & \quad \lesssim \dim(\cG_F) C_\eta^2 \beta \log(T) + \min\cbr{T, \dim(\cG_F)} + \beta.
\end{align*}
In summary, for the Leader's Bellman error
\begin{align*}
    \text{LBE} \lesssim H^2 \sqrt{\dim(\cG_L) \beta T},
\end{align*}
for the follower's first-order QRE, 
\begin{align*}
    \text{1st-QRE}  \lesssim H^2 \eta \sqrt{\dim(\cG_F)C_\eta^2\beta T} \lesssim H^2 \eta C_\eta \sqrt{\dim(\cG_F) \beta T},
\end{align*}
and for the follower's second-order QRE,
\begin{align*}
    \text{2nd-QRE}
    \lesssim H^2 C^{(3)} \dim(\cG_F) C_\eta^2 \beta \log(T) \lesssim H^2 (\eta C_\eta)^3  \exp(4\eta B_A) \beta \log T
\end{align*}
Summing them up gives us the result to \Cref{thm:Online-MG}.


\subsection{Proof of \Cref{thm:Online-ML}}\label{sec:proof-Online-ML}
We give a proof for online learning with myopic follower and linear function approximation.

\paragraph{Step 1. Uncertainty Quantification.}
For the leader's side, we are able to invoke the same guarantee in \Cref{lem:Gamma_1-pessi} for all $t\in[T]$ with a union bound, which requires a $\beta$ that additionally has a $\log(T)$ dependency. The reason that \Cref{lem:Gamma_1-pessi} can still be used here is that the data compliance condition is still satisfied in the online learning process. Therefore, with probability at least $1-\delta$, for any $t\in[T], h\in[H]$, 
\begin{align}
    \abr{\rbr{P_h\hat W_{h+1}^t  + u_h - \phi_h^\top \hat\omega_h^t}(s_h, a_h, b_h)} \le \Gamma^{(1, t)}_h(\cdot,\cdot,\cdot), \quad\forall (s_h,a_h,b_h)\in\cS\times\cA\times\cB.\label{eq:OnML-Bellman-error-guarantee}
\end{align}
For the follower's side, we invoke \eqref{eq:app-bandit-ub-1} in \Cref{cor:formal-MLE confset-linear myopic}, and obtain 
\begin{align}
    \max\cbr{\bignbr{\hat\theta_h-\theta_h^*}_{\Sigma_{h, t}^{\theta_h^*}}^2, \bignbr{\hat\theta_h-\theta_h^*}_{\Sigma_{h, t}^{\hat\theta_h}}^2, 
    \EE^{\pi^i}\bignbr{\hat\theta_h-\theta_h^*}_{\Sigma_{h, t}^{\theta_h^*}}^2, 
    \EE^{\pi^i}\bignbr{\hat\theta_h-\theta_h^*}_{\Sigma_{h, t}^{\hat\theta_h}}^2 } \le 8 C_{\eta}^2 \beta, \label{eq:OnML-MLE-guarantee}
\end{align}
for any $\hat\theta_h\in \confset_{h,\Theta}^t(\beta)=\{\theta_h\in\Theta:\cL_{h}^t(\theta_h)\le \min_{\theta_h'\in\Theta_h}\cL_{h}^t(\theta_h') + \beta\}$.
Here, we require $\beta \ge C d\log(HT(1+\eta T^2 +(1+\eta)T)\delta^{-1})$, which takes a union bound over $\Theta_h$, $h\in[H]$, and $t\in[T]$. 
It holds straightforward that the confidence set is valid and accurate. In the sequel, for ease of presentation, we take $\CI_{\Theta}^t(\beta)=\cbr{\{\theta_h\}_{h\in[H]}: \theta_h\in\cC_{h,\Theta}^t(\beta), \forall h\in[H]}$, which is just a combination of these $H$ independent confidence sets, and all the results still hold for $\CI_\Theta^t(\beta)$. In the sequel, if we say $\hat\theta_h\in\CI_\Theta^t(\beta)$, we actually means $\hat\theta_h\in\CI_{h,\Theta}^t(\beta)$. The following part is based on the success of \eqref{eq:OnML-Bellman-error-guarantee} and \eqref{eq:OnML-MLE-guarantee}.

\paragraph{Step 2. Suboptimality Decomposition: Optimism.}
Recall the two schemes in \eqref{eq:scheme-4} and \eqref{eq:scheme-5}, 
\begin{align}
    \textbf{S4:}\quad  \pi_h^t(s_h)  &= \argmax_{\pi_h(s_h) \in\sA
    \atop \theta_h\in\confset_{h, \Theta}^t(\beta)} \inp[\big]{\hat U_h^t(s_h, \cdot,\cdot)}{\pi_h \otimes\nu_h^{\pi_h , \theta_h}(\cdot,\cdot\given s_h)}_{\cA_h\times \cB_h}, \quad \forall s_h\in\cS_h,\label{eq:scheme-4-new}\\
    \textbf{S5:}\quad  \pi_h^t(s_h)  &= \argmax_{\pi_h(s_h) \in\sA}  \inp[\big]{\hat U_h^t(s_h, \cdot,\cdot)}{\pi_h \otimes\nu^{\pi_h , \hat\theta_{h,\MLE}^t}(\cdot,\cdot\given s_h)}_{\cA_h\times \cB_h} + \Gamma^{(2,t)}_h(s_h;\pi_h , \hat\theta_{h, \MLE}^t). \label{eq:scheme-5-new}
\end{align}
Note that $\hat W_h(s_h)$ is simply the optimal value. 
We have for the suboptimality that
\begin{align*}
    \subopt(\hat\pi^t) = J(\pi^*) - J(\hat\pi^t) = \underbrace{J(\pi^*) - \EE\sbr{\hat W_1^t(s_1)}}_{\dr (i)} + {\EE\sbr{\hat W_1^t(s_1)} - J(\hat\pi^t)}, 
\end{align*}
By the performance decomposition of (i) given  by \eqref{eq:subopt-decompose-equality-2} of \Cref{lem:subopt-decomposition} applied with $\pi^*, \hat W^t,\hat U^t, \hat \nu^{*, t}$, where $\hat\nu^{*, t}$ is the optimizer to \eqref{eq:scheme-4-new} or \eqref{eq:scheme-5-new} when fixing $\pi_h=\pi_h^*$. We have that
\begin{align}
    \dr{(i)} &= J(\pi^*) - \EE[\hat W_1^t(s_1)] \nend
    & ={{\sum_{h=1}^H \EE\sbr{-\bigrbr{\hat U_h^t - u_h}(s_h, a_h, b_h) + \hat W_{h+1}^t(s_{h+1})}}} + {\sum_{h=1}^H \EE\sbr{-\hat W_h^t(s_h) + T_{h}^{\pi^*,
    \hat\nu^{*,t}} \hat U_{h}^t(s_{h})}} \nend
    &\qquad + \sum_{h=1}^H \rbr{T_{h}^{\pi^*,
    \nu} - T_{h}^{\pi^*,
    \hat\nu^{*,t}}} \hat U_{h}^t(s_{h})\label{eq:OnML-subopt-1} \\
    &\le {{\sum_{h=1}^H \EE\sbr{-\bigrbr{\hat U_h^t - u_h}(s_h, a_h, b_h) + \hat W_{h+1}^t(s_{h+1})}}} + {\sum_{h=1}^H \EE\sbr{-\hat W_h^t(s_h) + T_{h}^{\pi^*,\hat\nu^{*,t}} \hat U_{h}(s_{h})}}\nend
    &\qqquad + {\sum_{h=1}^H  \EE D_\TV\rbr{\hat\nu_h^{*, t}(\cdot\given s_h), \nu_h(\cdot\given s_h)}}.\label{eq:OnML-subopt-2}
\end{align}
where we define $\nu= \nu^{\pi^*}$ and the expectation is taken with respect to the trajectory induced by $\pi^*$ and the true model. In both Scheme 4 and Scheme 5, we directly have the first term of \eqref{eq:OnML-subopt-1} and \eqref{eq:OnML-subopt-2} nonpositive since $ \hat U_h^t(s_h, a_h, b_h) = \phi_h(s_h,a_h,b_h)^\top \hat\omega_h^t + \Gamma^{(1,t)}_h(s_h, a_h, b_h)\ge P_h \hat W_{h+1}^t + u_h$ by \eqref{eq:OnML-Bellman-error-guarantee}. 
For Scheme 4, we have that the second term in \eqref{eq:OnML-subopt-1} is also nonpositive since $\theta^*\in\CI_\Theta^t(\beta)$, and $\hat W_h^t(s_h)= \max_{\pi_h(s_h)\in\sA, \theta\in\CI_\Theta(\beta)} T_h^{\theta, \pi} \hat U_h^t(s_h) \ge \max_{\theta\in\CI_\Theta^t(\beta)}T_h^{\theta, \pi^*} \hat U_h^t(s_h) = T_h^{\pi^*, \hat\nu^{*,t}}$. Moreover, for the third term, we have it also nonpositive since $\hat\nu^{*,t}$ is the maximizer under $\pi^*$ and the fact that $\theta^*\in\CI_\Theta(\beta)$.
Therefore, we have \eqref{eq:OnML-subopt-1} nonpositive for Scheme 4. 

For Scheme 5, we just check the second term in \eqref{eq:OnML-subopt-2},
\begin{align*}
    &{-\hat W_h^t(s_h) + T_{h}^{\pi^*,\hat\nu^{*,t}} \hat U_{h}(s_{h})}\nend
    &\quad \le  { -
    T_h^{\pi^*, \hat\nu_h^{*, t}} \hat U_h^t(s_h)
     - \Gamma^{(2,t)}_h(s_h;\pi_h^*(s_h) , \hat\theta_{h, \MLE}^t) + T_h^{\pi^*, \hat\nu_h^{*, t}} \hat U_h^t(s_h) }\nend
     &\quad = - \Gamma^{(2,t)}_h(s_h;\pi_h^*(s_h) , \hat\theta_{h, \MLE}^t),
\end{align*}
where the inequality holds by noting that $\hat W_h^t(s_h)$ is maximized over the leader's policy.
Therefore, we have \eqref{eq:OnML-subopt-2} upper bounded by for Scheme 5 that
\begin{align}
    {\dr (i)} 
    &\le \sum_{h=1}^H \EE \rbr{D_\TV\bigrbr{\hat\nu_h^{*, t}(\cdot\given s_h), \nu_h(\cdot\given s_h)} - \Gamma^{(2,t)}_h(s_h;\pi_h^*(s_h) , \hat\theta_{h, \MLE}^t)}. \label{eq:OnML-subopt-3}
\end{align}
where the last inequality holds by \Cref{lem:response diff-myopic-linear} that for both Scheme 4 and Scheme 5,
\begin{align}
    &2 H D_\TV\rbr{\nu_h^{*,t}(\cdot\given s_h), \nu_h(\cdot\given s_h)}\nend
    &\quad \le 2 H\min_{\Psi\in \SSS^d_+}\cbr{f\rbr{\sqrt{\trace\rbr{\Psi^\dagger \Sigma_{s_h}^{\pi^*, \tilde \theta}}} \nbr{\theta_h^* - \tilde \theta}_{\Psi}}, f\rbr{\sqrt{\trace\rbr{\Psi^\dagger \Sigma_{s_h}^{\pi^*, \theta_h^*}}} \nbr{\theta_h^* - \tilde \theta}_{\Psi}}}\nend
    &\quad \le 2 H f\rbr{\sqrt{\trace\rbr{\rbr{\Sigma_{h,t}^{\tilde \theta} + I_d}^\dagger \Sigma_{s_h}^{\pi^*, \tilde \theta}}} \nbr{\theta_h^* - \tilde \theta}_{\Sigma_{h,t}^{\tilde \theta} + I_d}} \nend
    &\quad \le \Gamma^{(2, t)}_h(s_h;\pi_h^*(s_h), \tilde \theta), \label{eq:OnML-TV bound-1}
\end{align}
where we define $\tilde\theta$ as the optimizer to \eqref{eq:scheme-4-new} or \eqref{eq:scheme-5-new} with $\pi^*$ plugged in.
Here, the second inequality holds by plugging $\Psi = \Sigma_{h,t}^{\hat\theta_{s_h}^{*, t}} + I_d$ and only keeping the first.
Recall that 
$\Sigma_{h,t}^{\theta_h} = \sum_{i=1}^{t-1} \Cov_{s_h^i}^{\pi^i, \theta_h} \allowbreak [\phi_h^{\pi^i}(s_h^i, b_h)]$ and  $\Gamma^{(2,t)}_h$ that $\Gamma_h^{(2,t)}(s_h;\pi_h(s_h),\theta_h)=2 H(\eta \xi  + C^{(3)} \xi^2 )
$ with 
\begin{align*}
    \xi = \sqrt{\trace\rbr{\rbr{\Sigma_{h, t}^{\theta_h} + I_d}^\dagger \Sigma_{s_h}^{\pi_h , \theta_h}}} \cdot \sqrt{8 C_{\eta}^2 \beta + 4B_\Theta^2}.
\end{align*}
Moreover, the last inequality of \eqref{eq:OnML-TV bound-1} holds by further noting that $\bignbr{\theta_h^* - \tilde\theta}_{\Sigma_{h, t}^{\tilde\theta}+I_d}^2 \le 8 C_\eta^2 \beta + 4 B_\Theta^2$ by guarantee of \eqref{eq:OnML-MLE-guarantee} and noting that $\tilde\theta\in\CI_\Theta^t(\beta)$ by definition for both Scheme 4 (explicitly in the optimization problem) and Scheme 5 (the MLE estimator is within the confidence set). 
Plugging \eqref{eq:OnML-TV bound-1} into \eqref{eq:OnML-subopt-3}, we conclude that ${\dr (i)}\le 0$
for Scheme 5 as well.

Summarizing previous results, we have by \eqref{eq:perform-diff-linear} in \Cref{lem:subopt-decomposition} that for $(\hat\pi^t, \hat W^t, \hat U^t, \hat \nu^t)$, where $\hat\nu^t$ is the optimizer to Scheme 4 or Scheme 5 under $\hat\pi^t$,
\begin{align}
    &\sum_{t=1}^T \subopt(\hat\pi^t) \nend
    &\quad\le 
    \sum_{t=1}^T {\EE\sbr{\hat W_1^t(s_1)} - J(\hat\pi^t)}\nend
    &\quad\le \sum_{t=1}^T {\sum_{h=1}^H \EE^t\sbr{\bigrbr{\hat U_h^t - u_h}(s_h, a_h, b_h)- \hat W_{h+1}^t(s_{h+1})}} 
    + \sum_{t=1}^T {\sum_{h=1}^H \EE^t\sbr{\hat W_h^t(s_h) -  T_{h}^{\hat\pi^t,\hat \nu^t} \hat U_h^t(s_{h})}}\nend
    &\qqquad+\sum_{t=1}^T {\sum_{h=1}^H  H \EE^t \nbr{\rbr{\hat \nu_h^t-\nu_h^t}(\cdot\given s_h)}_1}\nend
    &\quad\le \sum_{t=1}^T \sum_{h=1}^H 2 \EE^t \Gamma_h^{(1, t)}(s_h, a_h, b_h) 
    +  \underbrace{\sum_{t=1}^T {\sum_{h=1}^H \EE^t\sbr{\hat W_h^t(s_h) -  T_{h}^{\hat\pi^t,\hat \nu^t} \hat U_h^t(s_{h})}}}_{\dr (ii)}\nend
    &\qqquad+\sum_{t=1}^T {\sum_{h=1}^H  \Gamma_h^{(2, t)} (s_h;\hat\pi_h^t(s_h), \theta^*)} \label{eq:OnML-subopt-4}
\end{align}
where we define $\nu_h^t$ as the real quantal response under $\hat\pi^t$ and $\EE^t$ is taken with respect to policy $\hat\pi^t$ under the true model. Here, the first inequality holds by optimism, the last inequality holds by the guarantee in \eqref{eq:OnML-Bellman-error-guarantee} in the first term, and the last term comes from \eqref{eq:OnML-TV bound-1}, which says
\begin{align}
    &D_\TV\rbr{\nu_h^t(\cdot\given s_h), \hat\nu_h^t(\cdot\given s_h)}\nend
    &\quad \le \min_{\Psi\in \SSS^d_+}\cbr{f\rbr{\sqrt{\trace\rbr{\Psi^\dagger \Sigma_{s_h}^{\hat\pi^t, \hat \theta^t}}} \nbr{\theta_h^* - \hat \theta^t}_{\Psi}}, f\rbr{\sqrt{\trace\rbr{\Psi^\dagger \Sigma_{s_h}^{\hat\pi^t, \theta_h^*}}} \nbr{\theta_h^* - \hat \theta^t}_{\Psi}}}\nend
    &\quad \le f\rbr{\sqrt{\trace\rbr{\rbr{\Sigma_{h,t}^{\theta^*} + I_d}^\dagger \Sigma_{s_h}^{\hat\pi^t, \theta^*}}} \nbr{\theta_h^* - \hat \theta^t}_{\Sigma_{h,t}^{\theta^*} + I_d}} \nend
    &\quad \le \Gamma^{(2, t)}_h(s_h;\hat\pi_h^t(s_h), \theta^*), \label{eq:OnML-TV bound-2}
\end{align}
where we define $\hat\theta^t$ as the maximizer to \eqref{eq:scheme-4-new} or \eqref{eq:scheme-5-new} at state $s_h$, which also corresponds to $\hat\nu^t$ and $\hat\pi^t$ in the optimization problem. The first inequality follows from \Cref{lem:response diff-myopic-linear}, the second inequality holds by using the second term in the minimization and plug in $\Psi=\Sigma_{h, t}^{\theta^*} + I_d$, and the last inequality just follows from the definition of $\Gamma^{(2, t)}$ and using the fact that 
$\onbr{\theta_h^* - \hat \theta^t}_{\Sigma_{h,t}^{\theta^*} + I_d}^2 \le 8 C_\eta^2 \beta + 4 B_\Theta^2$ since $\hat\theta^t\in\CI_\Theta^2(\beta)$ and using the bound in \eqref{eq:OnML-MLE-guarantee}.
Furthermore, for Scheme 4, we have for term (ii) that ${\dr (ii)}=0$ by definition in \eqref{eq:scheme-4-new}. For Scheme 5, we have
\begin{align*}
    {\dr (ii)} 
    &= \sum_{t=1}^T {\sum_{h=1}^H \EE^t\sbr{\hat W_h^t(s_h) -  T_{h}^{\hat\pi^t,\hat \nu^t} \hat U_h^t(s_{h})}} \nend
    &= \sum_{t=1}^T \sum_{h=1}^H \EE^t \Gamma^{(2, t)}_h \rbr{s_h; \hat\pi_h^t(s_h), \hat\theta_{h,\MLE}} \nend
    &\le \exp\rbr{2\eta B_A}\sum_{t=1}^T \sum_{h=1}^H \EE^t \Gamma_h^{(2, t)}\rbr{s_h; \hat\pi_h^t(s_h), \theta_h^*}, 
\end{align*}
where the last inequaltiy holds by the following proposition. 

\begin{proposition}\label{eq:Gamma_2-ub}
For any $\theta,\tilde\theta\in\Theta$, $\pi\in\Pi, s_h\in\cS$, suppose $\onbr{A_h^{\pi, \theta}(s_h,\cdot)} \le B_A$ and $\onbr{\tilde A_h^{\pi, \theta}(s_h,\cdot)} \le B_A$. We then have that
\begin{align*}
    \Sigma_{s_h}^{\pi, \theta} \le \exp\rbr{2\eta B_A} \Sigma_{s_h}^{\pi, \tilde \theta}, 
\end{align*}
and also
\begin{align*}
    \Gamma_h^{(2,t)}(s_h;\pi_h(s_h),\theta_h) \le \exp\rbr{4\eta B_A} \Gamma_h^{(2,t)}(s_h;\pi_h(s_h),\tilde \theta_h).
\end{align*}
\begin{proof}
To see how we derive the upper bound, we note that by definition $\Gamma_h^{(2,t)}(s_h;\pi_h(s_h),\theta_h)=2 H(\eta \xi  + C^{(3)} \xi^2 )
$ with 
\begin{align*}
    \xi = \sqrt{\trace\rbr{\rbr{\Sigma_{h, t}^{\theta_h} + I_d}^\dagger \Sigma_{s_h}^{\pi_h , \theta_h}}} \cdot \sqrt{8 C_{\eta}^2 \beta + 4B_\Theta^2}.
\end{align*}
Furthermore, by definition of $\Sigma_{h, t}^{\theta_h}$, we have 
\begin{align*}
    \Sigma_{h,t}^{\theta_h} 
    &= \sum_{i=1}^{t-1} \Cov_{s_h^i}^{\pi^i, \theta_h} \sbr{\phi_h^{\pi^i}(s_h^i, b_h)} \nend
    &= \sum_{i=1}^{t-1}  \sum_{b_h', b_h''} \phi_h^{\pi^i}(s_h,b_h') \cdot \H_{s_h}^{\pi^i, \theta_h}(b_h', b_h'') \cdot \phi_h^{\pi^i}(s_h,b_h'')\nend
    &\le \exp\rbr{2\eta B_A } \sum_{i=1}^{t-1}  \sum_{b_h', b_h''} \phi_h^{\pi^i}(s_h,b_h') \cdot \H_{s_h}^{\pi^i, \tilde \theta_h}(b_h', b_h'') \cdot \phi_h^{\pi^i}(s_h,b_h'')\nend
    & = \exp\rbr{2\eta B_A} \Sigma_{h, t}^{\tilde \theta_h}, 
\end{align*}
where we define $\H_{s_h}^{\pi, \theta}(b_h', b_h'') = \diag(\nu_h^{\pi, \theta}(\cdot\given s_h)) - \nu_h^{\pi, \theta}(\cdot\given s_h) \nu_h^{\pi, \theta}(\cdot\given s_h)^\top$ as the Hessian matrix corresponding to $\nu_h^{\pi, \theta}$ at state $s_h$.
Here, the inequality holds by invoking \Cref{prop:Hessian-ulb} and 
a lower bound follows similarly by invoking the lower bound in \Cref{prop:Hessian-ulb}.
Also, the argument for $\Sigma_{s_h}^{\pi,\theta_h} = \Cov_{s_h}^{\pi,\theta_h}[\phi_h^\pi(s_h, b_h)]$ follows the same way. 
Therefore, we have for $\Gamma^{(2, t)}$ that 
\begin{align*}
    \Gamma_h^{(2,t)}(s_h;\pi_h(s_h),\theta_h)
    &=2 H\eta \sqrt{\trace\rbr{\rbr{\Sigma_{h, t}^{\theta_h} + I_d}^\dagger \Sigma_{s_h}^{\pi_h , \theta_h}}} \cdot \sqrt{8 C_{\eta}^2 \beta + 4B_\Theta^2}  \nend
    &\qquad + 2 H C^{(3)} \rbr{\sqrt{\trace\rbr{\rbr{\Sigma_{h, t}^{\theta_h} + I_d}^\dagger \Sigma_{s_h}^{\pi_h , \theta_h}}} \cdot \sqrt{8 C_{\eta}^2 \beta + 4B_\Theta^2}}^2\nend
    &\le 2 H\eta \exp\rbr{2\eta B_A}\sqrt{\trace\rbr{\rbr{\Sigma_{h, t}^{\tilde\theta_h} + I_d}^\dagger \Sigma_{s_h}^{\pi_h , \tilde\theta_h}}} \cdot \sqrt{8 C_{\eta}^2 \beta + 4B_\Theta^2}  \nend
    &\qquad + 2 H C^{(3)} \exp\rbr{4\eta B_A} \rbr{\sqrt{\trace\rbr{\rbr{\Sigma_{h, t}^{\tilde\theta_h} + I_d}^\dagger \Sigma_{s_h}^{\pi_h , \tilde\theta_h}}} \cdot \sqrt{8 C_{\eta}^2 \beta + 4B_\Theta^2}}^2\nend
    & \le \exp\rbr{4\eta B_A} \Gamma_h^{(2,t)}(s_h;\pi_h(s_h),\tilde\theta_h), 
\end{align*}
which completes the proof.
\end{proof}
\end{proposition}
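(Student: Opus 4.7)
The plan is driven by a single observation that governs both inequalities. By the soft Bellman recursion $V_h^{\pi,\theta}(s) = \eta^{-1}\log \sum_b \exp(\eta Q_h^{\pi,\theta}(s,b))$, the log-sum-exp inequality gives $V_h^{\pi,\theta}(s) \ge \max_b Q_h^{\pi,\theta}(s,b)$, so the advantage $A_h^{\pi,\theta}$ is automatically non-positive. Combined with the hypothesis $\|A_h^{\pi,\theta}(s_h,\cdot)\|_\infty \le B_A$ (and the analogous hypothesis on $\tilde\theta$), this pins down $\nu_h^{\pi,\theta}(b\given s_h) = \exp(\eta A_h^{\pi,\theta}(s_h,b)) \in [e^{-\eta B_A}, 1]$, hence the pointwise likelihood ratio obeys $\nu_h^{\pi,\theta}(b\given s_h)/\nu_h^{\pi,\tilde\theta}(b\given s_h) \le e^{\eta B_A}$ uniformly in $b$.

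The first claim, $\Sigma_{s_h}^{\pi,\theta} \preceq e^{2\eta B_A}\Sigma_{s_h}^{\pi,\tilde\theta}$, then falls out of the symmetric pairwise representation of variance, $\Var_{\nu}[v] = \tfrac{1}{2}\sum_{b,b'}\nu(b)\nu(b')(v(b)-v(b'))^2$. Every product $\nu^\theta(b)\nu^\theta(b')$ is dominated by $e^{2\eta B_A}\nu^{\tilde\theta}(b)\nu^{\tilde\theta}(b')$, which gives $\Var_{\nu^\theta}[v]\le e^{2\eta B_A}\Var_{\nu^{\tilde\theta}}[v]$ for every $v:\cB\rightarrow\RR$; specializing to $v(b) = u^\top\phi_h^\pi(s_h,b)$ for arbitrary $u\in\RR^d$ lifts the scalar bound to the claimed PSD inequality on the covariance matrices. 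The same argument gives the matching lower bound $\Sigma_{s_h}^{\pi,\theta} \succeq e^{-2\eta B_A}\Sigma_{s_h}^{\pi,\tilde\theta}$.

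For the second claim, I would first sum the pointwise PSD bounds over $i=1,\dots,t-1$ to obtain $e^{-2\eta B_A}\Sigma_{h,t}^{\tilde\theta} \preceq \Sigma_{h,t}^{\theta} \preceq e^{2\eta B_A}\Sigma_{h,t}^{\tilde\theta}$; adding $I_d$ and using $e^{-2\eta B_A}\le 1$ to absorb the identity gives $\Sigma_{h,t}^\theta + I_d \succeq e^{-2\eta B_A}(\Sigma_{h,t}^{\tilde\theta}+I_d)$, hence $(\Sigma_{h,t}^\theta + I_d)^{\dagger} \preceq e^{2\eta B_A}(\Sigma_{h,t}^{\tilde\theta}+I_d)^{\dagger}$. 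Combining this with the pointwise covariance bound inside the trace — by applying one-sided PSD-monotonicity of the trace ($\trace(MN)\le\trace(M'N)$ whenever $0\preceq M \preceq M'$ and $N\succeq 0$) to each factor separately — yields $\trace((\Sigma_{h,t}^\theta + I_d)^{\dagger}\Sigma_{s_h}^{\pi,\theta}) \le e^{4\eta B_A}\trace((\Sigma_{h,t}^{\tilde\theta}+I_d)^{\dagger}\Sigma_{s_h}^{\pi,\tilde\theta})$. Thus $\xi(s_h;\pi_h,\theta_h)$, which is a $\theta$-free constant times the square root of this trace, scales by at most $e^{2\eta B_A}$; substituting into $\Gamma_h^{(2,t)} = 2H(\eta\xi + C^{(3)}\xi^2)$, the linear term picks up $e^{2\eta B_A}$ and the quadratic term $e^{4\eta B_A}$, and taking the worse of the two factors gives the stated bound.

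The only step I expect to require care is the trace inequality: PSD dominance of individual factors does not by itself imply $\trace(AB) \le c\,\trace(A'B')$, so one has to apply the one-sided monotonicity twice — once with the regularized-inverse factor dominated and once with the covariance factor dominated — and then chain the two resulting inequalities, which is also why the worst factor ends up being $e^{4\eta B_A}$ rather than $e^{2\eta B_A}$.
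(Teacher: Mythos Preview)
Your proof is correct and follows the same overall structure as the paper: bound both $\Sigma_{s_h}^{\pi,\theta}$ and $\Sigma_{h,t}^{\theta}$ within a factor $e^{2\eta B_A}$ of their $\tilde\theta$-counterparts, pass through the trace via PSD monotonicity (applied once per factor), and read off the $e^{2\eta B_A}$ and $e^{4\eta B_A}$ factors on the linear and quadratic parts of $\Gamma_h^{(2,t)}$. The only technical difference is in the variance-comparison step. The paper invokes its Hessian-comparison proposition, which uses the naive ratio bound $\nu^\theta/\nu^{\tilde\theta}\le e^{2\eta B_A}$ together with a center-shift argument, $\Var^{\tilde\nu}[g]\le\EE^{\tilde\nu}[(g-\EE^{\nu}g)^2]\le e^{2\eta B_A}\Var^{\nu}[g]$. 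You instead observe that the log-sum-exp forces $A_h^{\pi,\theta}\le 0$, so $\nu\in[e^{-\eta B_A},1]$ and the pointwise ratio is only $e^{\eta B_A}$; then the symmetric pairwise identity $\Var_\nu[v]=\tfrac12\sum_{b,b'}\nu(b)\nu(b')(v(b)-v(b'))^2$ picks up the ratio squared and lands on the same $e^{2\eta B_A}$. Your route is slightly cleaner and avoids the center-shift trick, but it does rely on the $A\le 0$ observation to keep the constant right---without it the pairwise identity would overshoot to $e^{4\eta B_A}$ on the covariance, whereas the paper's center-shift argument does not need this. Your treatment of the trace step is also more explicit than the paper's, which simply asserts the final inequality.
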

Therefore, for Scheme 4, 
\begin{align}
    \sum_{t=1}^T \subopt(\hat\pi^t) 
    \le \sum_{t=1}^T \sum_{h=1}^H 2 \EE^t \Gamma_h^{(1, t)}(s_h, a_h, b_h) 
    +  \sum_{t=1}^T {\sum_{h=1}^H  \Gamma_h^{(2, t)} (s_h;\hat\pi_h^t(s_h), \theta^*)}, \label{eq:OnML-scheme-4-regret}
\end{align}
and for Scheme 5, 
\begin{align}
    \sum_{t=1}^T \subopt(\hat\pi^t) 
    \le \sum_{t=1}^T \sum_{h=1}^H 2 \EE^t \Gamma_h^{(1, t)}(s_h, a_h, b_h) 
    +  2\exp(4\eta B_A)\sum_{t=1}^T {\sum_{h=1}^H  \Gamma_h^{(2, t)} (s_h;\hat\pi_h^t(s_h), \theta^*)}. \label{eq:OnML-scheme-5-regret}
\end{align}

By \eqref{eq:OnML-scheme-4-regret} and \eqref{eq:OnML-scheme-5-regret}, we can see that it suffices to bound $\sum_{t=1}^T \EE^t \Gamma_h^{(1, t)}(s_h, a_h, b_h)$ and $\sum_{t=1}^T \EE^t \Gamma_h^{(2,t)}(s_h;\hat\pi_h^t(s_h),\theta_h^*)$.

\paragraph{Controlling the Suboptimality by Eluder Dimension.}
We first bound the leader's Bellman error $\sum_{t=1}^T \EE^t \Gamma_h^{(1, t)}(s_h, a_h, b_h)$.
Note that by linear approximation, 
\begin{align*}
    &\sum_{t=1}^T \EE^t \Gamma_h^{(1, t)}(s_h, a_h, b_h) \nend
    & \quad = \sum_{t=1}^T \EE^t {C_1 d H \allowbreak \sqrt{\log(2d H T^t/\delta)}\cdot \sqrt{\phi_h(s_h, a_h, b_h)^\top (\Lambda_h^t)^{-1}\phi_h(s_h, a_h, b_h)}}\nend
    &\quad \le \sum_{t=1}^T {C_1 d H \allowbreak \sqrt{\log(2d H T^2/\delta)} \cdot \sqrt{\phi_h(s_h^t, a_h^t, b_h^t)^\top (\Lambda_h^t)^{-1}\phi_h(s_h^t, a_h^t, b_h^t)}} \nend
    &\qqquad + C_1 d H \allowbreak \sqrt{\log(2d H T^2/\delta)} B_\phi \sqrt T \log\rbr{He\delta^{-1}} \nend
    &\quad \le {C_1 d H \allowbreak \sqrt{\log(2d H T^2/\delta)} \cdot B_\phi \sqrt{T d \log(T/d)}}  + C_1 d H \allowbreak \sqrt{\log(2d H T^2/\delta)} B_\phi \sqrt T \log\rbr{He\delta^{-1}},
\end{align*}
where the first inequality holds with probability at least $1-\delta$ for all $h\in[H]$ by a standard martingale concentration in \Cref{cor:martigale concentration} with $B_\phi = \nbr{\phi_h}_\infty$ and noting that $\Lambda_h^t\succeq I_d$, and the last inequality holds by definition $\Lambda_h^t = \sum_{i=1}^{t-1} \phi_h(s_h^i, a_h^i, b_h^i)\phi_h(s_h^i, a_h^i, b_h^i)^\top + I_d$, and using the elliptical potential lemma in \Cref{lem:elliptical potential}.

For the follower's response error, we just need to bound 
\begin{align*}
    &\sum_{t=1}^T \EE^t \Gamma_h^{(2,t)}(s_h;\hat\pi_h^t(s_h),\theta_h^*) \nend
    &\quad \lesssim \sum_{t=1}^T 4 H\eta \sqrt{\trace\rbr{\rbr{\Sigma_{h, t}^{\theta_h^*} + I_d}^\dagger  \Sigma_{s_h^t}^{\pi^t,\theta^*} }} \cdot \sqrt{8 C_{\eta}^2 \beta + 4B_\Theta^2}  \nend
    &\qqquad + \sum_{t=1}^T 4 H C^{(3)} \rbr{\sqrt{\trace\rbr{\rbr{\Sigma_{h, t}^{\theta_h^*} + I_d}^\dagger \Sigma_{s_h^t}^{\pi^t,\theta^*} }} \cdot \sqrt{8 C_{\eta}^2 \beta + 4B_\Theta^2}}^2 \nend
    &\qqquad + 2 H B_\phi \rbr{\eta \sqrt{8 C_\eta^2 \beta + 4 B_\Theta^2}}\sqrt T \log\rbr{2H\delta^{-1}}+ 8 H^2 B_\phi^2 \rbr{C^{(3)} \sqrt{8 C_\eta^2 \beta + 4 B_\Theta^2}}^2 \log(2H\delta^{-1}),
\end{align*}
where the inequality uses the martingale concentration for nonnegative processes, i.e., \eqref{eq:martingale-1} for the first order term in $\Gamma^{(2, t)}$ and \eqref{eq:martingale-2} in \Cref{cor:martigale concentration} for the second order term.
Next, we justify the use of the elliptical potential lemma in order to bound these terms. 
We remind the readers of the following definitions
\begin{gather*}
    \bigrbr{\Upsilon_h^{\pi,\theta} \phi_h}(s_h, b_h) = \phi_h^\pi(s_h, b_h) - \bigdotp{\phi_h^\pi(s_h, \cdot)}{\nu_h^{\pi, \theta}(\cdot\given s_h)}, \nend
    \Sigma_{s_h}^{\pi,\theta}  = \Cov_{s_h}^{\pi, \theta}\sbr{\phi_h^\pi(s_h, b_h)} = \Cov^{\pi,\theta}\bigsbr{\bigrbr{\Upsilon_h^{\pi,\theta}\phi_h}(s_h, b_h)\biggiven s_h}, \nend
    \Sigma_{h,t}^{\theta} = \sum_{i=1}^{t-1} \Cov_{s_h^i}^{\pi^i,\theta} \bigsbr{\bigrbr{\Upsilon_h^{\pi^i,\theta}\phi_h}(s_h, b_h)\biggiven s_h = s_h^i} = \sum_{i=1}^{t-1} \Sigma_{s_h^i}^{\pi^i,\theta}.
\end{gather*}
Critically, we see that $\Sigma_{h,t}^\theta$ is just a summation of $\Sigma_{s_h^i}^{\pi^i, \theta}$, which is a nonnegative definite $d$-dimensional matrix, and this gives us a self-normalized process. Therefore, we have from \Cref{lem:potential-matrix} the elliptical potential lemma for matrices that
\begin{align*}
    \sum_{t=1}^T \sqrt{\trace\rbr{\rbr{\Sigma_{h, t}^{\theta_h^*} + I_d}^\dagger  \Sigma_{s_h^t}^{\pi^t,\theta^*} }}  &\le \sqrt{C_0 d T\log\rbr{1+4 B_\phi^2 T/d}}, \nend
    \sum_{t=1}^T {\trace\rbr{\rbr{\Sigma_{h, t}^{\theta_h^*} + I_d}^\dagger  \Sigma_{s_h^t}^{\pi^t,\theta^*} }} & \le C_0 d \log\rbr{1+ 4 B_\phi^2T/d}.
\end{align*}
where $C_0=4 B_\phi^2 /(\log(1+B_\phi^2))$ and $4B_\phi^2$ upper bounds $\trace\bigrbr{\Sigma_{s_h}^{\pi,\theta^*}}$. 
To see why $B_\phi$ is a valid upper bound, we have by definition of $\Sigma_{s_h}^{\pi, \theta^*}$ that 
\begin{align*}
    \trace\rbr{\Sigma_{s_h}^{\pi, \theta^*}} &= \trace\rbr{\Cov^{\pi,\theta^*}\bigsbr{\bigrbr{\Upsilon_h^{\pi,\theta^*}\phi_h}(s_h, b_h)\biggiven s_h}}\nend
    &\le \max_{s_h, b_h, \pi}{\bignbr{\bigrbr{\Upsilon_h^{\pi,\theta^*}\phi_h}(s_h, b_h)}^2}\nend
    &\le 4B_\phi^2.
\end{align*}

We summarize the result here. For the leader's Bellman error in Scheme 4, we can bound it by 
\begin{align*}
    {\text{LBE}} &\lesssim {C_1 d H^2 \allowbreak \sqrt{\log(2d H T^2/\delta)} \cdot B_\phi \sqrt{T d \log(T/d)}}  + C_1 d H^2  \sqrt{\log(2d H T^2/\delta)} B_\phi \sqrt T \log\rbr{He\delta^{-1}}\nend
    &\lesssim d H^2 \sqrt{T d}.
\end{align*}
For the first-order term in the follower's quantal response, we can bound it by 
\begin{align*}
    \text{1st-QRE}&\lesssim 4 H^2\eta \sqrt{C_0 d T} \sqrt{8C_\eta^2 \beta + 4 B_\Theta^2} + 2 H^2 B_\phi\eta \sqrt{2C_\eta^2 \beta + 4 B_\Theta^2} \sqrt{T} \nend
    &\le \eta C_\eta H^{2} d \sqrt T, 
\end{align*}
where we use $\beta\lesssim d$. In Scheme 5, we just multiply an $\exp(4\eta B_A)$ factor. 
For the second-order term in Scheme 4, 
\begin{align*}
    \text{2nd-QRE}\lesssim H^2 C^{(3)} (C_\eta^2\beta + 4 B_\Theta^2) C_0 d^2 \log(T) \lesssim \exp(4\eta B_A) (1+\eta B_A)^3 H^{2} d^2 \log(T)
\end{align*}
In Scheme 5, we just multiply an $\exp(4\eta B_A)$ factor. 
Combining everything together, 
for Scheme 4, 
\begin{align*}
    \Reg(T)\lesssim d H^2 \sqrt{d T} + \eta C_\eta H^2 d \sqrt{T} + \exp(4\eta B_A) (1+\eta B_A)^3 H^2 d^2 \log T, 
\end{align*}
and for Scheme 5, 
\begin{align*}
    \Reg(T)\lesssim d H^2 \sqrt{d T} + \exp(4\eta B_A)\eta C_\eta H^2 d \sqrt{T} + \exp(8\eta B_A) (1+\eta B_A)^3 H^2 d^2 \log T. 
\end{align*}
We thus complete the proof of \Cref{thm:Online-ML}.


\section{Proofs for Farsighted Case}
Before we dive into the proof, we first present the following guarantee for the MLE method used in \S\Cref{sec:farsighted}.
\begin{lemma}[Guarantee of MLE]\label{lem:MLE}
    By choosing $\beta\ge  \allowbreak 9\log(3e^2H \cN_\rho(\cM,T^{-1})\delta^{-1})$, where $\cN_\rho(\cM,\epsilon)$ is the minimal size of an $\epsilon$-optimistic covering net of $\cM$. Here, an $\epsilon$-optimistic covering net $\cM_\epsilon\subset \cM$ is a finite subset such that for any $M\in\cM$, there exists $\tilde M\in\cM_\epsilon$ satisfying the following conditions:
    \begin{itemize}[leftmargin=20pt]
        \item[(i)] $D_\H\orbr{\nu_h^{\pi, M}(\cdot\given s_h), \nu_h^{\pi, \tilde M}(\cdot\given s_h)}\le \epsilon$, $D_\H\orbr{P_h^M(\cdot\given s_h, a_h, b_h), \allowbreak P_h^{\tilde M}(\cdot\given s_h, a_h, b_h)}\le \epsilon$, $\bigabr{(u_h^{M}-u_h^{\tilde M})(s_h,a_h,b_h)} \le \epsilon$, and $|(A_h^{\pi, M}-A_h^{\pi, \tilde M})(s_h, b_h)|\le \eta^{-1} \epsilon$ for all $\pi\in\Pi$, $h\in[H]$, $(s_h, a_h, b_h)\in\cS\times\cA\times\cB$;
        \item[(ii)] $P_h^{M}(s_{h+1}\given s_h, a_h, b_h)\le \exp(\epsilon)P_h^{\tilde M} (s_{h+1}\given s_h, a_h, b_h)$ for all $h\in[H]$ and $(s_{h+1}, s_h, a_h, b_h)\in\cS^2 \times \cA\times\cB$.
    \end{itemize}
The confidence set $\CI_\cM^t(\beta)$ satisfies the following with probability at least $1-\delta$: for any given $t\in[T]$, $M^*\in\confset_\cM^t(\beta)$, and it also holds for $\forall h\in[H], \forall M\in\confset^t(\beta)$ that
    \begin{align*}
        \sum_{i=1}^{t-1}D_{\RL, h}^2\rbr{M, M^*,\pi^i}
        \le 4\beta,\quad 
        \sum_{i=1}^{t-1}\hat D_{\RL, h,i}^2\rbr{M, M^*}
        \le 4\beta, 
    \end{align*}
 where 
    \begin{align*}
        D_{\RL,h}^2 (M,  M^*;\pi) &=   \EE^{\pi, M^*}D_\H^2\rbr{\nu_h^{\pi,M}, \nu_h^{\pi,M^*}} +
        \EE^{\pi, M^*} D_\H^2(P_h^{M}, P_h^{M^*}) +
        \EE^{\pi, M^*}\rbr{u_h^{M^*}-u_h^M}^2,\nend
        D_{\RL,h,i}^2(M,M^*) &= D_\H^2\orbr{\nu_h^{\pi^i, M}(\cdot\given s_h^i), \nu_h^{\pi^i, \tilde M}(\cdot\given s_h^i)} +
        D_\H^2\orbr{P_h^M(\cdot\given s_h^i, a_h^i, b_h^i), \allowbreak P_h^{\tilde M}(\cdot\given s_h^i, a_h^i, b_h^i)} \nend
        &\qquad + \bigrbr{(u_h^{M}-u_h^{\tilde M})(s_h^i,a_h^i,b_h^i)}^2,
    \end{align*}
    and $\EE^{\pi, M}$ is taken under policy $\pi$ and the model $M$.
    \begin{proof}
        See \Cref{sec:proof-MLE} for a detailed proof.
    \end{proof}
\end{lemma}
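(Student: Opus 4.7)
The plan is a standard MLE-type analysis adapted to the three heterogeneous loss components in $\cL_h^t(M)$: (i) the negative log-likelihood $-\log \nu_h^{\pi^i,M}(b_h^i \given s_h^i) = -\eta A_h^{\pi^i,M}(s_h^i, b_h^i)$ of the follower's quantal response, (ii) the negative log-likelihood $-\log P_h^M(s_{h+1}^i \given s_h^i, a_h^i, b_h^i)$ of the transition, and (iii) the squared penalty $(u_h^i - u_h^M(s_h^i, a_h^i, b_h^i))^2$ on the leader's reward. Steps (i) and (ii) together form a bona fide conditional log-likelihood of the pair $(b_h^i, s_{h+1}^i)$ given the past and $(s_h^i, a_h^i)$, while (iii) is a direct regression penalty with bounded targets in $[0,1]$.

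First I would establish validity $M^* \in \confset^t_\cM(\beta)$. Since any minimizer $\hat M$ of $\cL_h^t$ can be $T^{-1}$-approximated inside the optimistic covering net $\cM_\epsilon$ (with $\epsilon = T^{-1}$) provided by the lemma's hypothesis, it suffices to bound $\cL_h^t(M^*) - \cL_h^t(\tilde M)$ uniformly over $\tilde M \in \cM_\epsilon$. For the probabilistic parts (i)+(ii), I would invoke the classical supermartingale inequality $\EE_{i-1}\bigl[\exp\bigl(\tfrac12 \log \bigl[L^{\tilde M}/L^{M^*}\bigr]\bigr)\bigr] \le 1 - D_\H^2(L^{\tilde M}, L^{M^*})$, then take the exponential and apply Markov's inequality. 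The optimistic-cover condition $P_h^{M} \le e^\epsilon P_h^{\tilde M}$ is precisely what ensures the log-density ratio for the transition piece is bounded, which is otherwise problematic since Hellinger-type covers alone do not control $\log$-ratios. For the reward part (iii), I would observe that
\[
(u_h^i - u_h^M)^2 - (u_h^i - u_h^{M^*})^2 = (u_h^M - u_h^{M^*})^2 - 2(u_h^M - u_h^{M^*})(u_h^i - u_h^{M^*}),
\]
and control the cross-martingale term via a Freedman/Bernstein bound for bounded increments. A union bound over $\cM_\epsilon$ and $h \in [H]$ with the choice $\beta \ge 9 \log(3e^2 H \cN_\rho(\cM, T^{-1})/\delta)$ then gives validity for all $h$ simultaneously.

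Next I would derive accuracy. Given $M \in \confset^t(\beta)$, the inequality $\cL_h^t(M) - \cL_h^t(M^*) \le \cL_h^t(M) - \cL_h^t(\hat M) \le \beta$ (combined with validity of $M^*$) provides the key driver. For the in-sample statement $\sum_{i<t} \hat D_{\RL, h, i}^2(M, M^*) \le 4\beta$, I would use the elementary identity $-\log(1-x) \ge x$ to convert the probabilistic log-ratio sum into $\sum_i D_\H^2(\nu^{\pi^i, M}, \nu^{\pi^i, M^*}) + \sum_i D_\H^2(P_h^M, P_h^{M^*})$ evaluated at the data, up to a concentration error scaling like $\log(|\cM_\epsilon|/\delta)$, and add the regression term $(u_h^M - u_h^{M^*})^2$ directly from the expansion above. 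For the expectation version $\sum_{i<t} D_{\RL,h}^2(M, M^*; \pi^i) \le 4\beta$, I would apply a second martingale concentration argument to the nonnegative predictable process formed by each of the three squared error quantities, using that their one-step conditional expectations dominate (a constant multiple of) the in-sample squared errors from the previous bound.

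The main obstacle I anticipate is the joint covering and control of the three loss components in a single inequality. The quantal response advantage $A_h^{\pi, M}$ depends on $M$ nonlocally through dynamic programming over the remaining horizon, so a perturbation in $P^M$ or $r^M$ at a later step propagates backward through the soft Bellman equations; condition (i) of the optimistic covering net is what insures stability. Simultaneously, condition (ii) on $P_h^M/P_h^{\tilde M}$ is needed to keep the transition log-ratio bounded so that the exponential supermartingale argument applies. Once the covering net is shown (via an argument like Proposition~\ref{prop:error-prop}) to simultaneously handle all three requirements at scale $\epsilon = T^{-1}$ with $\log \cN_\rho(\cM, T^{-1})$ entropy, the martingale concentrations are essentially routine and the two displayed bounds follow.
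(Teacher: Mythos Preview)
Your proposal is correct and follows the same route as the paper: an exponential-supermartingale bound (Lemma~\ref{lem:freeman-variation}) on each log-likelihood/loss ratio over the optimistic covering net, with condition~(ii) used exactly as you say to control the transition log-ratio, then $\log x\le x-1$ to extract Hellinger distances, followed by the standard validity/accuracy argument. The only cosmetic differences are that the paper absorbs the reward component~(iii) into the same supermartingale framework via the sub-Gaussian MGF bound $\EE[\exp(\lambda((r-\EE r)^2-(r-\hat r)^2))]\le\exp(-\lambda(1-2\sigma^2\lambda)(\EE r-\hat r)^2)$ rather than your bias--cross-term decomposition plus Freedman, and it obtains the expectation and in-sample bounds in parallel by rerunning the identical argument with two filtrations ($\sigma(\tau^{1:i-1})$ versus one that additionally conditions on $s_h^i$) rather than deriving one from the other by a second concentration step.
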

\Cref{lem:MLE} guarantees that the confidence set $\confset^t(\beta)$ is valid in the sense that $M^*\in\confset^t(\beta)$ and any $M\in\confset^t(\beta)$ has $D_\RL$ bounded by $\cO(\beta)$. 
For the optimistic covering net, we remark that constraints in the first conditions are discussed in \eqref{eq:rho-cM}. The second condition requires $P_h^M$ to be dominated above by $P_h^{\tilde M}$, which is needed to control the difference in the log-likelihood.

\subsection{Proof of \Cref{thm:PMLE} on PMLE wiith Farsighted Follower}
\label{sec:proof-PMLE}
In this subsection, we provide a formal proof to \Cref{thm:PMLE}.
The proof is carried out as the following.

\paragraph{Step 1. Offline Suboptimality Decomposition}
By \Cref{lem:MLE}, we have with high probability that $M^*\in\CI_\cM(\beta)$, and we have the following suboptimality decomposition,
\begin{align*}
    J(\pi^*) - J(\hat\pi) 
    &= J(\pi^*) - J(\pi^*, M^{\pi^*}) + J(\pi^*, M^{\pi^*}) - J(\hat\pi, M^{\hat\pi}) + J(\hat\pi, M^{\hat\pi}) - J(\hat\pi)\nend
    &\le J(\pi^*) - J(\pi^*, M^{\pi^*}) +J(\pi^*, M^{\pi^*}) - J(\hat\pi, M^{\hat\pi})\nend
    &\le J(\pi^*) - J(\pi^*, M^{\pi^*}) ,
\end{align*}
where we define $M^\pi = \argmin_{M\in\CI_\cM(\beta)}J(\pi, M)$ as the pessimistic estimated model for $\pi$. Here, the first inequality holds by noting that $J(\hat\pi) = J(\hat\pi, M^*) \ge J(\hat\pi, M^{\hat\pi})$ by the validity of the confidence set $\CI_\cM(\beta)$ and the definition of $M^{\hat\pi}$, and the  second inequality is a direct result of policy optimization. Now, we further decompose the suboptimality using \Cref{lem:subopt-decomposition}. We define $\tilde M = M^{\pi^*}$, and let $\tilde U, \tilde W$ be the follower's value functions under policy $\pi^*$ and the estimated model $\tilde M$. Let $\tilde \nu$ be the estimated quantal response under $\pi^*$ and $\tilde M$. We have for $\pi^*, \tilde U, \tilde W, \tilde\nu$ that 
\begin{align*}
    &J(\pi^* ) - J(\pi^*, \tilde M)\nend
    &\quad \le \sum_{h=1}^H \EE \sbr{\rbr{\tilde U_h - u_h}(s_h, a_h, b_h) -  \tilde W_{h+1}(s_{h+1})} + \sum_{h=1}^H 2 H  \EE \sbr{D_\TV\rbr{\tilde \nu_h(\cdot\given s_h), \nu_h(\cdot\given s_h)}}\nend
    &\quad \le \sum_{h=1}^H \underbrace{\EE \sbr{\rbr{\tilde U_h - u_h}(s_h, a_h, b_h) -  \tilde W_{h+1}(s_{h+1})}}_{\dr\text{Leader's Bellman error}} \nend
    &\qqquad +  
    C^{(0)} \cdot 
    \sum_{h=1}^H \underbrace{\EE\bigsbr{\bigabr{\tilde \Delta^{(1)}_h(s_h, b_h)}}}_{\ds\text{1st-order error}}  + C^{(2)} \cdot 
    \max_{h\in [H]} \underbrace{\EE\bigsbr{ \bigrbr{\orbr{\tilde Q_h - r_h^{\pi^*} - \gamma P_h^{\pi^*} \tilde V_{h+1}}(s_h, b_h)}^2}}_{\ds\text{2nd-order error}}
\end{align*}
where the expectation is taken under $\pi^*$ and the true model $M^*$, and we define $\tilde Q, \tilde V$ as the follower's value functions under policy $\pi^*$ and model $\tilde M$. 
Here, the first inequality holds by \eqref{eq:perform-diff-linear} where we notice that $J(\pi^*, \tilde M) = \EE[\tilde W_1(s_1)]$ and also that $\tilde W_h = T_h^{\pi^*, \tilde \nu}\tilde U_h$. The second inequality holds by using \Cref{lem:performance diff}. Moreover, the definition of $\tilde\Delta_H^{(1)}(s_h, b_h) $ is given by 
\begin{align*}
    \tilde \Delta^{(1)}_h(s_h, b_h) &=  \rbr{\EE_{s_h, b_h} -\EE_{s_h}}\Biggsbr{\sum_{l=h}^H \gamma^{l-h}\underbrace{\rbr{\tilde Q_l - r_l^{\pi^*} - \gamma P_l^{\pi^*} \tilde V_{l+1}}(s_l, b_l)}_{\ds\text{Follower's Bellman error}}}. 
\end{align*}
In the sequel, we separately bound these three terms.

\paragraph{Step 2. Bounding the Leader's Bellman Error.}
We present the gurantee we have for the Leader's Bellman error on the samples. Define $\EE^i=\EE^{\pi^i}$, which is the expectation taken under $\pi^i$ and the true model $M^*$. We have
\begin{align*}
    &\sum_{i=1}^T 
    \EE^i \sbr{\rbr{\bigrbr{\tilde U_h - u_h}(s_h, a_h, b_h) - \tilde W_{h+1}(s_{h+1})}^2}\nend
    &\quad =\sum_{i=1}^T \EE^i \sbr{\rbr{{\bigrbr{\tilde u_h - u_h}(s_h, a_h, b_h) + \bigrbr{\tilde P_h - P_h} \tilde W_{h+1}(s_{h+1})}}^2}\nend
    &\quad \le 2\sum_{i=1}^T \EE^i \sbr{\rbr{\rbr{\tilde u_h - u_h}(s_h,a_h,b_h)}^2} + 8 H^2 \sum_{i=1}^T \EE^i D_\TV^2\rbr{\tilde P_h(\cdot\given s_h, a_h, b_h), P_h(\cdot\given s_h, a_h, b_h)}\nend
    &\quad \lesssim  H^2 \beta, 
\end{align*}
where the first equality holds by the definition of $\tilde U$, and the first inequality holds by the Jensen's inequality, and the last inequality holds by the MLE guarantee in \Cref{lem:MLE}, where we hide some universal constants by \say{$\lesssim$}.
Therefore, we have that
\begin{align*}
    &\sum_{h=1}^H \EE \sbr{\rbr{\tilde U_h - u_h}(s_h, a_h, b_h) -  \tilde W_{h+1}(s_{h+1})}\nend
    &\quad \le \sum_{h=1}^H \sqrt{\EE \Bigsbr{\Bigrbr{\bigrbr{\tilde U_h - u_h}(s_h, a_h, b_h) -  \tilde W_{h+1}(s_{h+1})}^2}}\nend
    &\quad \lesssim H\sqrt{H^2 \beta} \cdot 
    \sqrt{\frac{\EE \Bigsbr{\Bigrbr{\bigrbr{\tilde U_h - u_h}(s_h, a_h, b_h) -  \tilde W_{h+1}(s_{h+1})}^2}}{\sum_{i=1}^T 
    \EE^i \Bigsbr{\rbr{\bigrbr{\tilde U_h - u_h}(s_h, a_h, b_h) - \tilde W_{h+1}(s_{h+1})}^2}}}\nend
    &\quad \le H^2\sqrt{ \beta} \cdot \max_{M\in\cM, h\in[H]} \sqrt{\frac{\EE\sbr{\rbr{\bigrbr{U_h^{\pi^*, M}-\bigrbr{u_h+P_h W_{h+1}^{\pi^*, M}}}(s_h, a_h, b_h)}^2 }}{\sum_{i=1}^T \EE^i\sbr{\rbr{\bigrbr{U_h^{\pi^*, M}-\bigrbr{u_h+P_h W_{h+1}^{\pi^*, M}}}(s_h, a_h, b_h)}^2 }}}.
\end{align*}

\paragraph{Step 3. Bound the first-order Error of the Follower's Response.}
We first study the following guarantee for the 1st order term over the offline samples,
\begin{align*}
    &\sum_{i=1}^T \EE^i \sbr{\rbr{\rbr{\EE_{s_h, b_h}^i -\EE_{s_h}^i}\sbr{\sum_{l=h}^H \gamma^{l-h}\rbr{\tilde r_l - r_l + \gamma \bigrbr{\tilde P_l - P_l} \tilde V_{l+1}}(s_l,a_l,  b_l)}}^2}\nend
    &\quad \lesssim \sum_{i=1}^T \EE^i \sbr{\rbr{\rbr{\EE_{s_h, b_h}^i -\EE_{s_h}^i}\sbr{\sum_{l=h}^H \gamma^{l-h}\rbr{\tilde r_l - r_l + \gamma \bigrbr{\tilde P_l - P_l} V_{l+1}^{\pi^i, \tilde M}}(s_l,a_l,  b_l)}}^2}\nend
    &\qqquad + \sum_{i=1}^T \EE^i \sbr{\rbr{\rbr{\EE_{s_h, b_h}^i -\EE_{s_h}^i}\sbr{\sum_{l=h}^H \gamma^{l-h}\rbr{\gamma \bigrbr{\tilde P_l - P_l} \bigrbr{\tilde V_{l+1} - V_{l+1}^{\pi^i,\tilde M}}}(s_l,a_l,  b_l)}}^2} \nend
    &\quad \lesssim \sum_{i=1}^T \EE^i \sbr{\rbr{\rbr{\EE_{s_h, b_h}^i -\EE_{s_h}^i}\Biggsbr{\sum_{l=h}^H \gamma^{l-h}\rbr{\tilde r_l - r_l + \gamma \bigrbr{\tilde P_l - P_l} V_{l+1}^{\pi^i, \tilde M} }(s_l, a_l, b_l)}}^2}\nend
    &\qqquad + B_A^2 \sum_{i=1}^T   \EE^i\rbr{\sum_{l=h}^H \gamma^{l-h}\rbr{\EE_{s_h,b_h}^i + \EE_{s_h}^i}  \sbr{D_\TV\rbr{\tilde P_l(\cdot\given s_l, a_l, b_l), P_l(\cdot\given s_l, a_l, b_l)}}}^2\nend
    &\quad \lesssim \sum_{i=1}^T \EE^i \biggsbr{\biggrbr{\underbrace{\rbr{\EE_{s_h, b_h}^i -\EE_{s_h}^i}\Biggsbr{\sum_{l=h}^H \gamma^{l-h}\rbr{\tilde r_l - r_l + \gamma \bigrbr{\tilde P_l - P_l} V_{l+1}^{\pi^i, \tilde M} }(s_l, a_l, b_l)}}_{\ds \tilde \Delta^{(1)}_{h, \pi^i, \tilde M}(s_h, b_h) }}^2}\nend
    &\qqquad + B_A^2 \sum_{i=1}^T   \eff_H(\gamma) \sum_{l=h}^H \gamma^{l-h} \EE^i D_\TV^2\rbr{\tilde P_l(\cdot\given s_l, a_l, b_l), P_l(\cdot\given s_l, a_l, b_l)}
\end{align*}
where the first inequality holds by the Jensen's inequality, where we add a $(\tilde P_l - P_l) V_{l+1}^{\pi^i, \tilde M}$ term and substract it, which gives us a separate $(\tilde P_l - P_l) (\tilde V_{l+1} - V_{l+1}^{\pi^i, \tilde M})$ term. 
In the second inquality, we upper bound $(\tilde P_l - P_l) (\tilde V_{l+1} - V_{l+1}^{\pi^i, \tilde M})$ by the TV distance between $\tilde P_l$ and $P_l$ multiplied by the infinity norm $\bignbr{\tilde V_{l+1} - V_{l+1}^{\pi^i, \tilde M}}_\infty$, which is bounded by $2 B_A$ by our argument in \Cref{sec:app-notations}. Since the TV distance is always nonnegative, we can safely flips the sign between $\EE_{s_h, b_h}^i - \EE_{s_h}^i $. The above steps give us the first inequality. 
The second inequality simply holds by using the Cauchy-Schwartz inequality where we move the square inside the expectation for the summation of the TV distance, and the $\eff_H(\gamma)$ is just a byproduct produced when applying the Cauchy-Schwartz inequality.

Next, we show how to control this $\tilde \Delta_{h,\pi^i, \tilde M}(s_h, b_h)$ term. We first notice that $\tilde\Delta^{(1)}_{h, \pi^i, \tilde M}(s_h, b_h)$ is nothing but just $\tilde\Delta^{(1)}_h(s_h, b_h)$ plugged in with $\pi^i$ as the policy $\pi$ and $\tilde Q^{\pi^i}=Q^{\pi^i, \tilde M}, \tilde V^{\pi^i} = V^{\pi^i, \tilde M}$ as the follower's value functions $\tilde Q, \tilde V$. 
Hence, we can invoke \Cref{lem:1st-ub} which says that 
\begin{align*}
    &\bigrbr{\tilde \Delta_{h, \pi^i,\tilde M}^{(1)}(s_h, b_h)}^2  \nend
        &\quad \le 2 \rbr{\rbr{\EE_{s_h, b_h}^i-\EE_{s_h}^i} \bigsbr{\orbr{Q_h^{\pi^i} - \tilde Q_h^{\pi^i}}(s_h, b_h)}}^2 \nend
        &\qqquad + 16 \gamma^2  \rbr{\eta^{-1} +2 B_A}^2\eff_H(\gamma) \sum_{l=h+1}^H \gamma^{l-h-1} {\rbr{\EE_{s_h}^i+\EE_{s_h, b_h}^i}\sbr{D_\H^2(\nu_l^{\pi^i}(\cdot\given s_l), \tilde\nu_l^{\pi^i}(\cdot\given s_l))}}
\end{align*}
where we define $\tilde \nu_l^{\pi^i} = \nu_l^{\pi^i, \tilde M}$ as the quantal response under $\pi^i$ and the estimated model $\tilde M$. This is true by our definition of $\tilde Q^{\pi^i}, \tilde V^{\pi^i}$ that they are the follower's value functions under $\pi^i$ and model $\tilde M$.
Therefore, we have that 
\begin{align*}
    &\sum_{i=1}^T \EE^i\bigrbr{\tilde \Delta_{h, \pi^i,\tilde M}^{(1)}(s_h, b_h)}^2  \nend
    &\quad \lesssim \sum_{i=1}^T \EE^i\rbr{\rbr{\EE_{s_h, b_h}^i-\EE_{s_h}^i} \bigsbr{\orbr{Q_h^{\pi^i} - \tilde Q_h^{\pi^i}}(s_h, b_h)}}^2 \nend
    &\qqquad + \gamma^2  \rbr{\eta^{-1} +2 B_A}^2\eff_H(\gamma) \sum_{i=1}^T \sum_{l=h+1}^H \gamma^{l-h-1} {\EE^i\sbr{D_\H^2(\nu_l^{\pi^i}(\cdot\given s_l), \tilde\nu_l^{\pi^i}(\cdot\given s_l))}} \nend
    &\quad\lesssim \rbr{(\eta^{-2}+B_A^2) + \gamma^2  \rbr{\eta^{-1} +2 B_A}^2\eff_H(\gamma) H} \beta, 
\end{align*}
where the last inequality holds by both \eqref{eq:MLE-guarantee-Q-3} in \Cref{lem:MLE-formal} for the Q difference term and the MLE guarantee in \Cref{lem:MLE} for the Hellinger term. Here, we upper bound $\gamma^{l-h-1}$ by $1$ and take a summation over $h\in[H]$. Therefore, we conclude that 
\begin{align*}
    &
    \sum_{i=1}^T \EE^i \sbr{\rbr{\rbr{\EE_{s_h, b_h}^i -\EE_{s_h}^i}\sbr{\sum_{l=h}^H \gamma^{l-h}\rbr{\tilde r_l - r_l + \gamma \bigrbr{\tilde P_l - P_l} \tilde V_{l+1}}(s_l,a_l,  b_l)}}^2} \nend
    &\quad\lesssim  
    \sum_{i=1}^T \EE^i\bigrbr{\tilde \Delta_{h, \pi^i,\tilde M}^{(1)}(s_h, b_h)}^2 \nend
    &\qqquad + 
    B_A^2 \sum_{i=1}^T   \eff_H(\gamma) \sum_{l=h}^H \gamma^{l-h} \EE^i D_\TV^2\rbr{\tilde P_l(\cdot\given s_l, a_l, b_l), P_l(\cdot\given s_l, a_l, b_l)}\nend
    &\quad \le  \rbr{(\eta^{-2}+B_A^2) + \gamma^2  \rbr{\eta^{-1} +2 B_A}^2\eff_H(\gamma) H} \beta  + B_A^2 \eff_H(\gamma) H \beta \nend
    &\quad \lesssim \rbr{\eta^{-2} + B_A^2 }\eff_H(\gamma)H \beta. 
\end{align*}
As a result, we have that
\begin{align*}
    &C^{(0)}\sum_{h=1}^H \EE\sbr{\abr{\tilde \Delta_h^{(1)}(s_h, b_h)}}\nend
    &\quad\le C^{(0)} \sum_{h=1}^H \sqrt{\EE\sbr{\tilde \Delta_h^{(1)}(s_h, b_h)^2}}\nend
    &\quad \lesssim C^{(0)} H \sqrt{ \rbr{\eta^{-2} + B_A^2 }\eff_H(\gamma)H \beta} \nend
    &\qqquad \cdot \max_{h\in[H]}\sqrt\frac{\EE \sbr{\rbr{\rbr{\EE_{s_h, b_h} -\EE_{s_h}}\sbr{\sum_{l=h}^H \gamma^{l-h}\rbr{\tilde r_l - r_l + \gamma \bigrbr{\tilde P_l - P_l} \tilde V_{l+1}}(s_l,a_l,  b_l)}}^2}}{\sum_{i=1}^T \EE^i \sbr{\rbr{\rbr{\EE_{s_h, b_h}^i -\EE_{s_h}^i}\sbr{\sum_{l=h}^H \gamma^{l-h}\rbr{\tilde r_l - r_l + \gamma \bigrbr{\tilde P_l - P_l} \tilde V_{l+1}}(s_l,a_l,  b_l)}}^2}}\nend
    &\quad \lesssim \rbr{1 + \eta B_A }H^2\sqrt{ H \eff_H(\gamma)\beta} \nend
    &\quad \cdot \max_{M\in\cM, h\in[H]}\sqrt\frac{\EE \sbr{\rbr{\rbr{\EE_{s_h, b_h} -\EE_{s_h}}\sbr{\sum_{l=h}^H \gamma^{l-h}\rbr{r_l^M - r_l + \gamma \bigrbr{P_l^M - P_l} V_{l+1}^{\pi^*, M}}(s_l,a_l,  b_l)}}^2}}{\sum_{i=1}^T \EE^i \sbr{\rbr{\rbr{\EE_{s_h, b_h}^i -\EE_{s_h}^i}\sbr{\sum_{l=h}^H \gamma^{l-h}\rbr{ r_l^M - r_l + \gamma \bigrbr{P_l^M - P_l} V_{l+1}^{\pi^*, M}}(s_l,a_l,  b_l)}}^2}}, 
\end{align*}
where we notice that $C^{(0)}=2\eta H$.

\paragraph{Step 4. Bound the second-order Error in the Follower's Response.}
The last thing to do is controlling the second order term. We first expand the second order term in terms of $r_h, P_h, V_{h+1}$ by definitions and have the following guarantee for the second order term over the samples, 
\begin{align*}
    &\sum_{i=1}^T \EE^i\sbr{ \rbr {\EE_{s_h, b_h}^i\sbr{{\bigrbr{\tilde r_h - r_h + \gamma \bigrbr{\tilde P_h - P_h} \tilde V_{h+1}}(s_h,a_h, b_h)}}}^2}\nend
    &\quad \lesssim \sum_{i=1}^T \EE^i\sbr{ \rbr {\EE_{s_h, b_h}^i\sbr{{\bigrbr{\tilde r_h - r_h + \gamma \bigrbr{\tilde P_h - P_h} V_{h+1}^{\pi^i, \tilde M}}(s_h,a_h, b_h)}}}^2} \nend
    &\qqquad + \sum_{i=1}^T \EE^i\sbr{ \rbr {\EE_{s_h, b_h}^i\sbr{{\gamma \bigrbr{\tilde P_h - P_h} \bigrbr{\tilde V_{h+1} - V_{h+1}^{\pi^i, \tilde M} }(s_h,a_h, b_h)}}}^2}\nend
    &\quad \lesssim \sum_{i=1}^T \EE^i\sbr{ \rbr {\EE_{s_h, b_h}^i\sbr{{\bigrbr{\tilde r_h - r_h + \gamma \bigrbr{\tilde P_h - P_h} V_{h+1}^{\pi^i, \tilde M}}(s_h,a_h, b_h)}}}^2} \nend
    &\qqquad + \gamma^2B_A^2 \sum_{i=1}^T \EE^i\sbr{ D_\TV^2\rbr{\tilde P_h(\cdot\given s_h, a_h, b_h), P_h(\cdot\given s_h, a_h, b_h)}}, 
\end{align*}
where the first inequality holds by the Jensen's inequality, the second inequality holds by upper bounding the difference in $(\tilde P-P)(\tilde V-V^{\pi^i, \tilde M})$ by the TV distance and the upper bound for the follower's V function as $B_A$.
We now invoke \Cref{lem:2nd-ub} for $(\pi^i, Q_h^{\pi^i,\tilde M}, V_{h+1}^{\pi^i, \tilde M})$, which gives us 
\begin{align}
    &\sum_{i=1}^T \max_{h\in[H]}\EE^i\sbr{ \rbr {\EE_{s_h, b_h}^i\sbr{{\bigrbr{\tilde r_h - r_h + \gamma \bigrbr{\tilde P_h - P_h} V_{h+1}^{\pi^i, \tilde M}}(s_h,a_h, b_h)}}}^2} \nend
    &\quad =\sum_{i=1}^T\max_{h\in[H]}\EE^i\sbr{ \rbr{\rbr{Q_h^{\pi^i,\tilde M} - r_h^{\pi^i} - \gamma P_h^{\pi^i}  V_{h+1}^{\pi^i, \tilde M}}(s_h, b_h)}^2} \nend
    &\quad \le \sum_{i=1}^T L^{(2)} \sum_{h=1}^H \cbr{\EE D_\H^2(\nu_h^{\pi^i}(\cdot\given s_h),\nu_h^{\pi^i, \tilde M}(\cdot\given s_h))+\EE D_\TV^2(P_h^{\pi^i}(\cdot\given s_h, b_h),P_h^{\pi^i, \tilde M}(\cdot\given s_h, b_h))}\nend
    &\quad \lesssim L^{(2)} H \beta .\label{eq:PMLE-1}
\end{align}
where in the first inequality, we additionally replace the maximum over $h\in[H]$ as a summation over $h\in[H]$, and the second inequlity holds by the MLE guarantee in \Cref{lem:MLE}. Hence, we conclude that 
\begin{align*}
    &\sum_{i=1}^T \EE^i\sbr{ \rbr {\EE_{s_h, b_h}^i\sbr{{\bigrbr{\tilde r_h - r_h + \gamma \bigrbr{\tilde P_h - P_h} \tilde V_{h+1}}(s_h,a_h, b_h)}}}^2}\nend
    &\quad \lesssim \sum_{i=1}^T \EE^i\sbr{ \rbr {\EE_{s_h, b_h}^i\sbr{{\bigrbr{\tilde r_h - r_h + \gamma \bigrbr{\tilde P_h - P_h} V_{h+1}^{\pi^i, \tilde M}}(s_h,a_h, b_h)}}}^2} \nend
    &\qqquad + \gamma^2B_A^2 \sum_{i=1}^T \EE^i\sbr{ D_\TV^2\rbr{\tilde P_h(\cdot\given s_h, a_h, b_h), P_h(\cdot\given s_h, a_h, b_h)}}\nend
    &\quad \lesssim L^{(2)} H \beta + \gamma^2 B_A^2 \beta, 
\end{align*}
where the last inequality holds by using \eqref{eq:PMLE-1} for the first term and the MLE guarantee in \Cref{lem:MLE} for the second term.
Now, we invoke \Cref{prop:de-regret-prop} and obtain 
\begin{align*}
    &C^{(2)} \cdot 
    \max_{h\in[H]}\EE\bigsbr{ \bigrbr{\orbr{\tilde Q_h - r_h^{\pi^*} - \gamma P_h^{\pi^*} \tilde V_{h+1}}(s_h, b_h)}^2} \nend
    &\quad = C^{(2)} \cdot \max_{h\in[H]}
    \EE\sbr{ \rbr{\EE_{s_h, b_h}\sbr{\bigrbr{\tilde r_h - r_h + \gamma \bigrbr{\tilde P_h - P_h} \tilde V_{h+1}}(s_h, a_h, b_h)}}^2} \nend
    &\quad \lesssim C^{(2)} \bigrbr{L^{(2)} H \beta + \gamma^2 B_A^2 \beta} \nend
    &\qqquad\cdot \max_{h\in[H]}\frac{\EE\sbr{ \rbr{\EE_{s_h, b_h}\sbr{\bigrbr{\tilde r_h - r_h + \gamma \bigrbr{\tilde P_h - P_h} \tilde V_{h+1}}(s_h, a_h, b_h)}}^2}}{\sum_{i=1}^T \EE^i\sbr{ \rbr {\EE_{s_h, b_h}^i\sbr{{\bigrbr{\tilde r_h - r_h + \gamma \bigrbr{\tilde P_h - P_h} \tilde V_{h+1}}(s_h,a_h, b_h)}}}^2}}\nend
    &\quad \lesssim C^{(2)} \bigrbr{L^{(2)} H \beta + \gamma^2 B_A^2 \beta} \nend
    &\qqquad\cdot \max_{h\in[H], M\in\cM}\frac{\EE\sbr{ \rbr{\EE_{s_h, b_h}\sbr{\bigrbr{r_h^{M} - r_h + \gamma \bigrbr{P_h^M - P_h}  V_{h+1}^{\pi^*, M}}(s_h, a_h, b_h)}}^2}}{\sum_{i=1}^T \EE^i\sbr{ \rbr {\EE_{s_h, b_h}^i\sbr{{\bigrbr{r_h^{M} - r_h + \gamma \bigrbr{P_h^M - P_h}  V_{h+1}^{\pi^*, M}}(s_h,a_h, b_h)}}}^2}}, 
\end{align*}
where the first inequality is just a distribution, and the last inequality takes a maximum over $\cM$. 

In summary, for the leader's Bellman error, 
\begin{align*}
    \text{LBE} \lesssim H^2 \sqrt{\beta} \cdot \max_{M\in\cM, h\in[H]} \sqrt{\frac{\EE\sbr{\rbr{\bigrbr{U_h^{\pi^*, M}-\bigrbr{u_h+P_h W_{h+1}^{\pi^*, M}}}(s_h, a_h, b_h)}^2 }}{\sum_{i=1}^T \EE^i\sbr{\rbr{\bigrbr{U_h^{\pi^*, M}-\bigrbr{u_h+P_h W_{h+1}^{\pi^*, M}}}(s_h, a_h, b_h)}^2 }}},
\end{align*}
for the first order term in the follower's quantal response error, 
\begin{align*}
    &\text{1st-QRE} \nend
    &\quad \lesssim  \eta C_\eta H^2\sqrt{ H \eff_H(\gamma)\beta} \nend
    &\qquad \cdot \max_{M\in\cM, h\in[H]}\sqrt\frac{\EE \sbr{\rbr{\rbr{\EE_{s_h, b_h} -\EE_{s_h}}\sbr{\sum_{l=h}^H \gamma^{l-h}\rbr{r_l^M - r_l + \gamma \bigrbr{P_l^M - P_l} V_{l+1}^{\pi^*, M}}(s_l,a_l,  b_l)}}^2}}{\sum_{i=1}^T \EE^i \sbr{\rbr{\rbr{\EE_{s_h, b_h}^i -\EE_{s_h}^i}\sbr{\sum_{l=h}^H \gamma^{l-h}\rbr{ r_l^M - r_l + \gamma \bigrbr{P_l^M - P_l} V_{l+1}^{\pi^*, M}}(s_l,a_l,  b_l)}}^2}}, 
\end{align*}
and for the second order term in the follower's quantal response error, 
\begin{align*}
    &\text{2nd-QRE}\nend
    &\quad \lesssim C^{(2)} L^{(2)} H \beta \nend
    &\qqquad\cdot \max_{h\in[H], M\in\cM}\frac{\EE\sbr{ \rbr{\EE_{s_h, b_h}\sbr{\bigrbr{r_h^{M} - r_h + \gamma \bigrbr{P_h^M - P_h}  V_{h+1}^{\pi^*, M}}(s_h, a_h, b_h)}}^2}}{\sum_{i=1}^T \EE^i\sbr{ \rbr {\EE_{s_h, b_h}^i\sbr{{\bigrbr{r_h^{M} - r_h + \gamma \bigrbr{P_h^M - P_h}  V_{h+1}^{\pi^*, M}}(s_h,a_h, b_h)}}}^2}}. 
\end{align*}
Hence, we complete the proof of \Cref{thm:PMLE}

\subsection{Proof of \Cref{thm:OMLE-farsighted} on OMLE with Farsighted Follower}\label{sec:proof-farsighted MDP}
In this section, we provide formal proofs for the result to Theorem \ref{thm:OMLE-farsighted}.
The proof relies on a novel decomposition of the online regret in the Taylor-series form and utilizes the techniques for analyzing the OMLE in \citep{chen2022unified, foster2021statistical,jin2021bellman}. Before diving into the proof, we present the following key lemma that provides guarantees for the confidence set $\confset_\cM^t(\beta)$ given by the algorithm.

The proof is carried out as the following. We recall that $\beta\ge  \allowbreak 9\log(3e^2TH \cN_\rho(\cM,T^{-1})\delta^{-1})$, where we additionally include an $\log T$ term to ensure a union bound over $t\in[T]$.

\paragraph{Step 1. Online Regret Decompostion. }
By \Cref{lem:MLE}, we have with high probability that $M^*\in\confset_\cM^t(\beta)$. Hence, we can upper bound the online regret by
\begin{align*}
    \Reg(T) = \sum_{t=1}^T J(\pi^*, M^*) - J(\pi^t, M^*) \le \sum_{t=1}^T J(\pi^t, M^t) - J(\pi^t, M^*), 
\end{align*}
where the inequality holds by additionally noting that OMLE produces the pair $(\pi^t, M^t)$ that maximizes $J$ within the confidence set $\confset^t(\beta)$.
The key in studying the regret in this MDP with strategic follower is decomposing the performance difference into both the follower's temporal difference and the follower's response difference. 
We invoke \Cref{lem:subopt-decomposition} with $\pi^t, \tilde U^t, \tilde W^t, \tilde \nu^t$, where $\tilde U^t, \tilde W^t, \tilde \nu^t$ are given under policy $\pi^t$ and the estimated model $\tilde M^t$, and they should satisfy $\tilde W_h^t(s_h)=(T_h^{\pi^t, \tilde \nu^t}\tilde U_h^t) (s_h)$. We additionally define $\nu^t = \nu^{\pi^t, M^*}$. We have that
\begin{align*}
    &J(\pi^t, \tilde M^t) - J(\pi^t, M^*)\nend
    &\quad \le \sum_{h=1}^H \EE^t \sbr{\rbr{\tilde U_h^t - u_h}(s_h, a_h, b_h) -  \tilde W_{h+1}^t(s_{h+1})} + \sum_{h=1}^H 2 H  \EE^t \sbr{D_\TV\rbr{\tilde \nu_h^t(\cdot\given s_h), \nu_h^t(\cdot\given s_h)}}\nend
    &\quad \le \sum_{h=1}^H \underbrace{\EE^t \sbr{\rbr{\tilde U_h^t - u_h}(s_h, a_h, b_h) - \tilde W_{h+1}^t(s_{h+1})}}_{\dr \text{Leader's Bellman error}} \nend
    &\qqquad +   C^{(0)}
    \sum_{h=1}^H \underbrace{\EE^t\sbr{\abr{\tilde \Delta^{(1,t)}_h(s_h, b_h)}}}_{\ds\text{1st-order error}}  + C^{(2)}
    \max_{h\in [H]} \underbrace{\EE^t\sbr{ \rbr{\rbr{\tilde Q_h^t - r_h^{\pi^t} - \gamma P_h^{\pi^t} \tilde V_{h+1}^t}(s_h, b_h)}^2}}_{\ds\text{2nd-order error}}
\end{align*}
where the expectation $\EE^t$ is taken with respect to $\pi^t$ and the true model $M^*$, 
$C^{(0)}=2\eta H$, 
$C^{(2)} = 2 H\eta^2 H (1+4 \eff_H(\gamma))\exp\rbr{6\eta B_A}\cdot \rbr{\eff_H(\exp(2\eta B_A)\gamma)}^2$ with $\eff_H(x) = (1-x^H)/(1-x)$ as the \say{effective}  horizon with respect to $x$, 
and $\tilde\Delta_j^{(1, t)}(s_h, b_h)$ is defined as
\begin{align*}
    \tilde \Delta^{(1, t)}_h(s_h, b_h) &=  \rbr{\EE_{s_h, b_h}^t -\EE_{s_h}^t}\Biggsbr{\sum_{l=h}^H \gamma^{l-h}\underbrace{\rbr{\tilde Q_l^t - r_l^{\pi^t} - \gamma P_l^{\pi^t} \tilde V_{l+1}^t}(s_l, b_l)}_{\ds\text{Follower's Bellman error}}}.
\end{align*}
Here, the second inequality comes from \Cref{lem:performance diff} and uses the definition that $\tilde V_h^t = \eta^{-1}\log \int \exp(\eta \tilde Q_h^t)$, $\tilde A_h^t = \tilde Q_h^t -\tilde V_h^t$,  and that $\tilde\nu_h^t=\exp\orbr{\eta \tilde A_h^t}$ under the alternative model $\tilde M^t$. In the sequel, we will bound these three terms separately.

\paragraph{Step 2. Bounding the Leader's Bellman Error.}
We first show that the leader's Bellman error is controllable when summed up for $T$ steps. Specifically, consider the following configurations for step $h\in[H]$,
\begin{itemize}[leftmargin=20pt]
    \item[(i)] Define function class $\cG_{h,L}$ as
    \begin{align*}
        \cG_L^h &= \Big\{g:\cS\times\cA\times\cB\rightarrow \RR \Biggiven g={\bigrbr{U_h^{\pi^{\tilde M},\tilde M} - u - P_h W_{h+1}^{\pi^{\tilde M},\tilde M}}(s_h, a_h, b_h)}, \exists \tilde M\in\cM\Big\}, 
    \end{align*}
    where we define $\pi^M = \argmax_{\pi\in\Pi}J(\pi, M)$.
    Specifically, the expectation is taken under $\pi$ and the true model. Consider a sequence of function $\{g_h^i = (\tilde U_h^{i} - u - P_h \tilde W_{h+1}^i)\}_{i\in[T]}$. We see directly that $g_h^i\in\cG_{h, L}$ since $\tilde U^i = U^{\pi^i, M^i}$ and we have by the optimism in the algorithm that $\pi^i = \pi^{M^i}$. The same also holds for $\tilde W^i$.
    \item[(ii)] Define a class of probability measures over $\cS\times\cA\times\cB$ as $$\sP_{h, L}=\{\PP^\pi((s_h, a_h, b_h)=\cdot), \forall \pi\in\Pi\}.$$
    Consider a sequence of probability measures $\{\rho_h^i(\cdot)=\PP^{\pi^i}((s_h, a_h, b_h)=\cdot)\}_{i\in[T]}$.
    \item[(iii)] Under this two sequences, we denote by $g_h^t(\pi^i) = \EE_{\rho_h^i}[g_h^t]$ for simplicity. 
    We have 
    $$g_h^t(\pi^i) =\EE^i\bigsbr{{\bigrbr{\tilde U_h^t - u - P_h \tilde W_{h+1}^t}(s_h,a_h,b_h)}}, $$
    which should be bounded by $3H$.
\end{itemize}
We denote by $\dim(\cG_L) = \max_{h\in[H]}\dim_\DE(\cG_{h, L}, \sP_{h, L}, T^{-1/2})$ in the sequel.
Our guarantee for the sequence $\{g_h^i\}_{i\in[T]}$ and $\{\pi^i\}_{i\in[T]}$ is 
\begin{align}\label{eq:OnN-guarantee-Bellmanerror}
    \sum_{i=1}^{t-1} \rbr{g_h^t(\pi^i)}^2 
    &= \sum_{i=1}^t \rbr{\EE^i\bigsbr{{\bigrbr{\tilde U_h^t - u - P_h \tilde W_{h+1}^t}(s_h,a_h,b_h)}}}^2 \nend
    &\le \sum_{i=1}^t \EE^i\sbr{\rbr{\bigrbr{\tilde U_h^t - u_h - P_h \tilde W_{h+1}^t}(s_h,a_h,b_h)}^2}\nend
    &\le \sum_{i=1}^t 2 \EE^i\sbr{\rbr{\rbr{\tilde u_h^t - u_h}(s_h, a_h, b_h)}^2} + 8 H^2 \EE^i D_\TV^2\rbr{P_h(\cdot\given s_h, a_h, b_h), \tilde P_h^t(\cdot\given s_h, a_h, b_h)}\nend
    &\le 8 H^2 \cdot 4 \beta,
\end{align}
where we define $\tilde u_h^t = u_h^{M^t}$ and $\tilde P_h^t = P_h^{M^t}$. The first ineqality holds by the Cauchy-Schwartz inequality ,  the second inequality holds by noting that $\tilde U_h^t = \tilde u_h^t + \tilde P_h^t \tilde W_h^t$, and the last inequality holds by invoking the guarantee in \Cref{lem:MLE}. We then have by the first order argument in \Cref{lem:de-regret} that 
\begin{align*}
    \sum_{i=1}^T \abr{g_h^t(\pi^t)} \le 2\sqrt{\dim\rbr{\cG_L} 32 H^2 \beta T} + 3 H\min\cbr{T, \dim\rbr{\cG_L}}  + \sqrt T, 
\end{align*}
which implies that the leader's Bellman error is upper bounded by $\cO( H^2 \sqrt{\dim\rbr{\cG_L} H \beta T})$.

\paragraph{Step 3. Bounding the first-order Term.}
we next show that the first-order term in the follower's response error is also under control. 
\begin{itemize}[leftmargin=20pt]
    \item[(i)] Define function class $\cG_{h, F}^1$ as 
    \begin{align*}
        \cG_{h, F}^1 &= \Bigg\{g:(\cS\times\cA\times\cB)^{H-h+1}\rightarrow \RR \bigggiven \exists M\in\cM, \nend
        &\qqquad g((s_l, a_l, b_l)_{l=h}^H) = {\sum_{l=h}^H \gamma^{l-h}\bigrbr{r_l^M- r_l + \gamma (P_l^{M} - P_l) V_{l+1}^{\pi^M, M}}(s_l, a_l, b_l)}
        \Bigg\},
    \end{align*}
    where we remind the readers that $\pi^M = \argmax_{\pi\in\Pi}J(\pi, M)$ only depends on $M$. Consider 
    sequences 
    $$\cbr{g_h^t=\sum_{l=h}^H \gamma^{l-h}\orbr{\tilde r_l^t- r_l + \gamma (\tilde P_l^t - P_l) \tilde V_{l+1}^t}}_{t\in[T]}, $$
    where we define $\tilde r_l^t = r_l^{M^t}$ and $\tilde P_l^t = P_l^{M^t}$.
    It is obvious that $g_h^t\in\cG_{h, F}^1$ since $\tilde V_h^t = V_h^{\pi^t, M^t}$ and $\pi^t = \argmax_{\pi\in\Pi}J(\pi, M^t) = \pi^{M^t}$.
    
    \item Define a class of signed measures over $(\cS\times\cA\times\cB)^{H-h+1}$ as 
    \begin{equation*}
        \sP_{h, F}^1 = \cbr{\begin{aligned}
            &\PP^\pi(((s_l, a_l, b_l)_{l=h+1}^H , a_h)=\cdot \given s_h, b_h)\delta_{(s_h, b_h)}(\cdot) \nend
            &\quad - \PP^\pi(((s_l, a_l, b_l)_{l=h+1}^H , a_h, b_h)=\cdot \given s_h)\delta_{(s_h)}(\cdot) 
        \end{aligned}
        \bigggiven \pi\in\Pi, (s_h, b_h)\in\cS\times\cB}, 
    \end{equation*}
    where $\delta_{s_h, b_h}$ is the measure that puts measure $1$ on a single state-action pair $(s_h, b_h)$, and the conditional is well defined by the Markov property. Also, consider the following sequence, 
    \begin{align*}
        \Big\{\rho_h^t(\cdot)&=\PP^{\pi^t}(((s_l, a_l, b_l)_{l=h+1}^H , a_h)=\cdot \given s_h^t, b_h^t)\delta_{(s_h^t, b_h^t)}(\cdot) \nend
        &\qquad - \PP^{\pi^t}(((s_l, a_l, b_l)_{l=h+1}^H , a_h, b_h)=\cdot \given s_h^t)\delta_{(s_h^t)}(\cdot)\Big\}_{t\in[T]},
    \end{align*}
    and we also have $\rho_h^t\in\sP_{h, F}^1$.
    \item  
    In particular, we define $g_h^t(s_h^i, b_h^i, \pi^i)$ as the integral of $g_h^t$ with respect to $\rho_h^i$, which is given by
    $$g_h^t(s_h^i, b_h^i, \pi^i)= \rbr{\EE_{s_h^i, b_h^i}^{\pi^i}-\EE_{s_h^i}^{\pi^i}} \sbr{\sum_{l=h}^H \gamma^{l-h}\Bigrbr{\tilde r_l^t- r_l + \gamma (\tilde P_l^t - P_l) \tilde V_{l+1}^t}(s_l, a_l, b_l)},$$
    Note that the sequence of signed measures is uniquely determined by $\{(s_h^t, b_h^t, \pi^t)\}_{t\in[T]}$. Moreover, we have $g_h^t(s_h^i, b_h^i, \pi^i)$ bounded by $\eff_H(\gamma)(2\nbr{r_h}_\infty + 2\nbr{V_{h+1}}_\infty) \le 4B_A \eff_H(\gamma)$, where we the definition of $B_A$ is available in \eqref{eq:define_BA};  
\end{itemize}
We define the maximal eluder dimension of $\cG_{h, F}^1$ with respect to $\sP_{h,F}^1$ as $$\dim(\cG_F^1) =\max_{h\in[H]} \dim_\DE(\cG_{h, F}^1,\sP_{h, F}^1, T^{-1/2}).$$
We first see what guarantee we have on the given sequences $\{g_h^t\}_{t\in[T]}$ and $\{(s_h^t, b_h^t, \pi^t)\}_{t\in[T]}$, 
\begin{align}
    &\sum_{i=1}^{t-1} \rbr{g_h^t(s_h^i, b_h^i, \pi^i)}^2 \nend
    & \quad =\sum_{i=1}^{t-1} \rbr{\rbr{\EE_{s_h^i, b_h^i}^{i}-\EE_{s_h^i}^{i}} \sbr{\sum_{l=h}^H \gamma^{l-h}\Bigrbr{\tilde r_l^t- r_l + \gamma (\tilde P_l^t - P_l) \tilde V_{l+1}^t}(s_l, a_l, b_l)}}^2 \nend
    & \quad \le 2\sum_{i=1}^{t-1} \rbr{\rbr{\EE_{s_h^i, b_h^i}^{i}-\EE_{s_h^i}^{i}} \sbr{\sum_{l=h}^H \gamma^{l-h}\Bigrbr{\tilde r_l^t- r_l + \gamma (\tilde P_l^t - P_l) V_{l+1}^{\pi^i, M^t}}(s_l, a_l, b_l)}}^2\nend
    &\qqquad + 2\sum_{i=1}^{t-1} \rbr{\rbr{\EE_{s_h^i, b_h^i}^{i}-\EE_{s_h^i}^{i}}\sbr{\sum_{l=h}^H \gamma^{l-h+1} (\tilde P_l^t - P_l) \bigrbr{V_{l+1}^{\pi^i, M^t} - \tilde V_{l+1}^t}(s_l, a_l,b_l)}}^2 \nend
    & \quad \le 2\sum_{i=1}^{t-1} \Biggrbr{\underbrace{
        \rbr{\EE_{s_h^i, b_h^i}^{i}-\EE_{s_h^i}^{i}} \sbr{\sum_{l=h}^H \gamma^{l-h}\Bigrbr{\tilde r_l^t- r_l + \gamma (\tilde P_l^t - P_l) V_{l+1}^{\pi^i, M^t}}(s_l, a_l, b_l)}
    }_{\ds\tilde\Delta^{(1)}_{h, \pi^i, M^t}(s_h, b_h)}}^2\nend
    &\qquad + \underbrace{C B_A^2 \eff_{H}(\gamma)\sum_{i=1}^{t-1} \rbr{\EE_{s_h^i, b_h^i}^i + \EE_{s_h^i}^i} \sbr{\sum_{l=h}^H \gamma^{l-h+1} D_\TV^2 \rbr{\tilde P_l^t(\cdot\given s_l, a_l,b_l), P_l(\cdot\given s_l, a_l,b_l)}}}_{\dr (i)}, \label{eq:OnN-1st-g-sq}
\end{align}
where the first inequaltiy holds by the Jensen's inequality, and the second inequality holds by upper bounding the difference $\tilde P_l^t - P_l$ by the TV distance. Here, we are able to use $B_A$ as the upper bound for the V functions by our discussion in \Cref{sec:app-notations}, and $C$ hides some universal constant.
Applying \Cref{lem:1st-ub} with $\tilde \Delta^{(1)}_h(s_h, b_h)$ replaced by
\begin{align*}
    \tilde \Delta^{(1)}_{h,\pi^i, M^t}(s_h, b_h) 
    &=  \rbr{\EE_{s_h, b_h}^{i} -\EE_{s_h}^{i}}\Biggsbr{\sum_{l=h}^H \gamma^{l-h} 
    {\rbr{Q_l^{\pi^i, M^t} - r_l^{\pi^i} - \gamma P_l^{\pi^i} V_{l+1}^{\pi^i, M^t}}(s_l, b_l)}}\nend
    & = \rbr{\EE_{s_h, b_h}^{i} -\EE_{s_h}^{i}}\Biggsbr{\sum_{l=h}^H \gamma^{l-h} 
    {\rbr{\tilde r_l^t - r_l + \gamma (\tilde P_l^t - P_l) V_{l+1}^{\pi^i, M^t}}(s_l, a_l, b_l)}},
\end{align*}
we obtain for all $t\in[T], h\in[H]$,
\begin{align}
    &\sum_{i=1}^{t-1}\bigrbr{\tilde \Delta_{h, \pi^i, M^t}^{(1)}(s_h^i, b_h^i)}^2  \nend
    &\quad \le 2 \sum_{i=1}^{t-1}\rbr{\rbr{\EE_{s_h^i, b_h^i}^i-\EE_{s_h^i}^i} \bigsbr{\orbr{Q_h^{\pi^i, M^t} -  Q_h^{\pi^i}}(s_h, b_h)}}^2 \nend
    &\qqquad + C \gamma^2  \rbr{\eta^{-1} +2 B_A}^2\eff_H(\gamma) \sum_{l=h+1}^H \gamma^{l-h-1} {\rbr{\EE_{s_h^i}^i+\EE_{s_h^i, b_h^i}^i}\sbr{D_\H^2(\nu_l^{\pi^i}(\cdot\given s_l), \nu_l^{\pi^i, M^t}(\cdot\given s_l))}}\nend
    &\quad\le  8 C_\eta^2 \beta  + 64 B_Q^2 \log\rbr{TH\cN_\infty(\cM, T^{-1})\delta^{-1}} + C \gamma^2  \rbr{\eta^{-1} +2 B_A}^2\eff_H(\gamma)^2  \beta \nend
    &\quad \le \cO\rbr{ (\eta^{-1}+2B_A)^2 \eff_H(\gamma)^2  \beta},\label{eq:OnN-1st-Delta1-sq}
\end{align}
where the second inequality holds by \eqref{eq:MLE-guarantee-Q-3} in \Cref{lem:MLE-formal}, and the covering number $\cN_\varrho(\cM, \epsilon)$ is with respect to the infinite norm of the Q function. Here, $\cO$ only hides universal constant independent of $H,\eta, T$. 
Meanwhile, for the TV distance term in \eqref{eq:OnN-1st-g-sq}, we have also by \Cref{lem:MLE} that 
\begin{align*}
    {\dr (i)}\le 8 B_V^2 \eff_{H}(\gamma){\sum_{l=h}^H \gamma^{l-h+1} 8\beta }\le \cO(B_A^2 \eff_H(\gamma)^2 \beta),
\end{align*}
where $\cO$ only hides some universal constants.
Hence, we conclude that 
\begin{align*}
    \sum_{i=1}^{t-1} \rbr{g_h^t(s_h^i, b_h^i, \pi^i)}^2  \le \cO\rbr{\rbr{\eta^{-1}+B_A}^2 \eff_H(\gamma)^2 \beta}.
\end{align*}
Now, for the first-order term, we have  
\begin{align*}
    &C^{(0)}
    \sum_{t=1}^T \sum_{h=1}^H{\EE^t\sbr{\abr{\tilde \Delta^{(1,t)}_h(s_h, b_h)}}}\nend
    &\quad \le 2 C^{(0)} \sum_{h=1}^H \underbrace{\sum_{t=1}^T \abr{\tilde\Delta^{(1, t)}_h(s_h^t, b_h^t)}}_{\ts \sum_{t=1}^T |g_h^t(s_h^t, b_h^t, \pi^t)|} + H C^{(0)} \cdot 4 \bignbr{\tilde \Delta^{(1)}_h}_\infty \log\rbr{H\cN_\rho(\cM, T^{-1})\delta^{-1}} \nend
    &\quad \le \cO\rbr{ HC^{(0)} \eff_H(\gamma)\sqrt{\dim(\cG_F^1)\rbr{\eta^{-1}+B_A}^2  \beta T }}\nend
    &\quad\le \cO\rbr{ H^2 \eff_H(\gamma)\rbr{1+\eta B_A} \sqrt{\dim(\cG_F^1)\beta T}},
\end{align*}
where the first inequality follows from a standard martingale concentration in \Cref{cor:martigale concentration}, and the second inequality holds by using the first order regret bound in \Cref{lem:de-regret}, and the last inequality holds by $C^{(0)}=2\eta H$.

\paragraph{Step 3. Bounding the Second-Order Term.}
Previously, we decompose the online regret and obtain a second-order term, which we referred to as (ii), 
\begin{align*}
    {\dr (ii)}\defeq C^{(2)}
    \sum_{t=1}^T \max_{h\in [H]} {\EE^t\sbr{ \rbr{\rbr{\tilde Q_h^t - r_h^{\pi^t} - \gamma P_h^{\pi^t} \tilde V_{h+1}^t}(s_h, b_h)}^2}}.
\end{align*}
For this term, we specify the function class to use for our purpose, 

\begin{itemize}[leftmargin =20pt]
    \item[(i)] We take the same function class $\cG_F^2$ as
    \begin{align*}
        \cG_{h, F}^2 &= \Bigg\{g:\cS\times\cA\times\cB\rightarrow \RR: \exists M\in\cM, h\in[H] \nend
        &\qqquad g(s_h, b_h, a_h) = {\bigrbr{r_h^M- r_h + \gamma (P_h^{M} - P_h) V_{l+1}^{\pi^M, M}}(s_h, a_h, b_h)}
        \Bigg\},
    \end{align*}
    where $\cG_F^2$ is bounded by $4B_A$. 
    \item[(ii)] We define a class of probability measures on $\cS\times\cA\times\cB$ as $$\sP_{h, F}^2 = \cbr{\PP^\pi(a_h=\cdot\given s_h, b_h)\delta_{(s_h, b_h)}(\cdot)\given \pi\in\Pi, (s_h,b_h)\in\cS\times\cB}, $$
    where $\delta_{(s_h,b_h)}(\cdot)$ is the measure that assigns $1$ to the state-action pair $(s_h, b_h)$. 
    \item[(iii)] We take a sequence of functions $\{g_h^t\}_{t\in[T]}$ as $\{g_h^t = \tilde r_h^{t}- r_h + \gamma (\tilde P_h^t - P_h) \tilde V_{l+1}^{t}\}_{t\in[T]}$, and take a sequence of probability measures as $\{\rho_h^t(\cdot) = \PP^{\pi^t}(a_h=\cdot\given s_h^t, b_h^t)\delta_{(s_h^t, b_h^t)}(\cdot)\}_{t\in[T]}$, where we define $\tilde r_h^t = r_h^{M^t}$ and $\tilde P_h^t = P_h^{M^t}$.
    One can check that $g_h^t\in\cG_{h,F}^1$ since $\tilde V_h^t = V_h^{\pi^t, M^t}$ and we have $\pi^t = \argmax_{\pi\in\Pi}J(\pi, M^t) = \pi^{M^t}$. In addition, we define $g_h^t(s_h^i, b_h^i, \pi^i)$ as the integral of $g_h^t$ with respect to $\rho_h^i$, which is given by
    \begin{align*}
        g_h^t (s_h^i, b_h^i, \pi^i) = \EE_{s_h^i, b_h^i}^{\pi^i} \sbr{\bigrbr{\tilde r_h^{t}- r_h + \gamma (\tilde P_h^t - P_h) \tilde V_{l+1}^{t}}(s_h, a_h, b_h)},
    \end{align*}
    Note that the sequence of probability measures is uniquely determined by $\{(s_h^t, b_h^t, \pi^t)\}_{t\in[T]}$.
\end{itemize}
We let $\dim(\cG_F^2)=\max_{h\in[H]}\dim_\DE(\cG_{h,F}^2,\sP_{h, F}^2, T^{-1/2})$ be the eluder dimension.
We next establish guarantee for $\sum_{i=1}^{t-1} (g_h^t(s_h^i, b_h^i, \pi^i))^2$.
\begin{align}\label{eq:OnN-guarantee-2ndQRE}
    &\sum_{i=1}^{t-1} (g_h^t(s_h^i, b_h^i, \pi^i))^2 \nend
    &\quad = \sum_{i=1}^{t-1} \rbr{\EE_{s_h^i, b_h^i}^{i} \sbr{\bigrbr{\tilde r_h^{t}- r_h + \gamma (\tilde P_h^t - P_h) \tilde V_{h+1}^{t}}(s_h, a_h, b_h)}}^2\nend
    &\quad  \le 2\sum_{i=1}^{t-1} \rbr{\EE_{s_h^i, b_h^i}^{i} \sbr{\bigrbr{\tilde r_h^{t}- r_h + \gamma (\tilde P_h^t - P_h) V_{h+1}^{\pi^i, M^t}}(s_h, a_h, b_h)}}^2 \nend
    &\qqquad + 2\cdot 16B_A^2 \sum_{i=1}^{t-1} \rbr{\EE_{s_h^i, b_h^i}^i\sbr{D_\TV^2\rbr{\tilde P_h^t(\cdot\given s_h, a_h, b_h), P_h(\cdot\given s_h, a_h, b_h)}}}\nend
    &\quad \lesssim 3 \underbrace{\sum_{i=1}^{t-1}\EE^i\rbr{\EE_{s_h, b_h}^{i} \sbr{\bigrbr{\tilde r_h^{t}- r_h + \gamma (\tilde P_h^t - P_h) V_{h+1}^{\pi^i, M^t}}(s_h, a_h, b_h)}}^2}_{\dr (iii)} + 32 B_A^2 \log\rbr{TH\cN_\rho(\cM,T^{-1}\delta^{-1})}\nend
    &\qqquad + 48 B_A^2 \underbrace{\sum_{i=1}^{t-1} \EE^i\sbr{D_\TV^2\rbr{\tilde P_h^t(\cdot\given s_h, a_h, b_h), P_h(\cdot\given s_h, a_h, b_h)}}}_{\dr (iv)} + 64 B_A^2 \log\rbr{TH\cN_\rho(\cM, T^{-1})}
\end{align}
where in the first inequality, we use $2B_A$ to upper bound $\|\tilde V_{h+1}^t - V_{h+1}^{\pi^i, M^t}\|_\infty$, and uses the Cauchy-Schwartz inequality to move the square into the expectation.
Here, the second inequality uses a standard martingale concentration result in \Cref{cor:martigale concentration} for both terms, and we invoke the same upper bound $B_A$ for the V functions.
Now, we invoke \Cref{lem:2nd-ub}, which says that term (iii) enjoys the following upper bound, 
\begin{align*}
    {\dr (iii)} &=\sum_{i=1}^{t-1}\EE^i\rbr{\EE_{s_h, b_h}^{i} \sbr{\bigrbr{\tilde r_h^{t}- r_h + \gamma (\tilde P_h^t - P_h) V_{h+1}^{\pi^i, M^t}}(s_h, a_h, b_h)}}^2 \nend
    & = \sum_{i=1}^{t-1}\EE^i\rbr{{\bigrbr{Q_h^{\pi^i, M^t} - r_h^{\pi^i} - \gamma P_h^{\pi^i} V_{h+1}^{\pi^i, M^t}}(s_h, a_h, b_h)}}^2\nend
    &\le L^{(2)} \sum_{i=1}^{t-1} \max_{h\in[H]} \cbr{\EE^i D_\H^2\bigrbr{\nu_h^{\pi^i, M^t}(\cdot\given s_h), \nu_h^{\pi^i}(\cdot\given s_h)}+ \EE^i D_\TV^2\bigrbr{P_h^{\pi^i, M^t}(\cdot\given s_h, b_h), P_h^{\pi^i}(\cdot\given s_h, b_h)}}\nend
    &\le L^{(2)} 4H \beta , 
\end{align*}
where the last inequality uses the guarantee in \Cref{lem:MLE}. To enable a direct use of the MLE guarantee, we replace the maximum by a sum over all $h\in[H]$ and upper bound the TV distance by the Hellinger distance. Here, $L^{(2)}$ is defined in \Cref{lem:2nd-ub}. For term (iv), we use the same guarantee in \Cref{lem:MLE} and the same bounding the TV distance by the Hellinger distance argument and obtain ${\dr (iv)}\le 4\beta$. Therefore, we conclude that 
\begin{align*}
    \sum_{i=1}^{t-1} (g_h^t(s_h^i, b_h^i, \pi^i))^2 \lesssim {\bigrbr{L^{(2)} H + B_A^2} \beta},
\end{align*}
where $\lesssim$ only hides some universal constant. As a result of the second order regret in \Cref{lem:de-regret}, we have 
\begin{align*}
    \sum_{t=1}^{T} (g_h^t(s_h^t, b_h^t, \pi^t))^2 
    &\lesssim {\dim(\cG_F^2) \bigrbr{L^{(2)} H + B_A^2} \beta  + \min\cbr{T, \dim(\cG_F^2)} 16 B_A^2+1}\nend
    &\lesssim {H \dim(\cG_F^2) \beta  L^{(2)}},
\end{align*}
where $\lesssim$ hides some universal constant. Therefore, we conclude that
\begin{align*}
    {\dr (ii)}
    &\defeq C^{(2)}
    \sum_{t=1}^T \max_{h\in [H]} {\EE^t\sbr{ \rbr{\rbr{\tilde Q_h^t - r_h^{\pi^t} - \gamma P_h^{\pi^t} \tilde V_{h+1}^t}(s_h, b_h)}^2}}\nend
    &\le C^{(2)} \sum_{t=1}^T \sum_{h=1}^H \EE^t \sbr{ \rbr{\rbr{\tilde Q_h^t - r_h^{\pi^t} - \gamma P_h^{\pi^t} \tilde V_{h+1}^t}(s_h, b_h)}^2} \nend
    & \le 2C^{(2)} \sum_{t=1}^T \sum_{h=1}^H \EE^t_{s_h^t, b_h^t} \sbr{ \rbr{\rbr{\tilde Q_h^t - r_h^{\pi^t} - \gamma P_h^{\pi^t} \tilde V_{h+1}^t}(s_h, b_h)}^2} \nend
    &\qquad + 4 H C^{(2)} 9 B_A^2 \log\rbr{H \cN_\rho(\cM,T^{-2})\delta^{-1}}. 
\end{align*}
where in the first inequality, we replace the maximum by the summation over $h\in[H]$ and in the second inequality, we invoke the martingale concentration in \Cref{cor:martigale concentration}.
Hence, we establish our bound for the second-order term as
\begin{align*}
    {\dr(ii)} 
    &\le 2 C^{(2)} \sum_{h=1}^H \sum_{t=1}^T \rbr{g_h^t(s_h^t, b_h^t, \pi^t)}^2 + 4 H C^{(2)} 9 B_A^2 \log\rbr{H \cN_\rho(\cM,T^{-2})\delta^{-1}}\nend
    &\lesssim {H^2 C^{(2)} \dim(\cG_F^2) \beta  L^{(2)}} +   { H C^{(2)} \beta}
\end{align*}
where $L^{(2)} = c H^2 \eff_H(c_2)^2 \kappa^2 \exp\rbr{8\eta B_A} C_\eta^2$ with 
$c_2 = \gamma(2\exp(2\eta B_A)+\kappa\exp(4\eta B_A))$, and $C^{(2)} = 2 H^2\eta^2 (1+4 \eff_H(\gamma))\exp\rbr{6\eta B_A}\cdot \rbr{\eff_H(\exp(2\eta B_A)\gamma)}^2$.

In summary, for the leader's Bellman error, we have
\begin{align*}
    \text{LBE}\lesssim H(2\sqrt{\dim\rbr{\cG_L} H^2 \beta T} + 3 H\min\cbr{T, \dim\rbr{\cG_L}}  + \sqrt T) \lesssim H^2 \sqrt{\dim\rbr{\cG_L} \beta T}, 
\end{align*}
for the first-order term in the follower's QRE,
\begin{align*}
    {\text{1st-QRE}}\lesssim  H^2 \eff_H(\gamma) \eta C_\eta \sqrt{\dim(\cG_F^1)\beta T}, 
\end{align*}
where $C_\eta = \eta^{-1}+B_A$,
and for the second-order term in the follower's QRE, 
\begin{align*}
\text{2nd-QRE}\lesssim H^2 C^{(2)} L^{(2)} \dim(\cG_F^2) \beta \log T
\end{align*}
which completes the proof for \Cref{thm:OMLE-farsighted}.

\subsection{Proof of \Cref{lem:MLE} on the Guarantee of MLE}\label{sec:proof-MLE}
The following proof mainly follows the proof of Theorem E.1 in \citet{chen2022unified}.
Recall the MLE function is defined as
\begin{align*}
    \cL^t_h(M)&\defeq -\sum_{i=1}^{t-1}  \bigg(\eta \rbr{Q_{h}^{\pi^i, M}(s_h^i, b_h^i)-V_{h}^{\pi^i, M}(s_h^i)} + \log P_h^M(s_{h+1}^i\given s_h^i, a_h^i, b_h^i)  \nend
    &\qquad - \rbr{u_{h}^i - u_{h}^M(s_h^i, a_h^i, b_h^i)}^2\bigg)\nend
    & = -\sum_{i=1}^{t-1} \rbr{\log \nu_h^{\pi^i, M}(b_h^i\given s_h^i) + \log P_h^M(s_{h+1}^i\given s_h^i, a_h^i, b_h^i) - \rbr{u_{h}^i - u_{h}^M(s_h^i, a_h^i, b_h^i)}^2},
\end{align*}
and the RL distance $D_{\RL, h}$ is defined as
\begin{align*}
    D_{\RL,h}^2 (M,  M^*;\pi) =   \EE^{\pi, M^*}D_\H^2\rbr{\nu_h^{\pi,M}, \nu_h^{\pi,M^*}} +
    \EE^{\pi, M^*} D_\H^2(P_h^{M}, P_h^{M^*}) +
    \EE^{\pi, M^*}\rbr{u_h^{M^*}-u_h^M}^2,
\end{align*}
We take an $\epsilon$-optimistic covering net of $\cM$ and denote the covering number as $\cN_\rho(\cM,\epsilon)$
By \Cref{lem:freeman-variation}, and take the filtration as $\sF_{i-1}=\sigma(\tau^{1:i-1})$, we have with probability at least $1-\delta/3H$, for all $t\in[T]$ and $M$ belonging to a $\epsilon$-covering net $\cM_\epsilon$  that
\begin{align}
    \frac{1}{2} \sum_{i=1}^{t-1} \log \frac{\nu_h^{\pi^i, M}}{\nu_h^{\pi^i, M^*}} 
    &\le \sum_{i=1}^{t-1} \log \EE^{\pi^i}\sbr{\exp\rbr{ \frac{1}{2}\cdot \log \frac{\nu_h^{\pi^i, M}}{\nu_h^{\pi^i, M^*}}}} + \log\rbr{\frac{3H\cN_\rho(\cM,\epsilon)}{\delta}}\nend
    & \le -\sum_{i=1}^{t-1} \EE^{\pi^i}\sbr{1 -  \sqrt{\frac{\nu_h^{\pi^i, M}}{\nu_h^{\pi^i, M^*}}}} + \log\rbr{\frac{3H\cN_\rho(\cM,\epsilon)}{\delta}}\nend
    &= -\frac{1}{2} \sum_{i=1}^{t-1} \EE^{\pi^i}D_\H^2\rbr{\nu_h^{\pi^i,M}(\cdot\given s_h), \nu_h^{\pi^i,M^*}(\cdot\given s_h)} + \log\rbr{\frac{3H\cN_\rho(\cM,\epsilon)}{\delta}}, \label{eq:MLE-nu}
\end{align}
where the first inequality is a direct result of \Cref{lem:freeman-variation}, the second inequality holds by using the inequality $\log(x)\le x-1$, and the last equality holds by the definition of the Hellinger distance.
Following the same argument, we have with probability at least $1-\delta/3H$ and for all $t\in[T]$, $M\in\cM_\epsilon$ that
\begin{align}
    \frac{1}{2}\sum_{i=1}^{t-1} \log \frac{P_h^M}{P_h^{M^*}} \le -\frac{1}{2} \sum_{i=1}^{t-1} \EE^{\pi^i} D_\H^2(P_h^{M}, P_h^{M^*}) + \log\rbr{\frac{3H\cN_\rho(\cM,\epsilon)}{\delta}}.\label{eq:MLE-T}
\end{align}
For the reward, we view $u_h^i$ as a random variable and have with the same argument that it holds with probability at least $1-\delta/3H$ that
\begin{align}
    &\frac 1 3 \sum_{i=1}^{t-1} \rbr{\rbr{u_h^i - u_h^{M^*}}^2 - \rbr{u_h^i - u_h^{M}}^2}\nend
    &\quad \le \sum_{i=1}^{t-1} \log \EE^{\pi^i}\sbr{\exp\rbr{\frac 1 3\rbr{\rbr{u_h^i - u_h^{M^*}}^2 - \rbr{u_h^i - u_h^{M}}^2}}}+\log\rbr{\frac{3H\cN_\rho(\cM,\epsilon)}{\delta}}\nend
    &\quad \le -\sum_{i=1}^{t-1} \frac 1 9 \EE^{\pi^i}\rbr{u_h^{M^*}-u_h^M}^2 + \log\rbr{\frac{3H\cN_\rho(\cM,\epsilon)}{\delta}},\label{eq:MLE-u}
\end{align}
where the second inequality holds by using the inequality $\EE[\exp(\lambda ((r-\EE r)^2-(r - \hat r)^2))]\le \exp\bigrbr{-\lambda(1-2\sigma^2\lambda)\rbr{\EE r - \hat r}^2}$ for any $\lambda\in\RR$, $\hat r\in\RR$ and $\sigma^2$-sub-Gaussian random variable $r$. Here, we notice that $\sup_{h\in[H]}\nbr{u_h}_\infty\le 1$ by our model assumption.
Suppose that for $M\in\cM$, the nearest point to $M$ in the $\epsilon$-covering net is $\tilde M$. 
Note that for the pair $(M, \tilde M)$, we have
\begin{align}\label{eq:D_RL-covering bound}
    &\sup_{\pi\in\Pi, h\in[H]}\abr{D_{\RL,h}^2(M, M^*;\pi) - D_{\RL,h}^2(\tilde M, M^*;\pi)}\nend
    &\quad \le 2\sup_{\pi\in\Pi, h\in[H]}\EE^{\pi, M^*}\sbr{D_\H\rbr{\nu_h^{\pi, M}, \nu_h^{\pi, \tilde M}} + D_\H\rbr{P_h^M, P_h^{\tilde M}} + \abr{u_h^{M}-u_h^{\tilde M}}}\nend
    &\quad \le 6\epsilon, 
\end{align}
where the inequality holds by noting that the Hellinger distance satisfies the triangle inequality and the fact that $D_\H(\cdot,\cdot)\le 1$ and $|u_h|\le 1$. 
Moreover, for the negative log-likelihood function, 
\begin{align}\label{eq:cL-covering bound}
    \cL_h^t(\tilde M) - \cL_h^t(M) &= \sum_{i=1}^{t-1} \rbr{\log \frac{\nu_h^{\pi^i, M}(b_h^i\given s_h^i)}{\nu_h^{\pi^i, \tilde M}(b_h^i\given s_h^i)} + \log \frac{P_h^{M}(s_{h+1}^i\given s_h^i, a_h^i, b_h^i)}{P_h^{\tilde M}(s_{h+1}^i\given s_h^i, a_h^i, b_h^i)}} \nend
    &\qquad + \sum_{i=1}^{t-1} \bigrbr{u_{h}^i - u_{h}^{\tilde M}(s_h^i, a_h^i, b_h^i)}^2 - \rbr{u_{h}^i - u_{h}^M(s_h^i, a_h^i, b_h^i)}^2\nend
    &\le \eta T \sup_{\pi\in\Pi, h\in H} \bignbr{A_h^{\pi,M} - A_h^{\pi,\tilde M}}_\infty + T \log(\exp(\epsilon)) + 2T \sup_{ h\in H} \bignbr{u_h^{M} - u_h^{\tilde M}}_\infty\nend
    &\le 4 T, 
\end{align}
where the first inequality holds by noting the optimistic covering condition $P_h^{M}(s_{h+1}\given s_h, a_h, b_h)\le \exp(\epsilon)P_h^{\tilde M} (s_{h+1}\given s_h, a_h, b_h)$.
Now, using \eqref{eq:MLE-nu}, \eqref{eq:MLE-T}, and \eqref{eq:MLE-u}, we conclude that with probability at least $1-\delta$, for all $M\in \cM$ and $h\in[H]$,
\begin{align*}
    &\sum_{i=1}^{t-1}D_{\RL, h}^2\rbr{M, M^*;\pi^i}\nend
    &\quad \le {\sum_{i=1}^{t-1} \EE^{\pi^i}D_\H^2\rbr{\nu_h^{\pi^i,\tilde M}, \nu_h^{\pi^i,M^*}} +
    \sum_{i=1}^{t-1} \EE^{\pi^i} D_\H^2(P_h^{\tilde M}, P_h^{M^*}) +
    \sum_{i=1}^{t-1} \EE^{\pi^i}\rbr{u_h^{M^*}-u_h^{\tilde M}}^2} + 6\epsilon T\nend
    &\quad \le -3 \rbr{ \sum_{i=1}^{t-1} \log \frac{\nu_h^{\pi^i, \tilde M}}{\nu_h^{\pi^i, M^*}} + 
    \sum_{i=1}^{t-1} \log \frac{P_h^{\tilde M}}{P_h^{M^*}} + 
    \sum_{i=1}^{t-1} \rbr{\rbr{u_h^i - u_h^{M^*}}^2 - \rbr{u_h^i - u_h^{\tilde M}}^2}} \nend
    &\qqquad + 9\log\rbr{\frac{3H\cN_\rho(\cM,\epsilon)}{\delta}}+6T\epsilon \nend
    &\quad\le 3 \rbr{\cL^t_h(\tilde M)- \cL^t_h(M^*)} + 9\log\rbr{\frac{3H\cN_\rho(\cM,\epsilon)}{\delta}} + 6T\epsilon, 
\end{align*}
where the first inequality holds by definition of $D_\RL$ and \eqref{eq:D_RL-covering bound}, the second inequality holds by taking a union bound over the success of \eqref{eq:MLE-nu}, \eqref{eq:MLE-T}, and \eqref{eq:MLE-u}, and the last inequality holds by definition of $\cL^t_h$.
Furthermore, we notice by \eqref{eq:cL-covering bound} that
\begin{align}\label{eq:MLE-D_RL}
    \sum_{i=1}^{t-1}D_{\RL, h}^2\rbr{M, M^*;\pi^i}
    &\le 3 \rbr{\cL^t_h(\tilde M) -  \cL^t_h(M) + \cL^t_h(M)- \cL^t_h(M^*)} + 9\log\rbr{\frac{3H\cN_\rho(\cM,\epsilon)}{\delta}} + 6T\epsilon\nend
    &\le 3 \rbr{\cL^t_h(M)- \cL^t_h(M^*)} + 9\log\rbr{\frac{3H\cN_\rho(\cM,\epsilon)}{\delta}} + 18T\epsilon.
\end{align}
Now, we replace $M$ by $\hat M_{h,\MLE} =\argmin_{M\in\cM} \cL^t_h(M)$, $\epsilon$ by $T^{-1}$ in \eqref{eq:MLE-D_RL} and obtain with probability $1-\delta$ for all $h\in[H], t\in[T]$ that 
\begin{align*}
    \cL^t_h(M^*) - \inf_{M\in\cM}\cL^t_h(M) &\le -\frac 1 3 \sum_{i=1}^{t-1}D_{\RL, h}^2\rbr{M, M^*;\pi^i} + 9\log\rbr{\frac{3H\cN_\rho(\cM,\epsilon)}{\delta}} + 18T\epsilon\nend
    &\le 9\log\rbr{\frac{3H\cN_\rho(\cM,T^{-1})}{\delta}} + 18 = \beta, 
\end{align*}
which shows that $M^*\in \confset^t(\beta)$ with high probability. On the other hand, we plug in any $M\in\confset_\cM^t(\beta)$ and obtain for any $h\in[H]$, $t\in[T]$, and $M\in\confset_\cM^t(\beta)$ that
\begin{align*}
    \sum_{i=1}^{t-1}D_{\RL, h}^2\rbr{M, M^*;\pi^i}
    &\le 3 \rbr{\cL^t_h(M)- \cL^t_h(M^*)} + 9\log\rbr{\frac{3e^2H\cN_\rho(\cM,\epsilon)}{\delta}} \le 4\beta,
\end{align*}
where the last inequality holds by definition of the confidence set. Hence, we complete our proof of \Cref{lem:MLE}.
Lastly, we can also take filtration $\sF_{i-1} = \sigma((s_h^j, a_h^j, b_h^j)_{j\in[i-1]}, s_h^i)$ for \eqref{eq:MLE-nu} and $\sF_{i-1} = \sigma((s_h^j, a_h^j, b_h^j)_{j\in[i]})$ for \eqref{eq:MLE-T} and \eqref{eq:MLE-u}, follows exactly the same steps, and obtain
\begin{align*}
    \sum_{i=1}^{t-1} \hat D_{\RL,h,i}^2(M,M^*) 
    \le 4 \beta,
\end{align*}
which finishes the proof of \Cref{lem:MLE}.

\section{Proofs of the Auxiliary Results in  \Cref{sec:app-major-tech}}
In this section, we provide proof for the lemmas introduced  in \Cref{sec:app-major-tech}. 


\subsection{Proof of \Cref{lem:performance diff}}\label{sec:proof-performance diff}

In the following, we prove \Cref{lem:performance diff}, which relates the estimation error of quantal response policy to a few estimation errors involving the follower's value functions. 
To simplify the notation, we let $\nu$, 
$Q, V, A$ denote $\nu$, $Q^{\pi}$, $V^{\pi}$, and $A^{\pi}$, respectively, which are quantities computed under the true model $M^*$.  
Note that we have  
$\nu_h(b\given s) = \exp(\eta \cdot A_h (s, b) )$ and 
$\tilde \nu_h (b \given s) = \exp(\eta \cdot \tilde A_h (s,b) )$.
By the upper bound in \eqref{eq:nu-tv-ub-0} of \Cref{lem:response diff},
we have 
\begin{align}
    \dr{(i)} &\defeq \sum_{h=1}^H  H \cdot  \EE \bigsbr{ \nbr{\tnu_h(\cdot\given s_h)-\nu_h(\cdot\given s_h)}_1} = \sum_{h=1}^H  2 H \cdot  \EE \bigsbr{  D_\TV \bigrbr{\nu(\cdot\given s_h), \tilde \nu(\cdot\given s_h)}} \nend
    &\le 2 \eta  H \cdot  \underbrace{\sum_{h=1}^H \EE\bigsbr{\bigabr{\orbr{A_h-\tilde A_h}(s_h, b_h)}}}_{\dr (ii)} \nend
    &\qqquad + \eta^2  H \cdot \sum_{h=1}^H \EE\Bigsbr{ \exp\bigrbr{\eta\bigabr{\orbr{A_h-\tilde A_h}(s_h, b_h)}}\cdot \bigabr{\orbr{A_h-\tilde A_h}(s_h, b_h)}^2}. \label{eq:(i)}
\end{align}
In the following, we let 
\$
    \tilde\Delta_h^{(1)} (s_h, b_h)&\defeq  \rbr{\EE_{s_h, b_h} - \EE_{s_h}}\sbr{\sum_{l=h}^H \gamma^{l-h}\bigrbr{\orbr{\tilde Q_l - r_l^\pi - \gamma P_l^\pi \tilde V_{l+1}}(s_l, b_l)}}, \\
    \tilde\Delta_h^{(2)}(s_h) &\defeq \EE_{s_h}\sbr{\sum_{l=h}^H \gamma^{l-h} \kl\infdivx[\big]{\nu_l(\cdot\given s_l)}{\tilde\nu_l(\cdot\given s_l)}}. 
\$
We note that we denote $\EE_{z} [\cdot]=\EE^{\pi,M^*}[\cdot\given z]$ for any variable $z$.
Here the expectations in $\tilde\Delta^{(1)}_h$ and $\tilde\Delta^{(2)}_h$ are taken with respect to the randomness of the trajectory generated by $\{ \pi, \nu^{\pi}\}$, given $s_h$ or $(s_h, b_h)$.
We can further bound   (ii) defined in \eqref{eq:(i)} by invoking \Cref{lem:AQV-func diff}, 
which implies that 
\$
\dr{(ii)}&= \sum_{h=1}^H \EE\sbr{\abr{\rbr{\EE_{s_h, b_h}-\EE_{s_h}} \bigsbr{\tilde\Delta_h^{(1)}(s_h, b_h) - \gamma\eta^{-1}\tilde\Delta_{h+1}^{(2)} (s_{h+1})} + \eta^{-1}\kl\infdivx[\big]{\nu_h(\cdot\given s_h)}{\tilde \nu_h(\cdot\given s_h)}}}.
\$
By the law of total expectation, 
we have 
\$
\EE \Bigsbr{\EE _{s_h, b_h} \bigsbr{   \tilde\Delta_{h+1}^{(2)} (s_{h+1})}} = \EE \Bigsbr{\EE _{s_h } \bigsbr{   \tilde\Delta_{h+1}^{(2)} (s_{h+1})}}  \geq 0.
\$
Besides, by the definition of $\tilde \Delta_h^{(2)}$, we have 
\$
\EE\bigsbr{\tilde\Delta_h^{(2)}(s_h)} = \EE\bigsbr{ \kl\infdivx[\big]{\nu_h(\cdot\given s_h)}{\tilde \nu_h(\cdot\given s_h)} + \gamma \cdot \tilde\Delta_{h+1}^{(2)} (s_{h+1}) }. 
\$ 
Thus, by triangle inequality, we have 
\begin{align}
    \dr{(ii)}  &\le \sum_{h=1}^H\EE\Bigsbr{\bigabr{\rbr{\EE_{s_h, b_h}-\EE_{s_h}}\bigsbr{\tilde\Delta_h^{(1)}(s_h, b_h)}}} + 2\eta^{-1} \sum_{h=1}^H\EE\bigsbr{\tilde\Delta_h^{(2)}(s_h)}, \label{eq:(ii)}
\end{align}
Furthermore, for the second term on the right-hand side of \eqref{eq:(ii)}, we  apply the inequality between KL divergence and the $\chi^2$ divergence to each term $\kl\infdivx[]{\nu_l(\cdot\given s_l)}{\tilde\nu_l(\cdot\given s_l)}$ and obtain that 
\begin{align}
\kl\infdivx[\big]{\nu_l(\cdot\given s_l)}{\tilde\nu_l(\cdot\given s_l)}
&\le \chi^2\infdivx[\big]{\nu_l(\cdot\given s_l)}{\tilde\nu_l(\cdot\given s_l)}\nend
&= \inp[\Bigg]{\nu_l(\cdot\given s_l)}{\biggrbr{\sqrt{\frac{\nu_l(\cdot\given s_l)}{\tilde\nu_l(\cdot\given s_l)}}-\sqrt{\frac{\tilde\nu_l(\cdot\given s_l)}{\nu_l(\cdot\given s_l)}}}^2}_{\cB}\nend
&\le \eta^2 \cdot \EE_{s_l}\sbr{\exp\bigrbr{\eta  \cdot \bigabr{\orbr{A_l-\tilde A_l}(s_l,b_l)}}\cdot \bigrbr{\orbr{A_l -\tilde A_l}(s_l, b_l)}^2}, \label{eq:kl-ub}
\end{align}
where last expectation is with respect to $b_ l \sim \nu_{l } (\cdot \given s_l)$. 
Here the inequality holds by noting that 
 $\sqrt{\nu_l/\tilde \nu_l}=\exp(\eta(A_l-\tilde A_l)/2)$ and the basic inequality $| \exp( x ) - \exp(y)| \leq \exp ( | x-y|) \cdot |x -y|$.
Plugging \eqref{eq:kl-ub} back into \eqref{eq:(ii)}, we obtain
\begin{align}
    \dr{(ii)} &\le \sum_{h=1}^H\EE\Bigsbr{\bigabr{\rbr{\EE_{s_h, b_h}-\EE_{s_h}}\bigsbr{\tilde\Delta_h^{(1)}(s_h, b_h)}}}  + 2\eta^{-1} \sum_{h=1}^H \EE\sbr{\sum_{l=h}^H \gamma^{l-h} \cdot \kl\infdivx[\big]{\nu_l(\cdot\given s_l)}{\tilde\nu_l(\cdot\given s_l)}}\nend
    &\le \sum_{h=1}^H\EE\sbr{\bigabr{\rbr{\EE_{s_h, b_h}-\EE_{s_h}}\bigsbr{\tilde\Delta_h^{(1)}(s_h, b_h)}}} \nend
    &\qquad + 2\eta \sum_{h=1}^H \sum_{l=h}^{H} \gamma^{l-h} \cdot  \EE\sbr{\exp\bigrbr{\eta  \cdot \bigabr{\orbr{A_l-\tilde A_l}(s_l,b_l)}}\cdot \bigabr{\orbr{A_l -\tilde A_l}(s_l, b_l)}^2}\nend
    &\le \sum_{h=1}^H\EE\sbr{\bigabr{\rbr{\EE_{s_h, b_h}-\EE_{s_h}}\bigsbr{\tilde\Delta_h^{(1)}(s_h, b_h)}}} \nend
    &\qquad + \frac{2\eta(1-\gamma^H)}{1-\gamma}\cdot \sum_{h=1}^H  \EE\sbr{\exp\bigrbr{\eta  \cdot \bigabr{\orbr{A_h-\tilde A_h}(s_h,b_h)}}\cdot \bigabr{\orbr{A_h -\tilde A_h}(s_h, b_h)}^2} .
     \label{eq:(ii)-2}
\end{align}
Recall that we  define $\eff_H(x) = (1-x^H)/(1-x)$ as the \say{effective}  horizon with respect to $x$.

Plugging \eqref{eq:(ii)-2} back into \eqref{eq:(i)}, we conclude that
\begin{align}
    \dr{(i)}&\le 2\eta  H \cdot \sum_{h=1}^H  \EE\sbr{\abr{\rbr{\EE_{s_h, b_h}-\EE_{s_h}}\bigsbr{\tilde\Delta_h^{(1)}(s_h, b_h)}}}   \nend
    &\qquad + \eta^2  H  \bigrbr{1+ 4  \cdot \eff_H(\gamma) } \cdot \sum_{h=1}^H  \EE\sbr{\exp\bigrbr{\eta  \cdot \bigabr{\orbr{A_h-\tilde A_h}(s_h,b_h)}}\cdot \bigabr{\orbr{A_h -\tilde A_h}(s_h, b_h)}^2}  . \label{eq:(ii)-21}
\end{align}
Note that we define  $C^{(1)}$ in \eqref{eq:define_constants}. 
Since $\oabr{\orbr{A_h-\tilde A_h}(s_h,b_h)} \leq 2 B_{A}$, 
by  \eqref{eq:(ii)-21}  and inequality 
\$
\exp\bigrbr{\eta  \cdot \bigabr{\orbr{A_h-\tilde A_h}(s_h,b_h)}}  \leq \exp(2\eta B_{A}), 
\$
we conclude the proof of \eqref{eq:taylor-myopic}. 

It remains to prove \eqref{eq:taylor-farsighted}. 
Notice that 
\#\label{eq:f_2-01}
\begin{split}
    V_h (s_h) = \max_{\nu' \in \Delta (\cB) }\bigl\{ \inp{\nu' }{Q_h (s_h, \cdot )}_{\cB } +\eta^{-1} \cH(\nu')\bigr\} , \\
    \tilde V_h (s_h) = \max_{\nu' \in \Delta (\cB) }\bigl\{ \inp{\nu' }{\tilde Q_h (s_h, \cdot )}_{\cB } +\eta^{-1} \cH(\nu')\bigr\} ,  
\end{split}
\#
where the maximizers are $\nu_h (\cdot \given s_h)$ and $\tilde \nu_h (\cdot \given s_h)$, respectively.
Then, by  \eqref{eq:f_2-01}
we have 
\#
& \bigabr{\orbr{V_h-\tilde V_h}(s_h)}   \notag \\
& \quad \le \max\Bigcbr{\inp[\big]{\nu_h(\cdot \given s_h) }  { \bigabr{Q_h(s_h, \cdot )-\tilde Q_h(s_h, \cdot )} }_{\cB} }, ~\Bigabr{\inp[\big]{\tilde\nu_h(\cdot \given s_h) }{\bigabr{ Q_h(s_h, \cdot )-\tilde Q_h(s_h, \cdot )} }_{\cB} } \notag\\
& \quad  =   \max\Bigcbr{  \EE_{s_h} \bigl [  \bigl | (Q_h - \tilde Q_h) (s_h, b_h ) \big | \bigr ] , ~   \EE_{s_h} \bigl [ \big |  (Q_h - \tilde Q_h)  (s_h, b_h )\bigr |  \cdot \tilde \nu_h (b_h \given s_h) / \nu_h (b_h \given s_h ) \bigr ]  }   \notag \\
& \quad  \leq    \exp(2 \eta B_A)  \cdot    \EE_{s_h} \bigl [ \big |  (Q_h - \tilde Q_h) (s_h, b_h )\big | \bigr ]   , \label{eq:f_2-1} 
\# 
where the expectation is taken with respect to $b_h \sim \nu_h (\cdot \given s_h)$. 
Here the first inequality is obtained from the optimality condition of \eqref{eq:f_2-01}, and the  last inequality holds because  $\nbr{\tilde\nu_h/\nu_h}_\infty \le \exp(2\eta B_A)$. 
Note that $\tilde A = \tilde Q - \tilde V$ and $A = Q - V$.
By triangle inequality, we have 
\begin{align}
    &\abr{\orbr{A_h-\tilde A_h}(s_h, b_h)}  
  \le  \bigabr{\orbr{Q_h-\tilde Q_h}(s_h, b_h)} + \exp\orbr{2\eta B_A} \cdot  \EE_{s_h} \bigl [  \big |  (Q_h - \tilde Q_h) (s_h, b_h )\big |  \bigr ]  .  \label{eq:f_2-11} 
\end{align}
%
%
%
Now for $\oabr{   (Q_h - \tilde Q_h) (s_h, b_h ) }$, by the Bellman equation $Q_h = r^{\pi}_h + \gamma P_h^{\pi} V_{h+1}$, we have  
\begin{align}
    &\bigabr{ \orbr{Q_h-\tilde Q_h}(s_h, b_h)} \nend
    &\quad \le  
    \bigabr{ 
        \orbr{\tilde Q_h - r_h^\pi - \gamma P_h^\pi \tilde V_{h+1}}(s_h, b_h)
    } 
         + \gamma  \cdot \bigabr{\bigrbr{P_h^\pi \orbr{V_{h+1}-\tilde V_{h+1}}}(s_h, b_h)}
         \nend
    &\quad \le \bigabr{\orbr{\tilde Q_h - r_h^\pi - \gamma P_h^\pi \tilde V_{h+1}}
    (s_h, b_h)} 
    + \gamma \cdot \exp\rbr{2\eta B_A}\EE_{s_h, b_h}\bigsbr{\oabr{Q_{h+1}^\pi-\tilde Q_{h+1}^\pi}},\label{eq:f_2-Q-ub}
\end{align}
where the first inequality holds by a standard decomposition and the second inequality is obtained by applying the  same upper bound for $V_{h}-\tilde V_h$ in \eqref{eq:f_2-1} to $V_{h+1} - \tilde V_{h+1}$. 
By recursion, we have 
\begin{align}
    \bigabr{\orbr{Q_h-\tilde Q_h}(s_h, b_h)} &\le  \sum_{l=h}^H \big (\gamma \cdot \exp(2\eta B_A  ) \big) ^{l-h} \cdot  \EE_{s_h, b_h}\bigsbr{\bigabr{\orbr{\tilde Q_l - r_l^\pi - \gamma P_l^\pi \tilde V_{l+1}}(s_l, b_l)}}.\label{eq:f_2-Q-telo}
\end{align}
Now, by the boundedness of $A_h$ and $\tilde A_h$, and \eqref{eq:f_2-11}, 
we have 
\$
& \EE\sbr{\exp\bigrbr{\eta  \cdot \bigabr{\orbr{A_h-\tilde A_h}(s_h,b_h)}}\cdot \bigabr{\orbr{A_h -\tilde A_h}(s_h, b_h)}^2} \notag \\
& \quad \leq \exp\bigrbr{2\eta B_A}\cdot \EE\bigsbr{ \bigabr{\orbr{A_h-\tilde A_h}(s_h, a_h)}^2}\nend
&\quad \le 2\exp\bigrbr{6\eta B_A}\cdot {\EE\bigsbr{ \bigabr{\orbr{\tilde Q_h - Q_h}(s_h, b_h)}^2}},
\$ 
where in the  last inequality we use the basic inequality $(a + b)^2 \leq 2 a ^2 + 2 b^2 $. 
Combining  with \eqref{eq:f_2-Q-telo}, we obtain that 
\begin{align*}
    &\EE\sbr{\exp\bigrbr{\eta  \cdot \bigabr{\orbr{A_h-\tilde A_h}(s_h,b_h)}}\cdot \bigabr{\orbr{A_h -\tilde A_h}(s_h, b_h)}^2} \nend
    &\quad \le 2\exp\bigrbr{6\eta B_A} \cdot \EE \biggsbr { \biggrbr{\sum_{l=h}^H (\gamma \cdot \exp(2\eta B_A  ) \big) ^{l-h} \cdot  \EE_{s_h, b_h}\bigsbr{\bigabr{\orbr{\tilde Q_l - r_l^\pi - \gamma P_l^\pi \tilde V_{l+1}}(s_l, b_l)}}}^2 } \nend
    &\quad \le 2\exp\orbr{6\eta B_A} \cdot \bigrbr{\eff_H(\exp\orbr{2\eta B_A}\gamma)}^2  \cdot \max_{l\in\{h, \dots, H\}}\EE\bigsbr{\bigabr{\orbr{\tilde Q_l - r_l^\pi - \gamma P_l^\pi \tilde V_{l+1}}(s_l, b_l)}^2}.
\end{align*}
Recall that we define  $C^{(2)} = 2 \eta^2  H^2 \cdot \exp\rbr{6\eta B_A}  \cdot \rbr{1+ 4 \eff_H(\gamma)} \cdot \rbr{\eff_H(\exp\cbr{2\eta B_A}\gamma)}^2$. By \eqref{eq:(ii)-21}, we establish \eqref{eq:perform-diff-linear}. 
Therefore, we 
  complete the proof of \Cref{lem:performance diff}.

\subsection{Proof of \Cref{cor:response-diff-myopic}}
\label{sec:proof-response-diff-myopic}

Since we consider any fixed state $s \in \cS$, in this proof, we omit $s$ 
to simplify the notation. 
To apply Lemma \ref{lem:performance diff}, 
we note that 
$Q$ and $\tilde Q$ in   Lemma \ref{lem:performance diff} becomes $r^{\pi}$ and $\tilde r^{\pi}$ in the myopic case. 
To make the proof consistent with that of Lemma \ref{lem:performance diff}, we use notation $\{ Q, \tilde Q, V, \tilde V, A, \tilde A\}$ in the sequel.

To begin with, 
we invoke \eqref{eq:nu-tv-ub-0} in \Cref{lem:response diff} and obtain that 
\begin{align*}
    D_\TV\orbr{\nu, \tilde \nu}
    &\le \eta \cdot \inp[\big]{\nu} {\oabr{\tilde A-A} + \frac \eta 2 \exp\bigrbr{\eta\oabr{\tilde A-A}} \cdot \orbr{\tilde A-A}^2}_{\cB } \nend
    &\le \eta \cdot \inp[\big]{\nu} {\oabr{\tilde A-A} + \frac \eta 2 \exp\orbr{2\eta B_A} \orbr{\tilde A-A}^2}_{\cB }\nend
    &= \eta \cdot \EE\bigsbr{\oabr{\tilde A-A}} + \frac {\eta^2} {2} \exp\bigrbr{2\eta B_A} \cdot \Bigrbr{\Var\orbr{\tilde A-A}+ \bigrbr{\EE\osbr{\tilde A-A}}^2},
\end{align*}
where the last equality holds by the  variance-mean decomposition. 
Here the expectation and variance are taken with respect to $   b \sim \nu(\cdot \given s ) $. 
Now, using \Cref{lem:AQV-func diff} to the myopic case, we have
\begin{align*}
    \EE\bigsbr{\oabr{\tilde A - A}} 
    &\le \EE\bigsbr{\bigabr{(\tilde Q - Q) - \EE  \osbr{\tilde Q - Q}}} + \eta^{-1}\kl\infdivx[]{\nu}  {\tilde\nu}.  
\end{align*}
For the variance term, we have
\begin{align*}
    \Var\orbr{\tilde A -A} & = \EE\bigsbr{\bigrbr{\orbr{\tilde A -A} - \EE\orbr{\tilde A -A}}^2}  = \EE\bigsbr{\bigrbr{\orbr{\tilde Q -Q} - \EE\osbr{\tilde Q -Q}}^2}, 
\end{align*}
where the last equality holds because $V $ and $\tilde V $ do not involve $b $. 
Furthermore, 
note that $$\eta\EE\osbr{A-\tilde A} = \kl\infdivx[]{\nu}{\tilde\nu} \leq 2 \eta B_A.$$ 
Thus, combining the inequalities above, we have 
  we have
\begin{align}
    & D_\TV(\nu,\tilde\nu)\nend 
    & \quad \le   \eta \cdot   \EE\bigsbr{\bigabr{(\tilde Q - Q) - \EE  \osbr{\tilde Q - Q}}} + \frac{\eta^2}{2} \cdot \exp\rbr{2\eta B_A} \cdot \EE\bigsbr{\bigrbr{\orbr{\tilde Q -Q} - \EE\osbr{\tilde Q -Q}}^2}\nend
    &\qquad + \kl\infdivx[]{\nu}{\tilde\nu} + \exp\rbr{2\eta B_A}/ 2 \cdot  \bigrbr{\kl\infdivx[]{\nu}{\tilde\nu}}^2\nend
    &\quad \le\eta \cdot   \EE\bigsbr{\bigabr{(\tilde Q - Q) - \EE  \osbr{\tilde Q - Q}}}  + \frac{\eta^2}{2} \cdot \exp\rbr{2\eta B_A} \cdot  \EE\bigsbr{\bigrbr{\orbr{\tilde Q -Q} - \EE\osbr{\tilde Q -Q}}^2}\nend
    &\qquad + \bigrbr{1 + \eta B_A \exp\rbr{2\eta B_A} }\cdot \kl\infdivx[]{\nu}{\tilde\nu}, \label{eq:TV-ub-taylor}
\end{align}
where the  last inequality holds by noting that $\kl\infdivx[]{\nu}{\tilde\nu}\le 2\eta B_A$.

In the following, we  handle the KL divergence term.
We calculate the derivative of $\eta^{-2}\kl\infdivx{\nu}{\tilde\nu}$ with respect to $\tilde Q$ and obtain
\begin{align*}
    \partial_{\tilde Q}\rbr{\eta^{-2}\kl\infdivx{\nu}{\tilde\nu}} = \eta^{-1}\partial_{\tilde Q} \bigrbr{\EE\osbr{A-\tilde A}} = \eta^{-1}{\bigrbr{\partial_{\tilde Q} \tilde V - \nu}} = \eta^{-1}\rbr{\tilde \nu - \nu},
\end{align*}
where $\ind$ denote the all one vector of length $|\cB|$ is $\cB$ is discrete.
Here the first equality follows from $\eta\EE\osbr{A-\tilde A} = \kl\infdivx[]{\nu}{\tilde\nu}$, the second equality holds because $\nu$ and $A$ do not depend on $\tilde Q$, and $\tilde A = \tilde Q - \tilde V$. 
Moreover, the last equality holds because 
$$\tilde V(s)  = \eta^{-1} \log \bigg(\sum_{b \in \cB} \exp \big( \eta \cdot  \tilde Q(s, b)\bigr) \biggr), $$
and also $\partial_{\tilde Q}\EE[\tilde Q] = \nu$.
We further take a second-order derivative and obtain
\begin{align*}
    \partial^2_{\tilde Q \tilde Q} \rbr{\eta^{-2}\kl\infdivx{\nu}{\tilde\nu}} = \eta^{-1}\partial_{\tilde Q} \tilde \nu = \diag(\tilde \nu) -\tilde \nu \tilde\nu^\top\eqdef \H, 
\end{align*}
where the last equality holds for the vector case. 
Note that the Hessian is upper and lower bounded by $\L$ where $\L = \diag(\nu )-\nu \nu^\top$, which is proved by  the following proposition. 
\begin{proposition}\label{prop:Hessian-ulb}
Let $\H = \diag(\tilde\nu) -\tilde\nu \tilde\nu^\top$ and $\L=\diag(\nu)-\nu \nu^\top$ where $\nu=\exp\orbr{\eta A}$ and $\tilde\nu=\exp\orbr{\eta \tilde A}$ are two quantal response over $\cB$ with $\nbr{A}_\infty\le B_A, \onbr{\tilde A}_\infty\le B_A$. Then   for any vector  $g\in \RR^{|\cB| }$, we have  
\begin{align}
    \exp\rbr{2 \eta B_A} \cdot g^\top \L g \ge x^\top \H x \ge \exp\rbr{-2 \eta B_A} \cdot g^\top \L g.\label{eq:Hessian ub lb}
\end{align}
\end{proposition}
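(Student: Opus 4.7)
The plan is to recognize that for any vector $g\in\RR^{|\cB|}$, the two quadratic forms are exactly variances: a direct calculation gives $g^\top \L g=\Var_\nu[g]=\EE_\nu[g^2]-(\EE_\nu[g])^2$ and $g^\top\H g=\Var_{\tilde\nu}[g]=\EE_{\tilde\nu}[g^2]-(\EE_{\tilde\nu}[g])^2$. In particular, both quadratic forms are invariant under the shift $g\mapsto g-c\cdot\mathbf{1}$ for any constant $c\in\RR$. This shift invariance will be the key lever: although the naive pointwise ratio $\tilde\nu(b)\tilde\nu(b')/(\nu(b)\nu(b'))$ is only bounded by $\exp(4\eta B_A)$ (so working directly with the symmetric form $\tfrac12\sum_{b,b'}\nu(b)\nu(b')(g(b)-g(b'))^2$ would lose a factor of two in the exponent), a judicious recentering of $g$ will let us avoid this loss.

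Next I will record the basic pointwise ratio bound. Since $\nu(b)=\exp(\eta A(b))$ and $\tilde\nu(b)=\exp(\eta\tilde A(b))$ with $\|A\|_\infty,\|\tilde A\|_\infty\le B_A$, we have $\tilde\nu(b)/\nu(b)=\exp(\eta(\tilde A(b)-A(b)))\in[\exp(-2\eta B_A),\exp(2\eta B_A)]$ for every $b\in\cB$, and symmetrically for $\nu(b)/\tilde\nu(b)$. These pointwise bounds immediately give $\EE_{\tilde\nu}[h]\le \exp(2\eta B_A)\EE_\nu[h]$ and $\EE_\nu[h]\le\exp(2\eta B_A)\EE_{\tilde\nu}[h]$ for any nonnegative function $h$.

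For the upper bound $g^\top\H g\le\exp(2\eta B_A)\,g^\top\L g$, I will shift $g$ so that $\EE_\nu[g]=0$, which makes $g^\top\L g=\EE_\nu[g^2]$. Then, dropping the nonnegative mean-squared term in $g^\top\H g$,
\begin{align*}
g^\top\H g=\EE_{\tilde\nu}[g^2]-(\EE_{\tilde\nu}[g])^2\le \EE_{\tilde\nu}[g^2]\le \exp(2\eta B_A)\,\EE_\nu[g^2]=\exp(2\eta B_A)\,g^\top\L g.
\end{align*}
For the lower bound $g^\top\H g\ge \exp(-2\eta B_A)\,g^\top\L g$, I will instead shift $g$ so that $\EE_{\tilde\nu}[g]=0$, so that $g^\top\H g=\EE_{\tilde\nu}[g^2]\ge \exp(-2\eta B_A)\EE_\nu[g^2]\ge \exp(-2\eta B_A)\,\Var_\nu[g]=\exp(-2\eta B_A)\,g^\top\L g$, again using $\Var_\nu[g]\le \EE_\nu[g^2]$.

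The only subtlety — and the place where one might worry — is that the symmetric form $\tfrac12\sum_{b,b'}\nu(b)\nu(b')(g(b)-g(b'))^2$ for a variance involves a product of two densities and therefore seems to force the constant $\exp(4\eta B_A)$. The resolution (and what makes the argument work with $\exp(2\eta B_A)$) is precisely the two-step maneuver above: use shift invariance to reduce the variance on one side to a plain second moment, then transfer distributions via the single-density pointwise ratio. I do not anticipate any further obstacle.
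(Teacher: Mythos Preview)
Your proposal is correct and takes essentially the same approach as the paper: both recognize the quadratic forms as variances, use the pointwise density ratio bound $\tilde\nu(b)/\nu(b)\in[\exp(-2\eta B_A),\exp(2\eta B_A)]$, and exploit the fact that the variance is the minimum of $\EE[(g-c)^2]$ over $c$ (you phrase this as shift invariance and recenter $g$, while the paper keeps $g$ fixed and replaces $\EE^{\tilde\nu}[g]$ by $\EE^{\nu}[g]$ inside the square). These are the same argument in slightly different dress.
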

\begin{proof}
Note that $\exp\rbr{-2 \eta B_A}\le  \tilde \nu(b) / \nu(b) \le\exp\rbr{ 2 \eta B_A}$ for any $b\in \cB$. 
Let $\EE^{\nu}$ and $\Var^{\nu}$ denote the expectation and variance under distribution $\nu$. 
Then we have 
\begin{align*}
    g^\top \L g & = \Var^\nu[g(b)]
   = \EE^\nu\bigsbr{\bigrbr{g(b) - \EE^\nu[g(b)]}^2},\notag \\
   g^\top \H g & = \Var^{\tilde \nu}[g(b)]
   = \EE^{\tilde \nu}\bigsbr{\bigrbr{g(b) - \EE^\nu[g(b)]}^2}.
\end{align*}
By direct computation, we have 
\begin{align*}
    & \exp\rbr{-\eta B_A} \cdot \EE^{\tilde\nu}\bigsbr{\bigrbr{g(b) - \EE^{\tilde\nu}[g(b)]}^2}
    \notag \\
    & \quad \le \exp\rbr{-\eta B_A}\cdot \EE^{\tilde\nu}\bigsbr{\bigrbr{g(b) - \EE^{\nu}[g(b)]}^2} 
     \le \EE^\nu\sbr{\rbr{g(b) - \EE^\nu[g(b)]}^2} 
\end{align*}
where the first inequality is true because changing $\EE^{\tilde\nu}[g(b)]$ to $\EE^{ \nu}[g(b)]$ incurs additional bias, and the second inequality is true because $\tilde \nu (b) / \nu(b)$ 
Similarly, we have 
\begin{align*}
     \EE^\nu\sbr{\rbr{g(b) - \EE^\nu[g(b)]}^2} 
    &\le \EE^{\nu}\sbr{\rbr{g(b) - \EE^{\tilde\nu}[g(b)]}^2}  
     \le  \exp\rbr{\eta B_A}  \cdot \EE^{\tilde\nu}\sbr{\rbr{g(b) - \EE^{\tilde\nu}[g(b)]}^2}.
\end{align*}
Therefore, we conclude that \eqref{eq:Hessian ub lb} holds. 
\end{proof}

Using the lower bound in \eqref{eq:Hessian ub lb}, we have for the KL divergence that
\begin{align*}
    \eta^{-2}\kl\infdivx[]{\nu}{\tilde\nu} &\le 1/2 \cdot (\tilde Q - Q)^\top  \H (\tilde Q - Q)  \le  \exp\rbr{2\eta B_A}/ 2 \cdot (\tilde Q - Q)^\top  \L (\tilde Q - Q) \nend
    &= \frac{\exp\rbr{2\eta B_A}}{2} \cdot (\tilde Q - Q)^\top  \bigrbr{\diag(\nu)-\nu\nu^\top} (\tilde Q - Q) , 
\end{align*}
where the first inequality holds by noting that the derivative of the KL-divergence at $\tilde\nu=\nu$ is zero, and we upper bound the KL-divergence  only by the second order term. Furthermore, the second inequality holds because  $\H\preceq \exp(2\eta B_A)\cdot \L$, which is proved  by \Cref{prop:Hessian-ulb}. 
Therefore, we conclude for \eqref{eq:TV-ub-taylor} that
\begin{align*}
    D_\TV \rbr{\nu, \tilde \nu} &\le \eta \cdot   \EE\bigsbr{\bigabr{(\tilde Q - Q) - \EE\osbr{\tilde Q - Q}}} + \frac{\eta^2}{2} \exp\rbr{2\eta B_A} \cdot \EE\bigsbr{\bigrbr{\orbr{\tilde Q -Q} - \EE\osbr{\tilde Q -Q}}^2}\nend
    &\qquad + \bigrbr{1 + \eta B_A \cdot \exp\rbr{2\eta B_A} }\cdot \kl\infdivx[]{\nu}{\tilde\nu}\nend
    &\le \eta \cdot   \EE\bigsbr{\bigabr{(\tilde Q - Q) - \EE\osbr{\tilde Q - Q}}} \nend
    &\qquad + \frac{\eta^2 \exp(2\eta B_A)}{2} \bigrbr{2+\eta B_A \cdot  \exp\rbr{2\eta B_A}} \cdot  \EE\bigsbr{\bigrbr{\orbr{\tilde Q -Q} - \EE\osbr{\tilde Q -Q}}^2}, 
\end{align*}
which finishes the proof of \Cref{cor:response-diff-myopic}.


\subsection{Proof of \Cref{lem:response diff-myopic-linear}}
\label{sec:proof-response-diff-myopic_linear}  
    \begin{proof}
Similar to the proof of \Cref{cor:response-diff-myopic}, we omit the state $s$ to simplify the notation. 
Note that $\tilde Q$ and $Q$ in \Cref{cor:response-diff-myopic}
correspond to $\tilde r^{\pi} = \la \phi^{\pi}, \tilde \theta\ra$ and $r^{\pi} = \la \phi^{\pi}, \theta^*$, respectively.  
Let $\EE_s $ denote the expectation taken with respect to $\nu^{\pi} (\cdot \given s)$.
Then we have 
\begin{align}
    &\EE_s\Bigsbr{\Bigrbr{\rbr{\tilde r^\pi(s, b) -r^\pi(s, b)} - \EE \sbr{\tilde r^\pi(s, b) -r^\pi(s, b)}}^2} = \bignbr{\theta^*-\tilde\theta}_{\Sigma_{ s}^{\pi, \theta}}^2  \nend
    &\quad 
    = 
    \trace\Bigrbr{ \Sigma_{ s}^{\pi, \theta^*} \cdot (\theta ^* - \tilde \theta)(\theta ^* - \tilde \theta)^\top }  
      \le {\trace\bigrbr{{\Psi}^{\dagger} \Sigma_{ s}^{\pi, \theta^*}}} \cdot \bignbr{\theta^*-\tilde\theta}_{\Psi}^2  , \label{eq:reward-covarance-bounded-by-Sigma}
\end{align}
where ${\Psi}^{\dagger}$ is the pseudo-inverse of $\Psi$.
Moreover, by Jensen's inequality,
we have 
\#
& \EE_s\Bigsbr{\Bigabr{\rbr{\tilde r^\pi(s, b) -r^\pi(s, b)} - \EE \sbr{\tilde r^\pi(s, b) -r^\pi(s, b)}} }  \label{eq:reward-abs-bounded-by-Sigma}  \\
&\quad  \leq \biggl(  \EE_s\Bigsbr{\Bigrbr{\rbr{\tilde r^\pi(s, b) -r^\pi(s, b)} - \EE \sbr{\tilde r^\pi(s, b) -r^\pi(s, b)}}^2} \biggr) \leq \sqrt{{\trace\bigrbr{{\Psi}^{\dagger} \Sigma_{ s}^{\pi, \theta^*}}}}  \cdot \bignbr{\theta^*-\tilde\theta}_{\Psi}.  \notag
\# 
Finally, combining \eqref{eq:reward-covarance-bounded-by-Sigma}, \eqref{eq:reward-abs-bounded-by-Sigma}, and 
  \Cref{cor:response-diff-myopic}, we conclude the proof. 
  We note that  a similar result also holds if we swap $\theta^*$ and $\tilde\theta$.
    \end{proof}

\subsection{Proof of \Cref{lem:MLE-formal}}
\label{sec:proof-MLE-general}
\begin{proof} 
We prove this lemma by leveraging 
 \Cref{lem:freeman-variation} with $X_t = (-\cL_{h,t}(\theta) + \cL_{h, t}(\theta^*))/2$ where $\cL_{h}^t(\theta)=-\sum_{i=1}^{t-1} \eta A_h^{\pi^i, \theta}(s_h^i,b_h^i)$. 
We choose filtration $\cF_{h, t-1}=\sigma(\tau^{1:t-1})$ where $\sigma(X)$ denotes the $\sigma$-algebra generated by $X$ and $\tau^{1:t-1}$ is just the history up to $t-1$.
Let $\cN_\rho(\Theta, \epsilon)$ be  the covering number for the $\epsilon$-covering net of $\Theta$  with respect to norm $\rho$ defined in \eqref{eq:rho for MLE}. 
Let $\Theta_\epsilon$ be  the $\epsilon$-covering net of $\Theta$. 
To simplify the notation, we define  $\iota =  \log\rbr{ H\cN_\rho(\Theta, \epsilon) / \delta}$.
Then, for all $\theta \in \Theta_{\epsilon}$, 
we have with probability $1-\delta$ for all $h\in[H], t\in[T]$ that
\begin{align}
    &\frac 1 2 \rbr{-\cL_h^t(\theta) + \cL_h^t(\theta^*)} \nend
    &\quad\le \sum_{i=1}^{t-1} \log\EE^{\pi^i   }\sbr{\sqrt{ \nu_h^{\pi^i,\theta}(\cdot\given s_h) \big / \nu_h^{\pi^i, \theta^*}(\cdot\given s_h)}}  + \log\rbr{ H\cN_\rho(\Theta, \epsilon) / \delta} \nend
    &\quad\le -  \sum_{i=1}^{t-1} \EE^{\pi^i }\Bigl [ D_\H^2\bigrbr{\nu_h^{\pi^i, \theta}(\cdot\given s_h ), \nu_h^{\pi^i, \theta^*}(\cdot\given s_h )} \Bigr ] + \log\rbr{ H\cN_\rho(\Theta, \epsilon) / \delta}, \label{eq:MLE-to-hellinger-1}
\end{align}
where the expectation is taken with respect to the true model. 
Here, the first inequality holds by applying  \Cref{lem:freeman-variation} and   taking a union bound over the  $\epsilon$-covering net. 
The second inequality holds by noting that $\log(x)\le x-1$ and by the definition of the Hellinger distance.

Meanwhile, by the definition of the distance $\rho$ in   \eqref{eq:rho for MLE},
for any $\theta, \tilde \theta \in \Theta$, we have 
\begin{align*}
&|D_\H^2(\nu_h^{\pi,\theta}, \nu_h^{\pi, \theta^*}) - D_\H^2(\nu_h^{\pi,\tilde\theta}, \nu_h^{\pi, \theta^*})|\nend
&\quad \le (D_\H^2(\nu_h^{\pi,\theta}, \nu_h^{\pi, \theta^*}) + D_\H^2(\nu_h^{\pi,\tilde\theta}, \nu_h^{\pi, \theta^*})) \cdot 
\bigabr{D_\H^2(\nu_h^{\pi,\theta}, \nu_h^{\pi, \theta^*}) - D_\H^2(\nu_h^{\pi,\tilde\theta}, \nu_h^{\pi, \theta^*})}\nend
&\quad \le 2 D_\H(\nu_h^{\pi,\tilde\theta}, \nu_h^{\pi,\theta})\nend
&\quad \le 2\rho(\theta, \tilde\theta), 
\end{align*} 
where  the second inequality holds by noting that the Hellinger distance does not exceed 1, and that the Hellinger distance satisfies the triangle inequality as a norm, and the last inequality holds by definition of $\rho$. 
Moreover, by noting that $\cL_h^t(\theta)=-\sum_{i=1}^t \eta A_h^{\pi^i, \theta}(s_h^i,b_h^i)$, we have
\begin{align*}
\bigl | \cL_h^t (\theta ) - \cL_h^t(\tilde \theta ) \bigr |
&\le \eta T \max_{i\in[t-1]}\bigabr{ A_h^{\pi^i, \theta}(s_h^i,b_h^i) - A_h^{\pi^i, \tilde\theta}(s_h^i,b_h^i)}\nend
&\le 2\eta T \max_{i\in[t-1]}\bignbr{ Q_h^{\pi^i, \theta}- Q_h^{\pi^i, \tilde\theta}}_\infty \nend
&\leq 2 T \cdot \rho(\theta, \tilde \theta),
\end{align*}
where the second inequality holds by noting that $|(V_h^{\pi, \theta} - V_h^{\pi, \tilde\theta})(s_h)| \le \onbr{Q_h^{\pi,\theta}-Q_h^{\pi,\tilde\theta}}_\infty$ by the same argument in \eqref{eq:f_2-01} and \eqref{eq:f_2-1}, and the last inequality holds by noting that $\eta\le (\gamma B_A + 1 + \eta)$.
Therefore, 
adding an extra term $3T\epsilon$ to the right-hand side of \eqref{eq:MLE-to-hellinger-1} extends the result to any $\theta\in\Theta$ by definition of the covering net $\Theta_\epsilon$.
We thus obtain for all $\theta\in\Theta, h\in[H], t\in[T]$ with probability $1-\delta$,
\begin{align}
    \frac 1 2 \rbr{-\cL_h^t(\theta) + \cL_h^t(\theta^*)} 
    &\le -  \sum_{i=1}^{t-1} \EE^{\pi^i, \theta^*}D_\H^2\rbr{\nu_h^{\pi^i, \theta}(\cdot\given s_h^i), \nu_h^{\pi^i, \theta^*}(\cdot\given s_h^i)} \nend
    &\qquad + \log\rbr{ H\cN_\rho(\Theta, \epsilon) / \delta} + 3T\epsilon. \label{eq:KL-2-D_H-online}
\end{align}
In the sequel, we take $\epsilon=T^{-1}$ and  let $\iota= \log\rbr{ H\cN_\rho(\Theta, T^{-1}) / \delta}+ 3$.
Now, we plug in $\hat\theta_{h,\MLE}=\argmin_{\theta'\in\Theta} \cL_h^t(\theta')$ in \eqref{eq:KL-2-D_H-online}
and obtain by the nonnegativity of the Hellinger distance that
\begin{align*}
    \cL_h^t(\theta^*) \le \inf_{\theta'\in\Theta} \cL_h^t(\theta' ) + 2\iota \le  \cL_h^t(\hat\theta_{h,\MLE}) + 2\iota  , 
\end{align*}
which guarantees that our confidence set is indeed valid by letting 
$$\beta \ge  2\iota = 2\log(e^3 H\cdot \cN_\rho(\Theta, T^{-1})/\delta). $$
Next, we show that our confidence set is also accurate.
For \eqref{eq:KL-2-D_H-online}, we have that
\begin{align}
    \sum_{i=1}^{t-1} \EE^{\pi^i, \theta^*}D_\H^2\rbr{\nu_h^{\pi^i, \theta}(\cdot\given s_h^i), \nu_h^{\pi^i, \theta^*}(\cdot\given s_h^i)} 
    &\le  \frac 1 2\rbr{\cL_h^t(\theta) - \cL_h^t(\theta^*)} +\iota\nend
    &\le \frac 1 2\rbr{\cL_h^t(\theta) - \cL_h^t(\theta^*)+\beta}, \label{eq:MLE-to-hellinger-2}
\end{align}
Now, if $\theta\in\CI_{h,\Theta}^t(\beta)$, it follows directly from \eqref{eq:MLE-to-hellinger-2} that
\begin{align*}
    \sum_{i=1}^{t-1} \EE^{\pi^i, \theta^*}D_\H^2\rbr{\nu_h^{\pi^i, \theta}(\cdot\given s_h^i), \nu_h^{\pi^i, \theta^*}(\cdot\given s_h^i)} \le \beta, 
\end{align*}
which shows that our confidence set is also valid and gives \eqref{eq:MLE-guarantee-hellinger-2}.
We next show how to derive the bound for the Q function. Invoking \Cref{lem:D_H-2-A^2}, we have that
\begin{align}
    8 D_\H^2\rbr{\nu_h^{\pi, \hat\theta}(\cdot\given s_h), \nu_h^{\pi, \theta^*}(\cdot\given s_h)} 
    &\ge \rbr{\frac{\eta}{1+\eta B_A}}^2\cdot \inp[\Big]{\nu_h^{\pi, \theta^*}}{ \orbr{A_h^{\pi, \hat\theta}-A_h^{\pi, \theta^*}}^2} \nend
    &\ge \rbr{\frac{\eta}{1+\eta B_A}}^2\cdot \EE_{s_h}^{\pi, \theta^*}\rbr{\bigrbr{\EE_{s_h, b_h}^{\pi, \theta^*} - \EE_{s_h}^{\pi, \theta^*}}\osbr{A_h^{\pi, \hat\theta}-A_h^{\pi, \theta^*}}}^2\nend
    &= \rbr{\frac{\eta}{1+\eta B_A}}^2\cdot \EE_{s_h}^{\pi, \theta^*}\Bigrbr{\orbr{\EE_{s_h, b_h}^{\pi, \theta^*} - \EE_{s_h}^{\pi, \theta^*}}\osbr{Q_h^{\pi, \hat\theta}-Q_h^{\pi, \theta^*}}}^2, \label{eq:hellinger-2-Q}
\end{align}
where the second inequality follows from the Jensen's inequality, and the last inequality holds by invoking \eqref{eq:A diff-1} in \Cref{lem:AQV-func diff}. One can also swap $\theta^*$ and $\hat\theta$ by the exchangeability of the Hellinger distance and obtain another version.  Note that $C_\eta = \eta^{-1}+B_A$. 
Plugging \eqref{eq:hellinger-2-Q} with $C_\eta$ into the previous accuracy guarantees gives \eqref{eq:MLE_guarantee_Q}.
Therefore, we have proved \eqref{eq:MLE-guarantee-hellinger-2} and \eqref{eq:MLE_guarantee_Q}. 
For deriving \eqref{eq:MLE-guarantee-hellinger-1} and \eqref{eq:MLE_guarantee_Q-1}, we just change our filtration to $\cF_{h, t-1} = \sigma((s_h^i, b_h^i)_{i\in[t-1]}, s_h^t)$ and everything follows. 

Lastly, we prove the guarantee in \eqref{eq:MLE-guarantee-Q-3}. 
We use \eqref{eq:MLE_guarantee_Q-1} with $\theta'$ replaced by $\theta^*$ and for all $\theta\in\CI_{h, \Theta}^t(\beta)$, 
\begin{align*}
    \sum_{i=1}^{t-1} {\Var_{s_h^i}^{\pi^i, \theta^*} \bigsbr{Q_h^{\pi^i, \theta}(s_h, b_h) - Q_h^{\pi^i, \theta^*}(s_h, b_h)}} \le 4 C_\eta^2 \rbr{\cL_h^t(\theta) - \cL_h^t(\theta^*)+ \beta} \le 8 C_\eta^2 \beta, 
\end{align*}
which is equivalent to saying that for all $h\in[H], \theta\in\CI_{h, \Theta}^t(\beta)$,
\begin{align}
    \sum_{i=1}^{t-1} 
    \EE_{s_h^i}^{\pi^i,\theta^*}\rbr{\rbr{Q_h^{\pi^i, \theta} - Q_h^{\pi^i, \theta*}}(s_h^i, b_h^i)
    -\EE_{s_h^i}^{\pi^i,\theta^*}\sbr{ \rbr{Q_h^{\pi^i, \theta} - Q_h^{\pi^i, \theta^*}}(s_h, b_h)}}^2 \le 8 C_\eta^2 \beta. \label{eq:MLE-Q-ub-1}
\end{align}
Recall the covering net $\Theta_\epsilon$ we constructed before. For all $\theta\in\Theta_\epsilon \cap \CI_{h,\Theta}^t(\beta), h\in[H]$, and a given $t\in[T]$, we have  by a standard martingale concentration in \Cref{cor:martigale concentration} that with probability at least $1-2\delta$,
\begin{align*}
    &\sum_{i=1}^{t-1} 
    \rbr{\rbr{Q_h^{\pi^i, \theta} - Q_h^{\pi^i, \theta^*}}(s_h^i, b_h^i)
    -\EE_{s_h^i}^{\pi^i,\theta^*}\sbr{ \rbr{Q_h^{\pi^i, \theta} - Q_h^{\pi^i, \theta^*}}(s_h, b_h)}}^2 
    \nend
    &\quad \le \frac 3 2\sum_{i=1}^{t-1}\EE_{s_h^i}^{\pi^i,\theta^*} 
    \rbr{\rbr{Q_h^{\pi^i, \theta} - Q_h^{\pi^i, \theta^*}}(s_h^i, b_h)
    -\EE_{s_h^i}^{\pi^i,\theta^*}\sbr{ \rbr{Q_h^{\pi^i, \theta} - Q_h^{\pi^i, \theta^*}}(s_h, b_h)}}^2 \nend
    &\qqquad + 32 B_A^2 \log\rbr{2H \cN_\rho(\Theta, \epsilon)\delta^{-1} }\nend
    &\quad \le 12 C_\eta^2 \beta  +32 B_A^2 \log\rbr{2H \cN_\rho(\Theta, \epsilon)\delta^{-1} }\le 28 C_\eta^2 \beta 
\end{align*}
where the second inequality follows from \eqref{eq:MLE-Q-ub-1}. 
Moreover, we note that
\begin{align*}
    &\rbr{\rbr{Q_h^{\pi^i, \theta} - Q_h^{\pi^i, \theta^*}}(s_h^i, b_h)
    -\EE_{s_h^i}^{\pi^i,\theta^*}\sbr{ \rbr{Q_h^{\pi^i, \theta} - Q_h^{\pi^i, \theta^*}}(s_h, b_h)}}^2 \nend
    &\qqquad - \rbr{\rbr{Q_h^{\pi^i, \tilde\theta} - Q_h^{\pi^i, \theta*}}(s_h^i, b_h)
    -\EE_{s_h^i}^{\pi^i,\theta^*}\sbr{ \rbr{Q_h^{\pi^i, \tilde\theta} - Q_h^{\pi^i, \theta^*}}(s_h, b_h)}}^2\nend
    &\quad \le 8 \cdot \max_{\pi\in\Pi,\theta\in\Theta}\bignbr{Q_h^{\pi,\theta}}_\infty
    \cdot 2 \bignbr{Q_h^{\pi^i, \theta} -  Q_h^{\pi^i, \tilde\theta}}_\infty \le 16 B_A \rho(\theta, \tilde\theta),
\end{align*}
where the last inequality holds by noting that $B_A$ upper bounds $\max_{\theta\in\Theta, \pi\in\Pi, h\in[H]}\onbr{Q_h^{\pi, \theta}}_\infty$ and using the definition of $\rho$. 
As a result, for all $\theta\in\CI_\Theta(\beta), h\in[H]$, and a given $t\in[T]$, we conclude that with probability at least $1-2\delta$,
\begin{align*}
    &\sum_{i=1}^{t-1} 
    \rbr{\rbr{Q_h^{\pi^i, \theta} - Q_h^{\pi^i }}(s_h^i, b_h^i)
    -\EE_{s_h^i}^{\pi^i,\theta^*}\sbr{ \rbr{Q_h^{\pi^i, \theta} - Q_h^{\pi^i, \theta^*}}(s_h, b_h)}}^2  \le 28 C_\eta^2 \beta  + 16 B_A.
\end{align*}
Replace $\delta$ by $\delta/2$, we have for all $\theta\in\CI_\Theta(\beta), h\in[H]$ and a given $t\in[T]$ with probability at least $1-\delta$ that 
\begin{align*}
    &\sum_{i=1}^{t-1} 
    \rbr{\rbr{Q_h^{\pi^i, \theta} - Q_h^{\pi^i }}(s_h^i, b_h^i)
    -\EE_{s_h^i}^{\pi^i,\theta^*}\sbr{ \rbr{Q_h^{\pi^i, \theta} - Q_h^{\pi^i, \theta^*}}(s_h, b_h)}}^2 
    \nend
    &\quad \le 28 C_\eta^2 \beta + 56 C_\eta^2\log 2 +16 B_A =  \cO(C_\eta^2 \beta).
\end{align*}
which 
finishes the proof of \Cref{lem:MLE-formal}.
\end{proof}

\subsection{Proof of \Cref{lem:MLE-indep-data}}\label{sec:proof-MLE-indep-data}

Here, we show the guarantee for the MLE with independently collected dataset.
Since each trajectory is independently collected, we are able to use the Bernstein inequality for indepedent random variables $Z_h^i = D_\H^2\orbr{\nu_h^{\pi^i, \theta}(\cdot\given s_h^i), \nu_h^{\pi^i, \theta^*}(\cdot\given s_h^i)}$, 
\#
    \abr{\frac 1 T \sum_{i=1}^T Z_h^i - \EE_\cD\sbr{Z_h^i}} &\le \sqrt{\frac{4\sum_{i=1}^T\Var[Z_h^i]\log(2\delta^{-1})}{T^2}} + \frac{4\log(2\delta^{-1})}{3T} \nend
    &\le \sqrt{\frac{4\sum_{i=1}^T \EE_\cD[Z_h^i]\log(2\delta^{-1})}{T^2}} + \frac{4\log(2\delta^{-1})}{3T}\nend
    &\le \frac{1}{2T}\sum_{i=1}^T \EE_\cD [Z_h^i] + \frac{2 \log(2\delta^{-1})}{T} + \frac{4\log(2\delta^{-1})}{3T},\notag
\#
where the second inequality holds by noting that $\Var[Z_h^i] \le \EE_\cD[(Z_h^i)^2]\le \EE_\cD[Z_h^i]$ by using the property that Hellinger distance is always upper bounded by $1$. We now conclude by further taking a union bound over $h\in[H]$ and $\theta\in\Theta$ that
\#
\frac{1}{T} \sum_{i=1}^T \EE_\cD[Z_h^i] &\le \frac{2}{T} \sum_{i=1}^T Z_h^i  + \frac{8\log(2H \cN_\rho(\Theta, \epsilon)\delta^{-1})}{T} + 6\epsilon\nend
&\le \frac{2}{T} \sum_{i=1}^T Z_h^i  + \frac{8\log(2eH \cN_\rho(\Theta, T^{-1})\delta^{-1})}{T},\notag
\#
where the last inequality holds by taking $\epsilon=T^{-1}$. Plug in the definition of $Z_h^i$, we have
\begin{align*}
    \frac 1 T\sum_{i=1}^T\EE_\cD\sbr{D_\H^2\rbr{\nu_h^{\pi^i, \theta}(\cdot\given s_h^i), \nu_h^{\pi^i, \theta^*}(\cdot\given s_h^i)}} &\le \frac{2}{T} \sum_{i=1}^T D_\H^2\rbr{\nu_h^{\pi^i, \theta}(\cdot\given s_h^i), \nu_h^{\pi^i, \theta^*}(\cdot\given s_h^i)}  \nend
    &\qquad + \frac{8\log(2eH \cN_\rho(\Theta, T^{-1})\delta^{-1})}{T}.
\end{align*}
Using \eqref{eq:MLE-guarantee-hellinger-1} in \Cref{lem:MLE-formal} for any $\theta\in\cC_{\Theta}(\beta)$, we give 
\begin{align}
    \sum_{i=1}^{T} D_\H^2\bigrbr{\nu_h^{\pi^i, \theta}(\cdot\given s_h^i), \nu_h^{\pi^i, \theta^*}(\cdot\given s_h^i)} 
        &\le  \frac 1 2\rbr{\cL_h(\theta) - \cL_h(\theta^*)} + \log\rbr{\frac{eH\cN_\rho(\Theta, T^{-1})}{\delta}} \nend
        &\le \frac 1 2\rbr{\cL_h(\theta) - \inf_{\theta'\in\Theta}\cL_h(\theta')} + \log\rbr{\frac{eH\cN_\rho(\Theta, T^{-1})}{\delta}}\nend
        &\le \frac 3 2 \beta, \label{eq:OffGM-nu-hellinger-1}
\end{align}
where the last inequality is just by definition of $\CI_\Theta(\beta)$ in \Cref{eq:behavior_model_confset-1}. Therefore, 
\begin{align*}
    \sum_{i=1}^T\EE_\cD\sbr{D_\H^2\rbr{\nu_h^{\pi^i, \theta}(\cdot\given s_h^i), \nu_h^{\pi^i, \theta^*}(\cdot\given s_h^i)}}  \le 3\beta + {8\log(2eH \cN_\rho(\Theta, T^{-1})\delta^{-1})} \le 11\beta, 
\end{align*}
with probability at least $1-2\delta$ for all $h\in[H]$ and $\theta\in\CI_\Theta(\beta)$. The validity guarantee is already shown in \Cref{lem:MLE-formal}.
We complete the proof of \Cref{lem:MLE-indep-data}.


\subsection{Proof of \Cref{lem:leader-bellman-loss}}\label{sec:proof-leader-bellman-loss}
In the following proof, we always consider the expectation to be taken with respect to the data generating distribution.
We first prove the following concentration result: for any $h\in[H]$ and any $y=(\theta_{h+1}, \pi_{h+1}, U_{h+1}, U_{h})\in \cY_h = \Theta_{h+1}\times \Pi_{h+1}\times \cU^2$, 
it holds with probability at least $1-\delta$ that 
      \begin{align}
        &\abr{T\EE[(U_{h} - \TT_{h}^{\pi,\theta}U_{h + 1})^2] - \ell_{h}(U_{h}, U_{h + 1}, \theta, \pi) + \ell_{h}(\TT_{h}^{\pi,\theta}U_{h + 1}, U_{h + 1}, \theta, \pi)} \notag\\
        &\qquad \le \epsilon_S + \frac{T}{2} \EE[(U_{h} - \TT_{h}^{\pi,\theta}U_{h + 1})^2].\label{eq:cY-confset-1}
      \end{align}
     where
    $
    {110 B_U^2\cdot\log(H \max_{h\in[H]}\cN_\rho(\cY_h, T^{-1})\delta^{-1}) } \cdot {T^{-1}} + (45 B_U^2 + 60 B_U )T^{-1}
    $ and $B_U=H$ is the upper bound for the function class $U$.

\paragraph{Concentration. }
Our proof is an adaptation of Lemma D.2 in \citep{lyu2022pessimism}, although we simplify a little bit by directly using the covering number for a joint class $\cY_h = \Pi_{h+1}\times\Theta_{h+1}\times\cU^2$. We take an $\epsilon$-covering net $\cY_\epsilon$ for $\cY$ with respect to distance $\rho$ specified by \eqref{eq:rho-cY}. 
We first use \Cref{lem:bernstein},  
where we take 
\begin{align*}
    {Z_h^i} &= \ell_{h}(U'_{h}, U_{h + 1}, \theta, \pi) - \ell_{h}(\TT_{h}^{\pi,\theta}U_{h + 1}, U_{h + 1}, \theta, \pi)\nend
    & = \rbr{U_h(s_h^i, a_h^i, b_h^i) - u_h^i -  T_{h+1}^{\pi,\theta} U_{h+1}(s_{h+1}^i)}^2  - \rbr{\TT_h^{\pi, \theta} U_{h+1}(s_h^i, a_h^i, b_h^i) - u_h^i - T_{h+1}^{\pi,\theta} U_{h+1}(s_{h+1}^i)}^2.
\end{align*}
Here, we recall the definition of $\TT_h^{\pi}$ given by \eqref{eq:bellman_operator_leader}.
One can verify that $|{Z_h^i}|\le 9B_U^2$ where $B_U$ bounds both the leader's reward and the value function class $\cU$.
We first calculate the expectation of $Z_h^i$, 
\begin{align*}
    \EE[{Z_h^i}] 
    &= \EE \bigg[\EE_{s_h^i, a_h^i, b_h^i} \Big[\rbr{U_h(s_h^i, a_h^i, b_h^i) - u_h^i -  T_{h+1}^{\pi,\theta} U_{h+1}(s_{h+1}^i)}^2  \nend
    &\qquad - \rbr{\TT_h^{\pi, \theta} U_{h+1}(s_h^i, a_h^i, b_h^i) - u_h^i - T_{h+1}^{\pi,\theta} U_{h+1}(s_{h+1}^i)}^2\Big]\bigg]\nend
    & = \EE\bigg[\Bigrbr{U_h(s_h^i, a_h^i, b_h^i)- \TT_h^{\pi, \theta} U_{h+1}(s_h^i, a_h^i, b_h^i)}\nend
    &\qquad \cdot \EE_{s_h^i, a_h^i, b_h^i}\sbr{U_h(s_h^i, a_h^i, b_h^i) + \TT_h^{\pi, \theta} U_{h+1}(s_h^i, a_h^i, b_h^i)- 2u_h^i - 2T_{h+1}^{\pi,\theta} U_{h+1}(s_{h+1}^i) }\bigg]\nend
    & = \EE\sbr{\Bigrbr{U_h(s_h^i, a_h^i, b_h^i)- \TT_h^{\pi, \theta} U_{h+1}(s_h^i, a_h^i, b_h^i)}^2}, 
\end{align*}
where $\EE_{x}[\cdot]$ is a short hand of $\EE[\cdot\given x]$ and the expectation is taken with respect to the data generating distribution. Here, the second equality holds by the law of total expectation, and the third equality holds by noting that $\EE_{s_h^i, a_h^i, b_h^i}\osbr{u_h^i  + T_h^{\pi,\theta} U_{h+1}(s_{h+1}^i)} = \TT_h^{\pi, \theta} U_{h+1} (s_h^i, a_h^i, b_h^i)$.
Next, we calculate the variance, 
\begin{align*}
    \Var[{Z_h^i}] &\le \EE[{Z_h^i}^2] \nend
    &\le \EE\bigg[\Bigrbr{U_h(s_h^i, a_h^i, b_h^i)- \TT_h^{\pi, \theta} U_{h+1}(s_h^i, a_h^i, b_h^i)}^2\nend
    &\qquad \cdot \EE_{s_h^i, a_h^i, b_h^i}\sbr{\rbr{U_h(s_h^i, a_h^i, b_h^i) + \TT_h^{\pi, \theta} U_{h+1}(s_h^i, a_h^i, b_h^i)- 2u_h^i - 2T_{h+1}^{\pi,\theta} U_{h+1}(s_{h+1}^i)}^2 } \bigg]\nend
    &\le 49 B_U^2 \EE\sbr{\Bigrbr{U_h(s_h^i, a_h^i, b_h^i)- \TT_h^{\pi, \theta} U_{h+1}(s_h^i, a_h^i, b_h^i)}^2} = 49 B_U^2 \EE[{Z_h^i}].
\end{align*}
Now, by \Cref{lem:bernstein}, we have for each $y\in\cY_\epsilon$ that
\begin{align*}
    \abr{\frac 1 T \sum_{i=1}^T {Z_h^i} - \EE\sbr{{Z_h^i}}} &\le \frac{1}{2T}\sum_{i=1}^T \EE [{Z_h^i}] + \frac{110 B_U^2\cdot\log(2\delta^{-1})}{T}.
\end{align*}
Now, we extend the result to $\cY$, where we notice that for any two $y, \tilde y\in\cY$ such that $\rho(y, \tilde y) \le \epsilon$, 
\begin{align*}
    &\bigrbr{U_h(s_h^i, a_h^i, b_h^i) - u_h^i -  T_{h+1}^{\pi,\theta} U_{h+1}(s_{h+1}^i)}^2  - \bigrbr{\TT_h^{\pi, \theta} U_{h+1}(s_h^i, a_h^i, b_h^i) - u_h^i - T_{h+1}^{\pi,\theta} U_{h+1}(s_{h+1}^i)}^2 \nend
    &\qquad - \rbr{\bigrbr{\tilde U_h(s_h^i, a_h^i, b_h^i) - u_h^i -  T_{h+1}^{\tilde\pi,\tilde\theta} \tilde U_{h+1}(s_{h+1}^i)}^2  - \bigrbr{\TT_h^{\tilde\pi, \tilde\theta} \tilde U_{h+1}(s_h^i, a_h^i, b_h^i) - u_h^i - T_{h+1}^{\tilde\pi,\tilde\theta} \tilde U_{h+1}(s_{h+1}^i)}^2}\nend
    &\quad \le 6 B_U \rbr{\onbr{U-\tilde U}_\infty + \bignbr{(T_{h+1}^{\pi,\theta}-T_{h+1}^{\tilde\pi,\tilde\theta} )\tilde U_{h+1}}_\infty + \bignbr{T_{h+1}^{\pi,\theta}(U_{h+1} - \tilde U_{h+1})}_\infty}\nend
    &\qquad + 6 B_U \cdot 2\rbr{\bignbr{(T_{h+1}^{\pi,\theta}-T_{h+1}^{\tilde\pi,\tilde\theta} )\tilde U_{h+1}}_\infty + \bignbr{T_{h+1}^{\pi,\theta}(U_{h+1} - \tilde U_{h+1})}}\nend
    &\quad\le 6 B_U(2\epsilon + B_U \epsilon ) + 12 B_U(B_U\epsilon + \epsilon)\nend
    &\quad\le (18 B_U^2 + 24 B_U)\epsilon, 
\end{align*}
where the second inequality follows from the definition of the covering net with respect to distance $\rho$ defined in \eqref{eq:rho-cY}. 
We obtain with probability at least $1-\delta$ that for any $y\in\cY_h$, $h\in[H]$, 
\begin{align*}
    \abr{\frac 1 T \sum_{i=1}^T {Z_h^i} - \EE\sbr{{Z_h^i}}} &\le \inf_{\epsilon>0} \frac{1}{2T}\sum_{i=1}^T \EE [{Z_h^i}] + \frac{110 B_U^2\cdot\log(H \cN_\rho(\cY_h, \epsilon)\delta^{-1})}{T} +2.5\cdot (18B_U^2 + 24 B_U)\epsilon\nend
    &\le \frac{1}{2T}\sum_{i=1}^T \EE [{Z_h^i}] + {110 B_U^2\cdot\log( H \cN_\rho(\cY_h, T^{-1})\delta^{-1}) } \cdot {T^{-1}} + (45 B_U^2 + 60 B_U )T^{-1}\nend
    & = \frac 1 T \rbr{\epsilon_S + \frac{T}{2} \EE[(U_{h} - \TT_{h}^{\pi,\theta}U_{h + 1})^2]}, 
\end{align*}
where $\epsilon_S = {110 B_U^2\cdot\log(H \max_{h\in[H]}\cN_\rho(\cY_h, T^{-1})\delta^{-1}) }  + (45 B_U^2 + 60 B_U )$, 
which proves our claim in \eqref{eq:cY-confset-1}.

\paragraph{Guarantee of the Confidence Set $\CI_{\cU}^{\pi,\theta}(\beta)$.}
We give a brief proof for the validity and the accuracy of the confidence set. For any $U_{h+1}\in\cU, \theta\in\Theta, \pi\in\Pi, h\in[H]$, on the one hand,
\begin{align}
    \ell_h(U_h, U_{h+1},\theta, \pi) - \inf_{U_h'\in\cU} \ell_h(U_h', U_{h+1}, \theta,\pi) \le \epsilon_S - \frac{T}{2} \EE[\orbr{U_{h} - \TT_{h}^{\pi,\theta}U_{h + 1}}^2], \label{eq:Bellman-loss-guarantee-1}
\end{align}
on the other hand,
\begin{align}
    \ell_h(U_h, U_{h+1},\theta, \pi) - \inf_{U_h'\in\cU} \ell_h(U_h', U_{h+1}, \theta,\pi) 
    &\ge \ell_h(U_h, U_{h+1},\theta, \pi) - \ell_h(\TT_h^{\pi, \theta} U_{h+1},  U_{h+1}, \theta,\pi) \nend
    &\ge - \epsilon_S  + \frac{T}{2} \EE[\orbr{U_{h} - \TT_{h}^{\pi,\theta}U_{h + 1}}^2], \label{eq:Bellman-loss-guarantee-2}
\end{align}
where the first inequality holds by the completeness assumption. 
For \eqref{eq:Bellman-loss-guarantee-1}, we plug in $U=U^{\pi,\theta}$ (realizability) and obtain 
$$\ell_h(U^{\pi,\theta}_h, U^{\pi,\theta}_{h+1},\theta, \pi) - \inf_{U_h'\in\cU} \ell_h(U_h', U^{\pi,\theta}_{h+1}, \theta,\pi) \le \epsilon_S.$$
Therefore, by having $\beta \ge \epsilon_S$, we have $U^{\pi,\theta}\in\CI_\cU^{\pi,\theta}(\beta)$. For the second one in \eqref{eq:Bellman-loss-guarantee-2}, we plug in any $U\in\CI_{\cU}^{\pi,\theta}(\beta)$ and obtain $\EE[\|U_{h} - \TT_{h}^{\pi,\theta}U_{h + 1}\|^2]\le T^{-1}\cdot (2\beta +2\epsilon_S)\le 4\beta T^{-1}$. Hence, we complete our proof of \Cref{lem:leader-bellman-loss}.

\subsection{Proof of \Cref{lem:CI-U-online}}\label{sec:proof-CI-U-online}
Our proof follows a similar scheme as in the proof of \Cref{lem:leader-bellman-loss}.

\paragraph{Concentration.}
We first take
\begin{align*}
    {Z_h^i} &= \ell_{h}(U'_{h}, U_{h + 1}, \theta_{h+1}, \pi) - \ell_{h}(\TT_{h}^{*,\theta}U_{h + 1}, U_{h + 1}, \theta_{h+1}, \pi)\nend
    & = \rbr{U_h(s_h^i, a_h^i, b_h^i) - u_h^i -  T_{h+1}^{*, \theta} U_{h+1}(s_{h+1}^i)}^2  - \rbr{\TT_h^{*, \theta} U_{h+1}(s_h^i, a_h^i, b_h^i) - u_h^i - T_{h+1}^{*, \theta} U_{h+1}(s_{h+1}^i)}^2.
\end{align*}
For the online setting, we have $(s_h^i, a_h^i, b_h^i, s_{h+1}^i)$ adapted to some filtration $\cF_h^i$. One choice of the filtration is $\cF_h^j=\sigma\rbr{\tau^{1:j-1}}$. Another choice of the filtration is $\cF_h^j=\sigma\rbr{(s_h^j, a_h^j, b_h^j), (s_h^i, a_h^i, b_h^i, s_{h+1}^i)_{i\in[j-1]}}$, where $\sigma(X)$ is the sigma-algebra of $X$.
They will both work for our proof.
We let $\EE^{i}_{x}[\cdot]$ be a short hand of $\EE^{i}[\cdot\given x, \cF_h^i]$ in the following proof.
We first calculate the expectation, 
\begin{align*}
    \EE^{i}[{Z_h^i}] 
    &= \EE^{i} \bigg[\EE^{i}_{s_h^i, a_h^i, b_h^i} \Big[\rbr{U_h(s_h^i, a_h^i, b_h^i) - u_h^i -  T_{h+1}^{*, \theta} U_{h+1}(s_{h+1}^i)}^2  \nend
    &\qquad - \rbr{\TT_h^{*, \theta} U_{h+1}(s_h^i, a_h^i, b_h^i) - u_h^i - T_{h+1}^{*, \theta} U_{h+1}(s_{h+1}^i)}^2\Big]\bigg]\nend
    & = \EE^{i}\bigg[\Bigrbr{U_h(s_h^i, a_h^i, b_h^i)- \TT_h^{*, \theta} U_{h+1}(s_h^i, a_h^i, b_h^i)}\nend
    &\qquad \cdot \EE^{i}_{s_h^i, a_h^i, b_h^i}\sbr{U_h(s_h^i, a_h^i, b_h^i) + \TT_h^{*, \theta} U_{h+1}(s_h^i, a_h^i, b_h^i)- 2u_h^i - 2T_{h+1}^{*, \theta} U_{h+1}(s_{h+1}^i) }\bigg]\nend
    & = \EE^{i}\sbr{\Bigrbr{U_h(s_h^i, a_h^i, b_h^i)- \TT_h^{*, \theta} U_{h+1}(s_h^i, a_h^i, b_h^i)}^2}, 
\end{align*}
where the second equality holds by the law of total expectation, and the third equality holds by noting that $\EE^{i}_{s_h^i, a_h^i, b_h^i}\osbr{u_h^i  + T_{h+1}^{*, \theta} U_{h+1}(s_{h+1}^i)} = \TT_h^{*, \theta} U_{h+1} (s_h^i, a_h^i, b_h^i)$.
Next, we calculate the variance, 
\begin{align*}
    \Var^{i}[{Z_h^i}^2] &\le \EE^{i}[{Z_h^i}^2]\nend
    &\le \EE^{i}\bigg[\Bigrbr{U_h(s_h^i, a_h^i, b_h^i)- \TT_h^{*, \theta} U_{h+1}(s_h^i, a_h^i, b_h^i)}^2\nend
    &\qquad \cdot \EE^{i}_{s_h^i, a_h^i, b_h^i}\sbr{\rbr{U_h(s_h^i, a_h^i, b_h^i) + \TT_h^{*, \theta} U_{h+1}(s_h^i, a_h^i, b_h^i)- 2u_h^i - 2T_{h+1}^{*, \theta} U_{h+1}(s_{h+1}^i)}^2 } \bigg]\nend
    &\le 49 B_U^2 \EE^{i}\sbr{\Bigrbr{U_h(s_h^i, a_h^i, b_h^i)- \TT_h^{*, \theta} U_{h+1}(s_h^i, a_h^i, b_h^i)}^2} = 49 B_U^2 \EE^{i}[{Z_h^i}].
\end{align*}
Also, one can verify that $|{Z_h^i}|\le 9B_U^2$ where $B_U$ bounds both the leader's reward and the value function class $\cU$.
We next take a $\epsilon$-covering of the class $\cZ_h=\cU^2\times\Theta_{h+1}$ with respect to the following distance defined in \eqref{eq:rho-cZ},
\begin{align*}
    &\rho\orbr{z, \tilde z}  = \max_{h\in[H]}\cbr{\bignbr{U_h-\tilde U_h}_\infty, \bignbr{ T_{h+1}^{*,\theta} U_{h+1} (\cdot) -  T_{h+1}^{*,\tilde\theta} \tilde U_{h+1} (\cdot)}_\infty }, 
\end{align*} 
We invoke the Freedman inequality \Cref{lem:freedman} for the martingale sequence $Z_h^i -\EE^i[Z_h^i]$, which says that for all $t\in [T], h\in[H], z\in\cZ_\epsilon$, it holds with probability at least $1-\delta$
\begin{align*}
    \sum_{i=1}^t \rbr{Z_h^i - \EE^i[Z_h^i]}
    &\le \frac{\lambda(e-2)}{9 B_U^2} \sum_{i=1}^t \EE^i\sbr{\rbr{Z_h^i - \EE^i[Z_h^i]}^2} + 9 \lambda^{-1} B_U^2\log(TH\cN_\rho(\cZ_h, \epsilon)\delta^{-1})\nend
    &\le \frac{\lambda(e-2) 49 }{9 } \sum_{i=1}^t \EE^i\sbr{Z_h^i} + 9 \lambda^{-1} B_U^2\log(TH\cN_\rho(\cZ_h, \epsilon)\delta^{-1}), \quad \forall \lambda\in(0, 1).
\end{align*}
Here, we plug in $\lambda = 9/98(e-2)$, $\epsilon=T^{-1}$ and notice that the above inequality also holds for $-Z_h^i + \EE^i[Z_h^i]$, which gives us for all $z\in\cZ_\epsilon, h\in[H], t\in[T]$ that
\begin{align}\label{eq:leader-Bellman-1}
    \abr{\sum_{i=1}^t \rbr{Z_h^i - \EE^i[Z_h^i]}} \le \frac 1 2 \sum_{i=1}^t \EE^i\sbr{Z_h^i} + c B_U^2\log(TH\cN_\rho(\cZ_h, T^{-1})\delta^{-1}), 
\end{align}
where we plug in $\epsilon=T^{-1}$ and $c=98(e-2)$ should be a universal constant. 
Next, we notice that
\begin{align*}
    &\bigrbr{U_h(s_h^i, a_h^i, b_h^i) - u_h^i -  T_{h+1}^{*,\theta} U_{h+1}(s_{h+1}^i)}^2  - \bigrbr{\TT_h^{*, \theta} U_{h+1}(s_h^i, a_h^i, b_h^i) - u_h^i - T_{h+1}^{*,\theta} U_{h+1}(s_{h+1}^i)}^2 \nend
    &\qquad - \rbr{\bigrbr{\tilde U_h(s_h^i, a_h^i, b_h^i) - u_h^i -  T_{h+1}^{*,\tilde\theta} \tilde U_{h+1}(s_{h+1}^i)}^2  - \bigrbr{\TT_h^{*, \tilde\theta} \tilde U_{h+1}(s_h^i, a_h^i, b_h^i) - u_h^i - T_{h+1}^{*,\tilde\theta} \tilde U_{h+1}(s_{h+1}^i)}^2}\nend
    &\quad \le 6 B_U \rbr{\onbr{U-\tilde U}_\infty + \bignbr{(T_{h+1}^{*,\theta}-T_{h+1}^{*,\tilde\theta} )\tilde U_{h+1}}_\infty + \bignbr{T_{h+1}^{*,\theta}(U_{h+1} - \tilde U_{h+1})}_\infty}\nend
    &\qquad + 6 B_U \cdot 2\rbr{\bignbr{(T_{h+1}^{*,\theta}-T_{h+1}^{*,\tilde\theta} )\tilde U_{h+1}}_\infty + \bignbr{T_{h+1}^{*,\theta}(U_{h+1} - \tilde U_{h+1})}}\nend
    &\quad\le 6 B_U(2\epsilon + B_U \epsilon ) + 12 B_U(B_U\epsilon + \epsilon) \le (18 B_U^2 + 24 B_U)\epsilon, 
\end{align*}
Now, we let $\epsilon_S=c B_U^2 \allowbreak \log(TH\cN_\rho(\cZ_h, T^{-1})\delta^{-1}) + (45 B_U^2 + 60 B_u)$ and extend the result in \eqref{eq:leader-Bellman-1} to the whole class $\cY$, 
\begin{align*}
    \abr{\sum_{i=1}^t \rbr{Z_h^i - \EE^i[Z_h^i]}} 
    &\le \frac 1 2 \sum_{i=1}^t \EE^i\sbr{Z_h^i} + c B_U^2\log(TH\cN_\rho(\cZ, T^{-1})\delta^{-1}) + 2.5\cdot(18 B_U^2 + 24 B_U)\nend
    &\le \frac 1 2 \sum_{i=1}^t \EE^i\sbr{Z_h^i} + \epsilon_S.
\end{align*}
the following argument follows exactly the same as \eqref{eq:Bellman-loss-guarantee-1} and \eqref{eq:Bellman-loss-guarantee-2}, 
where we use the realizability assumption that $U^{*,\theta}\in \cU$ and the completeness assumption that there exists $U'\in\cU$ such that $U'=\TT_h^{*, \theta} U$ for any $U\in\cU, \theta\in\Theta$. 
We finish our proof of \Cref{lem:CI-U-online}.

\subsection{Proof of \Cref{lem:1st-ub}}
\label{sec:proof-1st-ub}
Recall by definition, 
\begin{align*}
    \tilde \Delta^{(1)}_h(s_h, b_h) &=  \rbr{\EE_{s_h, b_h} -\EE_{s_h}}\Biggsbr{\sum_{l=h}^H \gamma^{l-h}\underbrace{\rbr{\tilde Q_l - r_l^\pi - \gamma P_l^\pi \tilde V_{l+1}}(s_l, b_l)}_{\ds\text{Follower's Bellman error}}}.
\end{align*}
In this section, we will bound $\EE\sbr{\abr{\tilde \Delta^{(1)}_h(s_h, b_h)}}$ by the KL distance in the following way.
We follow from the decomposition of the A-function in \Cref{lem:AQV-func diff},
\begin{align*}
    \abr{\tilde\Delta_h^{(1)}(s_h, b_h)} 
    &\le \bigabr{\bigrbr{A_h-\tilde A_h}(s_h, b_h)} + 2\eta^{-1} \Delta_h^{(2)}(s_h) \nend
    &\le \bigabr{\bigrbr{A_h-\tilde A_h}(s_h, b_h)} + 2\EE_{s_h}\sbr{\sum_{l=h}^H \gamma^{l-h} \inp[]{\nu_l(\cdot\given s_l)}{(A_l-\tilde A_l)(s_l, b_l)}}, 
\end{align*}
where we use the definition $\Delta_h^{(2)}(s_h)\defeq \EE_{s_h}\sbr{\sum_{l=h}^H \gamma^{l-h} \kl\infdivx[]{\nu_l}{\tilde\nu_l}}$ and the last inequality holds by noting that $\kl\infdivx[]{\nu}{\tilde\nu} = \eta\inp[]{\nu}{A-\tilde A}$.
Therefore, we conclude that
\begin{align}
    \EE\sbr{\abr{\tilde \Delta^{(1)}_h(s_h, b_h)}} 
    &\le  \EE\bigabr{\bigrbr{A_h-\tilde A_h}(s_h, b_h)} + 2\EE\sbr{\sum_{l=h}^H \gamma^{l-h} \inp[]{\nu_l(\cdot\given s_l)}{(A_l-\tilde A_l)(s_l, \cdot)}}
    \nend
    &
    \le 3 \sum_{l=h}^H \gamma^{l-h} \EE\bigabr{(A_l-\tilde A_l)(s_l, b_l)} \label{eq:Delta-A}
\end{align}
We now invoke the lower bound \eqref{eq:nu-tv-lb-1} in \Cref{lem:response diff} and obtain
\begin{align*}
    D_\TV(\nu_h, \tnu_h) &\ge \frac{1-\exp\rbr{-2\eta B_A}}{4 B_A} \cdot {\EE_{s_h}\bigabr{(\tA_h-A_h)(s_h, b_h)} } \nend
    &\ge  \frac{\eta}{2(1+ 2\eta B_A)} \cdot {\EE_{s_h}\bigabr{(\tA_h-A_h)(s_h, b_h)} }.
\end{align*}
Combining these results, we obtain
\begin{align*}
    \EE\sbr{\abr{\tilde \Delta^{(1)}_h(s_h, b_h)}} &\le 3\sum_{l=h}^H \gamma^{l-h} \EE\bigabr{(\tA_l-A_l)(s_l,b_l)} \nend
    &\le 3 \cdot \rbr{\frac{\eta}{2(1+ 2\eta B_A)}}^{-1} \sum_{l=h}^H \gamma^{l-h}\EE D_\TV(\nu_l(\cdot, s_l),\tilde\nu_l(\cdot, s_l)) \nend
    &\le 6(1+2\eta B_A)\cdot  {\frac{1-\gamma^H}{1-\gamma}}  \cdot \eta^{-1} \max_{h\in[H]} \EE D_\TV(\nu_h(\cdot, s_h),\tilde\nu_h(\cdot, s_h)),
\end{align*}
where the last inequality follows from from the fact that $(1-\exp(-x))/2x\ge 1/2(1+x)$.
Hence, we complete the proof of the first order of $\tilde \Delta^{(1)}$ in \Cref{lem:1st-ub}.

In the sequel, we will study how to upper bound $\bigrbr{\tilde \Delta_h^{(1)}(s_h, b_h)}^2$. We first have by \Cref{lem:AQV-func diff} that
\begin{align*}
    \bigrbr{\tilde \Delta_h^{(1)}(s_h, b_h)}^2 
    &= 2 \rbr{\rbr{\EE_{s_h, b_h}-\EE_{s_h}} \bigsbr{\orbr{A_h - \tilde A_h}(s_h, b_h)}}^2 + 2 \gamma^2 \eta^{-2} \rbr{\rbr{\EE_{s_h, b_h}-\EE_{s_h}}\bigsbr{\Delta_{h+1}^{(2)}(s_{h+1})}}^2\nend
    & \le 2 \rbr{\rbr{\EE_{s_h, b_h}-\EE_{s_h}} \bigsbr{\orbr{Q_h - \tilde Q_h}(s_h, b_h)}}^2 + 4 \gamma^2 \eta^{-2} \rbr{\EE_{s_h, b_h}\bigsbr{\Delta_{h+1}^{(2)}(s_{h+1})}}^2  \nend
    &\qquad + 4 \gamma^2 \eta^{-2} \rbr{\EE_{s_h}\bigsbr{\Delta_{h+1}^{(2)}(s_{h+1})}}^2,
\end{align*}
where the last inequality holds by using \eqref{eq:A diff-1} and note that $\eta^{-1}\kl\infdivx[]{\nu_h}{\tilde\nu_h} = \EE_{s_h}[A_h-\tilde A_h]$. By definition of $\Delta_h^{(2)}(s_h)$, we just focus on the second term and obtain 
\begin{align*}
    \rbr{\eta^{-1} \EE_{s_h, b_h}\bigsbr{\Delta_{h+1}^{(2)}(s_{h+1})} }^2
    &= \rbr{\eta^{-1}\EE_{s_h, b_h}\sbr{\sum_{l=h+1}^H \gamma^{l-h-1} \kl\infdivx[]{\nu_l(\cdot\given s_l)}{\tilde\nu_l(\cdot\given s_l)}}}^2\nend
    & = \rbr{\EE_{s_h, b_h}\sbr{\sum_{l=h+1}^H \gamma^{l-h-1} \inp[\big]{\nu_l(\cdot\given s_l)}{(A_l-\tilde A_l)(s_l, \cdot)}_\cB}}^2\nend
    &\le \eff_H(\gamma) \sum_{l=h+1}^H \gamma^{l-h-1}\rbr{\EE_{s_h, b_h}\sbr{\inp[\big]{\nu_l(\cdot\given s_l)}{\bigabr{(A_l-\tilde A_l)(s_l, \cdot)}}_\cB}}^2,
\end{align*}
where the last inequality follows from the Cauchy-Schwartz inequaltiy and we recall $\eff_H(\gamma) = (1-\gamma^H)/(1-\gamma)$.
We now invoke the lower bound \eqref{eq:nu-tv-lb-1} in \Cref{lem:response diff} and obtain
\begin{align*}
    D_\TV(\nu_h, \tnu_h) \ge \frac{\eta}{2(1+2\eta B_A)} \cdot \inp[\big]{\nu_h(\cdot\given s_h)}{\bigabr{
    \orbr{\tilde A - A}(s_h,\cdot)}}.
\end{align*}
Combining these results, we obtain
\begin{align*}
    &\rbr{\eta^{-1} \EE_{s_h, b_h}\bigsbr{\Delta_{h+1}^{(2)}(s_{h+1})} }^2 \nend
    &\quad \le 4\rbr{\eta^{-1} +2 B_A}^2\eff_H(\gamma) \sum_{l=h+1}^H \gamma^{l-h-1} {\EE_{s_h, b_h}\sbr{D_\H^2(\nu_l(\cdot\given s_l), \tilde\nu_l(\cdot\given s_l))}}, 
\end{align*}
where the inequality holds by using the Jensen's  inequality and move the expectation outside of the square. As a result, 
\begin{align*}
    &\bigrbr{\tilde \Delta_h^{(1)}(s_h, b_h)}^2  \nend
    &\quad \le 2 \rbr{\rbr{\EE_{s_h, b_h}-\EE_{s_h}} \bigsbr{\orbr{Q_h - \tilde Q_h}(s_h, b_h)}}^2 \nend
    &\qqquad + 16 \gamma^2  \rbr{\eta^{-1} +2 B_A}^2\eff_H(\gamma) \sum_{l=h+1}^H \gamma^{l-h-1} {\rbr{\EE_{s_h}+\EE_{s_h, b_h}}\sbr{D_\H^2(\nu_l(\cdot\given s_l), \tilde\nu_l(\cdot\given s_l))}}, 
\end{align*}
which completes our proof of \Cref{lem:1st-ub}.

\subsection{Proof of \Cref{lem:2nd-ub}}
\label{sec:proof-2nd-ub}
In this proof, we remind readers that $Q, A, r^\pi$ are functions from $\cS\times\cB$ to $\RR$, $V:\cS\times\RR$ and $P_h^\pi:\cS\times\cB\rightarrow\Delta(\cS)$. In the sequel, we will neglect the dependence on $s_h, b_h$ for simplicity.
The major part in this proof is to upper bound $\EE\osbr{\orbr{( P_h^\pi-\tilde P_h^{\pi})\tilde V_{h+1}}^2}$ and $\EE\osbr{\orbr{r_h^\pi-\tilde r_h^\pi}^2}$ by $D_{\RL, h}^2$ separately. 
Moreover, we use $B_A$ in \eqref{eq:define_BA} to bound the follower's Q- and A-function. 
We will leave out the dependence on $(s_h, b_h)$ most of the times in the following proof when it does not cause any confusion in the context.

Note that we only have guarantee for $D_\TV^2(\nu_h, \tilde\nu_h)$ by MLE, which cannot directly guarantee that the true utility is identifiable since a constant shift does not change the follower's behavior at all. For the reward to be identifiable, we need an additional linear constraint, namely $\inp{x}{r_h(s_h, a_h, \cdot)}=\varsigma$.
We start with the easier part with the transition kernel.
\begin{align*}
    \EE\osbr{\orbr{( P_h^\pi-\tilde P_h^{\pi})\tilde V_{h+1}}^2}\le 2^2 B_A^2 \EE\sbr{D_\TV^2( P_h^\pi, \tilde  P_h^\pi)}.
\end{align*}
For the follower's reward, we take a real number $\xi$ and have the following decomposition
\begin{align*}
    &\inf_{\xi\in\RR}\EE_{s_h}\abr{r_h^\pi-\tilde r_h^\pi - \xi}  \nend
    &\quad = \inf_{\xi\in\RR}\EE_{s_h}\bigabr{Q_h-\tilde Q_h - \xi - \gamma\bigrbr{ P_h^\pi-\tilde P_h^\pi}\tilde V_{h+1} - \gamma  P_h^\pi\bigrbr{V_{h+1}-\tilde V_{h+1}}}\nend
    &\quad \le \inf_{\xi\in\RR}\EE_{s_h}\bigabr{Q_h-\tilde Q_h - \xi} + \gamma \EE_{s_h}\bigabr{\bigrbr{ P_h^\pi-\tilde P_h^\pi}\tilde V_{h+1} } + \gamma\exp\rbr{2\eta B_A}\EE_{s_h}\bigabr{Q_{h+1} - \tilde Q_{h+1}}, \nend
    &\quad \le \EE_{s_h}\bigabr{A_h-\tilde A_h} + \gamma\EE_{s_h}\bigabr{\bigrbr{ P_h^\pi-\tilde P_h^\pi}\tilde V_{h+1} } + \gamma\exp\rbr{2\eta B_A}\EE_{s_h}\bigabr{Q_{h+1} - \tilde Q_{h+1}}
\end{align*}
where the first inequality holds by the same argument for $V-\tilde V$ in \eqref{eq:f_2-1}, and the second inequality holds simply by plugging $\xi = V_h(s_h) - \tilde V_h(s_h)$. Now, we can plug in the bound for $\EE_{s_h}\oabr{A_h-\tilde A_h}$ in \Cref{lem:response diff} and obtain
\begin{align}
    &\inf_{\xi\in\RR}\EE_{s_h}\abr{r_h^\pi-\tilde r_h^\pi - \xi} \nend
    &\quad \le \underbrace{2(\eta^{-1}+2B_A) D_\TV(\nu_h, \tilde\nu_h) + 2 \gamma B_A \EE_{s_h} D_\TV( P_h^\pi, \tilde P_h^\pi)}_{\ds \sD_h} + \gamma\exp\rbr{2\eta B_A}\EE_{s_h}\bigabr{Q_{h+1} - \tilde Q_{h+1}}.
    \label{eq:f_2-r-diff}
\end{align}
We next show what we can say about the utility when combining the guarantee of \eqref{eq:f_2-r-diff} with the linear constraint $\inp{x}{r_h(s_h, a_h, \cdot)}=\varsigma$. Specifically, we have the following lemma.
\begin{lemma}[Identification of the follower's utility]\label{lem:identification}
    Suppose for $r, \tilde r:\cB\rightarrow \RR$, for some distribution $\nu\in\Delta(\cB)$ such that $\nu>0$, we have $\inf_{\xi\in\RR}\inp{\nu}{\abr{r-\tilde r-\xi}}\le \varepsilon$ and $\inp{x}{r-\tilde r}=0$ hold at the same time for some $x:\cB\rightarrow \RR$ such that $\inp{\ind}{x}\neq 0$. We have
    \begin{align*}
        \inp{\nu}{\abr{r-\tilde r}}\le\rbr{1 + \nbr{\frac x \nu}_\infty \cdot \frac{1}{\abr{\inp{x}{\ind}}}} \epsilon
    \end{align*}
    \begin{proof}
        See \Cref{sec:proof-identification} for a detailed proof.
    \end{proof}
\end{lemma}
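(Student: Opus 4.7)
The plan is to relate the unrestricted weighted $\ell_1$-error $\inp{\nu}{\abr{r-\tilde r}}$ to the centered one $\inf_\xi \inp{\nu}{\abr{r-\tilde r - \xi}}\le \epsilon$ by using the linear constraint $\inp{x}{r-\tilde r}=0$ to pin down the free additive constant. Intuitively, the only way the centered error can be small while the uncentered error is large is if $r-\tilde r$ is close to a nonzero constant $\xi$, but the linear constraint $\inp{x}{r-\tilde r}=0$ (with $\inp{\ind}{x}\neq 0$) rules out that possibility, and it does so quantitatively.

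Concretely, I would pick $\xi^\ast\in\RR$ attaining (or $\epsilon$-attaining) the infimum, so that $\inp{\nu}{\abr{r-\tilde r - \xi^\ast}}\le\epsilon$, and then compute $\inp{x}{r-\tilde r - \xi^\ast}$ in two ways. On the one hand, by linearity and the constraint,
\begin{align*}
\inp{x}{r-\tilde r - \xi^\ast} \;=\; \inp{x}{r-\tilde r} - \xi^\ast\inp{x}{\ind} \;=\; -\xi^\ast \inp{x}{\ind}.
\end{align*}
On the other hand, a standard importance-weighting trick reweights $x$ by $\nu$ and applies H\"older's inequality:
\begin{align*}
\bigabr{\inp{x}{r-\tilde r - \xi^\ast}}
\;=\; \bigabr{\inp[\Big]{\tfrac{x}{\nu}}{\nu\cdot(r-\tilde r - \xi^\ast)}}
\;\le\; \Bignbr{\tfrac{x}{\nu}}_\infty \cdot \inp{\nu}{\abr{r-\tilde r - \xi^\ast}}
\;\le\; \Bignbr{\tfrac{x}{\nu}}_\infty \cdot \epsilon.
\end{align*}
Combining the two displays yields $\abr{\xi^\ast}\le \nbr{x/\nu}_\infty\cdot\epsilon/\abr{\inp{x}{\ind}}$.

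The final step is a single triangle inequality: since $\inp{\nu}{\abr{\xi^\ast}} = \abr{\xi^\ast}$ (as $\nu$ is a probability distribution),
\begin{align*}
\inp{\nu}{\abr{r-\tilde r}}
\;\le\; \inp{\nu}{\abr{r-\tilde r - \xi^\ast}} + \abr{\xi^\ast}
\;\le\; \epsilon + \Bignbr{\tfrac{x}{\nu}}_\infty \cdot \tfrac{1}{\abr{\inp{x}{\ind}}}\cdot \epsilon,
\end{align*}
which is precisely the claimed bound. There is no real obstacle here, as the proof is essentially a two-line algebraic manipulation. The only mild subtlety to flag is that the infimum over $\xi$ may not be attained, but this is handled by taking a near-minimizer $\xi^\ast$ with $\inp{\nu}{\abr{r-\tilde r - \xi^\ast}}\le \epsilon + \delta$ for arbitrary $\delta>0$ and letting $\delta\to 0$; the positivity assumption $\nu>0$ ensures $\nbr{x/\nu}_\infty$ is well-defined (possibly infinite, in which case the bound is vacuous), and $\inp{\ind}{x}\neq 0$ ensures the denominator in the bound is nonzero.
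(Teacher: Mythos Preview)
Your proof is correct and follows essentially the same approach as the paper: both use the linear constraint $\inp{x}{r-\tilde r}=0$ together with the importance-weighting/H\"older bound to control $|\xi^\ast|$, then conclude via the triangle inequality. Your presentation is slightly more direct (the paper introduces $r^\ast=r-\xi^\ast$ and reaches the same identity $\inp{x}{r^\ast-\tilde r}=-\xi^\ast\inp{x}{\ind}$ in one extra step), and you additionally address the attainment of the infimum, which the paper simply assumes.
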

With \Cref{lem:identification}, we conclude with $\nbr{\nu}_\infty\ge \exp\rbr{-\eta B_A}$ and $\kappa = \nbr{x}_\infty/|\la x, \ind\ra|$ that
\begin{align*}
    \EE_{s_h}\abr{r_h^\pi-\tilde r_h^\pi} &\le \rbr{1+\exp\rbr{2\eta B_A} \kappa} \bigrbr{\sD_h + \gamma \exp\rbr{2\eta B_A}\cdot\EE_{s_h}\bigabr{Q_{h+1} - \tilde Q_{h+1}}}.
\end{align*}
On the other hand, for the Q-function, we have by \eqref{eq:f_2-Q-ub} that 
\begin{align*}
    \EE_{s_h}\bigabr{Q_h - \tilde Q_h}
    &\le \EE_{s_h}\bigabr{r_h^\pi-\tilde r_h^\pi + \gamma \bigrbr{ P_h^\pi-\tilde P_h^\pi}\tilde V_{h+1}} + \gamma \exp\rbr{2\eta B_A} {\EE_{s_h}\bigabr{Q_{h+1}-\tilde Q_{h+1}}}\nend
    &\le \underbrace{2\rbr{1+\exp\rbr{2\eta B_A} \kappa} }_{\ds c_1}\cdot \sD_h \nend
    &\qquad + \underbrace{\rbr{2+\exp\rbr{2\eta B_A} \kappa}\gamma \exp\rbr{2\eta B_A}}_{\ds c_2}\cdot\EE_{s_h}\bigabr{Q_{h+1} - \tilde Q_{h+1}}, 
\end{align*}
where in the last inequality, we directly upper bound $\EE_{s_h}|\gamma(P_h^\pi - \tilde P_h^\pi)\tilde V_{h+1}|$ by $\sD_h$. 
Therefore, we have by a recursive argument that 
\begin{align*}
    \EE_{s_h}\bigabr{Q_h -\tilde Q_h} \le \sum_{l=h}^H c_2^{l-h} c_1 \EE_{s_h}\sD_l.
\end{align*}
For now, we are able to deal with $(\EE_{s_h}\abr{r_h^\pi-\tilde r_h^\pi})^2$. However, note that what we actually want to get is the version with the square within the expectation $\EE_{s_h}$, i.e.,  $\EE\rbr{r_h^\pi-\tilde r_h^\pi}^2$. Therefore, we need a variance-mean decomposition,
\begin{align*}
    \EE\rbr{r_h^\pi-\tilde r_h^\pi}^2&= \EE\bigrbr{Q_h-\tilde Q_h - \gamma \bigrbr{ P_h^\pi -\tilde P_h^\pi}\tilde V_{h+1} -\gamma  P_h^\pi \bigrbr{V_{h+1}-\tilde V_{h+1}}}^2\nend
    &\le 4\EE\bigsbr{\bigrbr{A_h -\tilde A_h}^2 + \bigrbr{V_h-\tilde V_h}^2 + \gamma^2 \bigrbr{\bigrbr{ P_h^\pi -\tilde P_h^\pi}\tilde V_{h+1}}^2 +\gamma^2 \bigrbr{V_{h+1}-\tilde V_{h+1}}^2}\nend
    &\le 4\EE\bigsbr{\bigrbr{A_h -\tilde A_h}^2} + 16\gamma^2 B_A^2\EE\bigsbr{D_\TV( P_h^\pi,\tilde  P_h^\pi)^2}\nend
    &\qquad + 4\exp\rbr{4\eta B_A}\EE\bigsbr{\bigrbr{\EE_{s_h}\bigabr{Q_h-\tilde Q_h}}^2+ \bigrbr{\EE_{s_{h+1}}\bigabr{Q_{h+1}-\tilde Q_{h+1}}}^2}, 
\end{align*}
where in the first inequality, we use $Q=A+V$, and use the Jensen's inequality to derive the last term.
The last inequality holds by noting the upper bound for difference in the V-function used in \Cref{eq:f_2-1}.
We notice that the first term can be upper bounded by the squared Hellinger distance,
\begin{align*}
    D_\H^2\rbr{\nu_h,\tilde\nu_h} &= \dotp{\nu_h}{\rbr{1-\sqrt\frac{\tilde \nu_h}{\nu_h}}^2}_\cB\nend
    &= \Bigdotp{\nu_h}{\rbr{1-\exp\rbr{\frac \eta 2 (\tilde A_h - A_h)}}^2}_\cB\nend
    &\ge \rbr{\frac{1-\exp\rbr{-\eta B_A}}{2 B_A}}^2  \cdot \bigdotp{\nu_h}{\bigrbr{A_h-\tilde A_h}^2}_\cB \ge \rbr{\frac{\eta}{2}}^2  \cdot \bigdotp{\nu_h}{\bigrbr{A_h-\tilde A_h}^2}_\cB,
\end{align*}
where the first inequality holds by noting that $|1-\exp(x)|\ge (1-\exp(-B))|x|/B$ for any $|x|\le B$.
The last inequality uses the inequality $(1-\exp(-x))\ge x/(1+x)$ for all $x>0$. 
Therefore, we have the follower's squared reward difference bounded by
\begin{align*}
    &\EE\rbr{r_h^\pi -\tilde r_h^\pi}^2 \nend
    &\quad \le 16\eta^{-2} \EE D_\H^2(\nu_h,\tilde\nu_h) + 16\gamma^2 B_A^2\EE D_\TV^2( P_h^\pi,\tilde  P_h^\pi)\nend
    &\qqquad + 4 \exp\rbr{4\eta B_A} \EE\sbr{\rbr{\sum_{l=h}^H c_2^{l-h} c_1 \EE_{s_h}\sD_l}^2 + \rbr{\sum_{l=h+1}^H c_2^{l-h} c_1 \EE_{s_{h+1}}\sD_l}^2}\nend
    &\quad \le 16\eta^{-2} \EE D_\H^2(\nu_h,\tilde\nu_h) + 16\gamma^2 B_A^2\EE D_\TV^2( P_h^\pi,\tilde  P_h^\pi)\nend
    &\qqquad + 8 H \eff_H(c_2)^2 c_1^2 \exp\rbr{4\eta B_A} \max_{h\in[H]}\EE\sbr{{\sD_h}^2}\nend
    &\quad \le 16\eta^{-2} \EE D_\H^2(\nu_h,\tilde\nu_h) + 16\gamma^2 B_A^2\EE D_\TV^2( P_h^\pi,\tilde  P_h^\pi)\nend
    &\qqquad + 8 H \eff_H(c_2)^2 c_1^2\exp\rbr{4\eta B_A} \max_{h\in[H]}\EE\sbr{\rbr{{2(\eta^{-1}+2B_A) D_\TV(\nu_h, \tilde\nu_h) + 2 \gamma B_A \EE_{s_h} D_\TV( P_h^\pi, \tilde P_h^\pi)}}^2}, 
\end{align*}
where in the second inequality, we uses the Cauchy-Schwartz inequality that $\EE(\sum a_l x_l)^2\le \sum a_l \cdot \EE\sum a_l x_l^2 \le (\sum a_l)^2 \cdot \max_l \EE b_l^2$ for constant sequence $a_l>0$. 
In summary, we have
\begin{align*}
    \EE\rbr{r_h^\pi -\tilde r_h^\pi}^2
    &\le {32 H^2 \eff_H(c_2)^2 c_1^2\exp\rbr{4\eta B_A} \rbr{4(\eta^{-1}+2B_A)^2+4\gamma^2 B_A^2 }}  \nend
    & \qquad \cdot \max_{h\in[H]}\cbr{\EE D_\H^2(\nu_h,\tilde\nu_h)+\EE D_\TV^2(P_h^\pi,\tilde P_h^\pi)}\nend
    &\le \underbrace{640 H^2 \eff_H(c_2)^2 c_1^2\exp\rbr{4\eta B_A} (\eta^{-1}+B_A)^2}_{\ds c_3/4}  \nend
    & \qquad \cdot \max_{h\in[H]}\cbr{\EE D_\H^2(\nu_h,\tilde\nu_h)+\EE D_\TV^2(P_h^\pi,\tilde P_h^\pi)}
\end{align*}
Therefore, we conclude that
\begin{align*}
    \max_{h\in[H]}\EE\sbr{ \bigrbr{{\tilde Q_h - r_h^\pi - \gamma P_h^\pi \tilde V_{h+1}}}^2} 
    &\le 2 \max_{h\in[H]} \EE\sbr{\rbr{\tilde r_h^\pi - r_h^\pi}^2} + 2 \gamma^2\max_{h\in[H]} \EE\sbr{\bigrbr{\bigrbr{\tilde P_h^\pi - P_h^\pi}\tilde V_{h+1}}^2}\nend
    &\le c_3 \max_{h\in[H]}\cbr{\EE D_\H^2(\nu_h,\tilde\nu_h)+\EE D_\TV^2(P_h^\pi,\tilde P_h^\pi)}, 
\end{align*}
which completes our proof of \Cref{lem:2nd-ub}

\subsubsection{Proof of \Cref{lem:identification}}\label{sec:proof-identification}
For condition $\inf_{\xi\in\RR}\inp{\nu}{\abr{r-\tilde r-\xi}}\le \varepsilon$, we assume that the infimum is achieved at $\xi^*$. Let $r^* = r -\xi^*$ and we have
\begin{align*}
\abr{\inp{r^*-\tilde r}{x}} \le \inp{\abr{r^*-\tilde r}}{\abr{x}} \le \inp{\abr{r^*-\tilde r}}{\nu} \cdot \nbr{\frac{x}{\nu}}_\infty\le \varepsilon\nbr{\frac{x}{\nu}}_\infty,
\end{align*}
where the second inequality is just a distribution shift and the last inequality is given by the condition. Furthermore, for our target,
\begin{align*}
    \inp{\abr{r-\tilde r}}{\nu} \le \inp{\abr{r^*-\tilde r}}{\nu} + \abr{\xi^*} = \varepsilon + \abr{\inp{r^*-r}{x}} \cdot \frac{1}{\abr{\inp{x}{\ind}}},
\end{align*}
where the inequality follows from the triangle inequality and the equality holds by noting that $\inp{x}{\ind}\neq 0$ and $\inp{\abr{r^*-\tilde r}}{\nu}\le \varepsilon$. We bridge these two inequalities by noting that
\begin{align*}
    \abr{\inp{r^*-\tilde r}{x}} = \abr{\inp{r-\tilde r}{x} + \inp{r^*-r}{x} } = \abr{\inp{r^*-r}{x} },
\end{align*}
where the second inequality holds by noting that $\inp{r-\tilde r}{x}=0$. Combining these results and we have
\begin{align*}
    \inp{\abr{r-\tilde r}}{\nu} \le \epsilon + \abr{\inp{r^*-r}{x}} \cdot \frac{1}{\abr{\inp{x}{\ind}}} \le \rbr{1 + \nbr{\frac x \nu}_\infty \cdot \frac{1}{\abr{\inp{x}{\ind}}}} \epsilon, 
\end{align*}
which completes the proof on \Cref{lem:identification}.

\section{Auxiliary Results and Their Proofs}

In this section, 
we present and prove 
the auxiliary lemmas that helps theoretical analysis. 

\subsection{Auxiliary Results for Learning Quantal Response Mapping}

In this subsection, we introduce some technical lemmas used for learning follower's quantal response mapping. 
To simplify the notation, similar to the statement of \Cref{lem:performance diff}, we fix a 
policy $\pi$, and let $\tilde Q$ be an estimate of $Q^{\pi}$. 
Moreover, for Lemmas \ref{lem:response diff}--\ref{lem:KL-ub},  we consider a fixed state $s$ at a fixed timestep $h\in [H]$, and with slight abuse of notation, omit it in the value functions. 
In particular, for $\tilde V$ and $\tilde A$ defined in \eqref{eq:tilde_functions}, in the sequel, we write 
$\tilde Q = \tilde Q_h (s, \cdot )$, $\tilde V = \tilde V_h(s)$, and $\tilde A = \tilde A_h(s, \cdot)$. 
Furthermore, we write  $Q = Q^{\pi}_h (s, \cdot )$, $V = V^{\pi}_h (s )$, and $A = A^{\pi}_h (s, \cdot )$. 
Finally, 
we  write $\nu^{\pi} _h(\cdot \given s)$ and $\tilde \nu_h (\cdot \given s)$   as $\nu \in \Delta(\cB)$ and $\tilde \nu \in \Delta(\cB)$ respectively.
Using this notation, 
$Q, \tilde Q,$,$A $, and $\tilde A$ can all be regarded as vectors in $\RR^{|\cB|}$, $ V, \tilde V \in \RR$, and we have 
\#\label{eq:define_nu_vecs}
\nu(b) = \exp \big (\eta \cdot A(b) \bigr ), \qquad \tilde \nu (b) = \exp\big (\eta \cdot \tilde A(b) \bigr ), \forall b\in \cB. 
\#
The following lemma establishes upper and lower bounds on the total variation (TV) distance between $\nu $ and $\tilde \nu$.

\begin{lemma}[TV-Distance Between Quantal Responses]\label{lem:response diff}
Let $\nu, \tilde \nu \in \Delta(\cB)$ be defined in \eqref{eq:define_nu_vecs}, where $\tilde A = \tilde Q - \tilde V$ and $A = Q - V$. 
The TV distance  between $\nu$ and $\tilde\nu$ enjoys the following upper bound:  
\begin{align*}
    D_\TV\orbr{\nu, \tilde \nu}\le \eta \cdot \inp[\Big]{\nu} {\bigabr{\tilde Q-Q-\xi \cdot \ind } + \frac \eta 2 \exp\bigrbr{\eta\bigabr{\tilde Q-Q-\xi \cdot \ind}} \cdot \bigrbr{\tilde Q-Q-\xi \cdot \ind}^2}_{\cB }, \quad \forall \xi\in\RR.
\end{align*}
Here $\ind \in \RR^{|\cB|}$ is an all-one vector. 
In particular, setting $\xi = \tilde V - V$, we have 
\begin{align}\label{eq:nu-tv-ub-0}
    D_\TV\orbr{\nu, \tilde \nu}\le \eta \cdot \inp[\Big]{\nu} {\oabr{\tilde A - A  } + \frac \eta 2 \exp\bigrbr{\eta\oabr{\tilde A - A }}\cdot \orbr{\tilde A - A }^2}_{\cB }, \quad \forall \xi\in\RR.
\end{align}
Besides, 
the following  lower bounds hold: 
\begin{align}
    D_\TV & \orbr{\nu, \tilde \nu} \ge \frac{\eta}{2(1+2\eta B_A)} \cdot \inp[\big]{\nu}{\oabr{\tilde A - A}}_{\cB}, \label{eq:nu-tv-lb-1}\\
    D_\TV & \orbr{\nu, \tilde \nu}\ge \frac 1 2 \inp[\big]{\nu} { \eta \exp\bigrbr{- \eta \oabr{\tilde A- A}}\cdot \oabr{\tilde A-A}}_{\cB }.  \label{eq:nu-tv-lb-2}
\end{align}
Here $B_{A}$ is an upper bound on $\max \{ \| A \|_{\infty}, \| \tilde A \|_{\infty} \} $, which, for example, can be set as in \eqref{eq:define_BA}.
\end{lemma}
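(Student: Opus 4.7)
All four inequalities reduce to elementary bounds on $|e^x - 1|$ applied to the pointwise identity
\begin{align*}
    \tilde\nu(b) - \nu(b) = \nu(b)\bigl(\exp(\eta(\tilde A(b) - A(b))) - 1\bigr),
\end{align*}
which follows directly from $\nu(b) = e^{\eta A(b)}$ and $\tilde\nu(b) = e^{\eta \tilde A(b)}$. Summing the absolute value over $b \in \cB$ yields
\begin{align*}
    2 D_\TV(\nu,\tilde\nu) = \bigl\langle \nu, |\exp(\eta(\tilde A - A)) - 1|\bigr\rangle_{\cB},
\end{align*}
so every claim becomes an exercise in choosing the right scalar inequality for $|e^x - 1|$ and averaging it against $\nu$.

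For the upper bound at the specialization $\xi = \tilde V - V$ (display \eqref{eq:nu-tv-ub-0}), the plan is to apply the second-order Taylor remainder estimate $|e^x - 1| \le |x| + \tfrac{1}{2} x^2 e^{|x|}$ (a direct consequence of $|e^x - 1 - x| \le \tfrac{1}{2} x^2 e^{|x|}$ from the Lagrange form of the remainder) to $x = \eta(\tilde A(b) - A(b))$ and then average against $\nu$. For the arbitrary-$\xi$ version, I would introduce $g(b) := \tilde Q(b) - Q(b) - \xi$ and the $b$-independent constant $c := \eta(\xi - (\tilde V - V))$, rewrite $\eta(\tilde A(b) - A(b)) = \eta g(b) + c$, and use the splitting $e^{\eta g(b) + c} - 1 = e^c(e^{\eta g(b)} - 1) + (e^c - 1)$ together with the normalization identity $\sum_b \nu(b) e^{\eta g(b)} = e^{-c}$ (a consequence of $\sum_b \tilde\nu(b) = 1$) to absorb the constant-shift distortion into the Taylor remainder. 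The cancellation identity $\sum_b \nu(b)(e^{\eta g(b)} - 1) = e^{-c} - 1$ supplies exactly what is needed to eliminate the stray $(e^c - 1)$ piece; managing this $c$-dependence is the main technical point of the argument.

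For the lower bounds, I would use two one-sided estimates on $|e^x - 1|$. Inequality \eqref{eq:nu-tv-lb-2} follows from $|e^x - 1| \ge |x| e^{-|x|}$, which itself follows from $|e^x - 1| = \bigl|\int_0^x e^t\,dt\bigr| \ge |x|\min\{1, e^x\} \ge |x| e^{-|x|}$; plugging $x = \eta(\tilde A(b) - A(b))$ and averaging gives the claim directly. Inequality \eqref{eq:nu-tv-lb-1} uses the sharper monotonicity that $y \mapsto (1 - e^{-y})/y$ is decreasing on $y > 0$, which yields $|e^x - 1| \ge |x|(1 - e^{-B})/B$ whenever $|x| \le B$; applying this with $B = 2\eta B_A$ (so that $|\eta(\tilde A(b) - A(b))| \le 2\eta B_A$ uniformly in $b$) and averaging produces $D_\TV(\nu,\tilde\nu) \ge \tfrac{1 - e^{-2\eta B_A}}{4 B_A}\langle \nu, |\tilde A - A|\rangle_{\cB}$. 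The final rational form in \eqref{eq:nu-tv-lb-1} then drops out of the one-line bound $(1 - e^{-y})/y \ge 1/(1 + y)$ (equivalent to $e^y \ge 1 + y$) specialized to $y = 2\eta B_A$. The only delicate step in the whole proof is the $\xi$-extension of the upper bound; the remaining three bounds are immediate once the correct elementary inequality for $|e^x - 1|$ is identified.
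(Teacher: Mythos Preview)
Your approach to the lower bounds \eqref{eq:nu-tv-lb-1}, \eqref{eq:nu-tv-lb-2} and to the specialization \eqref{eq:nu-tv-ub-0} is correct and essentially the paper's; the direct Taylor expansion on $x = \eta(\tilde A - A)$ for \eqref{eq:nu-tv-ub-0} is in fact a shade cleaner than what the paper does there.

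The general-$\xi$ upper bound is where your plan has a gap. With $g = \tilde Q - Q - \xi$ and $c = \eta(\xi - (\tilde V - V))$, your split $e^{\eta g + c} - 1 = e^c(e^{\eta g} - 1) + (e^c - 1)$ gives, after the triangle inequality,
\[
2 D_\TV(\nu,\tilde\nu) \;\le\; e^c\bigl\langle\nu,\, |e^{\eta g} - 1|\bigr\rangle_{\cB} + |e^c - 1|,
\]
and the cancellation identity you cite only yields $|e^c - 1| = e^c|e^{-c} - 1| \le e^c\langle\nu, |e^{\eta g} - 1|\rangle_{\cB}$, leaving an unremovable factor $e^c$ on the right when $c > 0$. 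The fix is the complementary split $e^{\eta g + c} - 1 = e^{\eta g}(e^c - 1) + (e^{\eta g} - 1)$, factoring out the $b$-dependent piece so that the normalization identity $\langle\nu, e^{\eta g}\rangle_{\cB} = e^{-c}$ collapses the first term to $e^{-c}|e^c - 1| = |1 - e^{-c}| = |\langle\nu, e^{\eta g} - 1\rangle_{\cB}| \le \langle\nu, |e^{\eta g} - 1|\rangle_{\cB}$; this gives $2 D_\TV \le 2\langle\nu, |e^{\eta g} - 1|\rangle_{\cB}$ with no residual $c$, and Taylor then finishes. This is exactly the paper's argument (run there at $\xi = 0$, after which shift invariance $\tilde Q \mapsto \tilde Q - \xi$ delivers all $\xi$). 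Note, by contrast, that shift invariance does \emph{not} extend your \eqref{eq:nu-tv-ub-0} bound to other $\xi$: that bound is phrased in terms of the shift-invariant quantity $\tilde A - A$, so replacing $\tilde Q$ by $\tilde Q - \zeta$ reproduces the same inequality rather than a new one.
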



\begin{proof}
In this proof, to simplify the notation, we omit the subscript in $\la \cdot , \cdot \ra_{\cB}$ without causing confusion. 
We first prove  the lower bounds. 
Using the relation between TV distance and $\ell_1$-norm, 
we have 
\begin{align}
    D_\TV\bigrbr{\nu, \tilde \nu} = \frac{1}{2} \cdot \| \nu - \tilde \nu \|_1 = \frac{1}{2} \cdot \inp[\Big]{\nu}{\Bigabr{\ind -\frac{\tilde \nu}{\nu}}}  = \frac 1 2 \cdot \inp[\big]{\nu} {\bigabr{\ind -\exp\bigrbr{\eta\orbr{\tilde A - A }}}}. \label{eq:TV-1}
\end{align}
We note that $|\exp(x)-1|\ge (1-\exp(-B)) / B \cdot |x|$ holds for all $x \in [-B, B]$.  
Thus, for any $b \in \cB$, 
we have 
\#\label{eq:TV-11}
\Bigabr{1  -\exp\Bigrbr{\eta\bigrbr{\tilde A  (b) -A (b)  }}} \geq  \frac{1-\exp\rbr{-2 \eta B_A} } {2\eta B_A } \cdot \eta \cdot | \tilde A(b) - A(b) | .
\#
Hence, combining \eqref{eq:TV-1} and \eqref{eq:TV-11},  we have
\begin{align*}
    &\frac 1 2 \cdot \inp[\big]{\nu} {\bigabr{\ind -\exp\bigrbr{\eta\orbr{\tilde A - A }}}}
    \ge \frac \eta 2 \cdot \frac{1-\exp\bigrbr{-2 \eta B_A}}{2\eta B_A} \cdot \inp[]{\nu}{\oabr{\tilde A - A}} \geq \frac \eta 2 \cdot \frac{1}{1+2\eta B_A}\cdot \inp[]{\nu}{\oabr{\tilde A - A}}.
\end{align*}
where the last inequality holds by noting that 
$$
1 - \exp(-x ) = \frac{\exp(x)-1}{\exp(x)-1 + 1}\ge \frac{x}{1+x}, \quad \forall x>0.
$$
Therefore, we establish \eqref{eq:nu-tv-lb-1}.

Meanwhile,  we note that $ | \exp(x) - 1 | \geq \exp( -  | x| ) \cdot x $ holds for all $x \in \RR$, which indicates that
\begin{align*}
    &\frac 1 2 \cdot \inp[\big]{\nu} {\bigabr{\ind -\exp\bigrbr{\eta\orbr{\tilde A - A }}}}\ge 
    \frac 1 2 \inp[\big]{\nu} { \eta \exp\bigrbr{- \eta \oabr{\tilde A- A}}\cdot \oabr{\tilde A-A}}.
\end{align*}
Thus, we establish \eqref{eq:nu-tv-lb-2}.

It remains to establish an upper bound 
for the right hand side of \eqref{eq:TV-1}. Note that 
$A = Q - V$ and $\tilde A = \tilde Q - \tilde V$. 
Then we have 
\begin{align*}
    &\frac 1 2 \cdot \inp[\big]{\nu} {\bigabr{\ind -\exp\bigrbr{\eta\orbr{\tilde A - A }}}}  \nend
    &\quad = \frac 1 2 \cdot \inp[\big]{\nu} {\bigabr{\ind -\exp\bigrbr{\eta\orbr{\tilde Q - Q - \tilde V +  V }}}}  \nend
    &\quad \le \frac 1 2 \bigdotp{\nu}{\bigabr{\ind - \exp\bigrbr{\eta\bigrbr{\tilde Q - Q}}}} + \frac 1 2 \bigdotp{\nu}{\bigabr{\exp\bigrbr{\eta\bigrbr{\tilde Q - Q}} - \exp\bigrbr{\eta\bigrbr{\tilde Q - Q -\tilde V+V}}}}\nend
    &\quad = \frac 1 2 \cdot \inp[\big]{\nu} {\bigabr{\ind - \exp\bigrbr{\eta\orbr{\tilde Q- Q}}}} + \frac 1 2 \inp[\big]{\ind}{\bigabr{\exp\bigrbr{\eta\bigrbr{\tilde Q -V}}-\exp\bigrbr{\eta\bigrbr{\tilde Q -\tilde V}}}},
\end{align*}
where the inequality is just a split of terms and the last equality holds by the definition $\nu=\exp\rbr{\eta(Q-V)}$. Using the equality $\inp[]{\ind}{\exp(\eta\tilde Q)}=\exp(\eta \tilde V)$ by definition of $\tilde V$, and noting that $V$ and $\tilde V$ does not depend on $\cB$, we further obtain
\begin{align}
    &\frac 1 2 \cdot \inp[\big]{\nu} {\bigabr{\ind -\exp\bigrbr{\eta\orbr{\tilde A - A }}}}  \nend
    &\quad \le \frac 1 2 \cdot \inp[\big]{\nu} {\bigabr{\ind - \exp\bigrbr{\eta\orbr{\tilde Q- Q}}}} + \frac 1 2 \inp[\Big]{\exp\bigrbr{\tilde Q}}{\bigabr{\exp\bigrbr{\eta\bigrbr{-V}}-\exp\bigrbr{\eta\bigrbr{-\tilde V}}}} \nend
    &\quad = \frac 1 2 \cdot \inp[\big]{\nu} {\bigabr{1- \exp\bigrbr{\eta\bigrbr{\tilde Q- Q}}}} + \frac 1 2 \exp\bigrbr{\eta \tilde V}\cdot \bigabr{\exp\rbr{-\eta V} - \exp\bigrbr{-\eta\tilde V}}, \label{eq:TV-2}
\end{align}
Moreover, the second term on the right hand side of \eqref{eq:TV-2} can be bounded by the same trick,
\begin{align}
    \frac 1 2 \exp\bigrbr{\eta \tilde V}\cdot \bigabr{\exp\bigrbr{-\eta V} - \exp\bigrbr{-\eta\tilde V}}
    &= \frac 1 2 \exp\rbr{-\eta V}\cdot \bigabr{\exp\bigrbr{\eta \tilde V} - \exp\rbr{\eta V}}\nend
    &=\frac 1 2 \exp\bigrbr{-\eta V}\cdot \bigabr{\inp[\big]{\ind}{\exp\bigrbr{\eta \tilde Q}-\exp\bigrbr{\eta Q}}}\nend
    &= \frac 1 2 \bigabr{\inp[\big]{\nu}{\exp\bigrbr{\eta\tilde Q-\eta Q}-1}}, \label{eq:TV-3}
\end{align}
where the last equality uses the fact that $\nu = \exp(\eta Q -\eta V)$. 
Plugging \eqref{eq:TV-3} into \eqref{eq:TV-2} gives
\begin{align*}
    &\frac 1 2 \cdot \inp[\big]{\nu} {\bigabr{\ind -\exp\bigrbr{\eta\orbr{\tilde A - A }}}} \nend
    &\quad \le
    \inp[\big]{\nu} {\bigabr{\ind - \exp\bigrbr{\eta\orbr{\tilde Q- Q}}}}\nend
    &\quad \le\eta \cdot \inp[\big]{\nu} {\oabr{\tilde Q-Q} + \frac \eta 2 \cdot \exp\orbr{\eta\oabr{\tilde Q-Q}}\oabr{\tilde Q-Q}^2},
\end{align*}
where the last inequality holds by a Taylor expansion of $\exp(x)-1$ at $x=0$.
Moreover, note that adding a constant shift $\xi\ind$ to both $\tilde Q-Q$ does not change $\nu$ nor $\tilde \nu$. Hence, we finish the proof of the upper bound.
\end{proof}

The next lemma establishes a lower bound on the Hellinger distance between $\tilde \nu$ and $\nu$.

\begin{lemma}[Hellinger distance Between Quantal Responses]\label{lem:D_H-2-A^2}

    Let $\nu $ and $\tilde \nu$ be defined in \eqref{eq:define_nu_vecs}. 
    We let $D_\H (\cdot, \cdot)$ denote the Hellinger distance between probability distributions. 
    Then we  have
    \begin{align*}
        D_\H^2\rbr{\nu, \tilde\nu} \ge  \frac{\eta^2}{8(1+\eta B_A)^2} \cdot \inp[\big]{\nu}{ (\tilde A-A)^2}_{\cB}.
    \end{align*}
\end{lemma}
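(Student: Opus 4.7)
}
The plan is to reduce the squared Hellinger distance to an integral of $(\tilde A - A)^2$ against $\nu$ by a direct pointwise inequality on $|1-\exp(\cdot)|$, mirroring the strategy already used in the proof of the TV lower bound \eqref{eq:nu-tv-lb-1} in Lemma~\ref{lem:response diff}. No multi-step decomposition is needed; the argument is essentially a single pointwise estimate followed by integration.

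First, I would rewrite the squared Hellinger distance in the form that factors out $\nu$:
\begin{align*}
D_\H^2(\nu, \tilde\nu)
\;=\; \frac{1}{2}\inp[\Big]{\nu}{\Bigrbr{1 - \sqrt{\tilde\nu/\nu}}^2}_{\cB}
\;=\; \frac{1}{2}\inp[\Big]{\nu}{\Bigrbr{1 - \exp\bigrbr{\eta(\tilde A - A)/2}}^2}_{\cB},
\end{align*}
using the definitions $\nu(b) = \exp(\eta A(b))$ and $\tilde\nu(b) = \exp(\eta \tilde A(b))$ in \eqref{eq:define_nu_vecs}.

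Next, I would apply the scalar inequality $|1 - \exp(x)| \geq \tfrac{1-\exp(-B)}{B}\,|x|$, valid for all $|x| \leq B$ (the same inequality that produced \eqref{eq:TV-11} in the proof of Lemma~\ref{lem:response diff}). Taking $x = \eta(\tilde A(b) - A(b))/2$ and $B = \eta B_A$, which is valid since $\|A\|_\infty, \|\tilde A\|_\infty \leq B_A$ implies $|\tilde A - A| \leq 2 B_A$, yields pointwise
\begin{align*}
\Bigabr{1 - \exp\bigrbr{\eta(\tilde A - A)/2}} \;\geq\; \frac{1-\exp(-\eta B_A)}{2 B_A}\cdot\oabr{\tilde A - A}.
\end{align*}
Squaring and integrating against $\nu$ gives
\begin{align*}
D_\H^2(\nu,\tilde\nu) \;\geq\; \frac{1}{2}\cdot\biggrbr{\frac{1-\exp(-\eta B_A)}{2 B_A}}^{\!2}\inp[\big]{\nu}{(\tilde A - A)^2}_{\cB}.
\end{align*}
Finally, I would apply the elementary inequality $1 - \exp(-x) \geq x/(1+x)$ for $x \geq 0$ (used in the same spot in the proof of Lemma~\ref{lem:response diff}) with $x = \eta B_A$ to get $\tfrac{1-\exp(-\eta B_A)}{2 B_A} \geq \tfrac{\eta}{2(1+\eta B_A)}$, producing the claimed constant $\eta^2/[8(1+\eta B_A)^2]$.

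There is no real obstacle here; the proof is essentially a single-line pointwise estimate plus two standard scalar inequalities, both of which are already in the paper's toolbox. The only thing to be careful about is that the expansion uses the half-exponent $\eta/2$ coming from the square-root, which is why the final constant has an extra factor of $1/4$ compared to the TV bound in \eqref{eq:nu-tv-lb-1}.
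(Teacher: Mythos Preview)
Your proposal is correct and follows exactly the same route as the paper's proof: rewrite $D_\H^2$ via $\sqrt{\tilde\nu/\nu}=\exp(\eta(\tilde A-A)/2)$, apply the pointwise bound $|1-\exp(x)|\ge (1-\exp(-B))|x|/B$ with $B=\eta B_A$, square and integrate, then use $1-\exp(-x)\ge x/(1+x)$ to simplify the constant. Even your remark about the extra $1/4$ from the half-exponent matches the paper's computation.
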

    \begin{proof}
        In this proof, to simplify the notation, we omit the subscript in $\la \cdot , \cdot \ra_{\cB}$.
         By the  definition of the Hellinger distance,  we have 
        \begin{align*}
            D_\H^2(\nu, \tilde\nu) &= \frac 1 2 \inp[\big]{\nu}{\bigrbr{1-\sqrt{ \tilde\nu / \nu}}^2} 
            =\frac 1 2 \inp[\big]{\nu}{\bigabr{\ind -\exp\bigrbr{\eta/2 \cdot \orbr{\tilde A-A} } } ^2}\nend
            &\ge \frac 1 8 \rbr{\frac{1-\exp\rbr{-\eta B_A}}{B_A}}^2\cdot \inp[\big]{\nu}{ (\tilde A-A)^2} \ge  \frac{\eta^2}{8(1+\eta B_A)^2} \cdot \inp[\big]{\nu}{ (\tilde A-A)^2}.
        \end{align*}
        where the first inequality follows from $|\exp(x)-1|\ge (1-\exp(-B))|x|/B$ for any $|x|\le B$, and the last inequality holds by noting that 
        $$
        1 - \exp(-x ) = \frac{\exp(x)-1}{\exp(x)-1 + 1}\ge \frac{x}{1+x}, \quad \forall x>0.
        $$
        Therefore, we conclude the proof. 
    \end{proof} 

    The next lemma establishes an upper bound on the KL-divergence between $\tilde \nu$ and $\nu$ in terms of difference of Q-functions.

\begin{lemma}[KL-Divergence Between Quantal Responses]\label{lem:KL-ub}
    Let $\nu $ and $\tilde \nu$ be defined in \eqref{eq:define_nu_vecs}. We have
    \begin{align*}
      \kl\infdivx[]{\nu}{\tilde\nu} \le \eta \cdot \inp[\big]{\nu - \tilde\nu}{Q-\tilde Q}_{\cB}.
    \end{align*}
\end{lemma}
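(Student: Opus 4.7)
The plan is to expand the KL divergence by plugging in the explicit exponential form of $\nu$ and $\tilde\nu$ and then exploit the variational characterization of the soft-max value function $V$ to obtain a lower bound on $V-\tilde V$. Concretely, using $\nu(b)=\exp(\eta A(b))=\exp(\eta(Q(b)-V))$ and $\tilde\nu(b)=\exp(\eta(\tilde Q(b)-\tilde V))$, we compute
\begin{align*}
\kl\infdivx[]{\nu}{\tilde\nu}
=\inp[\big]{\nu}{\log \nu-\log\tilde\nu}_{\cB}
=\eta\inp[\big]{\nu}{Q-\tilde Q}_{\cB}-\eta(V-\tilde V),
\end{align*}
since $\la\nu,\ind\ra_{\cB}=1$ makes the scalar terms $-\eta V$ and $+\eta\tilde V$ pass through the inner product cleanly.

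The second step is where the key idea enters. Recall that by the Donsker--Varadhan / log-sum-exp identity, $V=\eta^{-1}\log\sum_b\exp(\eta Q(b))$ admits the variational representation
\begin{align*}
V=\max_{\mu\in\Delta(\cB)}\Bigcbr{\inp[]{\mu}{Q}_{\cB}+\eta^{-1}\cH(\mu)},
\end{align*}
with the unique maximizer being $\nu$ itself, and likewise $\tilde V=\inp{\tilde\nu}{\tilde Q}_{\cB}+\eta^{-1}\cH(\tilde\nu)$ with maximizer $\tilde\nu$. Plugging the feasible point $\tilde\nu$ into the variational problem for $V$ yields $V\ge \inp{\tilde\nu}{Q}_{\cB}+\eta^{-1}\cH(\tilde\nu)$, so subtracting the identity for $\tilde V$ gives $V-\tilde V\ge \inp[]{\tilde\nu}{Q-\tilde Q}_{\cB}$, and hence $-\eta(V-\tilde V)\le -\eta\inp[]{\tilde\nu}{Q-\tilde Q}_{\cB}$.

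Combining the two displays yields
\begin{align*}
\kl\infdivx[]{\nu}{\tilde\nu}\le \eta\inp[]{\nu}{Q-\tilde Q}_{\cB}-\eta\inp[]{\tilde\nu}{Q-\tilde Q}_{\cB}=\eta\inp[\big]{\nu-\tilde\nu}{Q-\tilde Q}_{\cB},
\end{align*}
which is the claim. There is no real obstacle here: the whole argument rests on recognizing that $V$ is a Fenchel-type dual and that the KL divergence equals the Bregman divergence of the log-partition function $V(\cdot)$ at $Q$ relative to $\tilde Q$, whose one-sided bound is exactly what we wrote. The only thing to be careful about is making sure the constants $V$ and $\tilde V$ interact correctly with $\la \nu,\ind\ra=\la\tilde\nu,\ind\ra=1$ so that the final inequality is truly a statement about $\nu-\tilde\nu$ paired with $Q-\tilde Q$ rather than with $A-\tilde A$.
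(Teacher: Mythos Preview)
Your proof is correct and follows essentially the same approach as the paper: both expand $\kl\infdivx[]{\nu}{\tilde\nu}=\eta\inp{\nu}{Q-\tilde Q}_{\cB}-\eta(V-\tilde V)$, invoke the variational characterization $V=\max_{\nu'}\{\inp{\nu'}{Q}+\eta^{-1}\cH(\nu')\}$, and plug the feasible point $\tilde\nu$ into the maximization for $V$ to obtain $V-\tilde V\ge\inp{\tilde\nu}{Q-\tilde Q}_{\cB}$. Your framing in terms of the Bregman divergence of the log-partition function is a nice way to package the same idea.
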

    \begin{proof}
        To simplify the notation, we omit the subscript in $\la \cdot , \cdot \ra_{\cB}$.
        By the  definition of the KL divergence, we have 
        \begin{align*}
            \eta^{-1}\kl\infdivx[]{\nu}{\tilde\nu} &= \inp[\big]{\nu}{A-\tilde A}\nend
            & = \inp[\big]{\nu}{Q-\tilde Q} - \orbr{V-\tilde V}\nend
            & = \inp[\big]{\nu}{Q-\tilde Q} -\bigcbr{ \max_{\nu'}\inp{\nu'}{Q} + \eta^{-1} \cH(\nu')} + \bigcbr{ \max_{\nu'}\inp{\nu'}{\tilde Q} + \eta^{-1} \cH(\nu')},
        \end{align*}
where the second equality holds because $V$ and $\tilde V$ are real numbers, and the last equality follows from the optimality condition of the quantal response. 
Note that the optimal solutions to the optimization problems are $\nu$ and $\tilde \nu$, respectively. 
Thus, we have 
\$
-\bigcbr{ \max_{\nu'}\inp{\nu'}{Q} + \eta^{-1} \cH(\nu')} + \bigcbr{ \max_{\nu'}\inp{\nu'}{\tilde Q} + \eta^{-1} \cH(\nu')} \leq \la \tilde \nu, - Q + \tilde Q\ra, 
\$
where we replace $\tilde \nu'$ by $\tilde \nu$ in the first 
 maximization. 
 Therefore, combining the above equations above, we complete  the proof.
    \end{proof}

As shown in Lemma \ref{lem:response diff}, 
the estimation error  of the follower's A-function plays a central role in the analysis of quantal response. 
The next lemma  characterizes  $A_h-\tilde A_h$. 
To the following, we follow the same  notation as  in \Cref{lem:performance diff}. Besides, 
to simplify the presentation, we define  
\begin{equation*}
\begin{aligned}
    \Delta_h^{(1)} (s_h, b_h)&\defeq  \EE_{s_h, b_h}\sbr{\sum_{i=h}^H \gamma^{i-h}\bigrbr{r_i^\pi + \gamma P_{i}^\pi\tilde V_{i+1} - \tilde Q_i}(s_i,b_i)}, \\
    \Delta_h^{(2)}(s_h) &\defeq \EE_{s_h}\sbr{\sum_{i=h}^H \gamma^{i-h} \kl\infdivx[]{\nu_i(\cdot\given s_i)}{\tilde\nu_i(\cdot\given s_i)}}, 
\end{aligned}
\end{equation*}
where $\EE_{s_h, b_h}$ is a short hand of $\EE^{\pi, M^*}\sbr{\cdot\given s_h, b_h}$ and $\EE_{s_h}$ is a short hand of $\EE^{\pi, M^*}\sbr{\cdot\given s_h}$, and $\EE^{\pi, M^*}$ denotes the expectation with respect to the randomness of the trajectory induced by $(pi, \nu^{pi})$ under the true model $M^*$. 
\begin{lemma}[Difference  in  A-Functions]\label{lem:AQV-func diff}
We follow the same notation  as in \cref{lem:performance diff}. 
For any $h \in [H]$ and $(s_h, b_h) \in \cS\times \cB$, we have  
    \begin{align*}
        &A_h(s_h,b_h)-\tilde A_h(s_h, b_h) = \rbr{\EE_{s_h, b_h}-\EE_{s_h}} \sbr{\Delta_h^{(1)}(s_h, b_h) - \gamma\eta^{-1}\Delta_{h+1}^{(2)} (s_{h+1})} + \eta^{-1}\kl\infdivx[]{\nu_h}{\tilde \nu_h}, 
    \end{align*}
    where  $\kl\infdivx[]{\nu_h}{\tilde\nu_h} = \eta\cdot \EE_{s_h}\bigsbr{A_h-\tilde A_h}$. 
    In particular, in the myopic case, 
    we have 
    \$
    A_h(s_h,b_h)-\tilde A_h(s_h, b_h) = \rbr{\EE_{s_h, b_h}-\EE_{s_h}} \sbr{( r_h^{\pi} - \tilde r_h^{\pi} ) (s_h, b_h)} + \eta^{-1}\kl\infdivx[]{\nu_h}{\tilde \nu_h},
    \$ 
    where $\tilde r$ is an estimate of $r$ that generates $\tilde \nu_h$. 
\begin{proof}
    To simplify the notation, we omit  $s_h, b_h$ in the functions and  omit the subscript in $\la \cdot , \cdot \ra_{\cB}$.
By the optimality of $\nu_h$ and $\tilde \nu_h$, we can write the 
    the difference of V-functions  as follows:
    \begin{align*}
        V_h-\tilde V_h &= \inp[]{\nu_h}{Q_h} + \eta^{-1} \cH(\nu_h) - \inp[]{\tilde \nu_h}{\tilde Q_h} - \eta^{-1} \cH(\tilde \nu_h)\nend
        & =\inp[]{\nu_h}{Q_h-\tilde Q_h} +\inp[]{\nu_h-\tilde\nu_h}{\tilde Q_h} - \inp[]{\nu_h}{Q_h-V_h} + \inp[]{\tilde\nu_h}{\tilde Q_h -\tilde V_h}, 
    \end{align*}
where we use the fact that     
$\eta^{-1}\cH(\nu_h) = - \eta^{-1}\dotp{\nu_h}{\log \nu_h}= -\dotp{\nu_h}{Q_h - V_h}$. 
Note that here both $\tilde V_h$ and $V_h$ are real numbers. 
Then, by direct calculation, 
we have 
\$
\inp[]{\nu_h-\tilde\nu_h}{\tilde Q_h} - \inp[]{\nu_h}{Q_h-V_h} + \inp[]{\tilde\nu_h}{\tilde Q_h -\tilde V_h} = - \inp[\big]{\nu_h}{Q_h - V_h - \bigrbr{\tilde Q_h -\tilde V_h}}, 
\$ 
where we use the fact that   $\inp[]{\tilde \nu_h-\nu_h}{\tilde V_h}=0$ since $\tilde V_h$ does not depend on $b_h$,
Hence, we can write $V_h-\tilde V_h $ as 
    \#  
    V_h-\tilde V_h &= \inp[]{\nu_h}{Q_h-\tilde Q_h} - \inp[\big]{\nu_h}{Q_h - V_h - \bigrbr{\tilde Q_h -\tilde V_h}}\nend
        & = \inp[]{\nu_h}{Q_h-\tilde Q_h} - \eta^{-1} \kl\infdivx[]{\nu_h}{\tilde \nu_h}, \label{eq:V diff}
    \# 
   where the  equality holds by noting that $\kl\infdivx[]{\nu_h}{\tilde\nu_h}= \eta \odotp{\nu_h}{A_h -\tilde A_h} $.
   Thus, by \eqref{eq:V diff}, 
   For the A-function, we have
    \begin{align}
        A_h -\tilde A_h & =Q_h - V_h - \bigrbr{\tilde Q_h -\tilde V_h}= \orbr{\EE_{s_h, b_h}-\EE_{s_h}}\sbr{Q_h-\tilde Q_h}+\eta^{-1}\kl\infdivx[]{\nu_h}{\tilde \nu_h}.  \label{eq:A diff-1}
    \end{align}
Meanwhile, by the Bellman equation $Q_h = r_h^{\pi} + \gamma \cdot P_h ^{\pi} V_{h+1}$, 
    we have 
    \begin{align}
        Q_h -\tilde Q_h &= \EE_{s_h, b_h}\sbr{r_h^\pi + \gamma \tilde V_{h+1} - \tilde Q_h + \gamma\orbr{V_{h+1} - \tilde V_{h+1}}}\nend
        &=\EE_{s_h, b_h}\sbr{r_h^\pi + \gamma \tilde V_{h+1} - \tilde Q_h + \gamma \orbr{Q_{h+1}-\tilde Q_{h+1}} - \gamma \eta^{-1} \cdot \kl\infdivx[]{\nu_{h+1}}{\tilde\nu_{h+1}}}\nend
        &= \EE_{s_h, b_h}\sbr{\sum_{i=h}^{H}\gamma^{i-h}\bigrbr{r_i^\pi + \gamma \tilde V_{i+1} - \tilde Q_i - \gamma\eta^{-1}\cdot \kl\infdivx[]{\nu_{i+1}}{\tilde\nu_{i+1}}}},  \label{eq:Q diff}
    \end{align}
where the second equality follows from 
 \eqref{eq:V diff}, and the 
last equality follows from an recursive argument from step $h$ to $H$.
    Plugging \eqref{eq:Q diff} into \eqref{eq:A diff-1}, we have  
    \begin{align*}
        A_h - \tilde A_h &= \rbr{\EE_{s_h, b_h}-\EE_{s_h}}\sbr{\sum_{i=h}^{H}\gamma^{i-h}\bigrbr{r_i^\pi + \gamma \tilde V_{i+1} - \tilde Q_i - \gamma\eta^{-1}\kl\infdivx[]{\nu_{i+1}}{\tilde\nu_{i+1}}}} \nend
        &\qquad + \eta^{-1}\kl\infdivx[]{\nu_h}{\tilde \nu_h}\nend
        & = \rbr{\EE_{s_h, b_h}-\EE_{s_h}} \sbr{\Delta_h^{(1)}(s_h, b_h) - \gamma\eta^{-1}\Delta_{h+1}^{(2)} (s_{h+1})} + \eta^{-1}\kl\infdivx[]{\nu_h}{\tilde \nu_h}, 
    \end{align*}
    which completes the proof.
\end{proof}
\end{lemma}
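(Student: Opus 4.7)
The goal is to derive the identity for $A_h - \tilde A_h$ in Lemma~\ref{lem:AQV-func diff} by combining a one-step identity for the V-function difference with a recursive expansion of the Q-function difference through the follower's Bellman equation.

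The plan is to first establish the ``one-step'' identity
\[
V_h - \tilde V_h = \la \nu_h, Q_h - \tilde Q_h\ra_{\cB} - \eta^{-1}\kl\infdivx[]{\nu_h}{\tilde \nu_h}.
\]
To do this, I would invoke the soft Bellman optimality $V_h = \la \nu_h, Q_h\ra + \eta^{-1}\cH(\nu_h)$ and $\tilde V_h = \la \tilde \nu_h, \tilde Q_h\ra + \eta^{-1}\cH(\tilde \nu_h)$, subtract, and use the observation that $\log \nu_h = \eta(Q_h - V_h)$ (so $\eta^{-1}\cH(\nu_h) = V_h - \la \nu_h, Q_h\ra$), together with $\kl\infdivx[]{\nu_h}{\tilde \nu_h} = \eta\la \nu_h, A_h - \tilde A_h\ra$. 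Algebraic simplification then yields the identity. As an immediate corollary one gets $\kl\infdivx[]{\nu_h}{\tilde \nu_h} = \eta \EE_{s_h}[A_h - \tilde A_h]$, which is the auxiliary claim.

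Next, combining $A_h - \tilde A_h = (Q_h - \tilde Q_h) - (V_h - \tilde V_h)$ with the identity above gives
\[
A_h - \tilde A_h = (\EE_{s_h, b_h} - \EE_{s_h})[Q_h - \tilde Q_h] + \eta^{-1}\kl\infdivx[]{\nu_h}{\tilde \nu_h},
\]
since $\la \nu_h, Q_h - \tilde Q_h\ra = \EE_{s_h}[Q_h - \tilde Q_h]$ and trivially $Q_h - \tilde Q_h = \EE_{s_h, b_h}[Q_h - \tilde Q_h]$. It then remains to unroll $Q_h - \tilde Q_h$ using the follower's Bellman equation $Q_h = r_h^\pi + \gamma P_h^\pi V_{h+1}$:
\[
Q_h - \tilde Q_h = \EE_{s_h, b_h}[r_h^\pi + \gamma \tilde V_{h+1} - \tilde Q_h] + \gamma\,\EE_{s_h, b_h}[V_{h+1} - \tilde V_{h+1}].
\]
Applying the one-step V-difference identity at step $h+1$ (pointwise in $s_{h+1}$) and using the tower property / Markov property to absorb $\la \nu_{h+1}, \cdot\ra$ into $\EE_{s_h, b_h}[\cdot]$, the recursion produces the follower's Bellman residual at each subsequent step plus a discounted telescoping sum of KL divergences. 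Iterating from step $h$ to $H$ gives
\[
Q_h - \tilde Q_h = \Delta_h^{(1)}(s_h, b_h) - \gamma\eta^{-1}\,\EE_{s_h, b_h}[\Delta_{h+1}^{(2)}(s_{h+1})],
\]
which after substituting back into the formula for $A_h - \tilde A_h$ produces exactly the claimed decomposition, with the first term becoming $(\EE_{s_h, b_h} - \EE_{s_h})[\Delta_h^{(1)} - \gamma\eta^{-1}\Delta_{h+1}^{(2)}]$ and the residual $\eta^{-1}\kl\infdivx[]{\nu_h}{\tilde \nu_h}$ surviving. The myopic case $\gamma = 0$ follows immediately since then $\tilde Q_h = \tilde r_h^\pi$ and $\Delta_h^{(1)}$ collapses to $r_h^\pi - \tilde r_h^\pi$ while the $\Delta_{h+1}^{(2)}$ term vanishes.

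The main obstacle is purely notational: correctly tracking how the recursion matches the definitions of $\Delta_h^{(1)}$ and $\Delta_{h+1}^{(2)}$, in particular handling the mismatch between pointwise identities (holding in $(s_h, b_h)$ or $s_{h+1}$) and their conditional-expectation forms, and ensuring the discount factor $\gamma^{i-h}$ indices align so that the KL residuals from the V-difference at step $i+1$ correctly fold into $\Delta_{h+1}^{(2)}$ rather than producing off-by-one terms. No substantive analytic difficulty arises beyond this bookkeeping, as the lemma is an exact algebraic decomposition rather than an inequality.
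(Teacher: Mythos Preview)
Your proposal is correct and follows essentially the same route as the paper's proof: establish the one-step identity $V_h-\tilde V_h=\langle\nu_h,Q_h-\tilde Q_h\rangle-\eta^{-1}\kl\infdivx[]{\nu_h}{\tilde\nu_h}$ via soft-Bellman optimality, deduce $A_h-\tilde A_h=(\EE_{s_h,b_h}-\EE_{s_h})[Q_h-\tilde Q_h]+\eta^{-1}\kl\infdivx[]{\nu_h}{\tilde\nu_h}$, then unroll $Q_h-\tilde Q_h$ through the follower's Bellman equation using the V-difference identity at each subsequent step, and finally match the result to the definitions of $\Delta_h^{(1)}$ and $\Delta_{h+1}^{(2)}$. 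Your assessment that the only difficulty is index bookkeeping is accurate.
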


\subsection{Proof of \Cref{prop:error-prop}}
 \label{proof:prop:error-prop}

 In this subsection, we prove \Cref{prop:error-prop}, which relates the difference between value functions and quantal response policies to the distance between models and leader's policies.

\begin{proof}
    Recall that 
    $(\tilde Q, \tilde V, \tilde U, \tilde W, \tilde \nu)$ are the value functions and 
quantal response policy associated with $\tilde \pi$, under model $\tilde M$. 
Similarly, $ ( Q,  V,  U,  W,  \nu)$ are the corresponding terms associated with a policy $\pi$ under model $M$.  

By Bellman's equation of the follower and triangle inequality, we have 
    \begin{align*}
        \onbr{Q_h - \tilde Q_h}_\infty 
        &\le \bignbr{r_h - \tilde r_h}_\infty + \gamma \cdot \onbr{\tilde V_{h+1}}_\infty \cdot \sup_{s_h, a_h, b_h\in\cS\times\cA\times\cB}\bignbr{P_h(\cdot\given s_h, a_h, b_h)-\tilde P_h(\cdot\given s_h, a_h, b_h)}_1 \nend
        &\qquad + \gamma\cdot  \bignbr{V_{h+1} -\tilde V_{h+1}}_\infty + \bignbr{\tilde r_h + \gamma \tilde P_h \tilde V_{h+1}}_\infty \cdot \sup_{s_h, b_h\in\cS\times\cB}\nbr{\pi_h(\cdot\given s_h, b_h) - \tilde \pi_h(\cdot\given s_h, b_h)}_1\nend
        &\le 2(1 + \gamma B_A) \epsilon + \gamma \cdot \onbr{V_{h+1} -\tilde V_{h+1}}_\infty, 
    \end{align*}
where the last inequality follows from the fact that $
\onbr{\tilde V_{h+1}}_\infty  \leq B_A, \onbr{\tilde r_h+\gamma \tilde P_h\tilde V_{h+1}}_\infty\le 1 + \gamma B_A$, where for the upper bound of the V function, we 
Moreover, by the leader's Bellman equation, we have 
    \begin{align*}
        \bignbr{U_h - \tilde U_h}_\infty 
        &\le \bignbr{u_h - \tilde u_h}_\infty + \bignbr{\tilde W}_\infty \cdot \bignbr{P_h(\cdot\given s_h, a_h, b_h)-\tilde P_h(\cdot\given s_h, a_h, b_h)}_1 + \bignbr{W_{h+1} -\tilde W_{h+1}}_\infty \nend
        &\le (1 + H) \epsilon + \bignbr{W_{h+1} -\tilde W_{h+1}}_\infty.
    \end{align*}
    We next look at the quantal response difference, where we invoke equation (53) of Lemma 5.1 in \citet{chen2022adaptive}, which says that
    \begin{align}
        \nbr{\nu_h(\cdot\given s_h) - \tilde \nu_h(\cdot\given s_h)}_1 \le 4\eta \bignbr{Q_h(s_h, \cdot) -\tilde Q_h(s_h, \cdot) }_\infty.\label{eq:error-prop-nu}
    \end{align}
    Moreover, for the update of the V function, we have
    \begin{align*}
        \bigabr{V_h(s_h) -\tilde V_h(s_h)} &\le \max\cbr{\bigabr{\bigdotp{\nu_h(\cdot\given s_h)}{Q_h(s_h, \cdot) - \tilde Q_h(s_h, \cdot)}_\cB}, \bigabr{\bigdotp{\tilde\nu_h(\cdot\given s_h)}{Q_h(s_h, \cdot) - \tilde Q_h(s_h, \cdot)}_\cB}}\nend
        &\le \bignbr{Q_h(s_h, \cdot) -\tilde Q_h(s_h, \cdot)}_\infty, 
    \end{align*}
    where the first inequality holds by noting that $V_h(s_h) \defeq \max_{\nu'\in\Delta(\cB)} \dotp{\nu'}{Q_h (s_h,\cdot)}_\cB + \eta^{-1}\cH(\nu')$. Therefore, we have $\bignbr{V_h-\tilde V_h}_\infty \le \bignbr{Q_h-\tilde Q_h}_\infty$.
    And for the W function, we have
    \begin{align*}
        \bignbr{W_h - \tilde W_h}_\infty 
        &\le \bignbr{U_h -\tilde U_h}_\infty + \bignbr{\tilde U_h}_\infty \cdot \sup_{s_h\in\cS} \bignbr{\nu_h \otimes \pi_h (\cdot,\cdot\given s_h)- \tilde\nu_h \otimes\tilde\pi_h(\cdot,\cdot\given s_h)}_1\nend
        &\le \bignbr{U_h -\tilde U_h}_\infty + \bignbr{\tilde U_h}_\infty \cdot \rbr{\bignbr{\nu_h- \tilde\nu_h }_1 + \bignbr{\pi_h- \tilde\pi_h }_1}\nend
        &\le \bignbr{U_h -\tilde U_h}_\infty + 4\eta H  \bignbr{Q_h -\tilde Q_h}_\infty + H \epsilon, 
    \end{align*}
    where the last inequality holds by using \eqref{eq:error-prop-nu} and note that $\tilde U_h$ is bounded by $H$. As a result, we have that 
    \begin{align*}
        \begin{bmatrix}
            \bignbr{U_h - \tilde U_h}_\infty \\
            \bignbr{Q_h - \tilde Q_h}_\infty 
        \end{bmatrix}
        \le 
        \begin{bmatrix}
            1 & 4\eta H \\
            0 & \gamma
        \end{bmatrix}
        \begin{bmatrix}
            \bignbr{U_{h+1} - \tilde U_{h+1}}_\infty \\
            \bignbr{Q_{h+1} - \tilde Q_{h+1}}_\infty 
        \end{bmatrix}
        + \epsilon
        \begin{bmatrix}
            1+2H \\
            2(1+\gamma B_A)
        \end{bmatrix}. 
    \end{align*}
Solving this matrix inequality, we have 
\begin{align*}
    \bignbr{Q_h - \tilde Q_h}_\infty &\le \epsilon 2(1+\gamma B_A) \eff_{H-h+1}(\gamma) \le \epsilon 2(1+\gamma B_A) \eff_{H}(\gamma), \nend
    \bignbr{U_h - \tilde U_h}_\infty &\le \epsilon H \rbr{4\eta H 2(1+\gamma B_A)\eff_H(\gamma) + 1 + 2H}.
\end{align*}
Furthermore, we can establish that 
\begin{align*}
    \bignbr{V_h - \tilde V_h }_\infty 
    &\le \bignbr{Q_h - \tilde Q_h}_\infty \le \epsilon 2(1+\gamma B_A) \eff_{H}(\gamma), \nend
    \bignbr{W_h - \tilde W_h}_\infty &\le \epsilon (H+1) \rbr{4\eta H 2(1+\gamma B_A)\eff_H(\gamma) + 1 + 2H}, 
\end{align*}
and also for the quantal response, 
\begin{align*}
    D_\H^2(\tilde \nu_h(\cdot\given s_h), \nu_h(\cdot\given s_h)) \le \nbr{\nu_h(\cdot\given s_h) - \tilde \nu_h(\cdot\given s_h)}_1 \le 4 \eta \epsilon 2(1+\gamma B_A) \eff_{H}(\gamma).
\end{align*}
Therefore, we conclude the proof. 
\end{proof}




\subsection{Other Auxiliary Lemmas}
\begin{lemma}[Lemma A.4 in \citet{foster2021statistical}]\label{lem:freeman-variation}
  For any sequence of real-valued random variables $(X_t)_{t\in[T]}$ adapted to a filtration $(\cF_t)_{t\in[T]}$, it holds with probability at least $1-\delta$ for all $t\in[T]$ that 
  \begin{align*}
      \sum_{i=1}^{t}X_i\le \sum_{i=1}^t \log\rbr{\EE\sbr{e^{X_i}\biggiven \sF_{i-1}}} + \log(\delta^{-1}).
  \end{align*}
\end{lemma}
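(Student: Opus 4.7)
}
The plan is to reduce the claim to a maximal inequality for a nonnegative martingale, via the standard Cram\'er--Chernoff construction. Define
\[
Y_i \defeq X_i - \log \EE\bigsbr{e^{X_i} \biggiven \cF_{i-1}}, \qquad M_t \defeq \exp\biggrbr{\sum_{i=1}^t Y_i} = \prod_{i=1}^t \frac{e^{X_i}}{\EE[e^{X_i}\mid \cF_{i-1}]},
\]
with the convention $M_0 = 1$. By construction $M_t$ is $\cF_t$-measurable, nonnegative, and one computes directly that
\[
\EE[M_t \mid \cF_{t-1}] = M_{t-1} \cdot \frac{\EE[e^{X_t}\mid \cF_{t-1}]}{\EE[e^{X_t}\mid \cF_{t-1}]} = M_{t-1},
\]
so $(M_t)_{t\ge 0}$ is a nonnegative $(\cF_t)$-martingale with $\EE[M_t] = 1$.

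Next I would invoke Ville's maximal inequality for nonnegative supermartingales, which states that for any $\lambda > 0$,
\[
\PP\Bigrbr{\sup_{t\ge 0} M_t \ge \lambda} \le \frac{\EE[M_0]}{\lambda} = \frac{1}{\lambda}.
\]
Applying this with $\lambda = 1/\delta$ yields that, with probability at least $1-\delta$, the event $\{M_t < 1/\delta \text{ for all } t\ge 0\}$ occurs. On this event, taking logarithms gives $\sum_{i=1}^t Y_i < \log(1/\delta)$ for every $t \in [T]$, which after substituting back the definition of $Y_i$ and rearranging is exactly the claimed bound
\[
\sum_{i=1}^{t} X_i \le \sum_{i=1}^{t} \log\EE\bigsbr{e^{X_i}\biggiven \cF_{i-1}} + \log(\delta^{-1}).
\]

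The only substantive ingredient is Ville's inequality; everything else is algebra. There is no real obstacle here, since the $X_i$ are not assumed bounded or integrable beyond existence of the conditional MGF (which is implicit in the appearance of $\EE[e^{X_i}\mid \cF_{i-1}]$ in the bound), and the exponential tilting trick converts the additive partial sums into a multiplicative nonnegative martingale for which Ville's inequality applies verbatim. If one prefers to avoid citing Ville directly, the same conclusion follows from the optional stopping theorem applied to $M_{t\wedge \tau}$ with $\tau = \inf\{t : M_t \ge 1/\delta\}$ together with Markov's inequality at any fixed time, then a union bound over $t\in[T]$ --- though the Ville route is cleaner and avoids the extra $\log T$ factor.
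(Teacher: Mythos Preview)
Your proof is correct and is the standard argument for this inequality: form the exponential martingale $M_t = \prod_{i\le t} e^{X_i}/\EE[e^{X_i}\mid\cF_{i-1}]$ and apply Ville's maximal inequality. The paper does not actually prove this lemma---it simply cites it as Lemma~A.4 of \citet{foster2021statistical}---so there is nothing to compare; your argument is exactly the one that underlies the cited result.
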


\begin{lemma}[Bernstein inequality]\label{lem:bernstein}
  For independent random variables $Z_1,\dots, Z_i,\dots, Z_T$ such that $|Z_i-\EE Z_i|\le B$ and $\Var[Z_i]\le L \EE[Z_i]$, we have with probability at least $1-\delta$ that
  \begin{align*}
    \abr{\frac 1 T \sum_{i=1}^T Z_i - \EE\sbr{Z_i}} &\le \frac{1}{2T}\sum_{i=1}^T \EE [Z_i] + \frac{(2L +4B/3)\log(2\delta^{-1})}{T}.
  \end{align*}
\end{lemma}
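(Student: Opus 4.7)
\textbf{Proof plan for Lemma~\ref{lem:bernstein}.}
My plan is to reduce the claim to the classical Bernstein inequality for bounded independent sums, then use AM--GM to convert the resulting $\sqrt{V\log(1/\delta)}$ term into the claimed ``$\tfrac{1}{2T}\sum_i \EE[Z_i]$ plus a $\log(1/\delta)/T$'' form.

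First I would set $Y_i = Z_i - \EE[Z_i]$, so that $\{Y_i\}_{i=1}^T$ are independent, mean-zero, and satisfy $|Y_i| \le B$ by hypothesis. The hypothesis $\Var[Z_i] \le L\,\EE[Z_i]$ gives $V \defeq \sum_{i=1}^T \Var[Y_i] = \sum_{i=1}^T \Var[Z_i] \le L \sum_{i=1}^T \EE[Z_i]$. I would then invoke the standard two-sided Bernstein inequality, which yields, with probability at least $1-\delta$,
\[
\Bigl|\sum_{i=1}^T Y_i\Bigr| \;\le\; \sqrt{2V \log(2/\delta)} + \tfrac{2B}{3}\log(2/\delta).
\]

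Next I would handle the square-root term. Writing $M \defeq \sum_{i=1}^T \EE[Z_i]$ and using $V \le L M$ together with the AM--GM inequality $\sqrt{xy} \le \tfrac{1}{2}(x+y)$ applied to $x = M$ and $y = 2L\log(2/\delta)$, I get
\[
\sqrt{2V\log(2/\delta)} \;\le\; \sqrt{2 L M \log(2/\delta)} \;\le\; \tfrac{1}{2} M + L \log(2/\delta).
\]
Substituting this back and dividing both sides by $T$ produces
\[
\Bigl|\tfrac{1}{T}\sum_{i=1}^T Z_i - \tfrac{1}{T}\sum_{i=1}^T \EE[Z_i]\Bigr| \;\le\; \tfrac{1}{2T}\sum_{i=1}^T \EE[Z_i] + \tfrac{(L + 2B/3)\log(2/\delta)}{T},
\]
which matches the claim up to a factor of $2$ in the second term (the statement writes $(2L+4B/3)$, i.e.\ a slightly loose version of the same bound, so the above is stronger and hence suffices).

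There is essentially no main obstacle: the only nontrivial step is applying AM--GM in the right way so that the variance proxy is absorbed into $\tfrac{1}{2T}\sum_i\EE[Z_i]$ rather than kept as an additional $\sqrt{\cdot}$ term. The only detail to be careful about is invoking the two-sided (rather than one-sided) Bernstein bound so that the absolute value on the left-hand side is justified, which is why the $\log(1/\delta)$ appears as $\log(2/\delta)$.
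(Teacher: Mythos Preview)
Your proposal is correct and follows essentially the same route as the paper: invoke the standard two-sided Bernstein inequality, use $\Var[Z_i]\le L\,\EE[Z_i]$ to bound the variance term by $L\sum_i\EE[Z_i]$, and then apply AM--GM/Young to split the square-root term into $\tfrac{1}{2T}\sum_i\EE[Z_i]$ plus a $\log(2/\delta)/T$ contribution. The only difference is that you quote a slightly sharper form of Bernstein (constants $\sqrt{2V}$ and $2B/3$ versus the paper's $\sqrt{4V}$ and $4B/3$), which is why you end up with $(L+2B/3)$ instead of $(2L+4B/3)$; as you note, this is stronger and certainly suffices.
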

  \begin{proof}
    By a standard Bernstein inequality, 
\begin{align*}
  \abr{\frac 1 T \sum_{i=1}^T Z_i - \EE\sbr{Z_i}} &\le \sqrt{\frac{4\sum_{i=1}^T\Var[Z_i]\log(2\delta^{-1})}{T^2}} + \frac{4 B \log(2\delta^{-1})}{3T} \nend
    &\le \sqrt{\frac{4 L\sum_{i=1}^T \EE[Z_i]\log(2\delta^{-1})}{T^2}} + \frac{4 B\log(2\delta^{-1})}{3T}\nend
    &\le \frac{1}{2T}\sum_{i=1}^T \EE [Z_i] + \frac{(2L +4B/3)\log(2\delta^{-1})}{T}.
  \end{align*}
  Therefore, we conclude the proof.
\end{proof}

\begin{lemma}[Freedman's inequality, Lemma 9 in \citet{agarwal2014taming}]\label{lem:freedman}
  Let $X_1, X_2, \dotsc, X_T$ be a sequence of real-valued random variables adapted to a filtration $\{\sF_t\}_{t\in[T]}$.
  Assume for all $t \in \{1,2,\dotsc,T\}$, $X_t \leq R$ and
  $\EE[X_t|\sF_{t-1}] = 0$.
  Define $S := \sum_{t=1}^T X_t$ and $V := \sum_{t=1}^T
  \EE[X_t^2|\sF_{t-1}]$.
  For any $\delta \in (0,1)$
  and $\lambda \in [0,1/R]$,
  with probability at least $1-\delta$,
  \begin{equation}
    S \leq (e-2)\lambda V + \frac{\log(1/\delta)}{\lambda}\le \lambda V + \frac{\log(1/\delta)}{\lambda}.\label{eq:freedman ineq}
  \end{equation}
  \end{lemma}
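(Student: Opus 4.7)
(Freedman's Inequality).}
My plan is to use the standard exponential supermartingale approach (Cramér--Chernoff combined with a conditional moment estimate). The core analytic ingredient is the elementary inequality
\[
e^x \le 1 + x + (e-2)x^2, \qquad \forall\, x \le 1,
\]
which follows from a short Taylor/convexity argument: the function $g(x) = 1 + x + (e-2)x^2 - e^x$ satisfies $g(0)=0$, $g(1)=0$, $g'(0)=0$, and is concave on a neighborhood of $[0,1]$ (and on $(-\infty,0]$ one checks $e^x \le 1+x+x^2/2 \le 1+x+(e-2)x^2$ directly). This inequality is exactly tailored to bounded increments and will yield the variance proxy $(e-2)V$ appearing in the statement.

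Given this, the first step is to fix $\lambda \in [0, 1/R]$ and apply the inequality pointwise to $\lambda X_t$, which is permissible because $\lambda X_t \le \lambda R \le 1$. Taking conditional expectations and using $\EE[X_t \mid \sF_{t-1}]=0$ gives
\[
\EE\bigl[\exp(\lambda X_t) \bigm| \sF_{t-1}\bigr] \le 1 + (e-2)\lambda^2\,\EE[X_t^2 \mid \sF_{t-1}] \le \exp\!\bigl((e-2)\lambda^2\,\EE[X_t^2 \mid \sF_{t-1}]\bigr),
\]
where the last step uses $1+y \le e^y$. Define the process
\[
M_t \;\defeq\; \exp\!\Bigl(\lambda \sum_{s=1}^t X_s - (e-2)\lambda^2 \sum_{s=1}^t \EE[X_s^2 \mid \sF_{s-1}]\Bigr),
\]
with $M_0 = 1$. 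The display above is exactly the statement that $\EE[M_t \mid \sF_{t-1}] \le M_{t-1}$, i.e.\ $(M_t)$ is a nonnegative supermartingale adapted to $(\sF_t)$, so $\EE[M_T] \le \EE[M_0]=1$.

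The second step is a direct Markov inequality: $\PP(M_T \ge 1/\delta) \le \delta\,\EE[M_T] \le \delta$. Unwinding the definition of $M_T$, on the complementary event of probability at least $1-\delta$ we have
\[
\lambda S - (e-2)\lambda^2 V \;\le\; \log(1/\delta),
\]
and dividing by $\lambda > 0$ yields the first bound $S \le (e-2)\lambda V + \log(1/\delta)/\lambda$ in \eqref{eq:freedman ineq}. The second bound, $(e-2)\lambda V \le \lambda V$, is trivial since $e-2 < 1$. The $\lambda = 0$ boundary case is immediate.

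I do not foresee a serious obstacle, since every step is standard; the only piece requiring care is the elementary inequality $e^x \le 1+x+(e-2)x^2$ for $x\le 1$, which must be stated with the correct constant $e-2$ (rather than the looser $1/2$ used in Hoeffding-type bounds) in order to recover exactly the statement of the lemma. Optionally, if one wants the bound to hold uniformly over $t \in [T]$ rather than only at the terminal time, one applies the optional-stopping theorem to $(M_t)$ at a stopping time $\tau = \inf\{t : M_t \ge 1/\delta\} \wedge T$, which is the standard upgrade and does not change the proof structure.
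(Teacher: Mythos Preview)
Your proof is correct and follows the standard exponential-supermartingale route for Freedman's inequality. Note, however, that the paper does not actually prove this lemma: it is quoted verbatim from \citet{agarwal2014taming} (Lemma~9) and used as a black box, so there is no in-paper argument to compare against. One small quibble with your sketch: the auxiliary function $g(x)=1+x+(e-2)x^2-e^x$ is not concave on all of $[0,1]$ (its second derivative $2(e-2)-e^x$ changes sign near $x\approx 0.36$); the clean justification is that $g(0)=g(1)=0$, $g'(0)=0$, and $g'$ is first increasing then decreasing on $[0,1]$, forcing $g\ge 0$ there. This does not affect the validity of your argument.
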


  \begin{corollary}[Martingale concentration derived from Freedman's inequality] \label{cor:martigale concentration}
    Let $X_1, X_2, \dotsc, X_T$ be a sequence of real-valued random variables adapted to a filtration $\{\cF_t\}_{t\in[T]}$.
    Suppose that $|X_t - \EE\sbr{X_t\given \sF_{t-1}}| \le R$ for all $t\in[T]$.
    For any $\delta \in (0,1)$,
  with probability at least $1-\delta$,
  \begin{align}
    \abr{\sum_{t=1}^T X_t - \EE\sbr{X_t\given \sF_{t-1}}} \le  R\sqrt{T} \log(2e\delta^{-1}). \label{eq:martingale-1}
  \end{align}
  Moreover, suppose that $X_t\in[0, R]$ for any $t\in[T]$, with probability at least $1-\delta$, 
  \begin{align}
    \frac 1 2\sum_{t=1}^T \EE\sbr{X_t\given \sF_{t-1}} - 2R\log(2\delta^{-1}) \le \sum_{t=1}^T X_t \le \frac 3 2 \sum_{t=1}^T \EE\sbr{X_t\given \sF_{t-1}} + 2R\log(2\delta^{-1}). \label{eq:martingale-2} 
  \end{align}
  \end{corollary}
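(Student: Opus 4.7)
Both bounds will follow from direct applications of Freedman's inequality (Lemma~\ref{lem:freedman}) to the centered sequence $Y_t \defeq X_t - \EE\sbr{X_t\given \sF_{t-1}}$, which by construction satisfies $\EE\sbr{Y_t \given \sF_{t-1}} = 0$ and $|Y_t| \le R$. The plan is to upper bound the conditional variance $V = \sum_{t=1}^T \EE\sbr{Y_t^2 \given \sF_{t-1}}$ in two different ways---crudely in the first case and multiplicatively in the second---then choose the free parameter $\lambda$ in Freedman accordingly, and finally take a union bound over the two directions to absorb the absolute value.

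For \eqref{eq:martingale-1}, I would use the pointwise bound $|Y_t| \le R$ to get $V \le R^2 T$ and pick $\lambda = 1/(R\sqrt{T})$, which is admissible as $\lambda \le 1/R$. Freedman's inequality then gives $\sum_{t=1}^T Y_t \le (e-2)R\sqrt T + R\sqrt T \log(\delta^{-1})$ with probability at least $1-\delta$; since $e - 2 < 1$, the right-hand side is bounded by $R\sqrt T \log(e\delta^{-1})$. Applying the same argument to $\{-Y_t\}$ and taking a union bound (replacing $\delta$ by $\delta/2$) yields the desired two-sided bound proportional to $R\sqrt T \log(2e\delta^{-1})$.

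For \eqref{eq:martingale-2}, the key observation is that when $X_t\in[0,R]$, one has $\EE\sbr{Y_t^2 \given \sF_{t-1}} \le \EE\sbr{X_t^2 \given \sF_{t-1}} \le R \cdot \EE\sbr{X_t \given \sF_{t-1}}$, so $V \le R \cdot E$ where $E \defeq \sum_{t=1}^T \EE\sbr{X_t \given \sF_{t-1}}$. Choosing $\lambda = 1/(2R)$ (again admissible), Freedman gives $\sum_{t=1}^T Y_t \le (e-2)\cdot E/2 + 2R\log(\delta^{-1})$, and since $(e-2)/2 < 1/2$ this simplifies to $\sum_t X_t \le (3/2) E + 2R\log(\delta^{-1})$. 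The symmetric argument applied to $\{-Y_t\}$ yields $\sum_t X_t \ge E/2 - 2R\log(\delta^{-1})$, and a union bound over the two directions completes the proof.

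There is no genuine mathematical obstacle here; the argument is essentially a textbook two-case tuning of Freedman. The only mildly delicate step is that the multiplicative bound crucially needs $X_t \ge 0$, which is what lets us convert the variance bound $V \le R \cdot E$ into a Bernstein-type estimate rather than an Azuma-type one---without this nonnegativity one cannot close the argument and recover the $E/2$ and $3E/2$ envelopes. Matching the exact constants in the stated $\log(2\delta^{-1})$ factor is a minor bookkeeping exercise involving the union-bound factor, which the paper absorbs into a harmless absolute constant.
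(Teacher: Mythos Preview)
Your proposal is correct and matches the paper's own proof essentially line for line: the paper also centers to $Z_t = X_t - \EE[X_t\mid\sF_{t-1}]$, bounds $V\le R^2T$ and chooses $\lambda=1/(R\sqrt T)$ for \eqref{eq:martingale-1}, and bounds $\EE[Z_t^2\mid\sF_{t-1}]\le \EE[X_t^2\mid\sF_{t-1}]\le R\,\EE[X_t\mid\sF_{t-1}]$ with $\lambda=1/(2R)$ for \eqref{eq:martingale-2}, then applies Freedman to $\pm Z_t$ and takes a union bound. The only cosmetic difference is that the paper uses the weaker form $S\le \lambda V+\lambda^{-1}\log(1/\delta)$ of Freedman directly, whereas you keep the $(e-2)$ constant and then absorb it via $e-2<1$.
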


\begin{proof}
  We let $Z_t = X_t - \EE[X_t\given \sF_{t-1}]$ and it is obvious that $\EE[Z_t\given \sF_{t-1}] = 0$ and $|Z_t|\le R$.
  For \eqref{eq:martingale-1}, we use \eqref{eq:freedman ineq} for sequence $\{Z_t\}_{t\in[T]}$, and it holds with probability at least $1-\delta/2$
  \begin{align}
    \sum_{t=1}^T X_t - \EE\sbr{X_t\given \sF_{t-1}} 
    &\le  \lambda \sum_{t=1}^T \EE[Z_t^2\given \sF_{t-1}] + \lambda^{-1} \log(2\delta^{-1})\label{eq:martingale-proof-1}\\
    &\le  \lambda T R^2 + \lambda^{-1} \log(2\delta^{-1}). \notag
  \end{align}
  We let $\lambda = (R\sqrt T)^{-1}$ an have with probability $1-\delta/2$ that
  \begin{align*}
    \sum_{t=1}^T X_t - \EE\sbr{X_t\given \sF_{t-1}} \le  R\sqrt{T} \log(2e\delta^{-1}), 
  \end{align*}
  Using \eqref{eq:freedman ineq} again for $\{-Z_t\}_{t\in[T]}$, we obtain the opposite side, which implies that \eqref{eq:martingale-1} holds with probability at least $1-\delta$. 

  For \eqref{eq:martingale-2}, we notice from \eqref{eq:martingale-proof-1} that with probability at least $1-\delta/2$,
  \begin{align*}
    \sum_{t=1}^T X_t - \EE\sbr{X_t\given \sF_{t-1}} 
    &\le  \lambda \sum_{t=1}^T \EE[Z_t^2\given \sF_{t-1}] + \lambda^{-1} \log(2\delta^{-1})\nend
    &\le  \lambda \sum_{t=1}^T \EE[X_t^2\given \sF_{t-1}] + \lambda^{-1} \log(2\delta^{-1})\nend
    & \le \lambda R \sum_{t=1}^T \EE[X_t\given \sF_{t-1}] + \lambda^{-1} \log(2\delta^{-1}), 
  \end{align*}
  where the last inequality holds by the nonnegativity of $X_t$. Applying this inequality to $\{-Z_t\}_{t\in[T]}$ as well, we conclude with probability at least $1-\delta$ that
  \begin{align*}
    \abr{\sum_{t=1}^T X_t - \EE\sbr{X_t\given \sF_{t-1}}} \le \lambda R \sum_{t=1}^T \EE[X_t\given \sF_{t-1}] + \lambda^{-1} \log(2\delta^{-1}).
  \end{align*}
  We take $\lambda = 1/(2R)$, which gives the result in \eqref{eq:martingale-2}.
\end{proof}

\begin{lemma}[Elliptical potential  lemma for vectors, Proposition 1 in \citet{carpentier2020elliptical}]\label{lem:elliptical potential}
Let $u_1,\ldots,u_T$ be a sequence of arbitrary vectors in $\mathbb{R}^d$ such that $\|u_i\|_2 \leq 1$. For any $1\leq t\leq T$, we define
\[
V_t = \sum_{s=1}^{t-1} u_s u_s^\top +\lambda I\, .
\]
It then holds that
\[
  \sum_{t=1}^T \|u_t\|_{V_{t}^{-1}} \le \sum_{t=1}^T \|u_t\|_{V_{t+1}^{-1}} \leq \sqrt{Td\log\left(\frac{T+d\lambda}{d\lambda}\right)}, 
\]
and also that
\[
  \sum_{t=1}^T \|u_t\|_{V_{t}^{-1}}^2 \le 2 \log\rbr{{\det(V_T)}}
\]
\end{lemma}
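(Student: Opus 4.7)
The plan is to establish both bounds via the classical elliptical potential argument, whose cornerstone is the rank-one update identity for determinants. First I would verify that $V_{t+1}=V_t+u_tu_t^\top$ satisfies $\det(V_{t+1})=\det(V_t)\bigl(1+\|u_t\|_{V_t^{-1}}^2\bigr)$ by the matrix determinant lemma, giving the telescoping identity
\[
\sum_{t=1}^T \log\bigl(1+\|u_t\|_{V_t^{-1}}^2\bigr)=\log\det(V_{T+1})-\log\det(V_1)=\log\frac{\det(V_{T+1})}{\lambda^d}.
\]
A standard trace/AM--GM bound gives $\det(V_{T+1})\le\bigl(\tfrac{1}{d}\mathrm{tr}(V_{T+1})\bigr)^d\le\bigl((T+d\lambda)/d\bigr)^d$, so the right-hand side is controlled by $d\log\bigl((T+d\lambda)/(d\lambda)\bigr)$.

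Next I would convert the $\log(1+x)$ terms into $x$ terms. Since $\|u_t\|_2\le 1$ and $V_t\succeq\lambda I$, I would first handle the case $\lambda\ge 1$ where $\|u_t\|_{V_t^{-1}}^2\le 1$, so that the elementary inequality $x\le 2\log(1+x)$ on $[0,1]$ yields
\[
\sum_{t=1}^T \|u_t\|_{V_t^{-1}}^2 \;\le\; 2\sum_{t=1}^T\log\bigl(1+\|u_t\|_{V_t^{-1}}^2\bigr)\;=\;2\log\frac{\det(V_{T+1})}{\lambda^d}\;\le\;2\log\det(V_{T+1}),
\]
which gives the second claimed bound (with $V_T$ read as $V_{T+1}$, since the paper's notation conflates these). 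The first claim (on $\sum\|u_t\|_{V_t^{-1}}$) then follows by Cauchy--Schwarz:
\[
\sum_{t=1}^T \|u_t\|_{V_t^{-1}} \;\le\; \sqrt{T\sum_{t=1}^T\|u_t\|_{V_t^{-1}}^2}\;\le\;\sqrt{Td\log\bigl((T+d\lambda)/(d\lambda)\bigr)},
\]
using the trace bound on $\log\det(V_{T+1})$ from the previous step. The inner inequality $\sum\|u_t\|_{V_t^{-1}}\le\sum\|u_t\|_{V_{t+1}^{-1}}$ stated in the lemma appears to be a typo (the Loewner order forces the opposite direction), and I would either reinterpret it as the monotonicity $V_t^{-1}\succeq V_{t+1}^{-1}\Rightarrow\|u_t\|_{V_{t+1}^{-1}}\le\|u_t\|_{V_t^{-1}}$, which is what is actually used to transition to $V_{t+1}^{-1}$ in downstream applications, or simply derive the final bound directly from the $V_t^{-1}$ quantity.

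The main obstacle is the small-$\lambda$ regime: if $\lambda<1$, then $\|u_t\|_{V_t^{-1}}^2$ can exceed $1$ and the inequality $x\le 2\log(1+x)$ no longer applies uniformly. I would resolve this by splitting the sum according to whether $\|u_t\|_{V_t^{-1}}^2\le 1$ or not; for the second group I would use $\|u_t\|_{V_t^{-1}}^2 \le \|u_t\|_{V_t^{-1}}^2 \cdot \log(1+\|u_t\|_{V_t^{-1}}^2)/\log 2$ (or a scaled variant that introduces a constant $C_\lambda=2/\log(1+1/\lambda)$) so that the telescoping identity still dominates. Since the paper applies this lemma with $\lambda\ge 1$ (e.g.\ $V_t=\Lambda_h^t$ with an added $I_d$), this subtlety does not affect the subsequent regret analysis, and the clean bound above suffices.
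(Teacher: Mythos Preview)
The paper does not supply its own proof for this lemma; it is cited from Carpentier et al.\ (2020) without argument. Your proposal is the standard elliptical-potential proof and coincides with the route the paper does spell out for the matrix generalization immediately following (Lemma~\ref{lem:potential-matrix}): determinant rank-one update, telescoping of $\log\det$, trace/AM--GM bound on $\det(V_{T+1})$, conversion via an $x\le c\log(1+x)$ inequality, and finally Cauchy--Schwarz. You are also correct that the inner inequality $\sum_t\|u_t\|_{V_t^{-1}}\le\sum_t\|u_t\|_{V_{t+1}^{-1}}$ is stated in the wrong direction (the Loewner order $V_{t+1}\succeq V_t$ forces the reverse), and that the paper's downstream applications only need the $V_t^{-1}$ bound with $\lambda=1$, so the small-$\lambda$ subtlety you flag is immaterial here.
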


\begin{lemma}[Elliptical Potential Lemma for matrices]
  \label{lem:potential-matrix}
  Suppose $U_0 = \lambda I_d$, $U_t = U_{t-1} + X_t$ for $X_t\in\SSS_+^d$, and $\trace\rbr{X_t} \leq L$, then
  \begin{equation*}
    \sum_{t=1}^T \sqrt{\trace\rbr{U_{t-1}^{-1} X_t}} \leq \sqrt{\frac{LT/\lambda}{\log\rbr{1+L/\lambda}}\cdot d \log\rbr{1+\frac{LT}{\lambda d}}}, 
  \end{equation*}
  and also
  \begin{align*}
    \sum_{t=1}^T {\trace\rbr{U_{t-1}^{-1} X_t}} \le \frac{L/\lambda}{\log\rbr{1+L/\lambda}}\cdot d \log\rbr{1+\frac{LT}{\lambda d}}
  \end{align*}
\end{lemma}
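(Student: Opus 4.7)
The plan is to adapt the classical vector elliptical potential argument to PSD rank updates, proceeding through (i) a scalar inequality turning $x$ into $\log(1+x)$, (ii) a log-determinant telescoping identity connecting $\log(1+\trace(U_{t-1}^{-1}X_t))$ to $\log\det(U_t)-\log\det(U_{t-1})$, and (iii) an AM--GM bound on $\det(U_T)$ in terms of its trace. The second inequality is the main estimate; the first then follows by a single Cauchy--Schwarz step.

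First, I would establish the uniform pointwise bound $\trace(U_{t-1}^{-1}X_t)\le L/\lambda$. Since $U_{t-1}\succeq U_0=\lambda I_d$, we have $U_{t-1}^{-1}\preceq \lambda^{-1}I_d$, hence $\trace(U_{t-1}^{-1}X_t)\le \lambda^{-1}\trace(X_t)\le L/\lambda$. Using that $x\mapsto \log(1+x)/x$ is decreasing on $(0,\infty)$, one gets the scalar inequality $x\le \tfrac{L/\lambda}{\log(1+L/\lambda)}\log(1+x)$ for every $x\in[0,L/\lambda]$, which I would apply to each $x_t=\trace(U_{t-1}^{-1}X_t)$.

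Second, I would link $\log(1+\trace(U_{t-1}^{-1}X_t))$ to a telescoping log-determinant sum. The key tool is $\det(I+A)\ge 1+\trace(A)$ for $A\in\SSS_+^d$, which is immediate from $\det(I+A)=\prod_i(1+\lambda_i(A))\ge 1+\sum_i\lambda_i(A)$. Applying this with $A=U_{t-1}^{-1/2}X_tU_{t-1}^{-1/2}\in\SSS_+^d$, whose trace equals $\trace(U_{t-1}^{-1}X_t)$, yields
\[
\log\bigl(1+\trace(U_{t-1}^{-1}X_t)\bigr)\le \log\det\bigl(I+U_{t-1}^{-1/2}X_tU_{t-1}^{-1/2}\bigr)=\log\det(U_t)-\log\det(U_{t-1}),
\]
so summing telescopes to $\sum_{t=1}^T\log(1+\trace(U_{t-1}^{-1}X_t))\le \log\det(U_T)-d\log\lambda$. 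Third, by AM--GM on the eigenvalues of $U_T$ I would bound $\det(U_T)\le(\trace(U_T)/d)^d\le((\lambda d+LT)/d)^d$, giving $\log\det(U_T)-d\log\lambda\le d\log(1+LT/(\lambda d))$. Chaining these bounds produces the second inequality of the lemma, and the first inequality follows from Cauchy--Schwarz: $\sum_t\sqrt{\trace(U_{t-1}^{-1}X_t)}\le \sqrt{T\sum_t\trace(U_{t-1}^{-1}X_t)}$.

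I do not anticipate a genuine technical obstacle: this argument is a faithful matrix analogue of the standard vector potential lemma. The only nontrivial observation is that the classical rank-one identity $\log\det(V_t)-\log\det(V_{t-1})=\log(1+\|u_t\|_{V_{t-1}^{-1}}^2)$ must be replaced by the one-sided inequality $\log\det(U_t)-\log\det(U_{t-1})\ge \log(1+\trace(U_{t-1}^{-1}X_t))$ coming from $\det(I+A)\ge 1+\trace(A)$; this is exactly what is needed to reduce the matricial increment to a scalar trace while preserving the direction of inequality required for the final bound.
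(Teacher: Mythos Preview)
Your proposal is correct and follows essentially the same argument as the paper: both use $\det(I+A)\ge 1+\trace(A)$ to telescope $\sum_t\log(1+\trace(U_{t-1}^{-1}X_t))$ into $\log\det(U_T)-\log\det(U_0)$, bound $\det(U_T)$ via AM--GM on eigenvalues using $\trace(U_T)\le \lambda d+LT$, convert $\log(1+x)$ back to $x$ via the concavity inequality with the pointwise bound $x\le L/\lambda$, and finish the square-root version by Cauchy--Schwarz. If anything, your write-up is slightly more careful in explicitly justifying $\trace(U_{t-1}^{-1}X_t)\le L/\lambda$, which the paper uses without stating.
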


\begin{proof}
  First, we have the following decomposition,
  \[
      U_t = U_{t-1} + X_tX_t^{\top} = U_{t-1}^{\frac{1}{2}}(I + U_{t-1}^{-1/2}X_tU_{t-1}^{-1/2})U_{t-1}^{\frac{1}{2}}.
  \]

  Taking the determinant on both sides, we get
  \[
  \det(U_t) = \det(U_{t-1}) \det(I + U_{t-1}^{-1/2}X_tU_{t-1}^{-1/2}),
  \]
  which follows from $\det(I_d + X)\ge 1+\trace(X) $ for $X\in\SSS_+^d$ that
  \[
     \det(U_t) = \det(U_{t-1}) \rbr{1 + \trace\rbr{U_{t-1}^{-1} X_t}}.
\]
  By taking advantage of the telescope structure, we have
  \[
    \begin{split}
           & \sum_{t=1}^T \log\rbr{1 + \trace\rbr{U_{t-1}^{-1} X_t}} \leq \log \frac{\det(U_T)}{\det(U_0)} \leq d\log \left(1 + \frac{LT}{\lambda d}\right),
    \end{split}
  \]
  where the last inequality follows from the fact that $\mbox{Tr}(U_T) \leq \mbox{Tr}(U_0) + LT = \lambda d + LT$, and thus $\det(U_T) \leq (\lambda + LT/d)^d$. Moreover, by noting that $\log(1+x)\ge \log(1+B)\cdot x/B$, we have 
  \begin{align*}
    \sum_{t=1}^T \trace\rbr{U_{t-1}^{-1} X_t} &\le \frac{L/\lambda}{\log\rbr{1+L/\lambda}} \cdot \sum_{t=1}^T \log\rbr{1+\trace\rbr{U_{t-1}^{-1} X_t}} \nend
    &\le \frac{L/\lambda}{\log\rbr{1+L/\lambda}}\cdot d \log\rbr{1+\frac{LT}{\lambda d}}.
  \end{align*} 
  Therefore, Cauchy-Schwarz inequality implies,
  \[
    \sum_{t=1}^T \sqrt{\trace\rbr{U_{t-1}^{-1} X_t}} \leq \sqrt{T\sum_{t=1}^T \trace\rbr{U_{t-1}^{-1} X_t}} \leq \sqrt{\frac{LT/\lambda}{\log\rbr{1+L/\lambda}}\cdot d \log\rbr{1+\frac{LT}{\lambda d}}}.
  \]
  Therefore, we conclude the proof.
\end{proof}

\begin{lemma}[Bounding Cumulative Error with Eluder Dimension, adapted from Lemma 41 in \citet{jin2021bellman}]
  \label{lem:de-regret}
  Given a function class $\cG$ defined on $\cX$ with $|g(x)|\le B$ for all $(g,x)\in\cG\times\cX$, and a family of finite signed measures $\sP$ over $\cX$. 
  Suppose sequence $\{g_t\}_{t=1}^{T}\subset \cG$ and $\{\rho_t\}_{t=1}^{T}\subset\sP$ satisfy that for all $t\in[T]$,
  $\sum_{i=1}^{t-1} (\EE_{\rho_i} [g_t])^2 \le \beta$. Then for all $t\in[T]$ and $\omega>0$ we have for the first order cumulative error that
  \begin{align}\label{eq:DE error-1st order}
      \sum_{i=1}^{t} \abr{\int_{\cX} g_i \rd \rho_i} \le 2\sqrt{\dim_\DE (\cG,\sP,\omega)\beta t}+\min\{t,\dim_\DE (\cG,\sP,\omega)\}B +t\omega, 
  \end{align} 
  and for the second order cumulative error that
  \begin{align}\label{eq:DE error-2nd order}
      \sum_{i=1}^t \rbr{\int_{\cX} g_i \rd \rho_i}^2 \le \dim_\DE(\cG,\sP,\omega)\beta \log(eT) + \min\{t,\dim_\DE (\cG,\sP,\omega)\}B^2 + t\omega^2.
  \end{align}
  \begin{proof}
      We first invoke the following proposition (extended to signed measures) from \citet{jin2021bellman} whose proof can be found in the proof of Proposition 43 in \citet{jin2021bellman}.
      \begin{proposition}\label{prop:ed-sequence length}
          Given a function class $\cG$ defined on $\cX$, and a family of signed measures $\sP$ over $\cX$. 
          Suppose sequences $\{\phi_i\}_{i=1}^{T}\subset \cG$ and $\{\rho_i\}_{i=1}^{T}\subset\sP$ satisfy that for all $t\in[T]$,
          $\sum_{i=1}^{t-1} (\int_\cX \phi_t \rd \rho_i)^2 \le \beta$. Then for all $t\in[T]$,
      $$
          \sum_{i=1}^{t} \ind \cbr{\abr{\int_\cX \phi_i \rd \rho_i} > \epsilon } \leq \rbr{\frac{\beta}{\epsilon^2}+1}\dim_\DE (\cG,\sP,\epsilon).
      $$
      \label{prop:de-regret-prop}
      \end{proposition}
  The following proof is largely the same as those in \citet{jin2021bellman}. For completeness, we still present them here.
  Fix $t \in [T]$ and let $d = \dim_\DE (\cG,\sP,\omega)$. Sort the sequence $\{|\int_{\cX}\phi_1 \rd \rho_1|,\dots,|\int_{\cX}\phi_t \rd \rho_t|\}$ in a decreasing order and denote it by $\{e_1,\dots,e_t\}$ such that $e_1 \geq e_2 \geq \dots \geq e_t$. We have for the cumulative reward of order $k\in\{1, 2\}$ that
$$
  \sum_{i=1}^t |\EE_{\rho_i} [\phi_i]|^k = \sum_{i=1}^t e_i^k = \sum_{i=1}^t e_i^k \ind\big \{e_i \leq \omega \big \} + \sum_{i=1}^t e_i^k \ind\big \{e_i > \omega\big\} \leq t\omega^k + \sum_{i=1}^t e_i^k \ind\big \{e_i > \omega\big\}.
$$
For $i \in [t]$, we want to prove that if $e_i > \omega$, then we have $e_i \leq \min\{\sqrt{{d\beta}/\rbr{i-d}},B\}$. Assume $i\in[t]$ satisfies $e_i >  \omega$. 
Then there exists $\alpha$ such that $e_i >\alpha\ge  \omega$.
By Proposition \ref{prop:de-regret-prop}, we have
$$
i \le 
\sum_{i=1}^t \ind\big \{e_i > \alpha\big\} 
\le \rbr{ \frac{\beta}{\alpha^2} + 1 } \dim_\DE (\cG,\sP,\alpha)
\le 
\rbr{ \frac{\beta}{\alpha^2} + 1 } \dim_\DE (\cG,\sP,\omega),
$$
  where the last inequality holds by definition of the distributional eluder dimension.
The inequality directly implies that $\alpha\le \sqrt{{d\beta}/\rbr{i-d}}$.
Besides, recall $e_i \leq B$, so we have   $e_i \leq \min\{\sqrt{{d\beta}/\rbr{i-d}},B\}$.
Finally, we have
\begin{equation*}
\begin{aligned}
    & \sum_{i=1}^t e_i^k \ind\big \{e_i > \omega\big\}  \nend 
          &\quad \leq \min\{d,t\}B^k+\sum_{i=d+1}^t \rbr{\frac{d\beta}{i-d}}^{k/2} \nend
          &\quad \leq \min\{d,t\}B^k + \ind\cbr{k=1}\cdot \sqrt{d\beta}\int_{0}^t \frac{1}{\sqrt{x}} dx +\ind\cbr{k=2} \cdot  \rbr{d\beta \int_{1}^t x^{-1} \rd x + d\beta}\\
    &\quad \leq \min\{d,t\}B^k + \ind\cbr{k=1}\cdot 2\sqrt{d\beta t} + \ind\cbr{k=2} \cdot d\beta \rbr{\log t +1},
\end{aligned}
\end{equation*}
which completes the proof.
  \end{proof}
\end{lemma}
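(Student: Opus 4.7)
The plan is to reduce both bounds to a counting estimate on how many of the quantities $|\int g_i \,\rd\rho_i|$ can exceed a given threshold, and then convert that estimate into a cumulative sum bound via a sorting-and-integral trick. Concretely, I would first isolate the following key combinatorial lemma: under the online guarantee $\sum_{i=1}^{t-1}(\int g_t\,\rd\rho_i)^2 \le \beta$, one has
\begin{equation*}
\sum_{i=1}^{t} \ind\bigl\{\,|{\textstyle\int} g_i\,\rd\rho_i| > \epsilon\,\bigr\} \;\le\; \Bigl(\tfrac{\beta}{\epsilon^2}+1\Bigr)\cdot \dim_\DE(\cG,\sP,\epsilon).
\end{equation*}
This is a standard pigeonhole-style consequence of the distributional eluder dimension: whenever $|\int g_i\,\rd\rho_i|>\epsilon$ while $\sum_{j<i}(\int g_i\,\rd\rho_j)^2 \le \beta$, one can partition the indices on which the threshold is exceeded into chains of $\epsilon$-dependence, each of length at most $\dim_\DE(\cG,\sP,\epsilon)$, and the total number of chains is controlled by $\beta/\epsilon^2+1$ since each new chain must absorb at least $\epsilon^2$ of the budget.

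Once that counting bound is in hand, I would sort the sequence $\{|\int g_i\,\rd\rho_i|\}_{i=1}^t$ in decreasing order to obtain $e_1\ge e_2\ge\cdots\ge e_t$ and split the sum into the contribution from indices with $e_i\le\omega$ (bounded by $t\omega$ or $t\omega^2$) and the rest. For $e_i>\omega$, applying the counting lemma at threshold slightly less than $e_i$ yields $i \le (\beta/e_i^2+1)\dim_\DE(\cG,\sP,\omega)$, which on inversion gives the pointwise bound $e_i \le \min\{\sqrt{d\beta/(i-d)},\,B\}$ with $d=\dim_\DE(\cG,\sP,\omega)$. The first $d$ indices are bounded crudely by $B$ (contributing $\min\{t,d\}B$ or $\min\{t,d\}B^2$), and for $i>d$ an integral comparison gives
\begin{equation*}
{\textstyle\sum_{i=d+1}^t}\sqrt{d\beta/(i-d)} \;\le\; 2\sqrt{d\beta t},\qquad {\textstyle\sum_{i=d+1}^t} d\beta/(i-d) \;\le\; d\beta(\log t+1),
\end{equation*}
which after collecting terms yields the first- and second-order bounds respectively.

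The main obstacle is really just the counting proposition, and since it is adapted from \citet{jin2021bellman} (extended from probability measures to signed measures, but the argument is identical because only the integrals $\int g\,\rd\rho$ enter), I would cite it rather than reprove it. Everything else is bookkeeping: ordering, splitting at the threshold $\omega$, bounding the head by $B$, and comparing a tail sum to an integral. The only mild subtlety is keeping the two orders $k=1$ and $k=2$ in a single unified calculation, which I would handle by writing $e_i^k$ and invoking the appropriate integral ($\int x^{-1/2}$ versus $\int x^{-1}$) depending on $k$.
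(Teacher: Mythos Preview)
Your proposal is correct and follows essentially the same approach as the paper: both invoke the counting proposition from \citet{jin2021bellman} (extended to signed measures), sort the integrals in decreasing order, split at the threshold $\omega$, derive the pointwise bound $e_i \le \min\{\sqrt{d\beta/(i-d)},B\}$ from the counting lemma, and finish with the same integral comparisons for $k=1$ and $k=2$. The paper likewise cites the counting proposition rather than reproving it, so your treatment matches theirs in both structure and level of detail.
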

Note that the above proof as well as \Cref{prop:ed-sequence length} should hold for both the eluder dimension and the distributional eluder dimension.
\end{document}